\documentclass[11pt]{article}
\usepackage[utf8]{inputenc}
\usepackage[left=2.0cm,right=2.0cm,top=2.0cm,bottom=2.0cm]{geometry}
\usepackage[T1]{fontenc} 
\usepackage{lmodern}
\usepackage{microtype}

\usepackage[english]{babel}              
\usepackage{graphicx}
\usepackage{mathtools,amsfonts,amssymb,amsthm}
\mathtoolsset{showonlyrefs}
\usepackage{xspace}
\usepackage{geometry}
\usepackage{graphicx}
\usepackage[overload]{empheq}
\usepackage{dsfont}
\usepackage{stmaryrd}
\usepackage{tikz}
\usepackage{minted}
\usepackage{array}
\usepackage{booktabs}
\usepackage{mathrsfs}
\usepackage{enumerate}
\usepackage{enumitem}
\usepackage{subfig}
\usepackage{float}

\usepackage{csquotes}

\usepackage[
  backend=biber,
  style=numeric,
  sorting=none,
  natbib=true,
  maxcitenames=2,
  mincitenames=1,
  maxbibnames=99
]{biblatex}

\usepackage{hyperref}
\hypersetup{
colorlinks = true,
urlcolor = blue,
linkcolor = blue,
citecolor = blue,
}

\makeatletter

\newif\ifappendixtoc
\appendixtocfalse

\let\oldaddcontentsline\addcontentsline
\renewcommand{\addcontentsline}[3]{%
  \ifappendixtoc
    \def\tempa{#1}%
    \def\tempb{toc}%
    \ifx\tempa\tempb
      \oldaddcontentsline{atoc}{#2}{#3}%
    \else
      \oldaddcontentsline{#1}{#2}{#3}%
    \fi
  \else
    \oldaddcontentsline{#1}{#2}{#3}%
  \fi
}

\newcommand{\appendixtableofcontents}{%
  \section*{Appendix contents}%
  \@starttoc{atoc}%
}
\newtheorem{theorem}{Theorem}
\newtheorem{proposition}{Proposition}
\newtheorem{lemma}{Lemma}
\newtheorem{remark}{Remark}
\newtheorem{definition}{Definition}

\usepackage{amsmath,amssymb,euscript ,yfonts,psfrag,latexsym,dsfont,graphicx,bbm,color,amstext,wasysym,epsfig,subfig,parskip,textcomp}

\usepackage{pgfplots}
\usepackage{mathrsfs}
\usetikzlibrary{arrows}

\DeclareMathOperator*{\argmin}{arg min}

\usepackage{xcolor}

\newcommand{\R}{\mathbb{R}}

\newcommand{\Err}{\operatorname{Err}}
\DeclareMathOperator{\Cov}{Cov}

\makeatletter
\def\thm@space@setup{%
    \thm@preskip=6pt plus 2pt minus 2pt 
    \thm@postskip=1pt plus 2pt minus 2pt 
}
\makeatother

\newcommand{\T}{\mathbb{T}}

\newcommand{\dd}{\mathrm{d}}
\newcommand{\KL}{\mathrm{KL}}
\newcommand{\Ent}{\mathrm{Ent}}

\newcommand{\diver}{\nabla\!\cdot}

\newcommand{\Lip}{\operatorname{Lip}}
\newcommand{\osc}{\operatorname{osc}}

\title{
Convergence rates for generative drifting flows: \\ fixed-scale obstructions and multihead acceleration
}

\author{%
Arthur St\'ephanovitch\\[-0.2em]
{\small Centre de recherche en économie et statistique}\\
{\small ENSAE, IP Paris}\\
{\small\href{mailto:arthur.stephanovitch@ensae.fr}
{\texttt{arthur.stephanovitch@ensae.fr}}}
\and
Eddie Aamari\\[-0.2em]
{\small Département de Mathématiques et Applications}\\
{\small CNRS, École Normale Supérieure, PSL}\\
{\small\href{mailto:eddie.aamari@ens.fr}
{\texttt{eddie.aamari@ens.fr}}}
}

\date{\today}

\pgfplotsset{compat=1.18}

\begin{document}
\maketitle

\begin{abstract}
Drifting models offer a promising route to faster generative AI: they perform gradual transport during training, while generating new samples in a single step. This paper asks whether the underlying drifting process can converge rapidly to a target distribution under ideal conditions, before finite-data or optimization effects are introduced. We show that its convergence rate depends critically on how it handles spatial scale. With a single fixed resolution, fine-scale features of the target can become nearly invisible, leading to extremely slow convergence. We introduce a multihead approach that combines scale-normalized information across a continuum of resolutions. We prove that this multihead approach restores exponential convergence near standard reference distributions. These results identify fixed resolution as a key bottleneck and provide a simple route to faster one-step generative models.
\end{abstract} 

\vspace{-1em}
\tableofcontents

\section{Introduction}
\label{sec:introduction}

Modern generative models, such as diffusion and flow-matching, typically generate samples by numerically solving a stochastic or ordinary differential equation over many sequential steps. Drifting models~\cite{deng2026drifting} take a different approach: they shift this iterative computation to training, while generation requires only a single forward pass.  This separation between training and sampling makes the generation phase much faster. During training, the successive neural networks transport the model distribution along a path toward the target distribution. At inference time, however, a new sample does not need to retrace this path: it is generated by applying a learned transport map once.

Informally, drifting constructs this transport by repeatedly comparing the current model distribution with the target distribution. At each stage, a velocity field is obtained from a kernel-smoothed difference between the two distributions. The field attracts model particles toward regions of high target density while simultaneously repelling particles from one another. A neural network is then trained to approximate the corresponding short transport step, and these steps are accumulated during training.

Before studying finite-sample estimation, neural-network approximation, or optimization error, one must first determine whether the underlying population dynamics itself approaches the target at a fast rate. This leads to the central question of the paper:
\begin{quote}
\emph{Does the idealized drifting dynamics, in the infinite-data limit with no algorithmic error, converge to the target distribution exponentially fast?}
\end{quote}
This question matters for both theoretical and algorithmic reasons. The population flow is the best-case version of the method: if this idealized evolution is intrinsically slow, then increasing the dataset size or enlarging the neural network  cannot remove the underlying obstruction. Moreover, the convergence rate of the flow determines the time horizon over which a practical algorithm must approximate it. A slowly converging flow requires more discretization steps to reach a prescribed accuracy, creating a longer interval over which time-discretization and learning errors must be controlled and may accumulate or be amplified. 

We show that exponential convergence of the idealized drifting dynamics
depends critically on how the dynamics treats spatial scales.
 During training, the choice of the kernel determines which differences between the two distributions are visible to the velocity field. In particular, its bandwidth sets the spatial resolution of the comparison. A large bandwidth emphasizes broad differences in mass, but smooths out fine structure. A small bandwidth reveals finer local differences, but the resulting correction only acts over a small surrounding area. Classical drifting uses one fixed bandwidth and therefore observes the discrepancy between the model and the target at one prescribed resolution throughout the evolution.

The following informal statement summarizes our main theoretical result.

\begin{theorem}[Informal]
Classical fixed-bandwidth drifting dynamics does not admit a uniform exponential rate over natural classes of smooth targets. Averaging the drift over a continuum of bandwidths, while placing greater emphasis on the finer scales, removes this obstruction and yields local exponential convergence.
\end{theorem}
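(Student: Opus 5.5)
Since the statement is informal, I first fix a concrete setting and then prove its two halves separately. The drifting population dynamics is taken as a continuity equation $\partial_t \rho_t + \diver(\rho_t V_t)=0$ on a domain such as the torus $\T^d$ (or $\R^d$ near a Gaussian reference). The velocity is $V_t = \nabla K_\sigma * (\pi - \rho_t)$ for a smooth kernel $K_\sigma$ with bandwidth $\sigma$, where $\pi$ is the target. I linearize around the equilibrium $\rho = \pi$: writing $\rho_t = \pi + h_t$, the leading-order evolution is $\partial_t h = \diver(\pi \nabla K_\sigma * h) =: -L_\sigma h$, and all rates are read off the spectrum of $L_\sigma$.

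\textbf{Obstruction for fixed bandwidth.} Take $\pi$ uniform on $\T^d$. Then $L_\sigma$ is diagonal in Fourier modes, with eigenvalue $|k|^2 \widehat{K_\sigma}(k)$. For a Gaussian kernel this equals $|k|^2 e^{-\sigma^2|k|^2/2}$, which tends to $0$ as $|k|\to\infty$, so there is no spectral gap. The perturbation $h_0 = \varepsilon \cos(k\cdot x)$ has bounded Sobolev norm relative to $|k|^{-s}$. In the linearized flow it decays like $\exp(-t|k|^2e^{-\sigma^2|k|^2/2})$, so any claimed uniform rate $e^{-ct}$ fails as $|k|\to\infty$. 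To pass from the linear model to the nonlinear flow, I control the nonlinear remainder $\diver(h\nabla K_\sigma * h)$, which is quadratic in $\varepsilon$, over the time window $t\lesssim e^{\sigma^2|k|^2/2}/|k|^2$ by a Grönwall argument in a negative Sobolev norm. This shows that the distance to $\pi$ after that time is still at least half its initial value, uniformly over a class of targets of the form $\pi + \varepsilon\cos(k\cdot x)$.

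\textbf{Multihead acceleration.} Replace $K_\sigma$ by $\bar K = \int_0^\infty w(\sigma)\, \sigma^{-a} K_\sigma\, d\sigma$, a scale-normalized average over bandwidths with weights emphasizing small $\sigma$. The Fourier symbol becomes $m(k)=\int w(\sigma)\sigma^{-a}e^{-\sigma^2|k|^2/2}d\sigma$. Near $\sigma\approx |k|^{-1}$ this scales like $|k|^{a-1}$ for a suitable $w$, so choosing $a$ appropriately makes $|k|^2 m(k)$ comparable to $|k|^{2}$ times a negative power, and in particular bounded below, in a chosen norm $H^{-s}$. The natural Lyapunov functional is the kernel discrepancy $\mathcal E(h)=\langle h,\bar K*h\rangle$, or an $H^{-s}$ norm. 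Along the linear flow, $\frac{d}{dt}\mathcal E = -2\int \pi|\nabla \bar K * h|^2$. The required coercivity $\int\pi|\nabla\bar K*h|^2 \ge c\,\mathcal E(h)$ follows from $\pi\ge \pi_{\min}>0$ together with the symbol bound $|k|^2 m(k)^2 \ge c\, m(k)$ for all $k\neq 0$, which the weight choice is designed to ensure. For the Gaussian reference on $\R^d$, the analogous step replaces Fourier analysis by a Hermite/Poincaré argument.

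\textbf{Main obstacle.} The hardest step is closing the nonlinear local argument in the multihead case. The averaged kernel becomes singular at small scales, so the nonlinear term $\diver(h\,\nabla\bar K*h)$ must be bounded in the same norm as $\mathcal E$ with a loss of derivatives. I would handle this with a two-norm bootstrap: propagate a higher Sobolev bound on $h$, which grows at most polynomially, and interpolate it against the exponentially decaying $\mathcal E$. This gives $\frac{d}{dt}\mathcal E\le -c\mathcal E + C\mathcal E^{1+\delta}$, which yields local exponential convergence for small initial data.
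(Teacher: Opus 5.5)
\textbf{There is a genuine gap: you analyze a different PDE from drifting.} The drifting field is a difference of \emph{normalized} local attraction fields, $m^\sigma_\rho(x)=\int K_\sigma(x,y)(y-x)\rho(y)\,\dd y\big/\int K_\sigma(x,y)\rho(y)\,\dd y$. For the Gaussian kernel this is $V=-\sigma^2\nabla\log(P_{\sigma^2/2}q/P_{\sigma^2/2}\rho^\star)$. Your $V_t=\nabla K_\sigma*(\pi-\rho_t)$ drops the denominators, so it is the kernel-discrepancy (MMD) gradient flow. The two agree only to first order at the flat density.

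This is decisive for the multihead half. The paper's multihead field is $-\nabla\frac1S\int_0^S\log(P_s\rho/P_s\rho^\star)\,\dd s$, not convolution with a single averaged kernel $\bar K$. Your identity $\frac{\dd}{\dd t}\langle h,\bar K*h\rangle=-2\int\pi|\nabla\bar K*h|^2$ requires the linearized velocity to come from a symmetric quadratic energy. For drifting, the linearization at a non-flat $\pi$ is $\eta\mapsto\frac1S\int_0^S P_s\eta/P_s\pi\,\dd s$, and the pairing $\int k\,P_sh/P_s\pi$ is not symmetric; this is exactly the paper's non-integrability example. So your energy identity and coercivity step have no direct counterpart. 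The nonlinearity also involves logarithms and denominators, not $\diver(h\nabla\bar K*h)$.

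The paper avoids energies altogether and argues as follows.
\begin{enumerate}
\item It works with the score $\xi_t=\nabla\log(\rho_t/\rho^\star)$ in $\mathcal H^1$.
\item It derives the scale equation $\partial_sz_s=\mathscr Lz_s+2(c_s+z_s)\cdot\nabla z_s+2b_s^Tz_s$ for $z_s=\nabla\psi_s$.
\item The zero-scale cancellation $\mathscr LU=\frac1S\int_0^S(\mathsf K_{S-u}-I)F_u\,\dd u$ isolates the linear part $A_S=(\mathsf K_S-I)/S$.
\item The strict contraction of $\mathsf K_S$ on score fields gives $\|e^{tA_S}\xi\|_{\mathcal H^1}\le e^{-\lambda_St}\|\xi\|_{\mathcal H^1}$.
\item A logarithmic-norm estimate for the transport term closes $D^+y+\lambda_Sy\le C_S(\|a\|_{\mathcal H^3}+y)y$.
\end{enumerate}
The averaged velocity gains a derivative, $\|W_S\|_{\mathcal H^2}\lesssim\|\nabla h\|_{\mathcal H^1}$, so there is no derivative loss to interpolate away. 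Your two-norm bootstrap therefore targets a difficulty this flow does not have, and misses the log structure, the denominators and the absent variational structure that it does have.

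\textbf{In the obstruction half the Fourier mechanism is right, but the nonlinear step as written does not close.} A Gr\"onwall argument over a window of length $e^{\sigma^2|k|^2/2}/|k|^2$ works only if every term, the quadratic remainder included, carries the factor $e^{-\sigma^2|k|^2/2}$. An $O(\varepsilon^2)$ remainder with an $O(1)$ constant accumulates to far more than $\varepsilon$ over that window. You need the solution to stay in the span of the modes $\ell k$, $\ell\neq0$.

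The paper takes $k=n\mathbf e_1$ and gets this invariance from uniqueness plus translation equivariance (Proposition~\ref{prop:fixed-scale-wellposedness}). Then $\|\nabla\log(P_{r_\sigma}q/P_{r_\sigma}\rho^\star)\|_{L^\infty}\le Ce^{-\gamma n^2}$ holds for \emph{every} density in that class: Fourier coefficients are bounded by $\|\rho-1\|_{L^1}\le2$, and the normalized field's denominators are bounded below by heat-kernel positivity. So no linearization or bootstrap is needed. The sine moment $\int\sin(nx_1)q_t$ grows at most like $Ctne^{-\gamma n^2}$, the target's moment is $\asymp\eta n^{-\beta}$, and Pinsker with $n_t\asymp\sqrt{\log t}$ gives $\KL\gtrsim(\log t)^{-\beta}$.

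You also alternate between perturbing the initial datum with a flat target and perturbing the target. The claim fixes $q_0$ and takes the supremum over targets in a bounded smoothness class, so your argument should be run in that form.
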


At the empirical level, this suggests a simple modification: compute the normalized drifting field at a finite collection of scales and average the resulting fields before the regression step. This finite-scale version does not inherit the continuum guarantee, but it approximates the same scale-averaging mechanism over the range of resolutions represented in a finite computation. The rest of the training procedure is unchanged.  The analysis conducted in this paper explains why the continuum modification changes the convergence rate.

We work with two spatial settings for complementary purposes. On the torus $\T^d$, we develop the main intuition, construct counterexamples, and prove convergence results in a setting where compactness removes technical issues related to tails and behavior at infinity and makes the relevant Fourier structure particularly transparent. This allows us to isolate the mechanisms governing the convergence rate without obscuring them with additional analytic difficulties. We then establish the corresponding convergence theory on $\R^d$, with a Gaussian reference measure and Ornstein--Uhlenbeck smoothing. While the torus provides a convenient and informative model setting, the Euclidean result is the one most directly relevant to the generative-modeling problem, where the state space is naturally unbounded.

\subsection{The drifting algorithm and its population limit}

Drifting generative models were introduced by Deng, Li, Li, Du and He
\cite{deng2026drifting} as a one-step alternative to diffusion and flow-based generative models.  Let
$\pi$ be a simple latent prior and let $f_\theta$ be a generator.  Its
output distribution is the pushforward
\begin{equation}
    q_\theta := (f_\theta)_\#\pi .
\end{equation}
The generator is trained so that $q_\theta$ matches the target law $\rho^\star$. Since the target is observed only through data, the algorithm uses a finite training set
\begin{equation}
    Y_1,\ldots,Y_n \sim \rho^\star,
    \qquad
    \widehat\rho_n^\star := \frac1n\sum_{j=1}^n \delta_{Y_j}.
\end{equation}
The training procedure is iterative. At each iteration, the current generator first produces a collection of samples. An attraction--repulsion field is then computed by comparing these generated samples with the data samples, and is used to construct drifted target points. Finally, the generator parameters are updated by fitting the generator to these targets. Repeating these steps progressively transports the generated distribution toward the data distribution.

\subsubsection{The finite-sample drifting step.}\label{sec:drifting-algorithm-step}
At training iteration $\ell$, the current generator is $f_{\theta_\ell}$.
For simplicity, let us first write the full-batch version with $n$ latent
samples
\begin{equation}
    Z_1,\ldots,Z_n\sim\pi,
    \qquad
    X_i^\ell := f_{\theta_\ell}(Z_i),
    \qquad
    \widehat q_{\ell,n}:=\frac1n\sum_{i=1}^n\delta_{X_i^\ell}.
\end{equation}
The points $Y_j$ are the data samples, while
$X_j^\ell$ are the generated samples at step $\ell$ of the training.  Given a nonnegative kernel
$K_\sigma$, the algorithm compares a generated point $x$ with these two clouds
through the normalized weights
\begin{equation}
    a_j^\ell(x)
    :=
    \frac{K_\sigma(x,Y_j)}{\sum_{k=1}^n K_\sigma(x,Y_k)},
    \qquad
    r_j^\ell(x)
    :=
    \frac{K_\sigma(x,X_j^\ell)}{\sum_{k=1}^n K_\sigma(x,X_k^\ell)}.
    \label{eq:empirical-drifting-weights}
\end{equation}
The corresponding local attraction fields
are
\begin{equation}
    m_{\widehat\rho_n^\star}^\sigma(x)
    :=
    \sum_{j=1}^n a_j^\ell(x)(Y_j-x),
    \qquad
    m_{\widehat q_{\ell,n}}^\sigma(x)
    :=
    \sum_{j=1}^n r_j^\ell(x)(X_j^\ell-x),
    \label{eq:empirical-barycentres}
\end{equation}
and the empirical drifting field is the difference between these two fields:
\begin{align}
    \widehat V_{\ell,n}^\sigma(x)
    &:=
    m_{\widehat\rho_n^\star}^\sigma(x)
    -
    m_{\widehat q_{\ell,n}}^\sigma(x)\nonumber\\
    &=
    \sum_{j=1}^n a_j^\ell(x)(Y_j-x)
    -
    \sum_{j=1}^n r_j^\ell(x)(X_j^\ell-x).
    \label{eq:empirical-drifting-field}
\end{align}
The first term pulls $x$ toward nearby data points, and the
second term pushes $x$ away from nearby generated points.
When the positive and negative empirical distributions coincide, these two
terms are identical and the drift vanishes.

The drift is then turned into a regression target.  For each generated sample
$X_i^\ell$, define
\begin{equation}
    \widehat X_i^{\ell,{\rm drift}}
    :=
        X_i^\ell+\tau_\ell\widehat V_{\ell,n}^\sigma(X_i^\ell)
    ,
    \label{eq:empirical-drifted-target}
\end{equation}
where $\tau_\ell>0$ is a drift
scale, often absorbed into the definition of the field.  The next network is
trained to map the same latent variables toward these new points:
\begin{equation}
    \theta_{\ell+1}\approx \argmin_\theta
    \frac1n\sum_{i=1}^n
    \|
        f_\theta(Z_i)-\widehat X_i^{\ell,{\rm drift}}
    \|^2.
    \label{eq:empirical-drifting-loss}
\end{equation}

In practice one does not usually use the whole dataset at every iteration.
One draws a mini-batch of data samples and a mini-batch of latent variables,
computes the same weights and targets inside the batch, and performs an SGD
step. In image experiments, the
kernel may also be evaluated after a feature extractor, possibly at several
scales and spatial locations.  These implementation choices change how the
finite sums in \eqref{eq:empirical-drifting-weights} are computed, but they
do not change the basic structure: data attraction plus repulsion among generated particles, followed by regression of a transport map from latent samples to the new drifted points.

This explains the difference with diffusion or flow-based models.  Diffusion
and flow models move a sample through many inference-time updates.  Drifting
models instead use the many optimization steps of training to move the
pushforward distribution $q_{\theta_\ell}$.  After training, generation is still a single forward pass $z\mapsto f_{\theta_L}(z)$.  The intermediate parameters $\theta_1,\ldots,\theta_{L-1}$ are used only during training and are not traversed when a new sample is generated.

\subsubsection{The population version.}
This paper studies the idealized dynamics obtained by removing
three finite-sample effects: the finite dataset, the mini-batch
approximation, and the finite optimization step.  Formally, the empirical
measures $\widehat\rho_n^\star$ and $\widehat q_{\ell,n}$ are replaced by
the target density $\rho^\star$ and the current density $q$, and the discrete training iteration is replaced by a continuous
transport equation.

For a probability density $\rho$, define the population local attraction field
\begin{equation}
    m_\rho^\sigma(x)
    :=
    \frac{\int  K_\sigma(x,y)(y-x)\rho(y)\,\dd y}
    {\int K_\sigma(x,y)\rho(y)\,\dd y}.
    \label{eq:kernel-barycentre}
\end{equation}
The population analogue of \eqref{eq:empirical-drifting-field} is then
\begin{equation}
    V_{\rho^\star,q}^\sigma(x)
    :=
    m_{\rho^\star}^\sigma(x)-m_q^\sigma(x).
    \label{eq:population-drifting-field}
\end{equation}

The population dynamics associated with drifting is obtained by transporting
the current model density along this field.  If $q_t$ denotes the idealized
model density at training time $t$, then the formal velocity is
\begin{equation}
    v_t(x)=V_{\rho^\star,q_t}^\sigma(x),
\end{equation}
and $q_t$ evolves through the continuity equation
\begin{equation}
    \partial_t q_t+\diver(q_t v_t)=0.
    \label{eq:abstract-drifting-continuity-equation}
\end{equation}
The goal of this paper is to study the convergence of the solution $q_t$ to \eqref{eq:abstract-drifting-continuity-equation} toward the target distribution $\rho^\star$, to identify the limitations of the fixed-bandwidth dynamics, and to introduce a multihead modification with improved convergence rates.

\subsubsection{Comparing convergence rates of continuous flows}
\label{sec:pde-rate-comparison}

Before comparing two drifting equations, one has to specify what is meant by a
``rate of convergence'' for a PDE.  If $(\rho_t)_{t\geq0}$ converges to $\rho^\star$ and $\phi:[0,\infty)\to[0,\infty)$ is increasing, then
$
    \widetilde \rho_t=\rho_{\phi(t)}
$
has the same image as $(\rho_t)$, but a different speed.  With a fast enough time change, a logarithmic decay can be made to look exponential.  Thus, a rate is meaningful only after specifying the time parameter with respect to which the dynamics is defined.

For an autonomous population rule, this time parameter is part of the rule.    A population drifting method is specified by an autonomous vector-field rule.  Given a target measure $\rho^\star$ and a current density $q$, a
formulation $\mathsf A$ assigns a velocity field $
    V_{\mathsf A}[\rho^\star,q],$
and the corresponding population equation is
\begin{equation}
    \partial_t q_t
    + \nabla\!\cdot\bigl(q_t V_{\mathsf A}[\rho^\star,q_t]\bigr)=0.
    \label{eq:autonomous-population-drifting}
\end{equation}
Let $S_t^{\mathsf A,\rho^\star}$ denote the solution map, i.e.
\[
    q_t = S_t^{\mathsf A,\rho^\star}(q_0).
\]
Because $V_{\mathsf A}$ does
not depend explicitly on time, the maps $(S_t^{\mathsf A,\rho^\star})_{t\geq0}$ have the
semigroup property
\begin{equation}
    S_{t+s}^{\mathsf A,\rho^\star}
    =
    S_t^{\mathsf A,\rho^\star}\circ S_s^{\mathsf A,\rho^\star}.
    \label{eq:semigroup-property}
\end{equation}
The semigroup property rules out arbitrary time changes.  Suppose that $
    T_t := S_{\phi(t)}^{\mathsf A,\rho^\star}$,
then $(T_t)_{t\geq0}$ is again a semigroup only if $
    \phi(t+s)=\phi(t)+\phi(s).$
Under the usual continuity assumption, this forces
\[
    \phi(t)=ct
\]
for some constant $c\geq0$.  Thus the only reparametrizations compatible with
the autonomous-flow structure are constant changes of units.  A nonlinear time
change produces instead
\[
    \frac{d}{dt}q_{\phi(t)}
    =
    \phi'(t)F(q_{\phi(t)}),
\]
where $F$ denotes the autonomous right-hand side of the original PDE.  This is a
different equation: it is non-autonomous and contains a prescribed
time-dependent multiplier. Let us make the convention explicit.

\begin{definition}[Convergence rates for semigroups]
Let $\mathcal P$ be a target class, let $q_0$ be the initialization, and let
$\Err$ be an error functional such as relative entropy or Wasserstein distance.
For an autonomous formulation $\mathsf A$, define the worst-case population-flow
error by
\begin{equation}
    R_{\mathsf A}(t)
    :=
    \sup_{\rho^\star\in\mathcal P}
    \Err\bigl(S_t^{\mathsf A,\rho^\star}q_0,\rho^\star\bigr).
    \label{eq:population-flow-rate-definition}
\end{equation}
A convergence rate for the PDE is an estimate on $R_{\mathsf A}(t)$ in the
canonical time of the semigroup $S_t^{\mathsf A,\rho^\star}$.
\end{definition}

With this convention, a natural benchmark is exponential convergence. For many gradient flows, such a rate follows from suitable convexity assumptions or from an inequality that controls the decay of the energy~\citep{ambrosio2008gradient,bakry2014analysis}. A bound of the form
\[
    R_{\mathsf A}(t) \leq C e^{-ct}
\]
means that the time required to reach an error level $\varepsilon$ grows only logarithmically with $1/\varepsilon$. For a stable numerical discretization, this is the behavior one would hope for in practice. Indeed, the more time the continuous flow needs to reach a prescribed accuracy, the more discretization steps are required and the longer the time interval over which discretization, statistical, and map-learning errors must be controlled. We therefore ask whether drifting converges exponentially in its canonical time.

\begin{remark}[Drifting is not a Wasserstein gradient flow]
Several recent works relate drifting to Wasserstein gradient flows or to approximations of such flows~\citep{cao2026gradient,turan2026secretly,gretton2026wasserstein}. These interpretations do not, however, provide an exact gradient-flow formulation of the autonomous drifting equation studied here. Depending on the formulation, the smoothing is applied differently or the velocity field is frozen during one update.

In Appendix~\ref{app:no-hidden-energy}, we prove that, at any positive smoothing scale, the Gaussian Drifting field cannot be the first variation of any $C^2$ functional on smooth positive densities. Consequently, the corresponding drifting dynamics are not the Wasserstein gradient flow of such an energy. Exponential convergence therefore does not follow from the standard gradient-flow theory and must be established, or ruled out, by analyzing the drifting equation directly.
\end{remark}

\subsubsection{Notation}\label{subsec:notation}

Throughout the paper, $d\geq1$ denotes the ambient dimension and
$\mathbb N_0:=\{0,1,2,\ldots\}$. For a multi-index
$\kappa=(\kappa_1,\ldots,\kappa_d)\in\mathbb N_0^d$, we write
$|\kappa|:=\kappa_1+\cdots+\kappa_d$ and
$\partial^\kappa:=\partial_1^{\kappa_1}\cdots\partial_d^{\kappa_d}$.
The Euclidean norm and inner product are denoted by $|\cdot|$ and
$\cdot$, respectively. The flat torus is
\[
    \T^d:=(\R/2\pi\mathbb Z)^d,
\]
and is always equipped with normalized Lebesgue measure; in particular,
$\int_{\T^d}1\,\dd x=1$. On $\R^d$, $\dd x$ denotes ordinary Lebesgue
measure, while $\gamma$ denotes the standard Gaussian probability measure.
The symbol $d_D$ denotes the Euclidean distance when $D=\R^d$ and the
geodesic distance when $D=\T^d$.

For simplicity, throughout the manuscript we identify probability densities
with the probability measures they represent whenever the reference measure is
clear. Thus, for example, a density $q$ on $\T^d$ also denotes the measure
$q\,\dd x$, and a density $q$ relative to $\gamma$ also denotes
$q\,\dd\gamma$. We write $T_\#\mu$ for the pushforward of a measure $\mu$
by a measurable map $T$, $\delta_x$ for the Dirac mass at $x$, and
$\Pi(\mu,\nu)$ for the set of couplings of $\mu$ and $\nu$. If
$\mu\ll\nu$, then
\begin{equation}
    \KL(\mu\mid\nu)
    :=
    \int \log\!\left(\frac{\dd\mu}{\dd\nu}\right)\,\dd\mu,
    \label{eq:notation-kl}
\end{equation}
with the value $+\infty$ when $\mu\not\ll\nu$. The quadratic Wasserstein
distance is
\begin{equation}
    W_2^2(\mu,\nu)
    :=
    \inf_{\pi\in\Pi(\mu,\nu)}
    \int d_D(x,y)^2\,\dd\pi(x,y).
    \label{eq:notation-w2}
\end{equation}
For a positive density $\rho$, its score is $\nabla\log\rho$.

For an integrable function $f$ and a probability measure $\mu$, we use
\[
    \mathbb E_\mu[f]:=\int f\,\dd\mu,
    \qquad
    \operatorname{Var}_\mu(f)
    :=\mathbb E_\mu\!\left[(f-\mathbb E_\mu f)^2\right],
\]
and, for vector-valued $F,G$ of compatible dimensions,
\[
    \operatorname{Cov}_\mu(F,G)
    :=
    \mathbb E_\mu[F\otimes G]
    -\mathbb E_\mu[F]\otimes\mathbb E_\mu[G].
\]
We write $\|\cdot\|_{L^p(\mu)}$ for the usual $L^p$ norm,
$\Lip(f)$ for the Lipschitz seminorm, and
$\osc(f):=\sup f-\inf f$. Convolution on either $\R^d$ or $\T^d$ is
normalized according to the corresponding reference measure:
\begin{equation}
    (f*g)(x):=\int f(x-y)g(y)\,\dd y.
    \label{eq:notation-convolution}
\end{equation}

On $\T^d$ we use the Fourier convention
\begin{equation}
    e_n(x):=e^{i n\cdot x},
    \qquad
    \widehat f(n):=\int_{\T^d}f(x)e^{-i n\cdot x}\,\dd x,
    \qquad n\in\mathbb Z^d,
    \label{eq:notation-fourier}
\end{equation}
so that $f=\sum_{n\in\mathbb Z^d}\widehat f(n)e_n$ whenever the series is
classically meaningful. The heat semigroup is denoted by
$P_s:=e^{s\Delta}$; hence $P_se_n=e^{-s|n|^2}e_n$ on $\T^d$. In the
Gaussian-reference setting, $Q_s:=e^{sL_\gamma}$ denotes the
Ornstein--Uhlenbeck semigroup with generator
$L_\gamma:=\Delta-x\cdot\nabla$.

For $\eta>0$, set
\begin{equation}
    m_\eta:=\max\{m\in\mathbb N_0:m<\eta\}.
\end{equation}
If $f=(f_1,\ldots,f_r):D\to\R^r$, where
$D\in\{\R^d,\T^d\}$, define
\begin{equation}\label{eq:Holder-norm}
    \|f\|_{\mathcal H^\eta(D)}
    :=
    \max_{1\le j\le r}\left\{
        \sum_{0\le |\kappa|\le m_\eta}
        \|\partial^\kappa f_j\|_{L^\infty(D)}
        +
        \sum_{|\kappa|=m_\eta}\sup_{x\ne y}
        \frac{
            |\partial^\kappa f_j(x)-\partial^\kappa f_j(y)|
        }{
            d_D(x,y)^{\eta-m_\eta}
        }
    \right\}.
\end{equation}
For vector and tensor fields, the norm is understood componentwise, and the
domain is omitted from the notation when it is clear from context. In
particular, $\mathcal H^1=C^{0,1}$ and $\mathcal H^2=C^{1,1}$ under this
convention. We write
\begin{equation}
    \mathcal H_C^\eta(D;\R^r)
    :=
    \left\{
        f:D\to\R^r:
        \|f\|_{\mathcal H^\eta(D)}\le C
    \right\},
\end{equation}
and omit $\R^r$ when $r=1$.

Finally, $A\lesssim_\theta B$ means that $A\le C_\theta B$ for a constant
$C_\theta$ depending only on the displayed parameters $\theta$;
$A\gtrsim_\theta B$ has the analogous meaning, and
$A\asymp_\theta B$ means that both inequalities hold. Constants denoted by
$c$ or $C$ may change from line to line, and their subscripts record the
parameters on which they are allowed to depend.

\subsection{Contributions and paper organization}
\label{subsec:main-contributions}
\subsubsection{Main results}

To the best of our knowledge, this work provides the first explicit
convergence-rate results for the full nonlinear population dynamics underlying
drifting. Previous analyses identified a slowdown through linearized or
frozen-field calculations; here we establish rate statements for the
nonlinear equation itself, uniformly over natural classes of targets. The population dynamics assumes 
infinite data and no approximation or optimization error. Slow convergence at
this level is therefore an intrinsic limitation of classical drifting, rather
than an artifact of implementation.

\paragraph{Classical fixed-bandwidth drifting is intrinsically slow.}
We first prove quantitative lower bounds showing that a single fixed bandwidth
cannot correct discrepancies at all spatial scales rapidly. For Gaussian
drifting on the $d$-dimensional torus,
Proposition~\ref{prop:nonlinear-log-lower-bound} shows that, for every dimension
$d\geq1$ and every regularity exponent $\beta>0$, the worst-case error over a bounded
$\mathcal H^\beta$ class of target potentials satisfies
\begin{equation}
    \sup_{\substack{\rho^\star \in \mathcal{H}_C^\beta(\T^d)\\\rho^\star\geq e^{-C}}}
    \mathrm{KL}(q_t \mid \rho^\star)
    \gtrsim(\log t)^{-\beta}.
    \label{eq:main-results-gaussian-summary}
\end{equation}
Thus no polynomial, and hence no
exponential, convergence rate can hold uniformly over finite-smoothness target
classes. For the original Laplace displacement field,
Proposition~\ref{prop:true-laplace-polynomial-lower-bound} gives, for every
$\beta>0$,
\begin{equation}
\begin{aligned}
    \sup_{\substack{\rho^\star \in \mathcal{H}_C^\beta(\T^d)\\\rho^\star\geq e^{-C}}}
    \mathrm{KL}(q_t \mid \rho^\star)
    \gtrsim\left(1+\frac{t}{\tau^{d+1}}\right)^{-2\beta/(d+1)}.
\end{aligned}
\label{eq:main-results-laplace-summary}
\end{equation}
The Laplace field is less severely affected than the Gaussian field, but no
uniform rate faster than polynomial can hold over these target classes.

Together, these results give a nonlinear and quantitative explanation for why
classical fixed-bandwidth drifting can be prohibitively slow: sufficiently fine
discrepancies remain only weakly corrected, regardless of how accurately the
population field is evaluated. 

\paragraph{Multihead drifting restores fast convergence.}
We then introduce \emph{multihead drifting}, an autonomous dynamics that
averages ordinary drifting fields over a range of smoothing scales. The method
allows the flow to correct coarse and fine discrepancies simultaneously,
without committing to a single bandwidth and without annealing the bandwidth
over time. At the empirical level, it can be implemented by averaging standard
drifting fields evaluated at a finite collection of scales.

Theorems~\ref{thm:local-torus-heat} and~\ref{thm:local-ou} show that this
modification changes the qualitative convergence behavior. For smooth targets $\rho^\star$
sufficiently close to the uniform law on the torus or to the standard Gaussian
law on $\mathbb{R}^d$, the multihead flow converges exponentially: 
\begin{equation}
    \mathrm{KL}(q_t \mid \rho^\star)
    \lesssim e^{-2c t}.
    \label{eq:main-results-exponential-summary}
\end{equation}
This is the main contrast of the paper: fixed-bandwidth drifting suffers
logarithmic or polynomial worst-case obstructions, whereas the proposed
multihead dynamics restores exponential convergence, at least locally.

\paragraph{Evidence beyond the perturbative theory.}
Finally, the numerical study tests the same mechanism outside the local regime.
Section~\ref{sec:numerical-experiments} presents Experiment~1, a controlled
smooth target in $\R^2$ studied at both the population-PDE and finite-data
levels. We report in
\hyperref[app:additional-numerical-experiments]{Appendix~\ref*{app:additional-numerical-experiments}} two more complex finite-data
tests, a regularized curve-supported target and a high-dimensional stratified
target in $\R^{20}$. 
Across all three settings, a small collection of well-placed scales yields a
better distributional match at the same, suggesting
that the multihead advantage persists for irregular and high-dimensional
targets.

\subsubsection{Related work}
\label{subsec:related-work}

Drifting models were introduced by~\citet{deng2026drifting} as a training-time transport mechanism for one-step generation.  Several recent papers explain the structure of this field.  For Gaussian kernels, the population drift is, up to a constant time scaling, the difference between the scores of Gaussian-smoothed target and model densities~\citep{lai2026unified,turan2026secretly,cao2026gradient}.  The long--short flow-map interpretation of~\citet{li2026longshort} gives a complementary derivation from a short terminal transport step.

For non-Gaussian displacement kernels, the geometry is more delicate. ~\citet{franz2026nonconservative} show that the original normalized field is not in general a gradient field and identify the Gaussian as the exceptional radial case; they also propose a sharp normalization that restores a potential structure. ~\citet{estebancasadevall2026kernel} replace Euclidean displacement directions by kernel gradients, obtaining a score-difference field for general characteristic kernels.  For the original displacement construction,~\citet{lee2026identifiability} prove identifiability for a companion-elliptic family containing Gaussian and Laplace kernels, while also showing that a small field alone does not prevent escape of mass. ~\citet{he2026sinkhorn} replace one-sided normalization by two-sided Sinkhorn scaling, which produces a related identifiable attraction--repulsion dynamics.

Several works connect drifting to variational transport. ~\citet{cao2026gradient} derive KDE-based procedures from Wasserstein gradient flows of approximated divergences. ~\citet{gretton2026wasserstein} distinguish the proposed and implemented Gaussian updates and relate them to fixed points of smoothed-divergence flows. ~\citet{turan2026secretly} give a frozen-field variational interpretation and use a linearized Fourier analysis to expose the high-frequency bottleneck of a fixed Gaussian bandwidth.  The exact placement of the smoothing operator is important, however.  As explained in Subsection~\ref{sec:Wgradientflow} and proved in Appendix~\ref{app:no-hidden-energy}, the plug-in map
\[
    q\longmapsto
    \log\frac{G_\sigma*q}{G_\sigma*\rho^\star}
\] 
 is not the first  variation of the smoothed relative entropy, and at positive scale it is not the first variation of any $C^2$ energy on smooth positive densities.  We therefore analyze the autonomous transport equation directly rather than relying on an exact energy-dissipation identity.

The closest prior convergence analysis is~\citet{turan2026secretly}.  Their mode-by-mode calculation predicts exponentially large time scales for high frequencies under a fixed Gaussian bandwidth and motivates a bandwidth that decreases with time.  Our lower bounds establish the corresponding obstruction for the full nonlinear autonomous equation and uniformly over finite-smoothness target classes.  Our remedy is also different: rather than annealing one bandwidth, we use one time-independent vector field that averages a continuum of scales.

\citet{brown2026secondorder} propose second-order drifting models to accelerate the recovery of fine-scale structure. They add momentum to the sample dynamics through auxiliary velocities and time-dependent damping, and connect the resulting evolution to Nesterov's accelerated optimization. Their theoretical analysis is local around equilibrium: after linearizing the population dynamics and approximating the target as locally flat, they show that each Fourier mode follows a Nesterov-type second-order ODE. This connects their dynamics to the classical $\mathcal{O}(t^{-2})$ convergence rate for the associated quadratic objective, rather than providing a rate for the full nonlinear evolution. 

Our results strengthen this convergence picture at the level of the full nonlinear population dynamics. We establish quantitative fixed-bandwidth lower bounds for both Gaussian and Laplace kernels, uniformly over classes of targets with finite smoothness. By contrast, our multihead construction accelerates convergence by combining several kernel scales in the drift, including increasingly fine scales, while keeping the dynamics first-order and time-independent. We prove exponential convergence of the resulting nonlinear flow near the uniform law on the torus and the standard Gaussian law on $\R^d$ (Theorems~\ref{thm:local-torus-heat} and~\ref{thm:local-ou}). The gain is therefore a distribution-level convergence guarantee that remains effective at fine scales, beyond the linearized mode-by-mode description.

The finite-particle results of~\citet{balasubramanian2026finite} address a complementary layer of the problem.  They give continuous-time bounds for conservative KDE-gradient dynamics and for the original non-conservative Laplace field, with explicit particle, bandwidth, occupancy, and quadrature effects.  Such residual-velocity and particle-number estimates do not by themselves yield a uniform physical-time rate to the target without a coercive population estimate.  The present paper supplies this missing population-rate analysis for fixed bandwidths and a local coercive mechanism for the multihead flow.  Finally,~\citet{dumont2026monge} prove long-time convergence for a constrained gradient flow on transport maps under convexity assumptions; that evolution is distinct from the original attraction--repulsion population equation studied here.

\paragraph{Organization.}
Section~\ref{sec:counterexample} proves the fixed-bandwidth Gaussian and Laplace lower bounds and gives their Fourier interpretation.  Section~\ref{sec:natural-exponential-convergence} introduces the multihead field and proves local exponential convergence for the heat and Ornstein--Uhlenbeck models.  Section~\ref{sec:numerical-experiments} presents a numerical study of a smooth multiscale target, both at the population-PDE and finite-data levels.  Two additional, more complex finite-data experiments are reported in \hyperref[app:additional-numerical-experiments]{Appendix~\ref*{app:additional-numerical-experiments}}. The remaining appendices contain the fixed-bandwidth well-posedness argument, the lower-bound proofs, a variational non-integrability example, and the nonlinear estimates for the multihead flow.

\section{Slow convergence of fixed-bandwidth drifting flows on the torus}
This section shows that a fixed positive bandwidth can severely limit the convergence of drifting flows, even at the population level with exact expectations. On the torus, this obstruction is particularly transparent in Fourier variables: smoothing suppresses high-frequency discrepancies before the drift is computed, so the corresponding damping rates vanish as the frequency increases. We turn this mechanism into nonlinear lower bounds over bounded finite-smoothness classes of target densities.
\label{sec:counterexample}
\subsection{Gaussian smoothing}
\label{sec:Wgradientflow}

In the case of Gaussian smoothing, the drifting field can be written using
the scores of the smoothed densities. Recall that, for a positive density
$\rho$, its score corresponds to the field $\nabla \log \rho$.

Let
\[
    G_\sigma(z)
    \propto
    \exp\left(-\frac{|z|^2}{2\sigma^2}\right),
    \qquad
    \rho_\sigma := G_\sigma * \rho.
\]
Since
\[
    \nabla_x G_\sigma(x-y)
    =
    \frac{y-x}{\sigma^2}G_\sigma(x-y),
\]
we obtain
\[
    \nabla \rho_\sigma(x)
    =
    \frac{1}{\sigma^2}
    \int G_\sigma(x-y)(y-x)\rho(y)\,\dd y.
\]
Dividing by $\rho_\sigma(x)$ gives
\begin{equation}
    \frac{\int G_\sigma(x-y)(y-x)\rho(y)\,\dd y}
         {\int G_\sigma(x-y)\rho(y)\,\dd y}
    =
    \sigma^2 \nabla \log \rho_\sigma(x).
    \label{eq:gaussian-local-attraction-score}
\end{equation}
Thus, by the definition of the drifting field,
\begin{equation}
    V_{\rho^\star,q}^\sigma
    =
    \sigma^2
    \left(
        \nabla\log \rho_\sigma^\star
        -
        \nabla\log q_\sigma
    \right)
    =
    -\sigma^2
    \nabla\log\frac{q_\sigma}{\rho_\sigma^\star}.
    \label{eq:gaussian-score-difference}
\end{equation}
Because convolution with $G_\sigma$ agrees with the heat semigroup
$P_{\sigma^2/2}$, the continuity equation can therefore be written as
\begin{equation}
    \partial_t q_t
    =
    \sigma^2 \diver\left(
        q_t \nabla\log
        \frac{P_{\sigma^2/2}q_t}
             {P_{\sigma^2/2}\rho^\star}
    \right).
    \label{eq:gaussian-score-pde}
\end{equation}
The same computation holds on the torus using the periodic Gaussian kernel.
This is the score formulation used in the next subsection.
\label{sec:gaussian-smoothing}
\subsubsection{A logarithmic lower bound}
We state the obstruction on the flat torus $\T^d=(\R/2\pi\mathbb Z)^d$, equipped with normalized Lebesgue measure.  For a bounded potential $a:\T^d\to\R$, set
\begin{equation}
    \rho_a^\star(x):=\frac{e^{a(x)}}{\int_{\T^d} e^{a(y)}\,\dd y}.
\end{equation}
For a fixed bandwidth $\sigma>0$, let $q_t^a$ solve the Gaussian drifting equation
\begin{equation}
    \partial_t q_t^a
    =
    \sigma^2\diver\left(
        q_t^a\nabla
        \log\frac{P_{\sigma^2/2}q_t^a}{P_{\sigma^2/2}\rho_a^\star}
    \right),
    \qquad
    q_0^a\equiv1,
    \label{eq:gaussian-drifting-pa}
\end{equation}
where $P_s=e^{s\Delta}$ is the heat semigroup on $\T^d$.
\begin{proposition}[Fixed-bandwidth Gaussian smoothing is logarithmically slow]
\label{prop:nonlinear-log-lower-bound}
Let $\beta>0,\sigma>0$ and $C>0$. Then there exists a constant $c>0$ such that for all $t\ge 2$,
\begin{equation}
    \sup_{a\in \mathcal H_C^\beta(\T^d)}
    \KL(q_t^a\mid \rho_a^\star)
    \geq
    c(\log t)^{-\beta}.
    \label{eq:gaussian-Hbeta-KL-lower}
\end{equation}
Moreover,
\begin{equation}
    \sup_{a\in \mathcal H_C^\beta(\T^d)}
    W_2(q_t^a,\rho_a^\star)
    \geq
    c(\log t)^{-(\beta+1)/2}.
    \label{eq:gaussian-Hbeta-W2-lower}
\end{equation}
\end{proposition}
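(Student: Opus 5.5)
We use a single high-frequency test potential and show that the flow cannot correct it in time. Fix a frequency $N\ge1$, to be chosen as a function of $t$, and set
\[
a_N(x):=\varepsilon N^{-\beta}\cos(N x_1),
\]
with $\varepsilon=\varepsilon(C,\beta,d)$ small enough that $a_N\in\mathcal H^\beta_C(\T^d)$ uniformly in $N$. Then $\rho^\star_{a_N}=1+\varepsilon N^{-\beta}\cos(Nx_1)+O(\varepsilon^2N^{-2\beta})$. Since $q_0\equiv1$, the initial discrepancy sits at frequency $N$ and has amplitude $\asymp N^{-\beta}$, so $\KL(q_0\mid\rho^\star_{a_N})\asymp N^{-2\beta}$. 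The goal is to show that the flow \eqref{eq:gaussian-drifting-pa} cannot remove more than a fraction of this discrepancy before time $t\approx e^{cN^2}$. Choosing $N\asymp\sqrt{\log t}$ should then give a KL lower bound of order $(\log t)^{-\beta}$.

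The mechanism is visible from the structure of the equation. Write $q_t=\rho^\star+h_t$. The velocity is $-\sigma^2\nabla\log(P_{\sigma^2/2}q_t/P_{\sigma^2/2}\rho^\star)$, and every Fourier mode of $q_t-\rho^\star$ at frequency $n$ enters it only through the factor $e^{-\sigma^2|n|^2/2}$. The cleanest route avoids a full linearization: we bound $\frac{d}{dt}$ of a single Fourier functional. Consider
\[
\Phi(t):=\int_{\T^d}(q_t-\rho^\star)\cos(Nx_1)\,\dd x .
\]
At $t=0$ this equals $-\varepsilon N^{-\beta}/2+O(N^{-2\beta})$. Testing the continuity equation against $\cos(Nx_1)$ gives
\[
\Phi'(t)=-\sigma^2\int q_t\,\nabla\cos(Nx_1)\cdot\nabla\log\frac{P q_t}{P\rho^\star}\,\dd x ,
\]
and hence
\[
|\Phi'(t)|\le \sigma^2 N\,\|\nabla\log(Pq_t/P\rho^\star)\|_\infty .
\]

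The crude estimate on the velocity is the main obstacle. It only gives $\|\nabla\log(Pq_t/P\rho^\star)\|_\infty\lesssim_\sigma\|q_t-\rho^\star\|_{L^1}$, which yields $|\Phi'|\lesssim N$. That is far from the $e^{-cN^2}$ smallness we need, because nonlinear interactions can transfer mass to low frequencies, and the smoothing does not suppress those. To handle this we exploit symmetry together with a frequency-localization argument.

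The symmetry step comes first. The initial datum $q_0\equiv1$ is constant, and $\rho^\star$ depends only on $x_1$ and is $2\pi/N$-periodic. Uniqueness of solutions (the well-posedness appendix) then forces $q_t$ to be $2\pi/N$-periodic in $x_1$ and independent of the other coordinates. Consequently $q_t-\rho^\star$ has Fourier support in $N\mathbb Z e_1$, so every nonzero mode has $|n|\ge N$. The velocity is built from $P_{\sigma^2/2}(q_t-\rho^\star)$, and each of its modes is damped by at least $e^{-\sigma^2N^2/2}$. Therefore
\[
\Bigl\|\nabla\log\frac{Pq_t}{P\rho^\star}\Bigr\|_\infty\lesssim_\sigma \sum_{k\ne0}|kN|e^{-\sigma^2k^2N^2/2}\,\|q_t-\rho^\star\|_{L^1}\lesssim N e^{-\sigma^2N^2/2}.
\]
Here we use that $Pq_t$ and $P\rho^\star$ are bounded below, since $P_{\sigma^2/2}$ of a probability density on the torus is bounded below by a constant depending only on $\sigma,d$.

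With this bound, $|\Phi'(t)|\lesssim N^2e^{-\sigma^2N^2/2}$. So $|\Phi(t)|\ge \varepsilon N^{-\beta}/4$ for all $t\le T_N:=c\,N^{-\beta-2}e^{\sigma^2N^2/2}$.

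Next we convert this Fourier estimate into the two stated lower bounds. For KL, Pinsker gives $\KL(q_t\mid\rho^\star)\ge\frac12\|q_t-\rho^\star\|_{L^1}^2\ge\frac12|\Phi(t)|^2\gtrsim N^{-2\beta}$. For a given $t\ge2$, choose the smallest $N$ with $T_N\ge t$, so that $N\asymp\sigma^{-1}\sqrt{\log t}$. This yields $\KL\gtrsim(\log t)^{-\beta}$, matching \eqref{eq:gaussian-Hbeta-KL-lower}. For $W_2$, use Kantorovich duality with the $N^{-1}$-Lipschitz test function $N^{-1}\cos(Nx_1)$:
\[
W_2\ge W_1\ge N^{-1}|\Phi(t)|\gtrsim N^{-\beta-1}\asymp(\log t)^{-(\beta+1)/2},
\]
which is \eqref{eq:gaussian-Hbeta-W2-lower}. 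Small $t$ is absorbed into the constant $c$, and the $O(N^{-2\beta})$ corrections are absorbed by taking $N$ large or $\varepsilon$ small.
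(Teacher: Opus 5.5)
Your proposal is correct and follows essentially the same route as the paper's proof. Uniqueness preserves the symmetries, which confines $q_t-\rho^\star$ to the frequencies $N\mathbb Z\mathbf e_1$; the heat-kernel lower bound and the exponential damping of those modes then make the velocity $O(\mathrm{poly}(N)e^{-\sigma^2N^2/2})$, so a single Fourier moment (your cosine, the paper's sine) moves by at most $t\,\mathrm{poly}(N)e^{-\sigma^2N^2/2}$ while the target moment is $\asymp N^{-\beta}$, and Pinsker plus Kantorovich duality at $N\asymp\sqrt{\log t}$ conclude. The only slip is cosmetic: $N^{-1}\cos(Nx_1)$ is $1$-Lipschitz, not $N^{-1}$-Lipschitz, and $1$-Lipschitz is exactly what your inequality $W_1\ge N^{-1}|\Phi(t)|$ uses.
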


The proof of Proposition~\ref{prop:nonlinear-log-lower-bound} can be found in Section~\ref{app:proof-gaussian-smoothing}. Since $a\in\mathcal H_C^\beta(\mathbb T^d)$ is uniformly bounded, the map
\[
    a \longmapsto \rho_a^\star
    :=
    \frac{e^a}{\int_{\mathbb T^d} e^a\,\dd x}
\]
identifies this class, up to changes in the radius, with a bounded
$\mathcal H^\beta$ class of probability densities that are uniformly bounded
above and away from zero. Thus, the lower bound can equivalently be interpreted
as a worst-case statement over a finite-regularity class of target
densities.

\subsubsection{Fourier intuition}
\label{subsubsec:fourier-intuition-gaussian-smoothing}
To give some intuition on the result, we explain the fixed-bandwidth obstruction for the linearized equation in the simplest case, where the
target is the flat density $
    \rho^\star\equiv 1 .$ We use the Fourier convention
\[
    e_n(x)=e^{i n\cdot x},
    \qquad n\in\mathbb Z^d,
\]
so that
\[
    \Delta e_n=-|n|^2 e_n,
    \qquad
    P_s e_n=e^{-s|n|^2}e_n .
\]
For Gaussian drifting with bandwidth $\sigma>0$, set $
    s_\sigma:=\frac{\sigma^2}{2}$.
Since $\rho^\star\equiv1$, the fixed-bandwidth Gaussian drifting equation becomes
\begin{equation}
    \partial_t q_t
    =
    \sigma^2\nabla\cdot\left(
        q_t\nabla\log P_{s_\sigma}q_t
    \right).
    \label{eq:gaussian-fourier-flat-pde}
\end{equation}

We linearize this equation around the equilibrium $q\equiv1$.  Write
\[
    q_t=1+u_t,
    \qquad
    \int_{\T^d}u_t(x)\,\dd x=0,
\]
with $u_t$ small.  Since $P_{s_\sigma}1=1$, one has $
    P_{s_\sigma}q_t
    =
    1+P_{s_\sigma}u_t .$
Therefore
\[
    \log P_{s_\sigma}q_t
    =
    \log(1+P_{s_\sigma}u_t)
    =
    P_{s_\sigma}u_t+O(u_t^2).
\]
Moreover,
\[
    q_t\nabla\log P_{s_\sigma}q_t
    =
    (1+u_t)\nabla(P_{s_\sigma}u_t+O(u_t^2))
    =
    \nabla P_{s_\sigma}u_t+O(u_t^2).
\]
Keeping only the linear terms in \eqref{eq:gaussian-fourier-flat-pde} gives
\begin{equation}
    \partial_t u_t
    =
    \sigma^2\Delta P_{s_\sigma}u_t .
    \label{eq:gaussian-linearized-flat}
\end{equation}

Now decompose $u_t$ into Fourier modes:
\[
    u_t(x)
    =
    \sum_{n\in\mathbb Z^d\setminus\{0\}}
    \widehat u_t(n)e_n(x).
\]
The zero mode is absent because $u_t$ has mean zero.  For each
$n\neq0$,
\[
    \Delta P_{s_\sigma}e_n
    =
    -|n|^2 e^{-s_\sigma |n|^2}e_n .
\]
Thus \eqref{eq:gaussian-linearized-flat} becomes, mode by mode,
\begin{equation}
    \frac{\dd}{\dd t}\widehat u_t(n)
    =
    -\lambda_\sigma(n)\widehat u_t(n),
    \label{eq:gaussian-mode-ode}
\end{equation}
where
\begin{equation}
    \lambda_\sigma(n)
    :=
    \sigma^2 |n|^2 e^{-s_\sigma |n|^2}
    =
    \sigma^2 |n|^2
    \exp\left(-\frac{\sigma^2|n|^2}{2}\right).
    \label{eq:gaussian-fixed-scale-rate}
\end{equation}
Consequently,
\[
    \widehat u_t(n)
    =
    e^{-\lambda_\sigma(n)t}\widehat u_0(n).
\]

The key point is the high-frequency behavior of
$\lambda_\sigma(n)$:
\[
    \lambda_\sigma(n)
    =
    \sigma^2 |n|^2
    e^{-\sigma^2|n|^2/2}
    \longrightarrow 0
    \qquad
    \text{as } |n|\to\infty .
\]
Thus the linearized fixed-bandwidth Gaussian operator has no spectral gap:
\[
    \inf_{n\in\mathbb Z^d\setminus\{0\}}\lambda_\sigma(n)=0.
\]
A high-frequency perturbation of $q_t$ is
almost erased by the heat operator $P_{s_\sigma}$ before the drift is
computed.  The transport field therefore sees the mode only through the tiny
factor $e^{-\sigma^2|n|^2/2}$, and the corresponding damping rate is
exponentially small in $|n|^2$.

For example, take a perturbation along the first coordinate,
\[
    u_0(x)=\varepsilon_n\sin(nx_1),
    \qquad
    \varepsilon_n=n^{-\beta}.
\]
Then the linearized solution is
\[
    u_t(x)
    =
    \varepsilon_n e^{-\lambda_\sigma(n\mathbf e_1)t}\sin(nx_1),
\]
where $\mathbf e_1=(1,0,\ldots,0)$.  The time scale needed to damp this mode is
\[
    \lambda_\sigma(n\mathbf e_1)^{-1}
    =
    \frac{1}{\sigma^2n^2}
    \exp\left(\frac{\sigma^2n^2}{2}\right).
\]
At a given time $t$, choose the frequency $n_t$ so that $
    t\lambda_\sigma(n_t\mathbf e_1)\lesssim 1.$
Equivalently,
\[
    n_t\asymp_\sigma \sqrt{\log t},
\]
up to harmless logarithmic corrections from the polynomial factor $n_t^2$.
For this choice of frequency, the mode has not yet been substantially damped
by time $t$.  Its remaining amplitude is still of order
\[
    n_t^{-\beta}
    \asymp
    (\log t)^{-\beta/2}.
\]
For this choice of frequency, the mode has not yet been substantially damped by time $t$, and therefore
\[
\|u_t\|_{L^1(\mathbb{T}^d)}
\asymp n_t^{-\beta}
\asymp (\log t)^{-\beta/2}.
\]
Squaring this scale gives the entropy obstruction
\[
    \KL(q_t\mid 1)
    \gtrsim
    (\log t)^{-\beta}.
\]
This linearized calculation captures the high-frequency obstruction underlying Proposition~\ref{prop:nonlinear-log-lower-bound} and predicts the critical frequency scale
\[
n \asymp \sqrt{\log t},
\]
used in the nonlinear proof.

\subsection{Laplace smoothing}
\label{sec:laplace-smoothing-rates}

We now consider the same fixed-bandwidth convergence question as Section~\ref{sec:Wgradientflow}, but for a Laplace-type kernel.
The conclusion is better than in the Gaussian case, but still negative. For Gaussian kernels,
Tweedie's formula rewrites the local attraction field as a smoothed score:
\[
    m_\rho^\sigma(x)
    =
    \sigma^2\nabla\log \rho_\sigma(x).
\]
No analogous identity is available for the Laplace kernel.  We therefore keep
the original local attraction field.
For $\tau>0$, let $L_\tau$ denote
the periodized Laplace smoothing kernel.  Its Fourier coefficients are
\begin{equation}
    \widehat L_\tau(m)
    =
    \left(1+\tau^2 |m|^2\right)^{-(d+1)/2},
    \qquad m\in\mathbb Z^d .
    \label{eq:laplace-fourier-multiplier}
\end{equation}
Thus $L_\tau*\rho$ is the Laplace-smoothed version of a density $\rho$. We also need the vector kernel which gives the local average displacement.  On
the torus, it is convenient to define it through its Fourier coefficients:
\begin{equation}
    \widehat K_\tau(m)
    =
    -i\nabla_\xi
    \left[
        \left(1+\tau^2|\xi|^2\right)^{-(d+1)/2}
    \right]_{\xi=m},
    \qquad m\neq0,
    \qquad
    \widehat K_\tau(0)=0 .
    \label{eq:laplace-vector-kernel}
\end{equation}
For a positive density $\rho$ on $\T^d$, define
\begin{equation}
    M_\tau[\rho](x)
    :=
    \frac{K_\tau*\rho(x)}{L_\tau*\rho(x)} .
    \label{eq:true-laplace-local-attraction-field}
\end{equation}
The denominator is the mass of $\rho$ seen from $x$ at scale $\tau$, while the
numerator is the corresponding average displacement.  Hence $M_\tau[\rho](x)$
is the Laplace analogue of the local attraction field used in the definition of
drifting.

Given a target measure $\rho^\star_a$, the fixed-bandwidth Laplace drifting equation is
\begin{equation}
    \partial_t q_t^{a}
    +
    \diver\left(
        q_t^{a}
        \left[
            M_\tau[\rho_a^\star]-M_\tau[q_t^{a}]
        \right]
    \right)
    =0,
    \qquad
    q_0^{a}\equiv1 .
    \label{eq:true-laplace-drifting-pde}
\end{equation}
\subsubsection{A polynomial lower bound}

For a potential $a:\T^d\to\R$, set
\begin{equation}
    \rho_a^\star(x):=
    \frac{e^{a(x)}}{\int_{\T^d}e^{a(y)}\,\dd y}.
\end{equation}
The periodized Laplace kernel $L_\tau$ is strictly positive and Lipschitz, and
the displacement kernel $K_\tau$ is Lipschitz.  These properties follow, for
example, from their absolutely convergent periodized Euclidean
representations.  Proposition~\ref{prop:fixed-scale-wellposedness} therefore
applies to \eqref{eq:true-laplace-drifting-pde}; we denote its unique global
characteristic solution by $q_t^a$.

\begin{proposition}[Fixed-bandwidth Laplace smoothing is polynomially slow]

\label{prop:true-laplace-polynomial-lower-bound}
Let $\tau>0$, $\beta>0$ and $C>0$. Then, there exists $c>0$ such that for every $t\geq0$,
\begin{equation}
    \sup_{a\in\mathcal H_C^\beta(\T^d)}
    \KL(q_t^a\mid \rho_a^\star)
    \geq
    c
    \left(
        1+\frac{t}{\tau^{d+1}}
    \right)^{-2\beta/(d+1)}
    \label{eq:true-laplace-KL-lower}
\end{equation}
and
\begin{equation}
    \sup_{a\in\mathcal H_C^\beta(\T^d)}
    W_2(q_t^a,\rho_a^\star)
    \geq
    c
    \left(
        1+\frac{t}{\tau^{d+1}}
    \right)^{-(\beta+1)/(d+1)}.
    \label{eq:true-laplace-W2-lower}
\end{equation}
\end{proposition}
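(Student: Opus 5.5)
We will follow the same strategy as the Gaussian lower bound (Proposition~\ref{prop:nonlinear-log-lower-bound}), replacing the Gaussian multiplier by the Laplace one. From \eqref{eq:laplace-fourier-multiplier} and \eqref{eq:laplace-vector-kernel}, the linearization of \eqref{eq:true-laplace-drifting-pde} around $q\equiv1$ is a Fourier multiplier. Indeed, $M_\tau[1+u]\approx K_\tau*u$, and $\diver(K_\tau*u)$ has symbol
\[
    i m\cdot\widehat K_\tau(m)=m\cdot\nabla_\xi\bigl[(1+\tau^2|\xi|^2)^{-(d+1)/2}\bigr]_{\xi=m}
    =-(d+1)\tau^2|m|^2(1+\tau^2|m|^2)^{-(d+3)/2}.
\]
Hence mode $m$ is damped at rate
\[
    \lambda_\tau(m)\asymp \tau^2|m|^2(1+\tau^2|m|^2)^{-(d+3)/2}\lesssim \tau^{-(d+1)}|m|^{-(d+1)}
\]
for large $|m|$. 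This decays only polynomially, which explains the polynomial rate. We choose the critical frequency $N\asymp (1+t/\tau^{d+1})^{1/(d+1)}$, so that $t\lambda_\tau(N\mathbf e_1)\lesssim1$. We take the target potential $a_N(x)=\varepsilon N^{-\beta}\cos(Nx_1)$ with $\varepsilon$ small enough that $a_N\in\mathcal H_C^\beta$. The target discrepancy then has amplitude $N^{-\beta}$, and KL of order $N^{-2\beta}$ gives exactly \eqref{eq:true-laplace-KL-lower}.

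\textbf{Nonlinear step.} We avoid a full linearization and instead use a duality or test-function argument. We track the Fourier coefficient
\[
    \phi_t:=\int q_t^a\cos(Nx_1)\,\dd x ,
\]
or better $\int q_t^a\, a\,\dd x$. Initially $\phi_0=0$. The target value is $\int\rho_a^\star\cos(Nx_1)\asymp \varepsilon N^{-\beta}$. Differentiating along \eqref{eq:true-laplace-drifting-pde} gives
\[
    \frac{\dd}{\dd t}\phi_t=\int q_t\,\nabla\cos(Nx_1)\cdot\bigl(M_\tau[\rho^\star_a]-M_\tau[q_t]\bigr)\,\dd x .
\]
We bound the right-hand side by $C\,N\,\|M_\tau[\rho^\star_a]-M_\tau[q_t]\|$ measured against the oscillating test function. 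The key is to show that this velocity difference is small at frequency $N$.

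\textbf{Sources of smallness.} First, $M_\tau[\rho_a^\star]$ differs from $M_\tau[1]=0$ by $K_\tau*\rho^\star_a/L_\tau*\rho^\star_a$. Its frequency-$N$ content is $\lesssim N^{-\beta}|\widehat K_\tau(N)|\lesssim N^{-\beta}\tau^{-(d+1)}N^{-(d+2)}$, while its other harmonics, coming from $e^{a}$, are at frequencies $kN$ and are even smaller. Second, $M_\tau[q_t]$ involves $q_t$ smoothed by $K_\tau,L_\tau$. Because the system is symmetric under the translations $x_1\mapsto x_1+2\pi/N$ (the target is $2\pi/N$-periodic in $x_1$ and $q_0$ is uniform), $q_t$ inherits this periodicity by uniqueness from Proposition~\ref{prop:fixed-scale-wellposedness}. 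So $q_t$ has Fourier support on $N\mathbb Z\mathbf e_1$, and every nonzero mode of $q_t$ is smoothed by at least the factor $|\widehat K_\tau(N)|$ and $\widehat L_\tau(N)$. Combining these with a priori bounds ($q_t$ bounded above and below on $[0,t]$ via the Lipschitz velocity from the well-posedness result), we aim for
\[
    |\dot\phi_t|\lesssim N\cdot N^{-(d+2)}\tau^{-(d+1)}\cdot N^{-\beta}
    \asymp \lambda_\tau(N)N^{-\beta}.
\]
Then $|\phi_t|\le t\lambda_\tau(N)N^{-\beta}\le \tfrac12\varepsilon N^{-\beta}$ for our choice of $N$ (adjusting constants). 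So $q_t$ misses at least half of the target's frequency-$N$ coefficient.

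\textbf{Converting to KL and $W_2$.} Pinsker's inequality with the bounded test function $\cos(Nx_1)$ gives
\[
    \|q_t-\rho_a^\star\|_{TV}\gtrsim N^{-\beta},
\]
and hence KL $\gtrsim N^{-2\beta}$. For $W_2$, we use the Lipschitz test function $N^{-1}\cos(Nx_1)$: $W_1\gtrsim N^{-1}\cdot N^{-\beta}$ and $W_2\ge W_1$. This gives the exponent $(\beta+1)/(d+1)$. Small $t$ is handled by taking $N$ bounded.

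\textbf{Main obstacle.} The hard step is the nonlinear bound on $\dot\phi_t$. The velocity enters multiplied by $q_t$, which may itself carry frequency-$N$ content, so products can create frequency $N$ from the interaction of the zero mode with mode $N$. The periodicity reduction is what makes this tractable: all nonzero modes pass through the small multiplier. We would still need an a priori bound showing $\|q_t-1\|_\infty$ stays $O(N^{-\beta})$-small in a norm preserving this structure, obtained by a Grönwall argument on the characteristics.
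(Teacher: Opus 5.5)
Your proposal is correct and is essentially the paper's proof: it uses the same $2\pi/N$-periodicity reduction to the modes $\ell N\mathbf e_1$, the same multiplier bound $\omega_\tau(N)\lesssim\tau^{-(d+1)}N^{-(d+1)}$ making the velocity $O\bigl(N^{-1}\omega_\tau(N)(\varepsilon_N+\|q_t-1\|_{L^\infty})\bigr)$, the same moment argument with $N\asymp(1+t/\tau^{d+1})^{1/(d+1)}$, and the same Pinsker and $N^{-1}$-scaled Lipschitz test-function conversions (the paper takes the target $1+\varepsilon_N\sin(Nx_1)$ instead of $e^{a_N}/Z$, which only makes the target moment exact). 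The step you flag as the main obstacle is closed in the paper exactly as you suggest, by Gronwall along characteristics for $\log q_t$: since $\|\diver(M_\tau[\rho^\star]-M_\tau[q_t])\|_{L^\infty}\lesssim\omega_\tau(N)(\varepsilon_N+\|q_t-1\|_{L^\infty})$, one gets $\|q_t-1\|_{L^\infty}\lesssim\varepsilon_N$ whenever $\omega_\tau(N)t\le\delta$.
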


The proof of Proposition~\ref{prop:true-laplace-polynomial-lower-bound} can be found in Section~\ref{app:proof-laplace-smoothing}.

\subsubsection{Fourier intuition}
\label{subsubsec:fourier-intuition-laplace-smoothing}

We now explain the fixed-bandwidth Laplace obstruction at the linearized
Fourier level for $\rho^\star\equiv1$.
Since $M_\tau[1]=0$, the Laplace drifting equation becomes
\begin{equation}
    \partial_t q_t
    -
    \diver\left(q_t M_\tau[q_t]\right)
    =0 .
    \label{eq:laplace-fourier-flat-pde}
\end{equation}
Write
\[
    q_t=1+u_t,
    \qquad
    \int_{\T^d}u_t(x)\,\dd x=0,
\]
with $u_t$ small.  Since
\[
    L_\tau*q_t=1+L_\tau*u_t,
    \qquad
    K_\tau*q_t=K_\tau*u_t,
\]
the quotient defining the local attraction field satisfies
\[
    M_\tau[q_t]
    =
    \frac{K_\tau*u_t}{1+L_\tau*u_t}
    =
    K_\tau*u_t+O(u_t^2).
\]
Keeping only the linear terms in \eqref{eq:laplace-fourier-flat-pde} gives
\begin{equation}
    \partial_t u_t
    =
    \diver(K_\tau*u_t).
    \label{eq:laplace-linearized-flat}
\end{equation}

Recall that
\[
    \widehat L_\tau(m)
    =
    \left(1+\tau^2|m|^2\right)^{-(d+1)/2},
\]
by definition of the vector kernel we have
\[
    \widehat K_\tau(m)
    =
    -i\nabla_\xi\widehat L_\tau(\xi)\big|_{\xi=m}
    =
    i(d+1)\tau^2m
    \left(1+\tau^2|m|^2\right)^{-(d+3)/2},
    \qquad m\neq0 .
\]
Therefore
\[
    \diver(K_\tau*e_m)
    =
    i m\cdot \widehat K_\tau(m)e_m
    =
    -\lambda_\tau(m)e_m,
\]
where
\begin{equation}
    \lambda_\tau(m)
    :=
    (d+1)\tau^2|m|^2
    \left(1+\tau^2|m|^2\right)^{-(d+3)/2}.
    \label{eq:laplace-mode-rate}
\end{equation}
Thus, if
\[
    u_t(x)=
    \sum_{m\in\mathbb Z^d\setminus\{0\}}
    \widehat u_t(m)e_m(x),
\]
then \eqref{eq:laplace-linearized-flat} becomes, mode by mode,
\begin{equation}
    \frac{\dd}{\dd t}\widehat u_t(m)
    =
    -\lambda_\tau(m)\widehat u_t(m),
    \qquad
    \widehat u_t(m)=e^{-\lambda_\tau(m)t}\widehat u_0(m).
    \label{eq:laplace-mode-ode}
\end{equation}

The high-frequency behavior is the key point.  From
\eqref{eq:laplace-mode-rate},
\begin{equation}
    \lambda_\tau(m)
    \sim
    (d+1)\tau^{-(d+1)}|m|^{-(d+1)}
    \qquad\text{as } |m|\to\infty .
    \label{eq:laplace-high-frequency-rate}
\end{equation}
In particular,
\[
    \lambda_\tau(m)\longrightarrow0
    \qquad\text{as } |m|\to\infty .
\]
The linearized fixed-bandwidth Laplace operator therefore has no spectral gap.
However, the loss is only polynomial in $|m|$, whereas in the Gaussian case it
is exponential in $|m|^2$.

The target-perturbation calculation now follows the Gaussian case, with a
different mode rate.  Take
\[
    \rho^\star(x)=1+\varepsilon_n\sin(nx_1),
    \qquad
    q_0\equiv1,
    \qquad
    \varepsilon_n=n^{-\beta}.
\]
At the linearized level,
\[
    r_t(x)
    \simeq
    -\varepsilon_n e^{-\lambda_\tau(n\mathbf e_1)t}\sin(nx_1).
\]
Since
\[
    \lambda_\tau(n\mathbf e_1)
    \asymp
    \tau^{-(d+1)}n^{-(d+1)},
\]
choosing
\[
    n_t
    \asymp
    \left(1+\frac{t}{\tau^{d+1}}\right)^{1/(d+1)}
\]
leaves a discrepancy of order
\[
    n_t^{-\beta}
    \asymp
    \left(1+\frac{t}{\tau^{d+1}}\right)^{-\beta/(d+1)}.
\]
After squaring the amplitude, this gives
\[
    \KL(q_t\mid\rho^\star)
    \gtrsim
    \left(1+\frac{t}{\tau^{d+1}}\right)^{-2\beta/(d+1)}.
\]
This linear calculation identifies the high-frequency bottleneck underlying Proposition~\ref{prop:true-laplace-polynomial-lower-bound}: for fixed Laplace smoothing, the damping rate decays polynomially, rather than exponentially as in the Gaussian case, and therefore still vanishes at high frequencies.  

\subsection{Heuristic extension to Gaussian initialization on \texorpdfstring{$\mathbb{R}^d$}{Rd}}

\label{subsec:euclidean-fixed-bandwidth-heuristic}

The lower bounds above were stated on the torus in order to keep the argument
simple: Fourier modes are exact eigenfunctions, the smoothed denominators are
uniformly bounded from below, and the nonlinear flow preserves the
high-frequency symmetry of the examples.  The obstruction itself, however, is
not a compactness phenomenon.  It is a fixed-bandwidth phenomenon,
and the same mechanism should also appear on $\mathbb R^d$ when the
initialization is the standard Gaussian.

Let $\gamma$ denote the standard Gaussian density on $\mathbb R^d$ and consider
targets of the form
\[
    \rho_n^\star(x)=Z_n^{-1}\exp\{\varepsilon_n\sin(n x_1)\}\gamma(x),
    \qquad
    \varepsilon_n=n^{-\beta}.
\]
The potentials $\varepsilon_n\sin(nx_1)$ are uniformly bounded in
$\mathcal H^\beta(\mathbb R^d)$.  Moreover,
\[
    \int_{\mathbb R^d}\sin(nx_1)\rho_n^\star(x)\,\dd x
    =
    \frac{\varepsilon_n}{2}+o(\varepsilon_n),
\]
whereas the same moment is initially zero under $q_0=\gamma$.  Thus, as on the
torus, it is enough to understand how quickly the drifting vector field can
create this oscillatory moment.

For Gaussian smoothing, the attenuation of high-frequency oscillations can be computed explicitly.  If
$G_\sigma$ is the Gaussian kernel with variance $\sigma^2$, then
\[
    \frac{
        G_\sigma*(\gamma\sin(n\cdot))(x)
    }{
        G_\sigma*\gamma(x)
    }
    =
    \exp\left(
        -\frac{\sigma^2n^2}{2(1+\sigma^2)}
    \right)
    \sin\left(\frac{n x_1}{1+\sigma^2}\right).
\]
Thus an oscillation of frequency $n$ in the Gaussian background is reduced by
an exponentially small factor before the score is computed.  Differentiation
only contributes polynomial factors in $n$.  Consequently the fixed-bandwidth
Gaussian field sees the perturbation at size
\[
    \mathrm{poly}(n)\exp(-c_\sigma n^2),
\]
exactly as in the torus calculation.  Choosing
\[
    n\asymp_\sigma \sqrt{\log t}
\]
therefore leaves a perturbation of amplitude $n^{-\beta}$, suggesting the candidate
orders
\[
    \KL(q_t^n\mid \rho_n^\star)\gtrsim (\log t)^{-\beta},
    \qquad
    W_2(q_t^n,\rho_n^\star)\gtrsim (\log t)^{-(\beta+1)/2},
\]
up to changes in constants.

For the Laplace kernel the same wave-packet picture gives the polynomial
analogue.  The Fourier transform of $\gamma(x)\sin(nx_1)$ is concentrated in
two packets centered at $\pm n e_1$.  At high frequency, the Laplace multiplier
is essentially constant on each packet and has size
\[
    (1+\tau^2 n^2)^{-(d+1)/2}.
\]
For the drifting vector field, after taking the divergence, this gives the
same effective damping scale as on the torus,
\[
    \lambda_\tau(n)
    \asymp
    \tau^{-(d+1)}n^{-(d+1)}.
\]
Choosing
\[
    n\asymp
    \left(1+\frac{t}{\tau^{d+1}}\right)^{1/(d+1)}
\]
therefore suggests the same candidate rates
\[
    \KL(q_t^n\mid \rho_n^\star)
    \gtrsim
    \left(1+\frac{t}{\tau^{d+1}}\right)^{-2\beta/(d+1)},
    \qquad
    W_2(q_t^n,\rho_n^\star)
    \gtrsim
    \left(1+\frac{t}{\tau^{d+1}}\right)^{-(\beta+1)/(d+1)}.
\]

We do not state this Euclidean version as a theorem.  Proving it would require
technical estimates which are absent from the torus proof and would not change
the underlying message.  On the torus, the examples remain in an exactly
invariant high-frequency subspace, and the positivity of the heat kernel gives
a uniform lower bound on all smoothed denominators.  On $\mathbb R^d$, both
features disappear.  The Gaussian background couples the oscillation to the
slowly varying envelope; the nonlinear transport can leak mass into nearby
frequencies; and the denominators $K*q_t$ decay in the tails, so one needs
weighted lower bounds, tail propagation, and weighted stability estimates for
the quotient defining the drift.  For the Laplace kernel, one would additionally
need a pseudodifferential version of the torus multiplier estimate, uniformly
under the nonlinear evolution.

These difficulties are technical rather than conceptual.  Locally, on spatial
scales of order $1/n$, the Gaussian background is nearly constant and the
fixed-bandwidth kernel acts on the oscillatory part exactly as its Fourier
multiplier predicts.  The torus result isolates this mechanism without the
extra weighted analysis required on $\mathbb R^d$.  We therefore view the
compact setting as a clean model for the fixed-bandwidth obstruction, not as
the source of the obstruction.

\section{Continuum multihead drifting restores exponential convergence}
\label{sec:natural-exponential-convergence}
\subsection{The multihead population field}
The results of Section~\ref{sec:counterexample} show that no single bandwidth gives a uniform rate over the target classes considered there. We therefore average the drifting field over a range of bandwidths. We first give the empirical construction and then define the population equation analyzed in this section.

At the finite-sample level this means replacing one drifting field by a weighted superposition of
drifting fields.  If $0<s_1<\cdots<s_J$ are smoothing scales and $w_j>0$ are weights, the
multihead version has the form
\begin{equation}
    \widehat V_{\ell,n}^{\rm multi}(x)
    =
    \sum_{j=1}^J w_j\,
    \widehat V_{\ell,n}^{s_j}(x),
    \label{eq:empirical-multiscale-drifting-field}
\end{equation}
where $\widehat V_{\ell,n}^{s_j}$ is the usual velocity field at
scale $s_j$.  This multihead algorithm is 
close to practical implementations in which the regression target is built from several bandwidths rather than from a single one as done in the original paper~\cite{deng2026drifting}.

The population object studied in this section is the continuous-scale analogue of
\eqref{eq:empirical-multiscale-drifting-field}.  Let $(\mathsf P_s)_{s\ge0}$ be a smoothing family
with positive kernels.  Given a target law $\rho^\star$ and a current law $\rho$, define the
scale-$s$ smoothed log-ratio by
\begin{equation}
    \psi_s^{\mathsf P}[\rho\mid\rho^\star](x)
    :=
    \log\frac{\mathsf P_s\rho(x)}{\mathsf P_s\rho^\star(x)}.
    \label{eq:abstract-smoothed-log-ratio}
\end{equation}
The associated scale-averaged potential is an average of these smoothed log-ratios,
\begin{equation}
    \Phi_S^{\mathsf P}[\rho\mid\rho^\star](x)
    :=
    \frac{1}{S}\int_0^S \psi_s^{\mathsf P}[\rho\mid\rho^\star](x)\,\dd s.
    \label{eq:abstract-multiscale-potential}
\end{equation}
The population multihead drifting velocity is then
\begin{equation}
    v^{\mathsf P}_{\rho\mid\rho^\star}(x)
    =
    -\nabla \Phi_S^{\mathsf P}[\rho\mid\rho^\star](x),
    \label{eq:abstract-multiscale-velocity}
\end{equation}
and the corresponding evolution is
\begin{equation}
    \partial_t\rho_t
    =
    \nabla\cdot\bigl(\rho_t\nabla\Phi_S^{\mathsf P}[\rho_t\mid\rho^\star]\bigr).
    \label{eq:abstract-multiscale-flow}
\end{equation}
Note that for the heat kernel, this construction is not simply the classical drifting
field evaluated at several bandwidths and averaged with the same weights. Indeed, the classical Gaussian drifting field at bandwidth $\sigma=\sqrt{2s}$ is
$$V_{\rho^\star,\rho}^{\sigma}=-\sigma^2\nabla \log\frac{\mathsf P_s\rho}{\mathsf P_s\rho^\star},$$ whereas
\eqref{eq:abstract-multiscale-velocity} corresponds to
\[
    v^{\rm heat}_{\rho\mid\rho^\star}
    =
    \frac{1}{S}\int_0^S \frac{1}{2s}
    V_{\rho^\star,\rho}^{\sqrt{2s}}\,\dd s,
\]
so removing the prefactor $\sigma^2$ amounts to using a non-uniform scale weighting proportional to
$1/\sigma^2$, and therefore gives relatively more weight to fine scales. In practice, this corresponds to taking weights $w_j=\frac{1}{2s_jJ}$ in \eqref{eq:empirical-multiscale-drifting-field}.

\subsection{Fourier intuition: scale averaging creates a spectral gap}
\label{subsec:fourier-intuition-multiscale-heat}

We now give a simple Fourier calculation explaining why averaging over heat scales removes the
high-frequency obstruction present at any fixed positive scale. This calculation is linear and should
be understood as the model case behind the perturbative nonlinear argument below.

We work on the flat torus with the target density $\rho^\star\equiv1$. Write a nearby density as
\[
    \rho_t=1+u_t,
    \qquad
    \int_{\T^d}u_t(x)\,\dd x=0.
\]
Consider first a single heat scale $s>0$. At the linearized level, the smoothed log-ratio is
\[
    \log(P_s\rho_t)
    =
    \log(1+P_su_t)
    =
    P_su_t+O(u_t^2).
\]
Thus the linearized potential generated by the scale $s$ is
\[
    \Phi_s^{\rm lin}[u_t]=P_su_t.
\]
The population equation has the form
\[
    \partial_t\rho_t
    =
    \nabla\cdot(\rho_t\nabla\Phi[\rho_t]).
\]
Linearizing around $\rho^\star\equiv1$ gives
\[
    \partial_tu_t
    =
    \Delta P_su_t.
\]
Now write
\[
    u_t(x)=\sum_{n\in\mathbb Z^d\setminus\{0\}}\widehat u_t(n)e_n(x).
\]
The zero mode is absent because $u_t$ has mean zero. For each nonzero Fourier mode,
\[
    \Delta P_s e_n
    =
    -|n|^2e^{-s|n|^2}e_n.
\]
Therefore
\[
    \frac{\dd}{\dd t}\widehat u_t(n)
    =
    -\lambda_s(n)\widehat u_t(n), \qquad \text{with } \lambda_s(n)
    =
    |n|^2e^{-s|n|^2}.
\]
For every fixed $s>0$,
$
    \lambda_s(n)
    \longrightarrow0$
as $n$ tends to infinity.
This is the fixed-scale obstruction: the kernel almost removes very high frequencies before the
drift has a chance to act on them.

The multihead field averages these fixed-scale contributions over $s\in[0,S]$. At the linearized
level, the damping rate of the Fourier mode $n$ is therefore the average of the scale-wise damping
rates:
\[
    \lambda_S^{\rm multi}(n)
    =
    \frac{1}{S}\int_0^S \lambda_s(n)\,\dd s
    =
    \frac{1}{S}\int_0^S |n|^2e^{-s|n|^2}\,\dd s
    =
    \frac{1-e^{-S|n|^2}}{S}.
\]
Thus the scale-averaged linearized equation reads, mode by mode,
\[
    \frac{\dd}{\dd t}\widehat u_t(n)
    =
    -\lambda_S^{\rm multi}(n)\widehat u_t(n),
    \qquad
    \lambda_S^{\rm multi}(n)
    =
    \frac{1-e^{-S|n|^2}}{S}.
\]
The behavior is now completely different from the fixed-scale one. Indeed, since $n\in\mathbb Z^d\setminus\{0\}$, one has $|n|\ge1$; therefore
\[
    \lambda_S^{\rm multi}(n)
    =
    \frac{1-e^{-S|n|^2}}{S}
    \ge
    \frac{1-e^{-S}}{S}
    =:c_S>0.
\]
This gives a genuine spectral gap for the linearized scale-averaged operator. More precisely,
\[
    \frac{\dd}{\dd t}|\widehat u_t(n)|^2
    =
    -2\lambda_S^{\rm multi}(n)|\widehat u_t(n)|^2
    \le
    -2c_S|\widehat u_t(n)|^2
    \qquad
    \text{for every } n\neq0.
\]
Summing over Fourier modes gives, for any Sobolev index $r\ge0$,
\[
    \|u_t\|_{H^r}
    \le
    e^{-c_St}\|u_0\|_{H^r}.
\]
This is the linear spectral-gap mechanism used in the nonlinear theorem: fixed heat scales damp each
frequency at a rate that vanishes at high frequency, whereas averaging over all heat scales
$0\le s\le S$ gives a uniform positive damping rate.

\subsection{Local exponential convergence for multihead heat drifting on the torus}
\label{subsec:local-torus-heat}

The slow-convergence results above are fixed-bandwidth statements.  We now record the complementary local result for the multihead heat field.  In this subsection $\T^d=(\R/2\pi\mathbb Z)^d$ is equipped with normalized Lebesgue measure; the choice of period only changes constants.  Let
\[
    P_s=e^{s\Delta},\qquad 0\le s\le S,
\]
and take the uniform scale average $\mu_S=\frac{1}{S}\mathds 1_{[0,S]}(s)\,\dd s$ in \eqref{eq:abstract-multiscale-potential}.  If
\[
    \rho^\star=e^{a}\,\dd x,
    \qquad
    \int_{\T^d}e^{a}\,\dd x=1,
\]
and $h=\log(\rho/\rho^\star)$, then the scale-averaged heat potential is
\begin{equation}
    \Phi^{\rm heat}_S[h]
    =\frac{1}{S}\int_0^S
      \log\frac{P_s(e^{h+a})}{P_s(e^{a})}\,\dd s.
    \label{eq:main-heat-multiscale-potential}
\end{equation}
The corresponding evolution is
\begin{equation}
    \partial_t\rho_t
    =\nabla\cdot\bigl(\rho_t\nabla\Phi^{\rm heat}_S[h_t]\bigr),
    \qquad
    h_t=\log\frac{\rho_t}{\rho^\star},
    \qquad
    \rho_0=\dd x.
    \label{eq:main-heat-multiscale-flow}
\end{equation}
\begin{theorem}[Local exponential convergence for multihead heat drifting]
\label{thm:local-torus-heat}
Fix $S>0$ and $d\ge1$.  There exist constants
\[
    \varepsilon_S>0,
    \qquad
    c_S>0,
    \qquad
    C_S<\infty,
\]
such that the following holds.  Let $a\in\mathcal H^3(\T^d)$ satisfy
\[
    \int_{\T^d}e^{a}\,\dd x=1,
    \qquad
    \|a\|_{\mathcal H^3}\le \varepsilon_S.
\]
Then there exists a unique global solution $(\rho_t)_{t\ge0}$ to
\eqref{eq:main-heat-multiscale-flow}.  Furthermore, for every $t\ge0$, it
satisfies
\begin{equation}
    \left\|\nabla\log\frac{\rho_t}{\rho^\star}\right\|_{\mathcal H^1}
    \le
    C_S e^{-c_St}\|\nabla a\|_{\mathcal H^1},
    \label{eq:main-heat-score-decay}
\end{equation}
and
\begin{equation}
    \KL(\rho_t\mid\rho^\star)
    \le
    C_S e^{-2c_St}\|\nabla a\|_{\mathcal H^1}^2.
    \label{eq:main-heat-entropy-decay}
\end{equation}
\end{theorem}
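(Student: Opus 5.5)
We work with the log-ratio $h_t=\log(\rho_t/\rho^\star)$ rather than with $\rho_t$ itself. Since $\rho_t=e^{h_t+a}$, the flow \eqref{eq:main-heat-multiscale-flow} becomes a closed nonlocal equation for $h_t$:
\begin{equation}
    \partial_t h_t
    =\Delta\Phi^{\rm heat}_S[h_t]+\nabla(h_t+a)\cdot\nabla\Phi^{\rm heat}_S[h_t],
\end{equation}
with initial datum $h_0=-a$. The plan is to treat this as a small perturbation of the linear operator $h\mapsto\Delta\mathcal A_S h$, where $\mathcal A_S:=\frac1S\int_0^SP_s\,\dd s$. We then close a Gronwall-type estimate on $g_t:=\nabla h_t$ in $\mathcal H^1$, using a weighted energy on Fourier side or, more robustly, $L^2$-based Sobolev energies together with the embedding back to $\mathcal H^1$.

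\textbf{Step 1 (linear part).} Expand $\Phi^{\rm heat}_S[h]$ around $h=0$, $a=0$. Write
\[
    \frac{P_s(e^{h+a})}{P_s(e^{a})}=1+\frac{P_s((e^h-1)e^a)}{P_s e^a}.
\]
This gives $\Phi^{\rm heat}_S[h]=\mathcal A_S h+\mathcal R[h,a]$, where the remainder $\mathcal R$ is at least quadratic in $h$ or bilinear in $(h,a)$. The key point is the Fourier computation of Subsection~\ref{subsec:fourier-intuition-multiscale-heat}: $-\Delta\mathcal A_S$ has symbol $(1-e^{-S|n|^2})/S\ge c_S$ on mean-zero functions, and it is bounded above by $1/S$. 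It is therefore a bounded, coercive, order-zero operator on $\nabla h$. Hence $\partial_t g=\nabla\Delta\mathcal A_S h+\dots$ yields the linear dissipation
\[
    \tfrac{\dd}{\dd t}\|g\|_{H^r}^2\le-2c_S\|g\|_{H^r}^2+\text{(nonlinear)}
\]
for every $r$. We will use $r$ large enough that $H^r\hookrightarrow\mathcal H^1$ of the gradient, and note that \eqref{eq:main-heat-score-decay} only needs $h$ up to roughly $\mathcal H^2$. Since $\|a\|_{\mathcal H^3}$ is small but not in $H^r$ for large $r$, it may be cleaner to work directly in Hölder spaces. There we use the fact that the order-zero multiplier $\Delta\mathcal A_S=(P_S-I)/S$ is \emph{explicit in physical space}: $\partial_t h=\frac1S(P_Sh-h)+N$. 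This is a bounded linear ODE on $\mathcal H^k$ with $\|e^{t(P_S-I)/S}\|\le Ce^{-c_St}$ on mean-zero functions. Here $\|P_S\|_{\mathcal H^k\to\mathcal H^k}\le1$, and $P_S$ contracts mean-zero data exponentially in $C^k$ because $P_S$ is strictly smoothing on the torus.

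\textbf{Step 2 (nonlinear estimates).} We must bound $N=\Delta\mathcal R+\nabla(h+a)\cdot\nabla\Phi$ in $\mathcal H^2$ (for $h$) as
\[
    \|N\|\le C(\|a\|_{\mathcal H^3}+\|\nabla h\|_{\mathcal H^1})\,\|\nabla h\|_{\mathcal H^1}.
\]
The trouble is $\Delta\mathcal R$: it contains terms like $\Delta\frac1S\int_0^S P_s(\text{quadratic})\,\dd s$, and near $s=0$ the averaging gives no smoothing. We would handle this by the identity $\frac1S\int_0^S\Delta P_s f\,\dd s=\frac1S(P_Sf-f)$. This loses no derivatives whenever the integrand has the structure $P_s(F)$, which holds for the numerator terms; the denominators $P_se^a$ are smooth with $\|\cdot\|_{\mathcal H^3}$ control. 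For the genuinely quotient-structured terms, we differentiate the quotient explicitly and use $\|\nabla P_sf\|\lesssim\|\nabla f\|$. This is where $a\in\mathcal H^3$ enters: two derivatives fall on $P_s e^a$, plus one more for the Hölder seminorm. \textbf{This derivative bookkeeping is the main obstacle}: it must show the nonlinearity loses no derivatives relative to the order-zero linear part, since otherwise the scheme fails at fine scales, which is exactly the regime the multihead averaging is meant to tame.

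\textbf{Step 3 (closing).} Local well-posedness follows from Picard iteration, since $h\mapsto\Phi^{\rm heat}_S[h]$ is locally Lipschitz $\mathcal H^2\to\mathcal H^2$ after Step 2, so the equation is an ODE in a Banach space. A bootstrap with Duhamel's formula then gives
\[
    X(t)\le Ce^{-c_St}X(0)+C\int_0^te^{-c_S(t-u)}(\varepsilon_S+X(u))X(u)\,\dd u,
\]
where $X(t)=\|\nabla h_t\|_{\mathcal H^1}$. For $\varepsilon_S$ small this yields global existence and \eqref{eq:main-heat-score-decay}, with $c_S$ halved. Uniqueness follows from the same Lipschitz estimate. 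Finally, for \eqref{eq:main-heat-entropy-decay} we use $\KL(\rho_t\mid\rho^\star)\le\int(e^{h}-1)h\,\dd\rho^\star$. Normalization forces $h_t$ to vanish somewhere, so $\|h_t\|_\infty\le\pi\sqrt d\|\nabla h_t\|_\infty$, and hence $\KL\lesssim\|\nabla h_t\|_\infty^2$. Alternatively, a log-Sobolev inequality for $\rho^\star$, which is close to uniform, gives the same conclusion.
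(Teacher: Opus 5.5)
There is a genuine gap, and it lies partly outside the place you flag. In Step 2 you claim $\|N\|\le C(\|a\|_{\mathcal H^3}+\|\nabla h\|_{\mathcal H^1})\|\nabla h\|_{\mathcal H^1}$ in the $\mathcal H^2$ norm of $h$. This is false for the transport part $\nabla h\cdot\nabla\Phi^{\rm heat}_S[h]$ of $N$. Its gradient contains $(\nabla^2 h)\,\nabla\Phi^{\rm heat}_S[h]$, and when only $\nabla h\in\mathcal H^1$ is controlled, $\nabla^2 h$ is merely bounded, so this term is not Lipschitz.

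This one-derivative loss is intrinsic to the continuity equation. Since the linear part $(P_S-I)/S$ is of order zero, there is no physical-time smoothing to absorb it. Consequently:
\begin{itemize}
\item the equation is not an ODE on $\mathcal H^2$, so the Picard iteration of Step 3 does not apply;
\item the Duhamel bootstrap with $N$ treated as a forcing cannot be justified;
\item uniqueness does not follow from a Lipschitz estimate in the same norm.
\end{itemize}
What is missing is to keep this term as transport. Set $\xi=\nabla h$, $W=\nabla\Phi^{\rm heat}_S[h]$ and $U=\nabla\mathcal R$. Then $\partial_t\xi=A_S\xi+(W\cdot\nabla)\xi+R$, with $A_S=(P_S-I)/S$ and the order-zero remainder $R=\Delta U+(\nabla^2a)W+(\nabla W)^T\nabla a+(\nabla W)^T\xi$. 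You then estimate $\|\xi_t\|_{\mathcal H^1}$ along characteristics: transport preserves the sup norm and multiplies the Lipschitz seminorm by the Lipschitz constant of the flow. Combined with the contraction $\|e^{tA_S}\xi\|_{\mathcal H^1}\le e^{-\lambda_St}\|\xi\|_{\mathcal H^1}$ on score fields, this gives $D^+y+\lambda_Sy\le C\|W\|_{\mathcal H^2}\,y+\|R\|_{\mathcal H^1}$ for $y=\|\xi\|_{\mathcal H^1}$.

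That inequality needs $\|W\|_{\mathcal H^2}\lesssim\|\nabla h\|_{\mathcal H^1}$, i.e.\ a genuine derivative \emph{gain} from the scale average, which your plan never establishes. Existence likewise requires a quasilinear scheme rather than a Banach-space ODE: iteration with a frozen transport velocity, bounds in a high norm, contraction in a lower one, a continuation criterion, and, at $\mathcal H^3$ regularity of $a$, mollification plus compactness. Endpoint uniqueness needs a stability estimate for the scores in $L^\infty$ along characteristics, not in $\mathcal H^1$.

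The cancellation you call the main obstacle is also left open, and splitting into numerator and quotient terms does not resolve it. The $s$-integrand $\psi_s-P_sh$, with $\psi_s=\log\bigl(P_s(e^{h+a})/P_s(e^a)\bigr)$, is not of the form $P_sF$ with $F$ independent of $s$. Differentiating quotients using $\|\nabla P_sf\|\lesssim\|\nabla f\|$ gains nothing near $s=0$.

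The device that makes your identity usable is that $\psi_s$ satisfies a viscous Hamilton--Jacobi equation in the scale variable: $\partial_s\psi_s=\Delta\psi_s+|\nabla\psi_s|^2+2c_s\cdot\nabla\psi_s$, with $c_s=\nabla\log P_s(e^a)$. Hence $z_s=\nabla\psi_s$ solves $\partial_sz_s=\Delta z_s+F_s(z_s)$ with $F_s(z)=2(c_s+z)\cdot\nabla z+2(\nabla c_s)z$. Duhamel's formula and Fubini, together with $\int_u^S\Delta P_{s-u}\,\dd s=P_{S-u}-I$, then give $\Delta U=\frac1S\int_0^S(P_{S-u}-I)F_u(z_u)\,\dd u$. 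Bounding this in $\mathcal H^1$ still requires $\int_0^S\|z_s\|_{\mathcal H^2}\,\dd s\lesssim\|\nabla h\|_{\mathcal H^1}$. That is a parabolic $s^{-1/2}$ gain in the scale variable, with the Burgers-type term $z\cdot\nabla z$ absorbed by smallness. The same estimate supplies the bound on $\|W\|_{\mathcal H^2}$ needed above.

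Your entropy step, by contrast, is correct. The normalization forces $h_t$ to vanish somewhere, so $\|h_t\|_{L^\infty}\le\pi\sqrt d\,\|\nabla h_t\|_{L^\infty}$ and $\KL\le\int(e^{h_t}-1)h_t\,\dd\rho^\star\lesssim\|\nabla h_t\|_{L^\infty}^2$. On the torus this is a more elementary alternative to the log-Sobolev argument used in the paper.
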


The proof of Theorem~\ref{thm:local-torus-heat} can be found in Section~\ref{app:local-multiscale-proofs}. This result shows that the weighted scale-averaging construction removes the high-frequency obstruction of fixed-bandwidth drifting. By assigning greater relative weight to fine scales while retaining information from coarser scales, it restores local exponential convergence to the target, both in the score norm and in relative entropy.

\subsection{Local exponential convergence for multihead Ornstein--Uhlenbeck drifting on \texorpdfstring{$\mathbb{R}^d$}{Rd}}
\label{subsec:local-ou}

On $\R^d$ the corresponding local result uses the Ornstein--Uhlenbeck semigroup rather than the translation-invariant heat semigroup.  Let
\[
    \dd\gamma(x)=(2\pi)^{-d/2}e^{-|x|^2/2}\,\dd x,
    \qquad
    L_\gamma=\Delta-x\cdot\nabla,
    \qquad
    Q_s=e^{sL_\gamma}.
\]
Equivalently, Mehler's formula gives, for every bounded measurable function
$f\colon \mathbb{R}^d \to \mathbb{R}$,
\begin{equation}\label{eq:Mehler1}
    Q_s f(x)
    =
    \mathbb{E}_{Z\sim \mathcal N (0,1)}\left[
        f\left(e^{-s}x+\sqrt{1-e^{-2s}}\,Z\right)
    \right].
\end{equation}
For a target
\[
    \rho^\star=e^{a}\gamma,
    \qquad
    \int_{\R^d}e^{a}\,\dd\gamma=1,
\]
and $h=\log(\rho/\rho^\star)$, define
\begin{equation}
    \Phi^{\rm OU}_S[h]
    =\frac{1}{S}\int_0^S
      \log\frac{Q_s(e^{h+a})}{Q_s(e^{a})}\,\dd s.
    \label{eq:main-ou-multiscale-potential}
\end{equation}
The population OU multihead drifting equation is
\begin{equation}
    \partial_t\rho_t
    =\nabla\cdot\bigl(\rho_t\nabla\Phi^{\rm OU}_S[h_t]\bigr),
    \qquad
    h_t=\log\frac{\dd\rho_t}{\dd\rho^\star},
    \qquad
    \rho_0=\gamma.
    \label{eq:main-ou-multiscale-flow}
\end{equation}

We use the Ornstein--Uhlenbeck kernel on $\mathbb{R}^d$ because it is naturally adapted to the Gaussian reference measure: it preserves $\gamma$ and the denominators $Q_s(e^a)$ remain uniformly controlled when $a$ is small. This avoids the weighted tail estimates required for the heat kernel and allows the argument to be carried out in unweighted H\"older norms. We nevertheless regard this choice as mainly technical. Indeed, by Mehler's formula~\eqref{eq:Mehler1}, Ornstein--Uhlenbeck smoothing is heat smoothing combined with a deterministic spatial contraction and therefore has the same short-scale regularization that drives the scale-averaging coercivity mechanism. We consequently expect an analogous local exponential convergence result for the heat kernel on $\mathbb{R}^d$, once the proof is supplemented with suitable weighted denominator bounds, tail-propagation estimates, and weighted stability estimates. 

\begin{theorem}[Local exponential convergence for OU multihead drifting]
\label{thm:local-ou}
Fix $S>0$ and $d\ge1$.  There exist constants
\[
    \varepsilon_S>0,
    \qquad
    c_S>0,
    \qquad
    C_S<\infty,
\]
depending only on $d$ and $S$, such that the following holds.  Let $a\in\mathcal H^3(\R^d)$ satisfy
\[
    \int_{\R^d}e^{a}\,\dd\gamma=1,
    \qquad
    \|a\|_{\mathcal H^3}\le \varepsilon_S.
\]
Then there exists a unique global solution $(\rho_t)_{t\ge0}$ to
\eqref{eq:main-ou-multiscale-flow}.  Furthermore, for every $t\ge0$, it
satisfies
\begin{equation}
    \left\|\nabla\log\frac{\dd\rho_t}{\dd\rho^\star}\right\|_{\mathcal H^1}
    \le
    C_S e^{-c_St}\|\nabla a\|_{\mathcal H^1},
    \label{eq:main-ou-score-decay}
\end{equation}
and
\begin{equation}
    \KL(\rho_t\mid\rho^\star)
    \le
    C_S e^{-2c_St}\|\nabla a\|_{\mathcal H^1}^2.
    \label{eq:main-ou-entropy-decay}
\end{equation}
\end{theorem}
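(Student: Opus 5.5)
We would argue perturbatively around the Gaussian equilibrium, working with the log-ratio $h_t=\log(\dd\rho_t/\dd\rho^\star)$ rather than with the density. Writing $\rho_t=e^{h_t+a}\gamma$, the equation \eqref{eq:main-ou-multiscale-flow} becomes a closed evolution for $h_t$:
\[
    \partial_t h_t
    =
    \Delta\Phi_t+\nabla(h_t+a-\tfrac{|x|^2}{2})\cdot\nabla\Phi_t
    =
    L_\gamma\Phi_t+\nabla(h_t+a)\cdot\nabla\Phi_t,
    \qquad
    \Phi_t:=\Phi^{\rm OU}_S[h_t].
\]
The first task is to expand $\Phi^{\rm OU}_S[h]$ around $h=0$. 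Set $w_s:=Q_s(e^a)$. Then
\[
    \log\frac{Q_s(e^{h+a})}{Q_s(e^a)}
    =
    \log\bigl(1+w_s^{-1}Q_s(e^a(e^h-1))\bigr).
\]
Its linear part in $h$ is $w_s^{-1}Q_s(e^ah)$. This equals $Q_sh$ plus a correction of size $O(\|a\|\,\|h\|)$, with $\|a\|=\|a\|_{\mathcal H^3}\le\varepsilon_S$. Hence
\[
    \partial_t h=\mathcal L h+\mathcal R_a[h]+\mathcal N[h],
    \qquad
    \mathcal L h:=\frac1S\int_0^S L_\gamma Q_sh\,\dd s=\frac{Q_Sh-h}{S},
\]
where $\mathcal R_a$ is linear with operator norm $O(\varepsilon_S)$ and $\mathcal N$ is at least quadratic. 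The closed form for $\mathcal L$ is the OU analogue of the Fourier computation of Section~\ref{subsec:fourier-intuition-multiscale-heat}. By $\frac{d}{ds}Q_s=L_\gamma Q_s$, the scale integral telescopes, and the linearized flow is $\partial_th=-\frac1S(I-Q_S)h$.

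The second step, and the heart of the argument, is coercivity of $\frac1S(I-Q_S)$ in the norm of the theorem, namely on the gradient $\nabla h$ in $\mathcal H^1$. We would use the commutation $\nabla Q_s=e^{-s}Q_s\nabla$, which follows from Mehler's formula \eqref{eq:Mehler1}. It gives
\[
    \partial_t\nabla h=-\tfrac1S\nabla h+\tfrac{e^{-S}}{S}Q_S\nabla h+\dots
\]
Since $Q_S$ is a contraction on $L^\infty$ and on Hölder seminorms (again by Mehler), the linear part satisfies a Grönwall-type bound in $\mathcal H^1$ with rate
\[
    c_S^0=\frac{1-e^{-S}}{S}>0,
\]
and $\mathcal H^2$ derivatives carry the factor $e^{-2S}$, which is even better. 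This is exactly the spectral gap seen on the torus. Note that the mean of $h$ is not controlled this way, but only $\nabla h$ matters for the stated bounds and for $\Phi$ up to constants. We would then obtain the decay bound \eqref{eq:main-ou-score-decay} by a bootstrap. Assume $\|\nabla h_t\|_{\mathcal H^1}\le 2C\varepsilon$ on $[0,T]$, with $\|\nabla h_0\|=\|\nabla a\|$ since $h_0=-a$. Bound
\[
    \|\nabla\mathcal R_a[h]\|_{\mathcal H^1}+\|\nabla\mathcal N[h]\|_{\mathcal H^1}
    \lesssim(\varepsilon_S+\|\nabla h\|_{\mathcal H^1})\|\nabla h\|_{\mathcal H^1},
\]
absorb this into the gap for $\varepsilon_S$ small, and close with $c_S=c_S^0/2$. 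Local well-posedness comes from a Picard iteration, because the right-hand side is a locally Lipschitz map in these norms: $\nabla\Phi$ is smoothing, since $\nabla Q_s$ gains regularity for $s>0$ while $s\to0$ gives at worst the identity. Continuation then yields the global solution and uniqueness.

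The main obstacle is these nonlinear estimates. The transport term $\nabla(h+a)\cdot\nabla\Phi$ and the commutator of $\nabla$ with $w_s^{-1}Q_s(e^a\cdot)$ involve unbounded weights: $x\cdot\nabla$ appears inside $L_\gamma$, and $\nabla w_s^{-1}$ must be handled. We also need $\nabla\Phi\in\mathcal H^2$ without losing a derivative relative to $\nabla h\in\mathcal H^1$, because small $s$ gives no smoothing. We would handle the first issue by working only with $\nabla\Phi$, never $L_\gamma\Phi$, so that $x$ never appears. Concretely, we differentiate the equation for $h$ via the identity
\[
    \partial_t\nabla h=\nabla\bigl(L_\gamma\Phi+\nabla(h+a)\cdot\nabla\Phi\bigr),
\]
and use $\nabla L_\gamma=(L_\gamma-1)\nabla$, so that $\nabla h$ solves a transport-type system driven by $-\nabla h+e^{-s}$-weighted smoothed terms. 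For the derivative loss, the key structural fact is that the top-order term is exactly $-\frac1S\nabla h$, a zeroth-order damping, while everything else is either smoothed by $Q_s$ or multiplied by $\varepsilon$. A characteristic (Lagrangian) estimate along the flow of $-\nabla\Phi$ then controls the $\mathcal H^1$ seminorm.

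Finally, the entropy bound \eqref{eq:main-ou-entropy-decay} would follow from the score decay. Since $\rho^\star=e^a\gamma$ with $a$ small and bounded, $\rho^\star$ satisfies a log-Sobolev inequality with constant close to $1$ by Holley--Stroock. Therefore
\[
    \KL(\rho_t\mid\rho^\star)\le C\int|\nabla h_t|^2\,\dd\rho_t\le C\|\nabla h_t\|_{L^\infty}^2\le C_Se^{-2c_St}\|\nabla a\|_{\mathcal H^1}^2.
\]
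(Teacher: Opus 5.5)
Your architecture is the paper's: the log-ratio $h$, the score norm $\|\nabla h\|_{\mathcal H^1}$, the linear gap $(1-e^{-S})/S$ from $\nabla Q_s=e^{-s}Q_s\nabla$, a characteristic estimate for transport, a bootstrap, and a log-Sobolev inequality for the entropy (Holley--Stroock is fine). The gap is the step you yourself call the main obstacle: you assert it, but the mechanism you give does not work.

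\textbf{The missing estimate.} Differentiating the $h$-equation produces $\nabla L_\gamma\Phi=(L_\gamma-I)W$, with $W=\nabla\Phi$. This is a second-order operator, containing the unbounded weight $x\cdot\nabla$, applied to the velocity. You must bound it in $\mathcal H^1$ using only $\nabla h\in\mathcal H^1$.
\begin{itemize}
\item Parabolic smoothing cannot do this: it turns the two derivatives into a factor $s^{-1}$, which is not integrable at $s=0$.
\item For the linearization you avoid this with the exact telescoping $\int_0^S L_\gamma Q_s f\,\dd s=Q_Sf-f$. That identity needs $Q_s$ applied to an $s$-independent $f$. The terms you put in $\mathcal R_a$ and $\mathcal N$ lack this structure: they involve $w_s^{-1}$, the logarithm, and $h$ nonlinearly.
\item For those terms naive smoothing diverges in the same way. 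A prefactor $\varepsilon_S$ or $\|\nabla h\|_{\mathcal H^1}$ does nothing about a loss of derivatives.
\end{itemize}
So your bound $\|\nabla\mathcal R_a[h]\|_{\mathcal H^1}+\|\nabla\mathcal N[h]\|_{\mathcal H^1}\lesssim(\varepsilon_S+\|\nabla h\|_{\mathcal H^1})\|\nabla h\|_{\mathcal H^1}$ is unproved. It is false for the part $(W\cdot\nabla)\nabla h$, which must go to the characteristic side, as you later note.

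You also need $\|W\|_{\mathcal H^2}\lesssim\|\nabla h\|_{\mathcal H^1}$ for the full nonlinear $W$. This controls $(\nabla W)^T\nabla h$, $(\nabla W)^T\nabla a$ and $(\nabla^2a)W$ in $\mathcal H^1$. Your remark that ``$s\to0$ gives at worst the identity'' only yields $W\in\mathcal H^1$. Your plan to ``never'' see $L_\gamma\Phi$ is also not available, since $\nabla L_\gamma\Phi$ still contains $x\cdot\nabla$.

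\textbf{The missing idea.} The whole nonlinear family $\psi_s[h]=\log\bigl(Q_s(e^{h+a})/Q_s(e^a)\bigr)$ solves a parabolic equation in the scale variable:
\[
\partial_s\psi_s=L_\gamma\psi_s+|\nabla\psi_s|^2+2\nabla\log Q_s(e^a)\cdot\nabla\psi_s .
\]
Hence $z_s=\nabla\psi_s$ satisfies $\partial_sz_s=(L_\gamma-I)z_s+F_s(z_s)$, where $F_s(z)=2(c_s+z)\cdot\nabla z+2(\nabla c_s)^Tz$ and $c_s=\nabla\log Q_s(e^a)$. Three steps then follow.
\begin{itemize}
\item A tilted-covariance bound gives $\sup_s\|z_s\|_{\mathcal H^1}\lesssim\|\nabla h\|_{\mathcal H^1}$.
\item Duhamel in $s$, with absorption, gives the averaged gain $\int_0^S\|z_s\|_{\mathcal H^2}\,\dd s\lesssim\|\nabla h\|_{\mathcal H^1}$.
\item Integrating the Duhamel formula over $s$ gives the nonlinear version of your telescoping,
\[
(L_\gamma-I)W=\frac{e^{-S}Q_S-I}{S}\nabla h+\frac1S\int_0^S\bigl(e^{-(S-u)}Q_{S-u}-I\bigr)F_u(z_u)\,\dd u .
\]
Its last term is $O\bigl((\varepsilon_S+\|\nabla h\|_{\mathcal H^1})\|\nabla h\|_{\mathcal H^1}\bigr)$ in $\mathcal H^1$ and contains no weight $x$.
\end{itemize}

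\textbf{Well-posedness.} This step also fails as stated. Because of $(W\cdot\nabla)\nabla h$, the right-hand side is not locally Lipschitz in $\mathcal H^1$. So Picard iteration does not close in the theorem's norms, and continuation does not give uniqueness when only $a\in\mathcal H^3$. You need three ingredients:
\begin{itemize}
\item smooth solutions for mollified targets, built by Picard for the linear transport system in high norms with tame estimates;
\item passage to the limit by compactness, using the uniform decay bound;
\item a separate uniqueness argument from an $L^\infty$ stability estimate for $\nabla h_t-\nabla\widetilde h_t$, which again rests on the scale-equation structure.
\end{itemize}
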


The proof of Theorem~\ref{thm:local-ou} can be found in Section~\ref{app:local-multiscale-proofs}. As in the torus setting, the scale average removes the fixed-bandwidth blind spot. The resulting multihead dynamics therefore restores local exponential convergence to the target, both in the score norm and in relative entropy.

\begin{remark}[Finite versus continuum multihead fields]
The uniform spectral gap above uses the full interval $s\in[0,S]$, and therefore includes arbitrarily small scales. Any finite collection $s_1,\ldots,s_J>0$ has a smallest positive scale, so it does not retain a uniform gap over arbitrarily high frequencies. It can nevertheless approximate the continuum mechanism over any fixed range of resolved frequencies. Theorems~\ref{thm:local-torus-heat} and~\ref{thm:local-ou} concern the continuum population flow; the finite-scale experiments below test its finite-resolution analogue.
\end{remark}

\section{Numerical evidence for multihead acceleration}
\label{sec:numerical-experiments}
This section presents Experiment~1, a simple test of the
scale-separation mechanism predicted by the analysis.  We study a smooth
oscillatory target in $\R^2$ first at the population-PDE level and then with
finite data and minibatches.  The two more complex finite-data experiments (a
regularized curve-supported target in $\R^2$ and a target made of parallel
sheets in $\R^{20}$) are reported in
\hyperref[app:additional-numerical-experiments]{Appendix~\ref*{app:additional-numerical-experiments}}.

All reported experiments are deliberately designed to contain components at
distinct spatial or frequency scales.  This setting highlights the regime in
which the multihead method can benefit from resolving several scales
simultaneously.  We also conducted simpler experiments with less pronounced
frequency separation; in those cases, the multihead and single-scale methods
achieved essentially the same errors.  This indicates that the multihead
advantage is associated with genuinely multiscale features rather than with a
systematic improvement in every setting.

\subsection{Experimental design and common protocol}
\label{subsec:new-common-protocol}
In all reported experiments, the initial distribution is the standard Gaussian,
$\rho_0=\gamma$.
\paragraph{Methods and scale selection.}
Each reported experiment compares Gaussian single-scale, Gaussian multihead, Laplace
single-scale, and Laplace multihead drifting.  The Gaussian field uses the
normalized score mismatch
$-\nabla\log(P_s q/P_s\rho^\star)$, without the usual factor $2s$.
The Laplace field is divided by its leading small-bandwidth factor
$(d+1)\tau^2$. These normalizations remove the principal scale-dependent
slowdown and allow a fair comparison of single-scale and multihead algorithms. Note that multiplying any of these vector fields by a positive constant does not change the paths followed by the particles; it only changes the speed at which the particles move along those paths. In this setting, a multihead field is therefore an average of the
corresponding single-scale fields.

For each method, we perform several validation runs with different scale locations and weights, select the best-performing configuration, and then use these choices for conducting the reported test runs. All comparisons are at equal evolution time, not equal computation: a multihead update evaluates several fields. The experiments therefore measure acceleration of the population dynamics rather than computational runtime.

\paragraph{Finite-data comparisons.}
In the finite-data experiments, the target is represented by a fixed training
sample and the evolving law by particles.  All finite-data experiments use
$50{,}000$ target observations and at each iteration draw one target
minibatch and one particle minibatch, each of size $2048$. 

\paragraph{KL evaluation and computational interpretation.}
The KL error is evaluated independently of the grids and minibatches used to
construct the velocity.  Particle histograms are formed on a finer grid,
smoothed with the fixed bandwidth of size 0.05, and compared with an analytic or
high-accuracy population representation of the target. 

\paragraph{Ornstein--Uhlenbeck comparison.}
We do not report the OU multihead method separately because, across our experiments, its errors and convergence curves were consistently close to those of the Gaussian multihead method. We therefore show only the Gaussian multihead results.

\subsection{A simple multiscale target}
\label{subsec:experiment1-new-protocol}
This experiment is the most controlled test of the scale-separation mechanism.  The target
contains one broad and one fine oscillatory component.  The population-PDE version removes all
sampling and estimation effects and therefore tests the intrinsic dynamics.
The finite-data version then asks whether the same ordering survives particle
and minibatch noise.  This setting is the closest experiment to the Gaussian-reference and
Fourier intuition developed in Section~\ref{sec:natural-exponential-convergence}.

\paragraph{Target.}
The target density is
\begin{equation} 
    \rho^\star(x)
    =Z^{-1}\exp\Big(\sin(20x_1+7x_2)-5\cos(x_1+x_2)\Big)\gamma(x).
    \label{eq:experiment1-target-revised}
\end{equation}
The cosine term creates a broad modulation, whereas the sine term creates a
much finer oscillation.

\subsubsection{Population PDE}
\label{subsec:experiment1-new-pde-protocol}

\paragraph{Protocol.}
Here $q_t$ and $\rho^\star$ are represented directly, so no samples,
particles, minibatches, or density estimator enter the dynamics.  We solve the
non-local continuity equation on a $224\times224$ grid over
$[-5.5,5.5]^2$ with a conservative finite-volume scheme and an adaptive time
step bounded by $0.02$.  All methods are run to $T=20$. 
    
\begin{table}[H]
\centering
\small
\begin{tabular}{lll}
\toprule
Kernel & Single-scale & Multihead\\
\midrule
Gaussian & $s=0.005$ & 12-node quadrature on $[0,0.08]$ \\
Laplace & $\tau=0.098$ & 12-node quadrature on $[0,0.135]$  \\
\bottomrule
\end{tabular}
\caption{Scale configurations for Experiment~1 at the population-PDE level.}
\label{tab:experiment1-pde-configurations}
\end{table}

\begin{figure}[H]
    \centering
    \includegraphics[width=0.88\textwidth]{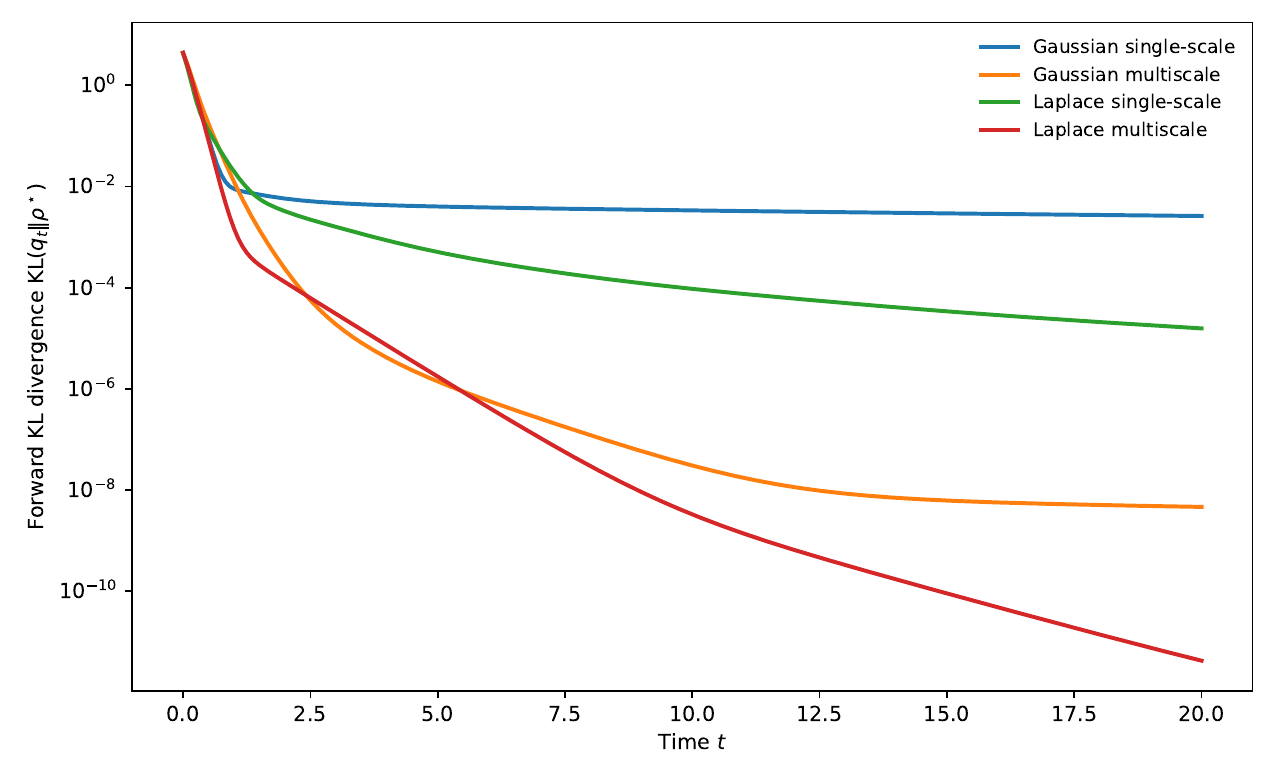}
    \caption{Forward KL for Experiment~1 under the deterministic PDE protocol.
    Both multihead flows continue to decrease after the corresponding
    single-scale flow slows down.}
    \label{fig:experiment1-new-pde-kl}
\end{figure}

\begin{table}[H]
\centering
\small
\begin{tabular}{lccc}
\toprule
Kernel & Single-scale KL & Multihead KL\\
\midrule
Gaussian & $2.6\times10^{-3}$ & $4.6\times10^{-9}$\\
Laplace  & $1.5\times10^{-5}$ & $4.2\times10^{-12}$ \\
\bottomrule
\end{tabular}
\caption{Experiment~1 population-PDE results at the common final time $T=20$.}
\label{tab:experiment1-new-pde-results}
\end{table}

At the same physical time, the multihead error is smaller for both kernels.
The Gaussian multihead flow reaches a KL error of $10^{-3}$ at time $1.58$,
whereas the Gaussian single-scale flow does not reach this threshold by time
$20$.  The corresponding Laplace times are $1.07$ and $3.75$.

This behavior is the numerical counterpart of the spectral mechanism in
Section~\ref{sec:natural-exponential-convergence}: a fixed smoothing scale can
make the fine oscillatory mode nearly invisible, whereas averaging across
scales retains a contribution from resolutions at which that mode is visible.
It is consistent with the scale-averaging spectral-gap intuition, but it is not a
direct numerical verification of Theorem~\ref{thm:local-ou}; rather, it tests a broader setting in which the initial condition $\rho_0$ is not close to the target $\rho^\star$. Within this broader setting, the numerical results further suggest that the Laplace multihead method outperforms the other methods, as it attains both the smallest final KL error and the earliest crossing of the $10^{-3}$ threshold.

\subsubsection{Finite data and minibatches}
\label{subsubsec:experiment1-finite-data}
The population experiment deliberately removes statistical error.  This
finite-data counterpart tests whether the multihead advantage remains after
the velocity is estimated from one fixed training sample and paired
minibatches.

\paragraph{Protocol.}
The selected scale configurations are

\begin{table}[H]
\centering
\small
\begin{tabular}{lll}
\toprule
Kernel & Single-scale & Multihead \\
\midrule
Gaussian & $s=0.05$ & $(0.006,0.04,0.06,0.08)$  \\
Laplace & $\tau=0.18$ & $(0.12,0.21,0.30,0.35)$  \\
\bottomrule 
\end{tabular}    
\caption{Scale configurations for Experiment~1 finite-data runs.}
\label{tab:experiment1-finite-configurations}
\end{table}

\begin{figure}[H]
    \centering
    \includegraphics[width=0.88\textwidth]{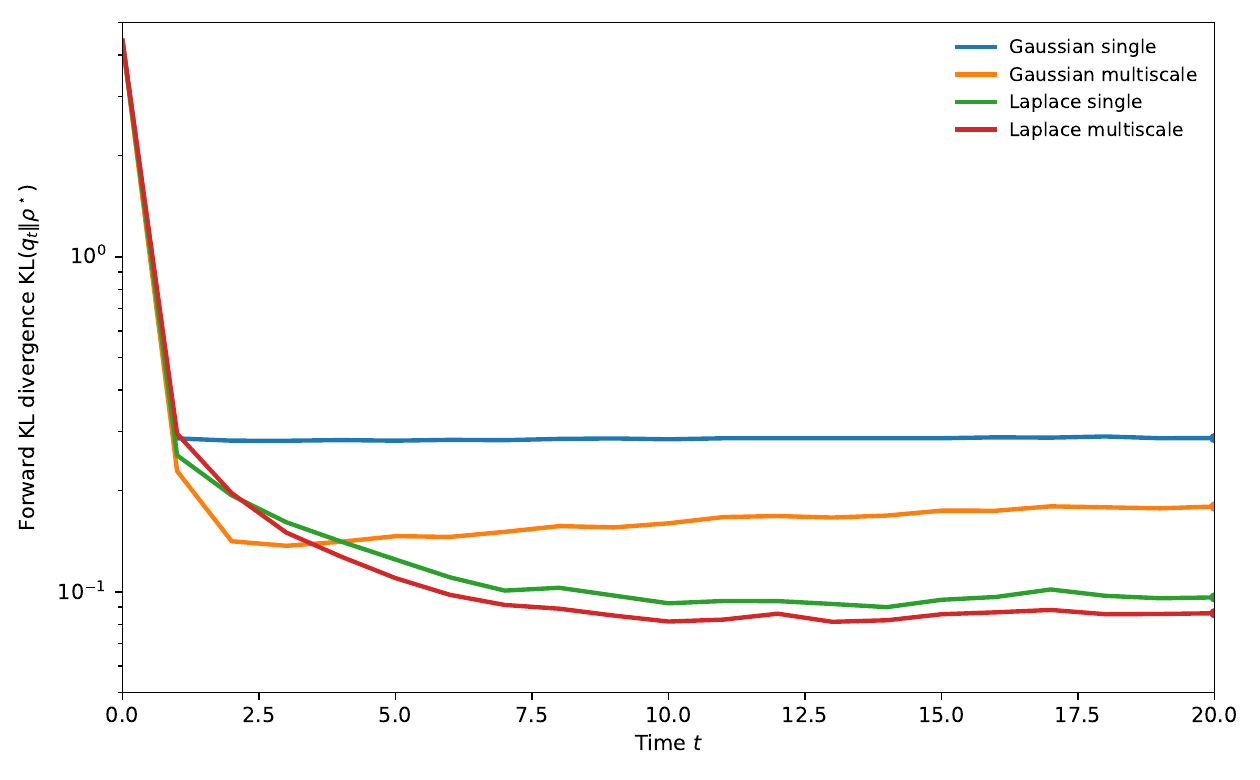}
    \caption{Mean forward KL over three paired finite-data repetitions for
    Experiment~1.}
    \label{fig:experiment1-new-protocol-kl}
\end{figure}

\begin{table}[H]
\centering
\small
\begin{tabular}{lccc}
\toprule
Kernel & Single-scale KL & Multihead KL \\
\midrule
Gaussian & $0.287$ & $0.179$ \\
Laplace  & $0.096$ & $0.086$ \\
\bottomrule
\end{tabular}
\caption{Experiment~1 finite-data results at $T=20$.}
\label{tab:experiment1-new-protocol-results}
\end{table}

Relative to the corresponding single-scale method, multihead reduces the mean
terminal KL by $37.5\%$ for Gaussian and by $10.3\%$ for Laplace. Under this finite-data protocol, the same ordering as in the population experiment persists despite sampling and minibatch noise: the multihead flow achieves a lower mean terminal KL for both kernels. Although statistical noise leads to substantially larger terminal errors and reduces the separation between some methods, the multihead flow achieves a lower mean KL for both kernels.

\paragraph{Additional experiments.}
The main text focuses on the controlled target above. \hyperref[app:additional-numerical-experiments]{Appendix~\ref*{app:additional-numerical-experiments}} reports two more complex finite-data experiments: a target concentrated near a curve in $\R^2$ and a target formed by parallel sheets in $\R^{20}$. Both use the protocol of Subsection~\ref{subsec:new-common-protocol}, so their results are directly comparable with Experiment~1.

\section{Conclusion and future directions}
\label{sec:conclusion-future-work}

This paper gives, to the best of our knowledge, the first quantitative convergence results for drifting at the level of its population PDE.  The analysis isolates the role of the smoothing scale in the dynamics.  For a fixed bandwidth, the discrepancy between the current law and the target is observed only after smoothing, and therefore small-scale errors are only weakly visible to the velocity field.  In the Gaussian case this loss is exponential in the frequency and leads to the logarithmic lower bound of Proposition~\ref{prop:nonlinear-log-lower-bound}; for the true Laplace-type drifting field the loss is polynomial and gives Proposition~\ref{prop:true-laplace-polynomial-lower-bound}.  Thus fixed-scale drifting does not admit a uniform fast convergence theory over natural finite-regularity target classes.

We overcome this obstruction by introducing a multihead drifting flow, in which the velocity field averages the score mismatch across a continuum range of smoothing scales. Theorems~\ref{thm:local-torus-heat} and~\ref{thm:local-ou} show that, for targets sufficiently close to the flat or Gaussian reference measure in the stated $\mathcal H^3$ topology, the resulting population flow converges exponentially fast, both in logarithmic score and in relative entropy. This gives a theoretical explanation for the empirical advantage of multihead drifting: it prevents fine-scale discrepancies from being smoothed away before they can influence the drift.

The experiments indicate that the same mechanism remains relevant beyond the perturbative setting. In the smooth, curve-supported, and stratified examples, using several scales improves the distributional match. These observations motivate the two questions considered below: whether the local coercive estimate can be extended globally, and whether the population advantage remains after finite-sample, discretization, and map-learning errors are included.

\subsection{Beyond local convergence}

A natural next step is to replace the perturbative convergence theory by a non-local one.  The proof of Theorems~\ref{thm:local-torus-heat} and~\ref{thm:local-ou} is based on a bootstrap argument in which the logarithmic score remains in a small neighborhood of equilibrium.  In that neighborhood, the linear spectral gap produced by scale averaging dominates the nonlinear terms.  A global theory would require a mechanism that prevents the flow from leaving the region where the multihead field remains coercive.

One possible route is to identify a Lyapunov functional or a dissipation inequality adapted to the scale-averaged logarithmic potential.  Such an estimate would have to control the same objects that appear in the local proof: the smoothed logarithmic ratios, the scale-wise denominators, and the covariance terms generated by the tilted kernels.  In the local analysis these quantities are controlled by smallness in $\mathcal H^3$.  Away from this regime, one would need bounds preventing degeneracy of the denominators and growth of the covariance terms along the flow.

A second route is to keep the dynamical viewpoint and look for stability estimates that are localized along the evolving law.  Uniform ambient bounds are unlikely to be the right quantities for rough or nearly singular targets.  For instance, if the target is concentrated near a manifold or a stratified set, normal directions may be strongly contracting while tangential directions are governed by the intrinsic geometry of the support.  The correct stability theory may therefore involve distribution-weighted norms, localized one-sided Lipschitz bounds, or estimates along couplings of the relevant trajectories.  Establishing such localized stability estimates would be a first step toward a convergence theory for non-perturbative and lower-dimensional targets.

\subsection{From the population PDE to the true drifting algorithm}

A second direction is to pass from the population PDE to the finite-sample drifting algorithm of~\cite{deng2026drifting}.  The present paper removes statistical error, time-discretization error, and map-learning error in order to isolate the population mechanism.  A theorem for the true algorithm should keep the population flow as the reference object and prove that the learned discrete trajectory tracks it over the time horizon on which the population equation has already converged.

At a schematic level, one may organize the error as
\[
\begin{aligned}
    W_2(\widehat\rho_N,\rho^\star)
    \leq\;&
    \underbrace{W_2(\rho_T,\rho^\star)}_{\text{population convergence}}
    +
    \underbrace{W_2(\rho_T^\Delta,\rho_T)}_{\text{time discretization}}
    +
    \underbrace{W_2(\rho_N^{n,\Delta},\rho_T^\Delta)}_{\text{empirical field error}}
    +
    \underbrace{W_2(\widehat\rho_N,\rho_N^{n,\Delta})}_{\text{map-learning error}} .
\end{aligned}
\]
Here $\rho_T$ denotes the population multihead flow, $\rho_T^\Delta$ the corresponding time-discretized dynamics with the exact population field, $\rho_N^{n,\Delta}$ the dynamics driven by the empirical multihead field, and $\widehat\rho_N$ the law produced by the learned maps.

The first term is precisely the population problem studied here.  In the local regime it is controlled by Theorems~\ref{thm:local-torus-heat} and~\ref{thm:local-ou}; in a future global theory it would be replaced by the non-local convergence estimate discussed above.

The second term is deterministic.  It should be controlled by a standard discretization argument for the population flow: one propagates perturbations using the one-sided Lipschitz constant of the velocity, while the full spatial and time regularity of the velocity enters through the local truncation error.

The empirical field error is the statistical part of the problem.  In the multihead algorithm the velocity is built from empirical smoothed log-ratios, and therefore contains both numerator and denominator errors at several scales.  A finite-sample proof should control this error in a trajectory-wise norm.  At small scales one expects to need a lower scale cutoff, denominator clipping, and tail localization.  The bias introduced by these operations should be controlled by regularity estimates for the exact score, while the random part should follow from concentration estimates for the smoothed empirical quantities.  The localized score-regularity results of~\cite{stephanovitch2025regularity} provide a natural analytic input for this step.

The last term is the error made when replacing the ideal update map by a learned map.  Conditionally on the empirical field, this should be bounded by a regression inequality for the chosen generator class.  The approximation term in such an inequality requires regularity of good oracle maps.  A natural construction is to choose a smooth transport map $T_i^\star$ pushing the reference Gaussian measure to the population law at step $i$, and then compose it with the exact population update.  Smooth transport-map estimates, such as those in~\cite{stephanovitch2024smooth}, should then give the regularity needed to approximate this oracle map by the chosen architecture.

These four estimates must be combined over the whole training horizon. The main difficulty is to show that the statistical, discretization, and map-learning errors do not grow faster than the population error decays. Such a result would connect the population PDE studied here with the finite-data drifting algorithm.

\section{Acknowledgements}
We warmly thank Maxence Adly, Antonin Chodron de Courcel,
Cyril Imbert, Kimia Nadjahi and Gabriel Peyré for very insightful discussions.

\printbibliography

\clearpage
\appendix
\appendixtoctrue
\appendixtableofcontents
\section{Additional numerical experiments}
\label{app:additional-numerical-experiments}
This appendix reports the two more complex finite-data experiments deferred
from Section~\ref{sec:numerical-experiments}.  They extend the controlled
smooth-target study to a distribution concentrated near a curved set and to a
high-dimensional target with many parallel strata.  Both experiments use the
common finite-data and KL-evaluation protocol described in
Subsection~\ref{subsec:new-common-protocol}.

\subsection{Experiment 2: a regularized curve-supported target in $\R^2$}
\label{subsec:experiment2-new-protocol}

Experiment~1 in Subsection~\ref{subsec:experiment1-new-protocol} encodes scale
separation through Fourier oscillations in a smooth full-dimensional density.  Experiment~2 asks whether the same phenomenon
persists when scale separation is geometric.  The target has a thin normal
direction, a curved global support, and five distinct frequency bands along the
intrinsic coordinate.

\paragraph{Target.}
Let
\begin{equation}
    \Phi(t)=\left(t,\;0.25\sin(3t)+0.09\sin(17t)\right),
    \qquad t\in[-3.8,3.8].
    \label{eq:experiment2-curve}
\end{equation}
A nonuniform intrinsic density on the curve combines five frequency bands
($0.8$, $2.4$, $7$, $18$, and $42$), and
$\mu=\Phi_{\#}(\rho(t)\,\dd t)$ is the resulting singular measure.  The target
used in both training and evaluation is the heat-regularized law
\begin{equation}
    \rho^\star=P_\varepsilon\mu,
    \qquad \varepsilon=0.0012.
    \label{eq:experiment2-regularized-target}
\end{equation}

\paragraph{Protocol.}
The selected scale configurations are

\begin{table}[H]
\centering
\small
\begin{tabular}{lll}
\toprule
Kernel & Single-scale & Multihead \\
\midrule
Gaussian & $s=0.006$ & $(0.0025,0.005,0.01,0.04)$\\
Laplace & $\tau=0.16$ & $(0.06,0.12,0.20,0.36)$\\
\bottomrule
\end{tabular}
\caption{Scale configurations for Experiment~2.}
\label{tab:experiment2-configurations}
\end{table} 

\begin{figure}[H]
    \centering
    \includegraphics[width=0.88\textwidth]{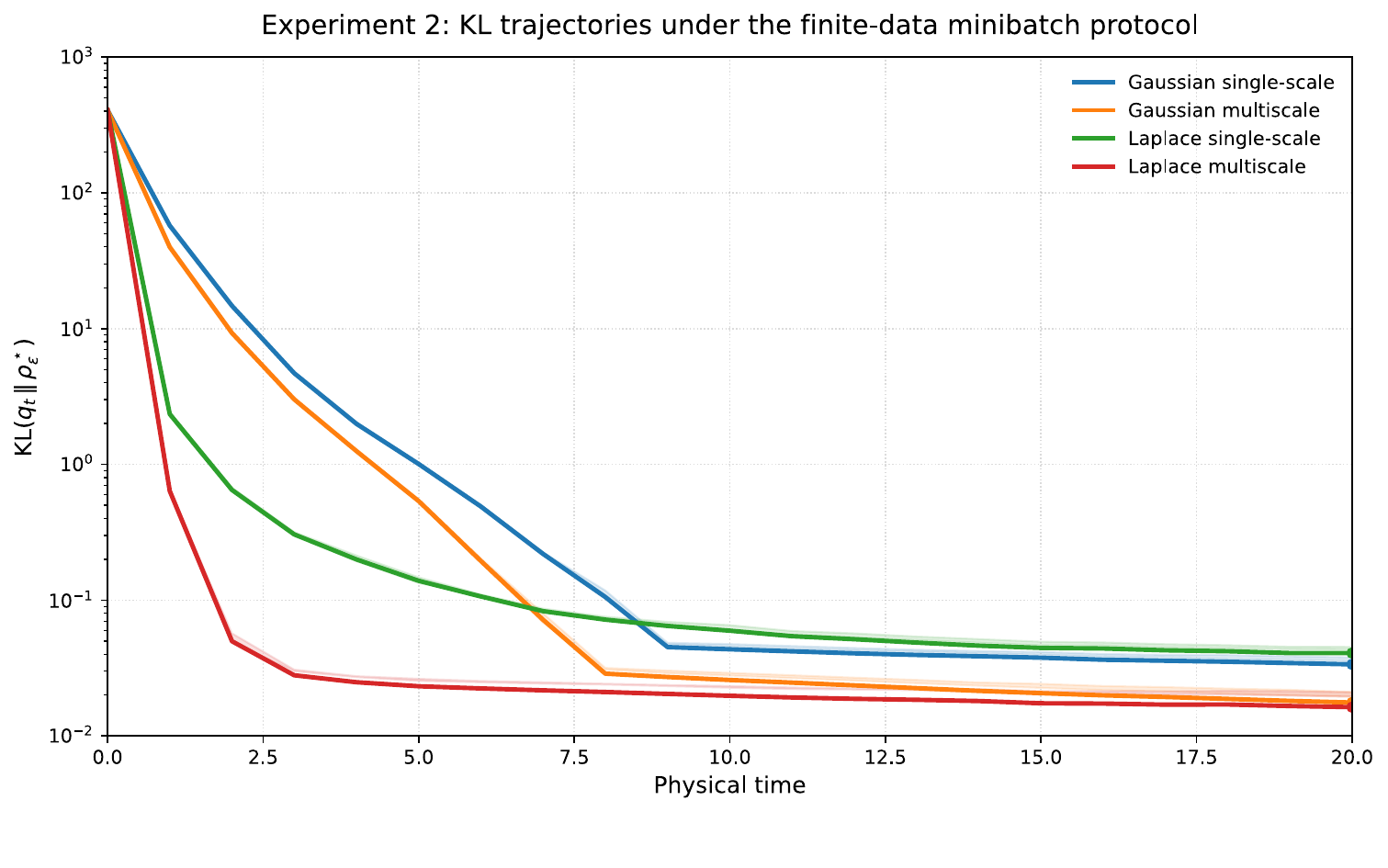}
    \caption{Pooled forward KL for Experiment~2.  Each curve uses the
    $150{,}000$ particles from the three paired repetitions; the shaded range
    is obtained by leaving out one repetition at a time.}
    \label{fig:experiment2-new-protocol-kl}
\end{figure}

\begin{table}[H]
\centering
\small
\begin{tabular}{lccc}
\toprule
Kernel & Single-scale KL & Multihead KL \\
\midrule
Gaussian & $0.0337$ & $0.0176$ \\
Laplace  & $0.0408$ & $0.0163$\\
\bottomrule
\end{tabular}
\caption{Experiment~2 KL estimates at the final time $T=20$.}
\label{tab:experiment2-new-protocol-results}
\end{table}

We see that multihead reduces the terminal KL by $47.7\%$ for Gaussian and by
$60.0\%$ for Laplace. The result is consistent with the fixed-scale obstruction in a geometric form.
A single bandwidth must compromise between moving mass along the full curve,
resolving its intermediate oscillations, and matching the narrow regularized
normal direction.  The multihead field contains scales adapted to each of
these tasks and continues to decrease after the single-scale curves flatten.
Because the target is concentrated near a lower-dimensional set and lies far from the perturbative reference laws, these results suggest that the theoretical mechanism extends beyond the regimes established in this paper.

\subsection{Experiment 3: many parallel sheets in $\R^{20}$}
\label{subsec:experiment3-new-protocol}

The first two experiments create different scales through oscillations or a
curved support.  Here the scales come from the positions and weights
of many parallel sheets. 

\paragraph{Target.}
Write $x=(u,z)$ with $u\in\R^{18}$ and $z\in\R^2$.  For
$i,j\in\{-3,\ldots,3\}$, set
\begin{equation}
    c_{ij}=0.5(i,j),
    \qquad
    S_{ij}=\R^{18}\times\{c_{ij}\}.
    \label{eq:experiment3-sheets}
\end{equation}
Thus there are $49$ parallel $18$-dimensional sheets.  After adding Gaussian
noise, the target density is
\begin{equation}
\begin{aligned}
    \rho^\star(u,z)
    ={}&\mathcal N(0,0.1^2I_{18})(u)
    \sum_{i,j=-3}^{3} w_{ij}\,
    \mathcal N(c_{ij},0.1^2I_2)(z),\\
    w_{ij}
    ={}&\frac1Z
    \exp\!\left(-\frac{\|c_{ij}\|^2}{2}\right)
    \left(1+0.8(-1)^{i+j}\right).
\end{aligned}
    \label{eq:experiment3-target}
\end{equation}
The first factor makes sheets near the center more likely.  The second
factor alternates between $1.8$ and $0.2$, so neighboring sheets have different
weights.  This gives one pattern across the full grid and another between
nearby sheets.
Before the Gaussian noise is added, the target is supported exactly on the
union of the $49$ sheets in~\eqref{eq:experiment3-sheets}.

\paragraph{Protocol.}
The selected scale configurations are
\begin{table}[H]
\centering
\small
\begin{tabular}{llll}
\toprule
Kernel & Single scale & Four scales & Weights \\
\midrule
Gaussian
& $s=0.32$
& $(0.05,0.08,0.30,0.52)$
& $(0.01,0.04,0.35,0.60)$ \\
Laplace
& $\tau=0.23$
& $(0.08,0.13,0.22,0.32)$
& $(0.005,0.020,0.550,0.425)$ \\
\bottomrule
\end{tabular}%
\caption{Scale configurations for Experiment~3.}
\label{tab:experiment3-configurations}
\end{table}
\begin{figure}[H]
    \centering
    \includegraphics[width=0.88\textwidth]{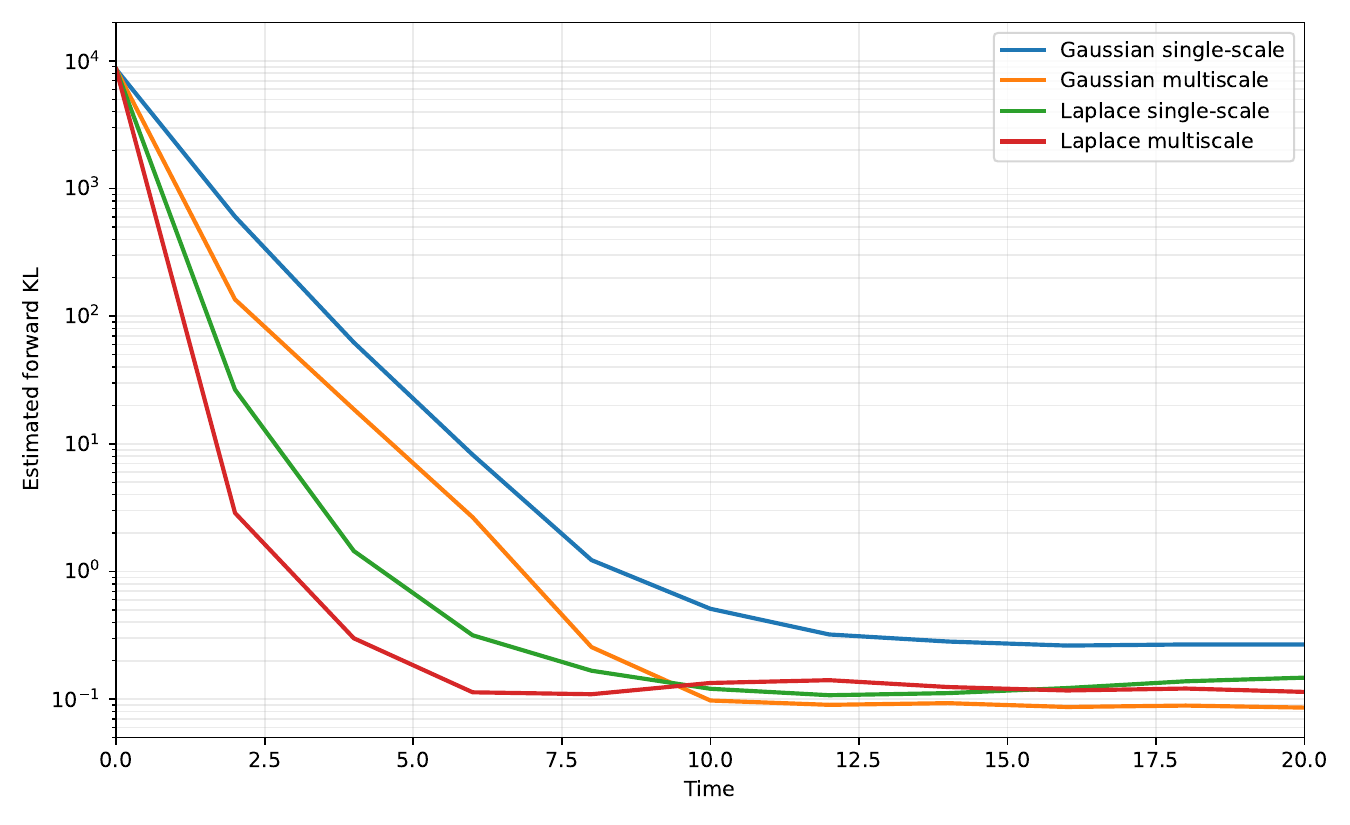}
    \caption{KL error for Experiment~3.}
    \label{fig:experiment3-new-protocol-kl} 
\end{figure}

\begin{table}[H]
\centering
\small
\begin{tabular}{lcc}
\toprule
Kernel & Single-scale KL & Multihead KL \\
\midrule
Gaussian & $1.15419$ & $0.83133$ \\
Laplace  & $0.75062$ & $0.67393$ \\
\bottomrule
\end{tabular}
\caption{Experiment~3 KL estimates at $T=20$.  The estimates use the combined
$6144$ final particles from the three runs.}
\label{tab:experiment3-new-protocol-results}
\end{table}

Using four scales lowers the final KL by $28.0\%$ for Gaussian and by $10.2\%$
for Laplace.  This is expected as a relatively large bandwidth helps move mass across the
full grid, but it smooths over the difference between neighboring heavy and
light sheets.  A smaller bandwidth sees this local difference, but it is less
effective for transport across the grid.  A single bandwidth must compromise
between the two tasks.  The four-scale field contains useful resolutions for
both, so its KL continues to decrease after the single-scale curve begins to
flatten.

Taken together with Experiment~1 in the main text, these results show that the multihead variants achieve lower terminal KL error than the corresponding single-scale methods for both Gaussian and Laplace kernels across all three targets.  The improvement is visible in the deterministic population flow and remains present with finite samples and minibatches, despite the larger statistical error.  These results support using a range of bandwidths for genuinely multiscale targets, while indicating that scale selection and computational cost merit further study.

\section{Well-posedness of the fixed-bandwidth flows}
\label{app:fixed-scale-wellposedness}

The Gaussian and Laplace lower bounds use the same elementary characteristic
well-posedness principle.  At a fixed positive smoothing scale, the denominator
in the normalized local field is bounded away from zero uniformly over all
probability measures.  Consequently the velocity is uniformly Lipschitz in
space and Lipschitz with respect to the evolving law.  We record the full
argument here.  In this section, $W_1$ denotes the $1$-Wasserstein distance on
$\T^d$ associated with the geodesic distance.

\begin{proposition}[Global well-posedness for normalized convolution fields]
\label{prop:fixed-scale-wellposedness}
Let $L:\T^d\to(0,\infty)$ and $K:\T^d\to\R^d$ be periodic Lipschitz
kernels, and define for $\mu\in\mathcal P(\T^d)$
\begin{equation}
    \mathcal B[\mu](x):=\frac{K*\mu(x)}{L*\mu(x)}.
    \label{eq:normalized-convolution-field}
\end{equation}
For every target $\nu\in\mathcal P(\T^d)$ and every initial law
$\mu_0\in\mathcal P(\T^d)$, the equation
\begin{equation}
    \partial_t\mu_t
    +\diver\bigl(\mu_t(\mathcal B[\nu]-\mathcal B[\mu_t])\bigr)=0,
    \qquad
    \mu_{t=0}=\mu_0,
    \label{eq:normalized-convolution-flow}
\end{equation}
has a unique global narrowly continuous distributional solution.  It is the
pushforward of $\mu_0$ by the unique self-consistent characteristic flow
\begin{equation}
    \dot X_t(x)
    =\mathcal B[\nu](X_t(x))-\mathcal B[\mu_t](X_t(x)),
    \qquad
    X_0(x)=x,
    \qquad
    \mu_t=(X_t)_\#\mu_0.
    \label{eq:fixed-scale-characteristics}
\end{equation}
If $\mu_0=q_0\,\dd x$ with $q_0\in L^\infty(\T^d)$ and
$\operatorname*{ess\,inf}q_0>0$, then $\mu_t=q_t\,\dd x$ for all $t\ge0$ and,
for every $T<\infty$, there is $C_T<\infty$, depending only on $T,d,L,K$,
such that
\begin{equation}
    e^{-C_T}\operatorname*{ess\,inf}q_0
    \le q_t\le e^{C_T}\|q_0\|_{L^\infty},
    \qquad 0\le t\le T.
    \label{eq:fixed-scale-density-bounds}
\end{equation}
Finally, every translation symmetry shared by $\nu$ and $\mu_0$ is preserved
by $\mu_t$.
\end{proposition}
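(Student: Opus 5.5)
The plan is a McKean--Vlasov fixed-point argument for the characteristic system \eqref{eq:fixed-scale-characteristics}, in the spirit of Dobrushin's stability estimate. We then upgrade to uniqueness among distributional solutions and derive the density bounds from the Jacobian.

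\emph{Step 1: a priori bounds on the field.} Since $L$ is continuous and positive on the compact torus, $\ell:=\min L>0$. Hence $L*\mu\ge\ell$ for every $\mu\in\mathcal P(\T^d)$, and also $|K*\mu|\le\|K\|_\infty$. The maps $x\mapsto K*\mu(x)$ and $x\mapsto L*\mu(x)$ are Lipschitz with constants $\Lip(K)$ and $\Lip(L)$, uniformly in $\mu$. By Kantorovich--Rubinstein duality,
\[
|K*\mu(x)-K*\mu'(x)|\le\Lip(K)\,W_1(\mu,\mu'),
\]
and similarly for $L$. The quotient rule with denominator at least $\ell$ then gives a constant $M=M(d,L,K)$ with
\[
\|\mathcal B[\mu]\|_\infty\le M,\qquad \Lip(\mathcal B[\mu])\le M,\qquad \|\mathcal B[\mu]-\mathcal B[\mu']\|_\infty\le M\,W_1(\mu,\mu').
\]
The field $\mathcal B[\nu]$ is fixed and satisfies the same bounds.

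\emph{Step 2: existence and uniqueness of characteristics.} On $C([0,T];\mathcal P(\T^d))$, equipped with $\sup_t W_1$, define the map $\Gamma$ as follows. Given a curve $(\mu_t)$, solve the linear ODE $\dot X_t=\mathcal B[\nu](X_t)-\mathcal B[\mu_t](X_t)$. It is globally Lipschitz in space and continuous in time, so Cauchy--Lipschitz yields a flow of homeomorphisms. Set $\Gamma(\mu)_t:=(X_t)_\#\mu_0$.

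For two input curves, Grönwall applied to $|X_t-X'_t|$ gives
\[
\sup_x |X_t-X'_t|\le M e^{2Mt}\int_0^t W_1(\mu_s,\mu'_s)\,\dd s.
\]
Since $W_1(X_\#\mu_0,X'_\#\mu_0)\le\sup_x|X-X'|$, the map $\Gamma$ is a contraction on a short interval $[0,T_0]$. The length $T_0$ depends only on $M$, so iteration gives a unique global self-consistent flow.

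For uniqueness among all narrowly continuous distributional solutions, fix such a solution $(\mu_t)$. Its velocity $\mathcal B[\nu]-\mathcal B[\mu_t]$ is a bounded, uniformly Lipschitz vector field. Uniqueness for the \emph{linear} continuity equation with Lipschitz field (the superposition principle, or a duality argument) forces $\mu_t=(X^\mu_t)_\#\mu_0$. So $\mu$ is a fixed point of $\Gamma$, and by the contraction it is the one constructed above. I expect this step to be the main technical point, mostly a matter of citing the linear theory correctly, e.g.\ Ambrosio--Gigli--Savaré.

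\emph{Step 3: densities and symmetry.} The field $\mathcal B[\mu]$ is only Lipschitz, but its distributional divergence is bounded, with $\|\diver\mathcal B[\mu]\|_\infty\le dM$. Work along the flow with the Jacobian $J_t=\det DX_t$, which exists almost everywhere by Rademacher's theorem and satisfies
\[
\partial_t\log J_t=(\diver v_t)(X_t).
\]
To justify this, mollify the kernels $K$ and $L$, apply the classical smooth formula, and pass to the limit using stability of the flows. This gives $e^{-2dMt}\le J_t\le e^{2dMt}$. The change-of-variables identity $q_t(X_t(x))J_t(x)=q_0(x)$ then yields \eqref{eq:fixed-scale-density-bounds} with $C_T=2dMT$.

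Finally, suppose a translation $\theta$ satisfies $\theta_\#\nu=\nu$ and $\theta_\#\mu_0=\mu_0$. The field commutes with translations: $\mathcal B[\theta_\#\mu](x+h)=\mathcal B[\mu](x)$, where $\theta$ is translation by $h$. Hence $(\theta_\#\mu_t)$ is again a solution with the same initial datum, and uniqueness gives $\theta_\#\mu_t=\mu_t$ for all $t$.
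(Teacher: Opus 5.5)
Your proposal is correct, and its main structure is the paper's proof. The overlap covers four steps:
\begin{itemize}
\item the uniform field bounds obtained from $\min L>0$;
\item the contraction on $C([0,T];\mathcal P(\T^d))$ with $\sup_t W_1$, restarted on intervals of fixed length;
\item the reduction of distributional uniqueness to the linear continuity equation with a bounded Lipschitz field;
\item the equivariance-plus-uniqueness argument for translation symmetries.
\end{itemize}
For the third step, the paper writes out the duality argument with mollified velocities and backward test functions $\psi(X^\varepsilon_{s,t})$. It then closes uniqueness with a direct Gr\"onwall estimate on $\sup_x d(X_t,\widetilde X_t)$, using $W_1(\mu_t,\widetilde\mu_t)\le\sup_x d(X_t,\widetilde X_t)$, rather than re-invoking the contraction.

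The one genuine difference is the density bound. You pass through Liouville's formula $\partial_t\log J_t=(\diver v_t)(X_t)$ plus a mollification argument. The paper reads the Jacobian bounds directly off the flow. Gr\"onwall gives $\Lip(X_t),\Lip(X_t^{-1})\le e^{2C_Bt}$, so at points of differentiability $\|DX_t\|,\|DX_t^{-1}\|\le e^{2C_Bt}$. Hence $e^{-2dC_Bt}\le|\det DX_t|\le e^{2dC_Bt}$, and Rademacher plus the area formula give $q_t(X_t)|\det DX_t|=q_0$. This needs no approximation and yields the same $C_T=2dC_BT$ you obtain.

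Your route is also valid, but the limit passage is subtler than ``stability of the flows'' suggests. Uniform convergence $X_t^\varepsilon\to X_t$ does not give pointwise convergence of $DX_t^\varepsilon$, so you cannot pass to the limit in the Liouville identity itself. What does pass to the limit are the two-sided bounds. One way is through the densities of $(X^\varepsilon_t)_\#(q_0\,\dd x)$, using narrow convergence together with uniform $L^\infty$ bounds. Another is weak-$*$ continuity of determinants under uniformly Lipschitz convergence. Finally, it is simpler to mollify $v_t$ with $\mu_t$ frozen than to mollify $K$ and $L$. Mollifying the kernels changes the self-consistent solution and forces an extra stability estimate with respect to the kernel.
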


\begin{proof}
Set
\begin{equation}
    m_L:=\min_{\T^d}L>0,
    \qquad
    M_0:=\frac{\|K\|_{L^\infty}}{m_L},
    \qquad
    C_B:=
    \frac{\Lip(K)}{m_L}
    +\frac{\|K\|_{L^\infty}\Lip(L)}{m_L^2}.
    \label{eq:fixed-scale-constants}
\end{equation}
The constants are finite because the torus is compact and $L$ is strictly
positive.

\paragraph{Uniform bounds for the normalized field.}
For every $\mu\in\mathcal P(\T^d)$,
\begin{equation}
    L*\mu\ge m_L,
    \qquad
    \|K*\mu\|_{L^\infty}\le\|K\|_{L^\infty},
\end{equation}
so $\|\mathcal B[\mu]\|_{L^\infty}\le M_0$.  If $x,x'\in\T^d$, then
\[
\begin{aligned}
    |K*\mu(x)-K*\mu(x')|
    &\le \Lip(K)d_{\T^d}(x,x'),\\
    |L*\mu(x)-L*\mu(x')|
    &\le \Lip(L)d_{\T^d}(x,x').
\end{aligned}
\]
Using
\[
    \frac{a}{b}-\frac{a'}{b'}
    =\frac{a-a'}{b}
     +a'\frac{b'-b}{bb'}
\]
and the lower bound $b,b'\ge m_L$ gives
\begin{equation}
    \Lip\bigl(\mathcal B[\mu]\bigr)\le C_B
    \qquad\text{uniformly in }\mu.
    \label{eq:fixed-scale-spatial-lipschitz}
\end{equation}
The same calculation controls the dependence on the measure.  Indeed, for any
coupling $\pi\in\Pi(\mu,\widetilde\mu)$ and every $x\in\T^d$,
\[
\begin{aligned}
    |K*\mu(x)-K*\widetilde\mu(x)|
    &\le \Lip(K)
      \int_{\T^d\times\T^d}d_{\T^d}(y,z)\,\dd\pi(y,z),\\
    |L*\mu(x)-L*\widetilde\mu(x)|
    &\le \Lip(L)
      \int_{\T^d\times\T^d}d_{\T^d}(y,z)\,\dd\pi(y,z).
\end{aligned}
\]
Taking the infimum over couplings yields
\begin{equation}
    \|\mathcal B[\mu]-\mathcal B[\widetilde\mu]\|_{L^\infty}
    \le C_B W_1(\mu,\widetilde\mu).
    \label{eq:fixed-scale-measure-lipschitz}
\end{equation}

\paragraph{Local construction by a contraction.}
Fix $T>0$ for the moment and let
\[
    \mathcal X_T:=C([0,T];\mathcal P(\T^d)),
    \qquad
    d_T(m,\widetilde m)
    :=\sup_{0\le t\le T}W_1(m_t,\widetilde m_t).
\]
Because $\T^d$ is compact, $\mathcal P(\T^d)$ is complete for $W_1$, hence so
is $(\mathcal X_T,d_T)$.  For $m\in\mathcal X_T$, define
\begin{equation}
    v_t^m(x):=\mathcal B[\nu](x)-\mathcal B[m_t](x).
\end{equation}
By \eqref{eq:fixed-scale-spatial-lipschitz},
\begin{equation}
    \|v_t^m\|_{L^\infty}\le2M_0,
    \qquad
    \Lip(v_t^m)\le2C_B,
    \label{eq:fixed-scale-velocity-bounds}
\end{equation}
uniformly in $m$ and $t$.  Moreover, narrow continuity of $m$ is equivalent
to $W_1$-continuity on the compact torus, so
\eqref{eq:fixed-scale-measure-lipschitz} implies that $t\mapsto v_t^m$ is
continuous in the uniform norm.  The non-autonomous ODE
\begin{equation}
    \dot X_t^m(x)=v_t^m(X_t^m(x)),
    \qquad X_0^m(x)=x,
    \label{eq:fixed-scale-frozen-ode}
\end{equation}
therefore has a unique global flow.  Define
\begin{equation}
    \Phi(m)_t:=(X_t^m)_\#\mu_0.
\end{equation}
The continuity of the flow shows that $\Phi(m)\in\mathcal X_T$.

We next make the contraction estimate explicit.  Work with periodic lifts of
the vector fields and flows to $\R^d$, choosing the same lift of the initial
point for $X^m$ and $X^{\widetilde m}$.  From
\eqref{eq:fixed-scale-spatial-lipschitz} and
\eqref{eq:fixed-scale-measure-lipschitz},
\begin{equation}
\begin{aligned}
    |X_t^m(x)-X_t^{\widetilde m}(x)|
    \le{}&
    2C_B\int_0^t
        |X_s^m(x)-X_s^{\widetilde m}(x)|\,\dd s \\
    &+C_B\int_0^tW_1(m_s,\widetilde m_s)\,\dd s.
\end{aligned}
\end{equation}
Gronwall's lemma gives, for $0\le t\le T$,
\begin{equation}
    \sup_x d_{\T^d}(X_t^m(x),X_t^{\widetilde m}(x))
    \le
    C_B T e^{2C_BT}d_T(m,\widetilde m).
    \label{eq:fixed-scale-flow-stability}
\end{equation}
Coupling the two pushforwards by the common starting point $x\sim\mu_0$ then
yields
\begin{equation}
    d_T(\Phi(m),\Phi(\widetilde m))
    \le C_B T e^{2C_BT}d_T(m,\widetilde m).
    \label{eq:fixed-scale-contraction}
\end{equation}
Choose $T_*>0$ so that $C_BT_*e^{2C_BT_*}<1$.  Banach's fixed-point theorem
gives a unique $\mu\in\mathcal X_{T_*}$ satisfying
$\mu_t=(X_t^\mu)_\#\mu_0$.  Since $T_*$ depends only on $L$ and $K$, the same
construction can be restarted from $\mu_{T_*}$ on each successive interval of
length $T_*$.  This produces a global narrowly continuous characteristic
solution.

For completeness, the characteristic solution is distributional.  Indeed, if
$\varphi\in C_c^1([0,\infty)\times\T^d)$, differentiation of
$\varphi(t,X_t(x))$ along the flow and integration against $\mu_0$ give the
weak formulation of \eqref{eq:normalized-convolution-flow}.

\paragraph{Uniqueness in the distributional class.}
Let $\mu\in C([0,T];\mathcal P(\T^d))$ be any narrowly continuous
distributional solution of \eqref{eq:normalized-convolution-flow}.  The
associated velocity
\begin{equation}
    v_t(x):=\mathcal B[\nu](x)-\mathcal B[\mu_t](x)
\end{equation}
is bounded, uniformly Lipschitz in $x$, and continuous in $t$ with values in
$C(\T^d;\R^d)$ by the estimates above.  We spell out why the linear continuity equation with this prescribed
velocity has only the characteristic solution.  Fix $t\in[0,T]$ and
$\psi\in C^1(\T^d)$.  Choose smooth periodic vector fields $v^\varepsilon$ on
$[0,t]\times\T^d$ such that $v^\varepsilon\to v$ uniformly and
$\sup_\varepsilon\Lip_x(v^\varepsilon)<\infty$, and let
$X_{s,r}^\varepsilon$ be the corresponding two-parameter flows.  The backward
test function
\begin{equation}
    \varphi^\varepsilon(s,x)
    :=\psi\bigl(X_{s,t}^\varepsilon(x)\bigr)
\end{equation}
is smooth and satisfies
\begin{equation}
    \partial_s\varphi^\varepsilon
    +v_s^\varepsilon\cdot\nabla\varphi^\varepsilon=0,
    \qquad
    \varphi^\varepsilon(t,\cdot)=\psi,
\end{equation}
with $\|\nabla\varphi^\varepsilon\|_{L^\infty}$ bounded uniformly in
$\varepsilon$ by the flow Lipschitz estimate.  Using
$\varphi^\varepsilon$ in the distributional formulation on $[0,t]$ (after an
inessential cutoff at the time endpoints) gives
\begin{equation}
\begin{aligned}
    \int_{\T^d}\psi\,\dd\mu_t
    -\int_{\T^d}\varphi^\varepsilon(0,\cdot)\,\dd\mu_0
    =
    \int_0^t\!\int_{\T^d}
    (v_s-v_s^\varepsilon)\cdot
    \nabla\varphi^\varepsilon\,\dd\mu_s\,\dd s.
\end{aligned}
\end{equation}
The right-hand side tends to zero.  Uniform stability of Lipschitz ODE flows
also gives $X_{0,t}^\varepsilon\to X_{0,t}$ uniformly, where $X_{s,r}$ is the
flow of $v$.  Passing to the limit therefore yields
\begin{equation}
    \int_{\T^d}\psi\,\dd\mu_t
    =\int_{\T^d}\psi(X_{0,t}(x))\,\dd\mu_0(x).
\end{equation}
Since this holds for every $\psi$, every distributional solution necessarily
satisfies
\begin{equation}
    \mu_t=(X_{0,t})_\#\mu_0.
    \label{eq:fixed-scale-representation}
\end{equation}

Now let $\mu$ and $\widetilde\mu$ be two solutions with the same initial law,
and let $X$ and $\widetilde X$ be their flows.  Coupling
$\mu_t$ and $\widetilde\mu_t$ by
$(X_t,\widetilde X_t)_\#\mu_0$ gives
\begin{equation}
    W_1(\mu_t,\widetilde\mu_t)
    \le
    \sup_x d_{\T^d}(X_t(x),\widetilde X_t(x)).
\end{equation}
Repeating the flow comparison above and using this coupling yields
\begin{equation}
    \sup_x d_{\T^d}(X_t(x),\widetilde X_t(x))
    \le
    3C_B\int_0^t
    \sup_x d_{\T^d}(X_s(x),\widetilde X_s(x))\,\dd s.
\end{equation}
Gronwall's lemma implies $X_t=\widetilde X_t$ and therefore
$\mu_t=\widetilde\mu_t$ for every $t$.  This proves global uniqueness in the
stated class.

\paragraph{Density bounds.}
Assume $\mu_0=q_0\,\dd x$ with $q_0\in L^\infty$ and
$\operatorname*{ess\,inf}q_0>0$.  By
\eqref{eq:fixed-scale-velocity-bounds}, the characteristic flow and its inverse
are Lipschitz, with
\begin{equation}
    \Lip(X_t),\ \Lip(X_t^{-1})\le e^{2C_Bt}.
    \label{eq:fixed-scale-bilip}
\end{equation}
Thus $X_t$ is bi-Lipschitz and the pushforward of an absolutely continuous
measure is absolutely continuous.  By Rademacher's theorem and the area
formula, for almost every $x$,
\begin{equation}
    q_t(X_t(x))\,|\det DX_t(x)|=q_0(x).
    \label{eq:fixed-scale-density-formula}
\end{equation}
The bi-Lipschitz estimates imply
\begin{equation}
    e^{-2dC_Bt}
    \le |\det DX_t(x)|
    \le e^{2dC_Bt}
    \qquad\text{for a.e. }x.
\end{equation}
Consequently, for $0\le t\le T$,
\begin{equation}
    e^{-2dC_BT}\operatorname*{ess\,inf}q_0
    \le q_t
    \le e^{2dC_BT}\|q_0\|_{L^\infty}
    \qquad\text{a.e. on }\T^d,
\end{equation}
which is \eqref{eq:fixed-scale-density-bounds} with $C_T=2dC_BT$.

\paragraph{Translation symmetries.}
Let $T_y(x)=x+y$ on $\T^d$, and suppose
$(T_y)_\#\nu=\nu$ and $(T_y)_\#\mu_0=\mu_0$.  Convolution equivariance gives,
for every probability measure $\eta$,
\begin{equation}
    \mathcal B[(T_y)_\#\eta](x)
    =\mathcal B[\eta](x-y).
    \label{eq:fixed-scale-translation-equivariance}
\end{equation}
Hence $\widetilde\mu_t:=(T_y)_\#\mu_t$ solves
\eqref{eq:normalized-convolution-flow} with the same target and the same initial
law as $\mu_t$.  By uniqueness, $\widetilde\mu_t=\mu_t$ for all $t\ge0$.
\end{proof}

\section{Proof of the lower bounds}
\subsection{Proof of the Gaussian smoothing lower bound Proposition~\ref{prop:nonlinear-log-lower-bound}}
\label{app:proof-gaussian-smoothing}

\begin{proof}
Write
\begin{equation}
    P_r=e^{r\Delta},
    \qquad
    r_\sigma=\frac{\sigma^2}{2},
\end{equation}
and set
\begin{equation}
    m:=m_\beta=\max\{j\in\mathbb N_0:j<\beta\},
    \qquad
    \alpha:=\beta-m\in(0,1].
\end{equation}
For $n\ge1$ and a parameter $\eta>0$ to be fixed below, let
\begin{equation}
    h_n(x)=\eta n^{-\beta}\sin(nx_1),
    \qquad
    \rho_n^\star(x)=\frac{\exp(h_n(x))}{\int_{\T^d} \exp(h_n(y))\,\dd y}.
    \label{eq:app-gaussian-target-pn}
\end{equation}
We first verify that this family is uniformly bounded in the H\"older space appearing in the proposition.  If $\kappa$ is a multi-index with $j:=|\kappa|\le m$, then the derivative vanishes unless $\kappa$ is supported in the first coordinate, and in every case
\begin{equation}
    \|\partial^\kappa h_n\|_{L^\infty(\T^d)}
    \le \eta n^{j-\beta}
    \le \eta.
\end{equation}
For $|\kappa|=m$, as
\begin{equation}
    |\psi(u)-\psi(v)|
    \le C_\alpha |u-v|^\alpha,
    \qquad
    \psi\in\{\sin,\cos\},
\end{equation}
this implies, after choosing compatible lifts of the torus coordinates, that
\begin{equation}
\begin{aligned}
    \sup_{x\ne y}
    \frac{|\partial^\kappa h_n(x)-\partial^\kappa h_n(y)|}
         {d_{\T^d}(x,y)^\alpha}
    &\le C_\alpha\eta n^{m-\beta+\alpha} \\
    &=C_\alpha\eta.
\end{aligned}
\end{equation}
Here the same argument covers integer $\beta$, for which $\alpha=1$ and the top-order seminorm is Lipschitz.  Consequently, there exists $C_{d,\beta}<\infty$ such that
\begin{equation}
    \sup_{n\ge1}\|h_n\|_{\mathcal H^\beta(\T^d)}
    \le C_{d,\beta}\eta.
    \label{eq:app-hn-holder-bound}
\end{equation}

Let $q_t^n$ be the solution of \eqref{eq:gaussian-drifting-pa} with $a=h_n$, that is
\begin{equation}
    \partial_t q_t^n
    =
    \sigma^2\diver
    \left(
        q_t^n
        \nabla
        \log\frac{P_{r_\sigma} q_t^n}{P_{r_\sigma} \rho_n^\star}
    \right),
    \qquad
    q_0^n\equiv1.
    \label{eq:app-nonlinear-torus-drifting-pn}
\end{equation}
We show that for a suitable frequency $n=n_t\simeq\sqrt{\log t}$, the solution has not created the $n$-th sine moment of the target by time $t$.

\medskip

\noindent\textbf{Step 1: the smoothed velocity is exponentially small for coordinate-periodic perturbations.}
The target $\rho_n^\star$ is invariant under translations in $x_2,\ldots,x_d$ and under the shift $x_1\mapsto x_1+2\pi/n$.  The initial density has the same symmetries.  Since equation \eqref{eq:app-nonlinear-torus-drifting-pn} is equivariant under torus translations, Proposition~\ref{prop:fixed-scale-wellposedness} implies that $q_t^n$ retains these symmetries for every $t\ge0$.  Consequently, $q_t^n-1$ and $\rho_n^\star-1$ only have Fourier modes in
\begin{equation}
    \{\ell n\mathbf e_1:\ell\in\mathbb Z\setminus\{0\}\},
    \qquad
    \mathbf e_1=(1,0,\ldots,0).
\end{equation}
Let $\rho$ be a probability density on $\T^d$ with these symmetries.  Since
\begin{equation}
    P_{r_\sigma} e_k=e^{-r_\sigma|k|^2}e_k,
    \qquad k\in\mathbb Z^d,
\end{equation}
and $|\widehat{\rho-1}(k)|\le\|\rho-1\|_{L^1}\le2$, we get
\begin{equation}
\begin{aligned}
    \|\nabla P_{r_\sigma}\rho\|_{L^\infty}
    &=
    \|\nabla P_{r_\sigma}(\rho-1)\|_{L^\infty} \\
    &\le
    C\sum_{\ell\ne0}|\ell n|e^{-r_\sigma \ell^2n^2}
    \le
    C e^{-\gamma n^2},
\end{aligned}
\label{eq:app-periodic-smoothing-estimate}
\end{equation}
for constants $C,\gamma>0$ depending only on $\sigma$.

The heat kernel on $\T^d$ is continuous and strictly positive.  Therefore there exists $c_{d,\sigma}^{\rm heat}>0$ such that, for every probability density $\rho$,
\begin{equation}
    P_{r_\sigma}\rho(x)\ge c_{d,\sigma}^{\rm heat},
    \qquad x\in\T^d.
    \label{eq:app-heat-lower-bound}
\end{equation}
Applying \eqref{eq:app-periodic-smoothing-estimate} and \eqref{eq:app-heat-lower-bound} to $\rho=q_t^n$ and $\rho=\rho_n^\star$ gives
\begin{equation}
    \left\|
    \nabla
    \log\frac{P_{r_\sigma} q_t^n}{P_{r_\sigma} \rho_n^\star}
    \right\|_{L^\infty}
    \le
    C e^{-\gamma n^2},
    \label{eq:app-velocity-small}
\end{equation}
where now $C$ may also depend on $d$.

\medskip

\noindent\textbf{Step 2: the model sine moment remains small.}
Define
\begin{equation}
    A_n(t):=\int_{\T^d} \sin(nx_1)q_t^n(x)\,\dd x.
\end{equation}
Since $q_0^n\equiv1$, $A_n(0)=0$.  Differentiating along \eqref{eq:app-nonlinear-torus-drifting-pn} and integrating by parts gives
\begin{equation}
\begin{aligned}
    A_n'(t)
    &=
    \sigma^2\int_{\T^d} \sin(nx_1)
    \diver
    \left(
        q_t^n
        \nabla
        \log\frac{P_{r_\sigma} q_t^n}{P_{r_\sigma} \rho_n^\star}
    \right)\dd x \\
    &=
    -\sigma^2 n\int_{\T^d} \cos(nx_1)q_t^n
    \partial_{x_1}
    \log\frac{P_{r_\sigma} q_t^n}{P_{r_\sigma} \rho_n^\star}\,\dd x .
\end{aligned}
\end{equation}
Using that $q_t^n$ is a probability density and using \eqref{eq:app-velocity-small},
\begin{equation}
    |A_n'(t)|\le Cn e^{-\gamma n^2}.
\end{equation}
Therefore
\begin{equation}
    |A_n(t)|\le Ctn e^{-\gamma n^2}.
    \label{eq:app-model-sine-moment-small}
\end{equation}

\medskip

\noindent\textbf{Step 3: the target sine moment has size $\eta n^{-\beta}$.}
Put $\varepsilon_n:=\eta n^{-\beta}$.  Since the target depends only on $x_1$ and the reference measure is normalized,
\begin{equation}
    \rho_n^\star(x)=\frac{e^{\varepsilon_n\sin(nx_1)}}{\int_{\T^d} e^{\varepsilon_n\sin(ny_1)}\,\dd y}.
\end{equation}
A Taylor expansion at $\varepsilon_n=0$, together with $\int_{\T^d}\sin^2(nx_1)\,\dd x=1/2$, gives
\begin{equation}
    \int_{\T^d} \sin(nx_1)\rho_n^\star(x)\,\dd x
    =
    \frac{\varepsilon_n}{2}+O(\varepsilon_n^3),
\end{equation}
with an absolute implicit constant, uniformly in $n$.  Hence there is a universal $\eta_0>0$ such that, whenever $0<\eta\le\eta_0$,
\begin{equation}
    \int_{\T^d} \sin(nx_1)\rho_n^\star(x)\,\dd x
    \ge
    c\eta n^{-\beta}
    \qquad\text{for every }n\ge1.
    \label{eq:app-target-sine-moment}
\end{equation}
We now fix
\begin{equation}
    0<\eta\le\min\left\{\eta_0,\frac{C}{C_{d,\beta}}\right\}.
    \label{eq:app-gaussian-eta-choice}
\end{equation}
By \eqref{eq:app-hn-holder-bound}, this ensures that $h_n\in\mathcal H_C^\beta(\T^d)$ for every $n\ge1$.

\medskip

\noindent\textbf{Step 4: choose the hard frequency.}
For $t\ge2$, let
\begin{equation}
    n_t:=\left\lceil L\sqrt{\log t}\right\rceil,
\end{equation}
where $L\ge1$ will be fixed momentarily.  Since $n_t\ge L\sqrt{\log t}$ and $n_t\lesssim_L\sqrt{\log t}$ for $t\ge2$, one has
\begin{equation}
    \sup_{t\ge2}t n_t^{\beta+1}e^{-\gamma n_t^2}\longrightarrow0
    \qquad\text{as }L\to\infty.
\end{equation}
We may therefore choose $L$, depending only on $d,\beta,\sigma$ and $\eta$, so large that
\begin{equation}
    Ct n_t^{\beta+1}e^{-\gamma n_t^2}\le \frac{c\eta}{2}
    \qquad\text{for every }t\ge2,
    \label{eq:app-hard-frequency-choice}
\end{equation}
where $C$ and $c$ are the constants in \eqref{eq:app-model-sine-moment-small} and \eqref{eq:app-target-sine-moment}.  Combining those two estimates yields
\begin{equation}
\begin{aligned}
    \left|
    \int_{\T^d} \sin(n_t x_1)(q_t^{n_t}-\rho_{n_t}^\star)\,\dd x
    \right|
    &\ge
    c\eta n_t^{-\beta}-Ct n_t e^{-\gamma n_t^2} \\
    &\ge
    c\eta n_t^{-\beta}.
\end{aligned}
\label{eq:app-sine-moment-gap}
\end{equation}
Since $n_t\asymp_L\sqrt{\log t}$, this gives
\begin{equation}
    \left|
    \int_{\T^d} \sin(n_t x_1)(q_t^{n_t}-\rho_{n_t}^\star)\,\dd x
    \right|
    \ge
    c\eta(\log t)^{-\beta/2}.
    \label{eq:app-sine-moment-gap-log}
\end{equation}

\medskip

\noindent\textbf{Step 5: convert the moment gap into KL and Wasserstein lower bounds.}
By Pinsker's inequality,
\begin{equation}
    \KL(q_t^{n_t}\mid \rho_{n_t}^\star)
    \ge
    2\|q_t^{n_t}-\rho_{n_t}^\star\|_{\operatorname{TV}}^2.
\end{equation}
Since $|\sin(n_t x_1)|\le1$,
\begin{equation}
    \|q_t^{n_t}-\rho_{n_t}^\star\|_{\operatorname{TV}}
    \ge
    \frac12
    \left|
    \int_{\T^d} \sin(n_t x_1)(q_t^{n_t}-\rho_{n_t}^\star)\,\dd x
    \right|.
\end{equation}
Together with \eqref{eq:app-sine-moment-gap-log}, this yields
\begin{equation}
    \KL(q_t^{n_t}\mid \rho_{n_t}^\star)
    \ge
    c\eta^2(\log t)^{-\beta}.
\end{equation}
Taking the supremum over $a\in\mathcal H_C^\beta(\T^d)$ and absorbing the fixed factor $\eta^2$ into the constant proves \eqref{eq:gaussian-Hbeta-KL-lower}.

For the Wasserstein bound, use Kantorovich duality with
\begin{equation}
    \varphi_{n_t}(x):=\frac{\sin(n_t x_1)}{n_t}.
\end{equation}
Because $\|\nabla\varphi_{n_t}\|_{L^\infty}\le1$, this function is $1$-Lipschitz on $\T^d$.  Hence
\begin{equation}
\begin{aligned}
    W_1(q_t^{n_t},\rho_{n_t}^\star)
    &\ge
    \left|
    \int_{\T^d} \frac{\sin(n_t x_1)}{n_t}(q_t^{n_t}-\rho_{n_t}^\star)\,\dd x
    \right| \\
    &\ge
    c\eta n_t^{-(\beta+1)} \\
    &\ge
    c\eta(\log t)^{-(\beta+1)/2}.
\end{aligned}
\end{equation}
Since $W_2\ge W_1$, taking the supremum and absorbing $\eta$ into the constant proves \eqref{eq:gaussian-Hbeta-W2-lower}.
\end{proof}

\subsection{Proof of the Laplace smoothing lower bound Proposition~\ref{prop:true-laplace-polynomial-lower-bound}}
\label{app:proof-laplace-smoothing}

\begin{proof}
Fix $0<\eta\leq1/4$, to be chosen small enough below.  For $n\geq1$, set
\begin{equation}
    \varepsilon_n:=\eta n^{-\beta},
    \qquad
    a^{(n)}(x):=\log\bigl(1+\varepsilon_n\sin(nx_1)\bigr),
    \qquad
    \rho_n^\star:=\rho_{a^{(n)}}^\star.
    \label{eq:true-laplace-bad-target}
\end{equation}
Since $\int_{\T^d}(1+\varepsilon_n\sin(nx_1))\,\dd x=1$, we have
\begin{equation}
    \rho_n^\star(x)=1+\varepsilon_n\sin(nx_1).
\end{equation}
Thus $\rho_n^\star$ is a positive probability density on $\T^d$, and $a^{(n)}$ is smooth.  Let $q_t^n:=q_t^{a^{(n)}}$ be the unique global characteristic solution of \eqref{eq:true-laplace-drifting-pde} with target $\rho_n^\star$, whose existence and positivity follow from Proposition~\ref{prop:fixed-scale-wellposedness}.

\medskip
\noindent\textbf{Step 1: high-frequency attenuation.}
For $n\geq1$, define
\begin{equation}
    \omega_\tau(n)
    :=
    \sum_{\ell\neq0}
    \mu_\tau(\ell n e_1),
    \qquad
    \mu_\tau(m)
    :=
    -m\cdot\nabla_m\widehat L_\tau(m).
\end{equation}
By \eqref{eq:laplace-mode-rate},
\begin{equation}
    \mu_\tau(m)
    =
    (d+1)\tau^2|m|^2
    \left(1+\tau^2|m|^2\right)^{-(d+3)/2},
\end{equation}
and therefore
\begin{equation}
    \omega_\tau(n)
    \leq
    C_d\tau^{-(d+1)}n^{-(d+1)}.
    \label{eq:omega-laplace-bound}
\end{equation}

Let $\rho=1+r$ be a density depending only on $x_1$, with period $2\pi/n$ in the $x_1$ variable, and assume $\|r\|_{L^\infty}\leq1/2$.  Since $K_\tau*1=0$, the numerator of $M_\tau[\rho]$ is $K_\tau*r$.  The only nonzero Fourier modes of $r$ are $\ell n e_1$, $\ell\neq0$.  Hence
\begin{equation}
    \|K_\tau*r\|_{L^\infty}
    \leq
    C n^{-1}\omega_\tau(n)\|r\|_{L^\infty},
    \label{eq:laplace-numerator-small}
\end{equation}
and
\begin{equation}
    \|\diver(K_\tau*r)\|_{L^\infty}
    \leq
    C\omega_\tau(n)\|r\|_{L^\infty}.
    \label{eq:laplace-div-numerator-small}
\end{equation}
Moreover,
\begin{equation}
    L_\tau*\rho=1+L_\tau*r,
    \qquad
    \|L_\tau*r\|_{L^\infty}\leq\|r\|_{L^\infty}\leq1/2,
\end{equation}
so the normalization in $M_\tau[\rho]$ is uniformly nondegenerate.  Differentiating the quotient and using that $\nabla L_\tau$ is integrable gives
\begin{equation}
    \|M_\tau[\rho]\|_{L^\infty}
    \leq
    C n^{-1}\omega_\tau(n)\|r\|_{L^\infty},
    \qquad
    \|\diver M_\tau[\rho]\|_{L^\infty}
    \leq
    C\omega_\tau(n)\|r\|_{L^\infty}.
    \label{eq:true-laplace-invisibility}
\end{equation}
The constant depends only on $d$ and $\tau$.

\medskip
\noindent\textbf{Step 2: the true nonlinear flow stays close to the flat density on the invisible time scale.}
The target $\rho_n^\star$ and the initialization $q_0^n\equiv1$ depend only on $x_1$ and are $2\pi/n$-periodic.  By uniqueness and symmetry, the solution $q_t^n$ has the same property.  Set
\begin{equation}
    R_n(t):=\|q_t^n-1\|_{L^\infty}.
\end{equation}
As long as $R_n(t)\leq1/2$, estimate \eqref{eq:true-laplace-invisibility} gives
\begin{equation}
    \left\|
        \diver\left(M_\tau[\rho_n^\star]-M_\tau[q_t^n]\right)
    \right\|_{L^\infty}
    \leq
    C\omega_\tau(n)\left(\varepsilon_n+R_n(t)\right).
    \label{eq:velocity-div-bound}
\end{equation}
Along characteristics of the transport equation,
\begin{equation}
    \frac{\dd}{\dd t}\log q_t^n(X_t)
    =
    -
    \diver\left(M_\tau[\rho_n^\star]-M_\tau[q_t^n]\right)(X_t).
\end{equation}
Therefore, by \eqref{eq:velocity-div-bound} and Gronwall's inequality, there is a numerical $\delta>0$ such that
\begin{equation}
    \omega_\tau(n)t\leq\delta
    \qquad\Longrightarrow\qquad
    R_n(t)\leq C \varepsilon_n.
    \label{eq:q-close-to-flat}
\end{equation}
After decreasing $\eta$, this closes the bootstrap $R_n(t)\leq1/2$.

\medskip
\noindent\textbf{Step 3: the $n$-th sine moment cannot be created quickly.}
Define the model sine moment
\begin{equation}
    A_n(t)
    :=
    \int_{\T^d}\sin(nx_1)q_t^n(x)\,\dd x.
\end{equation}
Since $q_0^n\equiv1$, $A_n(0)=0$.  Differentiating along \eqref{eq:true-laplace-drifting-pde} gives
\begin{equation}
\begin{aligned}
    A_n'(t)
    &=
    \int_{\T^d}
    \nabla\sin(nx_1)\cdot
    q_t^n
    \left(
        M_\tau[\rho_n^\star]-M_\tau[q_t^n]
    \right)\dd x .
\end{aligned}
\end{equation}
Using \eqref{eq:true-laplace-invisibility}, \eqref{eq:q-close-to-flat}, and $\int q_t^n=1$, we get, whenever $\omega_\tau(n)t\leq\delta$,
\begin{equation}
    |A_n'(t)|
    \leq
    C\omega_\tau(n)\varepsilon_n.
\end{equation}
Thus, after decreasing $\delta$ if necessary,
\begin{equation}
    |A_n(t)|
    \leq
    \frac{\varepsilon_n}{8}
    \qquad
    \text{whenever }\omega_\tau(n)t\leq\delta .
    \label{eq:model-sine-moment-small-laplace}
\end{equation}

On the other hand, the target moment is exact:
\begin{equation}
    \int_{\T^d}\sin(nx_1)\rho_n^\star(x)\,\dd x
    =
    \varepsilon_n\int_{\T^d}\sin^2(nx_1)\,\dd x
    =
    \frac{\varepsilon_n}{2}.
    \label{eq:target-sine-moment-laplace}
\end{equation}
Combining \eqref{eq:model-sine-moment-small-laplace} and \eqref{eq:target-sine-moment-laplace}, we obtain
\begin{equation}
    \left|
    \int_{\T^d}\sin(nx_1)
    \left(q_t^n-\rho_n^\star\right)(x)\,\dd x
    \right|
    \geq
    \frac{3\varepsilon_n}{8}
    \label{eq:true-laplace-moment-gap}
\end{equation}
whenever $\omega_\tau(n)t\leq\delta$.

\medskip
\noindent\textbf{Step 4: choose the invisible frequency.}
Choose $B\geq1$ large enough, depending only on $d,\tau$ and the constants above, and set
\begin{equation}
    n_t
    :=
    \left\lceil
    B
    \left(
        1+\frac{t}{\tau^{d+1}}
    \right)^{1/(d+1)}
    \right\rceil .
\end{equation}
By \eqref{eq:omega-laplace-bound}, this choice ensures
\begin{equation}
    \omega_\tau(n_t)t\leq\delta.
\end{equation}
Therefore \eqref{eq:true-laplace-moment-gap} gives
\begin{equation}
    \left|
    \int_{\T^d}\sin(n_tx_1)
    \left(q_t^{n_t}-\rho_{n_t}^\star\right)(x)\,\dd x
    \right|
    \geq
    c\eta
    \left(
        1+\frac{t}{\tau^{d+1}}
    \right)^{-\beta/(d+1)}.
    \label{eq:true-laplace-final-moment-gap}
\end{equation}

\medskip
\noindent\textbf{Step 5: convert the moment gap into KL and Wasserstein lower bounds.}
Since $|\sin(n_tx_1)|\leq1$, \eqref{eq:true-laplace-final-moment-gap} implies
\begin{equation}
    \|q_t^{n_t}-\rho_{n_t}^\star\|_{L^1}
    \geq
    c\eta
    \left(
        1+\frac{t}{\tau^{d+1}}
    \right)^{-\beta/(d+1)}.
\end{equation}
Pinsker's inequality gives
\begin{equation}
    \KL(q_t^{n_t}\mid \rho_{n_t}^\star)
    \geq
    c\eta^2
    \left(
        1+\frac{t}{\tau^{d+1}}
    \right)^{-2\beta/(d+1)}.
\end{equation}

For the Wasserstein bound, use the $1$-Lipschitz test function
\begin{equation}
    \varphi_t(x):=\frac{\sin(n_tx_1)}{n_t}.
\end{equation}
Kantorovich duality gives
\begin{equation}
\begin{aligned}
    W_2(q_t^{n_t},\rho_{n_t}^\star)
    &\geq
    W_1(q_t^{n_t},\rho_{n_t}^\star) \\
    &\geq
    \left|
    \int_{\T^d}
    \frac{\sin(n_tx_1)}{n_t}
    \left(q_t^{n_t}-\rho_{n_t}^\star\right)(x)\,\dd x
    \right| \\
    &\geq
    c\eta n_t^{-(\beta+1)} \\
    &\geq
    c\eta
    \left(
        1+\frac{t}{\tau^{d+1}}
    \right)^{-(\beta+1)/(d+1)}.
\end{aligned}
\end{equation}

It remains only to verify that the potentials belong to the prescribed smoothness ball.  Since
\begin{equation}
    a^{(n)}=\log\bigl(1+\eta n^{-\beta}\sin(nx_1)\bigr)
\end{equation}
and $\eta\leq1/4$, the composition $f\mapsto\log(1+f)$ is smooth on the range of $f=\eta n^{-\beta}\sin(nx_1)$.  The usual H\"older composition estimate gives
\begin{equation}
    \sup_{n\geq1}\|a^{(n)}\|_{\mathcal H^\beta(\T^d)}
    \leq
    C_\beta\eta.
\end{equation}
Given the radius $C>0$ in the statement, choose
\begin{equation}
    0<\eta\leq\min\left\{\frac14,\eta_0,\frac{C}{C_\beta}\right\},
\end{equation}
where $\eta_0$ is small enough for the bootstrap above.  Then $a^{(n)}\in\mathcal H_C^\beta(\T^d)$ for every $n$.  Taking the supremum over $a\in\mathcal H_C^\beta(\T^d)$ and absorbing the fixed powers of $\eta$ into a constant $c_C>0$ proves \eqref{eq:true-laplace-KL-lower} and \eqref{eq:true-laplace-W2-lower}.
\end{proof}

\subsection{A non-integrability example for the Gaussian plug-in field}
\label{app:no-hidden-energy}

This appendix records a simple obstruction to interpreting the Gaussian plug-in drift as an exact Wasserstein gradient flow of some hidden energy.  The point is not that the velocity fails to be a spatial gradient; for the Gaussian kernel it is a spatial gradient.  The obstruction is instead that this spatial potential is not a variational derivative with respect to the density variable.

Work on the one-dimensional torus $\T=\R/(2\pi\mathbb Z)$ with normalized Lebesgue measure, and let $P_s=e^{s\partial_{xx}}$ with $s>0$.  Fix a smooth positive target density $\rho^\star$.  Suppose, for contradiction, that there is a $C^2$ functional $\mathcal E$ on smooth positive probability densities such that the Gaussian drifting equation can be written as the Wasserstein gradient flow of $\mathcal E$, namely
\begin{equation}
    \nabla\frac{\delta \mathcal E}{\delta q}(q)
    =
    \sigma^2\nabla\log\frac{P_s q}{P_s\rho^\star}
\end{equation}
for every smooth positive density $q$.  Since first variations are defined only up to additive constants, this means that, when paired against zero-mean perturbations,
\begin{equation}
    \frac{\delta \mathcal E}{\delta q}(q)
    =
    \sigma^2\log\frac{P_s q}{P_s\rho^\star}.
\end{equation}
Therefore, for every pair of smooth zero-mean perturbations $h,k$, the second variation would have to satisfy
\begin{equation}
    D^2\mathcal E(q)[h,k]
    =
    \sigma^2\int_\T k(x)\frac{P_s h(x)}{P_s q(x)}\,\dd x.
    \label{eq:appendix-hessian-form}
\end{equation}
The Hessian of a $C^2$ functional must be symmetric in $h$ and $k$.  We now show that the bilinear form in \eqref{eq:appendix-hessian-form} is not symmetric in general.

Let
\begin{equation}
    q_\varepsilon(x)=1+\varepsilon\cos x,
    \qquad |\varepsilon|\ll 1,
\end{equation}
so that
\begin{equation}
    P_s q_\varepsilon(x)=1+a\cos x,
    \qquad a:=\varepsilon e^{-s}.
\end{equation}
Choose the zero-mean perturbations
\begin{equation}
    h(x)=\cos x,
    \qquad
    k(x)=\cos 2x.
\end{equation}
Then
\begin{equation}
    P_s h=e^{-s}\cos x,
    \qquad
    P_s k=e^{-4s}\cos 2x.
\end{equation}
Using
\begin{equation}
    \frac{1}{1+a\cos x}=1-a\cos x+O(a^2),
\end{equation}
we obtain
\begin{align}
    \int_\T k\frac{P_s h}{P_s q_\varepsilon}\,\dd x
    &=
    e^{-s}\int_\T \frac{\cos 2x\cos x}{1+a\cos x}\,\dd x \\
    &=
    -\frac{a e^{-s}}{4}+O(a^2),
\end{align}
and similarly
\begin{align}
    \int_\T h\frac{P_s k}{P_s q_\varepsilon}\,\dd x
    &=
    e^{-4s}\int_\T \frac{\cos x\cos 2x}{1+a\cos x}\,\dd x \\
    &=
    -\frac{a e^{-4s}}{4}+O(a^2).
\end{align}
Since $s>0$, one has $e^{-s}\neq e^{-4s}$.  Thus, for $\varepsilon$ small and nonzero,
\begin{equation}
    \int_\T k\frac{P_s h}{P_s q_\varepsilon}\,\dd x
    \neq
    \int_\T h\frac{P_s k}{P_s q_\varepsilon}\,\dd x.
\end{equation}
This contradicts the symmetry of the Hessian of $\mathcal E$.  Consequently, at every positive scale, the Gaussian plug-in potential
\[
    \log\frac{P_s q}{P_s\rho^\star}
\]
is not the first variation of any $C^2$ energy functional on densities.  The corresponding drifting equation is therefore not an exact Wasserstein gradient flow in this sense.

\section{Proofs of the local multihead exponential convergence results}
\label{app:local-multiscale-proofs}

This appendix proves Theorems~\ref{thm:local-torus-heat} and~\ref{thm:local-ou}.  The two proofs are written in a common notation.  The model-specific input is isolated in the semigroup and covariance estimates; after that point the argument is identical.

\subsection{Common notation}
\label{subsec:loc-common-notation}
Throughout this appendix, $t\ge0$ denotes the physical evolution time, whereas
$s\in[0,S]$ denotes a smoothing scale; the maximal scale $S>0$ is fixed.  For
vector- and tensor-valued fields, all norms are taken componentwise.  Constants
denoted by $C_S$ may depend on the dimension $d$ and on $S$, but not on the
solution or on the target potential $a$.

\paragraph{Reference measure and smoothing operators.}
We treat the heat model on the torus and the Ornstein--Uhlenbeck (OU) model on
$\R^d$ simultaneously.  In each model, $\mathsf X$ is the state space,
$\mathsf m$ is the reference probability measure, $(\mathsf P_s)_{s\ge0}$ is
the Markov semigroup acting on scalar functions, and $\mathsf L$ is its
generator.  The operator $\mathsf K_s$ is the corresponding semigroup acting
componentwise on vector fields, chosen so that it propagates gradients:
$\nabla\mathsf P_s f=\mathsf K_s\nabla f$.  Concretely,
\begin{equation}
\begin{array}{c|c|c|c|c|c}
\text{model}
& \mathsf X
& \mathsf m
& \mathsf P_s\text{ (scalar smoothing)}
& \mathsf L
& \mathsf K_s\text{ (score smoothing)} \\
\hline
\text{torus}
& \T^d
& \dd x
& P_s=e^{s\Delta}
& \Delta
& P_s \\
\text{OU}
& \R^d
& \gamma
& Q_s=e^{sL_\gamma}
& L_\gamma=\Delta-x\cdot\nabla
& e^{-s}Q_s .
\end{array}
\label{eq:loc-model-table}
\end{equation}
Here $\dd x$ is normalized Lebesgue measure on $\T^d$, while $\gamma$ is the
standard Gaussian probability measure on $\R^d$,
\[
    \dd\gamma(x)=(2\pi)^{-d/2}e^{-|x|^2/2}\,\dd x.
\]  Thus $\mathsf m$ always
denotes the model's reference probability measure.  The factor $e^{-s}$ in the
OU definition of $\mathsf K_s$ comes from the commutation identity
$\nabla Q_s f=e^{-s}Q_s\nabla f$; on the torus, gradients commute directly with
the heat semigroup.

The generator of the vector-field semigroup $\mathsf K_s$ is denoted by
\begin{equation}
    \mathscr L=\Delta\quad\text{in the torus case},
    \qquad
    \mathscr L=L_\gamma-I\quad\text{in the OU case}.
    \label{eq:loc-vector-generator}
\end{equation}
Consequently, for every sufficiently regular scalar function $f$ and vector
field $\xi$,
\begin{equation}
    \nabla \mathsf P_s f=\mathsf K_s\nabla f,
    \qquad
    \partial_s(\mathsf K_s\xi)=\mathscr L(\mathsf K_s\xi).
    \label{eq:loc-score-commutation}
\end{equation}

\paragraph{Target law and logarithmic density ratio.}
Let $a$ be the target potential, normalized by
\begin{equation}
    \int_{\mathsf X} e^{a}\,\dd\mathsf m=1,
    \qquad
    \rho^\star=e^{a}\mathsf m.
    \label{eq:loc-target-potential}
\end{equation}
Thus $a$ describes the target law as a perturbation of the reference measure
$\mathsf m$.  For a current probability law $\rho\ll\rho^\star$, define
\[
    h=\log\frac{\dd\rho}{\dd\rho^\star},
    \qquad
    \dd\rho=e^h\,\dd\rho^\star=e^{h+a}\,\dd\mathsf m.
\]
The gradient $\nabla h$ is the score discrepancy between $\rho$ and the target
$\rho^\star$.

\paragraph{Scale-dependent and averaged potentials.}
At scale $s$, we smooth the densities of $\rho$ and $\rho^\star$ relative to
$\mathsf m$.  Their smoothed logarithmic ratio, its average over all scales
$0\le s\le S$, and the resulting score field are
\begin{equation}
    \psi_s[h]
    :=
    \log\frac{\mathsf P_s(e^{h+a})}{\mathsf P_s(e^{a})},
    \qquad
    \Phi_S[h]:=\frac{1}{S}\int_0^S\psi_s[h] \,\dd s,
    \qquad
    W_S[h]:=\nabla\Phi_S[h].
    \label{eq:loc-psi-phi-w}
\end{equation}
In words, $\psi_s[h]$ is the smoothed log-density ratio at one scale,
$\Phi_S[h]$ is its uniform scale average, and $W_S[h]$ is the gradient field
used by the transport equation.

\paragraph{Physical-time evolution.}
The evolving law $\rho_t$ satisfies
\begin{equation}
    \partial_t\rho_t
    =\nabla\cdot\bigl(\rho_t W_S[h_t]\bigr),
    \qquad
    h_t=\log\frac{\dd\rho_t}{\dd\rho^\star}.
    \label{eq:loc-rho-flow}
\end{equation}
Equivalently, the transport velocity is $-W_S[h_t]$.  In terms of the
logarithmic density ratio, the equation becomes
\begin{equation}
    \partial_t h_t
    =\mathsf L_{\rho^\star}\Phi_S[h_t]
      +\nabla h_t\cdot\nabla\Phi_S[h_t],
    \qquad
    \mathsf L_{\rho^\star}:=\mathsf L+\nabla a\cdot\nabla.
    \label{eq:loc-h-flow}
\end{equation}
Here $\mathsf L_{\rho^\star}$ is the diffusion generator associated with the
target measure: it equals $\Delta+\nabla a\cdot\nabla$ on the torus and
$L_\gamma+\nabla a\cdot\nabla$ in the OU model.  Since only gradients of the
potentials enter the dynamics, adding a spatially constant term to $h$ does
not change $W_S[h]$ or the right-hand side of \eqref{eq:loc-h-flow}.

\subsection{Weak formulation at endpoint regularity}
\label{subsec:loc-endpoint-solutions}

We now specify the notion of solution used in
Theorems~\ref{thm:local-torus-heat} and~\ref{thm:local-ou}.
Let $a\in\mathcal H^3$ satisfy \eqref{eq:loc-target-potential}, and let
$h_{\rm in}$ satisfy
\[
    \nabla h_{\rm in}\in\mathcal H^1,
    \qquad
    \int e^{h_{\rm in}}\,\dd\rho^\star=1.
\]
On a finite interval $[0,T]$, a solution of \eqref{eq:loc-rho-flow} with
initial datum $h_{\rm in}$ is a narrowly continuous curve of probability
measures $(\rho_t)_{0\le t\le T}$ of the form
\[
    \dd\rho_t=e^{h_t}\,\dd\rho^\star,
\]
such that
\[
    h,\nabla h\in C_{\rm loc}([0,T]\times\mathsf X),
    \qquad
    \sup_{0\le t\le T}\|\nabla h_t\|_{\mathcal H^1}<\infty,
\]
where $\mathsf X=\T^d$ in the heat case and $\mathsf X=\R^d$ in the OU case,
and such that $h_0=h_{\rm in}$.  The equation is understood in the
distributional sense:
\begin{equation}
    \int\varphi\,\dd\rho_t
    -\int\varphi e^{h_{\rm in}}\,\dd\rho^\star
    =-\int_0^t\int\nabla\varphi\cdot W_S[h_\tau]
      \,\dd\rho_\tau\,\dd\tau
    \label{eq:loc-endpoint-weak-form}
\end{equation}
for every smooth compactly supported test function $\varphi$; on the torus,
$\varphi$ is simply smooth.  A solution is global if these properties hold on
every finite interval.

In the OU case, the probability normalization
$\int e^{h_t}\,\dd\rho^\star=1$ and the bounded score imply that $h_t$ has at
most linear growth, so all terms in \eqref{eq:loc-endpoint-weak-form} are well
defined.

For both convergence theorems, $h_{\rm in}=-a$.  In that case
$e^{h_{\rm in}}\rho^\star=\mathsf m$, giving the reference initial laws
$\dd x$ on the torus and $\gamma$ in the OU model.

\subsection{Model-specific semigroup estimates}

The first lemma contains all linear semigroup input used in the proof.

\begin{lemma}[Score semigroup estimates]
\label{lem:loc-semigroup-estimates}
Fix $S>0$.  The following estimates hold.
\begin{enumerate}[label=\textup{(\roman*)}]
\item For every integer $q\ge0$ there is $C_{q,S}<\infty$ such that, for $0<s\le S$,
\begin{equation}
    \|\mathsf K_s\xi\|_{\mathcal H^{q+2}}
    \le C_{q,S}(1\wedge s)^{-1/2}\|\xi\|_{\mathcal H^{q+1}},
    \label{eq:loc-semigroup-smoothing}
\end{equation}
and therefore, for $0\le s_0<s_1\le S$,
\begin{equation}
    \int_{s_0}^{s_1}\|\mathsf K_{s-s_0}\xi\|_{\mathcal H^{q+2}}\,\dd s
    \le C_{q,S}\sqrt{s_1-s_0}\,\|\xi\|_{\mathcal H^{q+1}}.
    \label{eq:loc-integrated-smoothing}
\end{equation}
\item For every integer $m\ge1$ there is $C_{m,S}<\infty$ such that
\begin{equation}
    \sup_{0\le s\le S}\|\mathsf K_s\xi\|_{\mathcal H^m}
    \le C_{m,S}\|\xi\|_{\mathcal H^m}.
    \label{eq:loc-semigroup-high-bound}
\end{equation}
In the torus case one may take $C_{m,S}=1$.
\item The semigroup is non-expansive on $\mathcal H^1$:
\begin{equation}
    \|\mathsf K_s\xi\|_{\mathcal H^1}\le \|\xi\|_{\mathcal H^1},
    \qquad 0\le s\le S.
    \label{eq:loc-nonstrict-contraction}
\end{equation}
\item There is $\theta_S\in(0,1)$ such that
\begin{equation}
    \|\mathsf K_S\xi\|_{\mathcal H^1}
    \le \theta_S\|\xi\|_{\mathcal H^1}
    \label{eq:loc-strict-contraction}
\end{equation}
for every score field $\xi$ in the torus case and for every vector field $\xi$ in the OU case.  More explicitly, in the OU case one may take $\theta_S=e^{-S}$.  In the torus case one may take $\theta_S=1-m_S$, where $m_S:=\inf_{x\in\T^d}p_S(x)>0$ and $p_S$ is the heat kernel at time $S$.
\end{enumerate}
\end{lemma}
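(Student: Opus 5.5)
The plan is to prove each item directly from the explicit kernels. On the torus I will use the heat kernel $p_s$; in the OU model I will use Mehler's formula~\eqref{eq:Mehler1}. The underlying facts are three. First, $\mathsf P_s$ is Markov, so it is a contraction in $L^\infty$. Second, derivatives commute with the semigroup, up to the factor $e^{-s}$ per derivative in the OU case:
\[
\partial^\kappa Q_s f=e^{-s|\kappa|}Q_s\partial^\kappa f .
\]
Third, one derivative can be gained at cost $s^{-1/2}$. Since all indices in the lemma are integers, $\mathcal H^m=C^{m-1,1}$: the norm is a sum of sup norms of derivatives up to order $m-1$ plus the Lipschitz seminorm of the derivatives of order $m-1$. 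It therefore suffices to control sup norms and Lipschitz quotients of derivatives.

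\textbf{Items (ii) and (iii).} For (ii), I write $\partial^\kappa\mathsf K_s\xi=\mathsf K_s'\partial^\kappa\xi$, where $\mathsf K_s'$ is again a Markov operator times a factor at most $1$ (namely $P_s$, or $e^{-s(|\kappa|+1)}Q_s$). The Lipschitz quotient of a Markov average of a function is bounded by the Lipschitz quotient of that function. On the torus this follows from translation invariance: $P_s$ commutes with shifts. In the OU case it follows from Mehler's formula, since $x\mapsto e^{-s}x+\sqrt{1-e^{-2s}}Z$ is $e^{-s}$-Lipschitz. This gives (ii) with constant $1$ in both models, and (iii) is the case $m=1$.

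\textbf{Item (i).} I will use the standard gradient bound
\[
\|\nabla\mathsf P_s g\|_\infty\le C(1\wedge s)^{-1/2}\|g\|_\infty .
\]
On the torus this holds because $\|\nabla p_s\|_{L^1}\lesssim_S s^{-1/2}$ for $s\le S$. In the OU case it follows from the Bismut-type form of Mehler's formula,
\[
\nabla Q_sg(x)=\frac{e^{-s}}{\sqrt{1-e^{-2s}}}\,\mathbb E\bigl[g(e^{-s}x+\sqrt{1-e^{-2s}}Z)\,Z\bigr].
\]
For $\xi\in\mathcal H^{q+1}$, the derivatives of $\mathsf K_s\xi$ of order at most $q$ are controlled by (ii). The derivative of order $q+1$ equals $\nabla\mathsf K_s'\partial^q\xi$, which is bounded by $Cs^{-1/2}\|\partial^q\xi\|_\infty$. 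Its Lipschitz seminorm is the sup norm of $\nabla\bigl(\mathsf K_s'\nabla\partial^q\xi\bigr)$, where $\nabla\partial^q\xi$ exists almost everywhere with sup norm at most $\Lip(\partial^q\xi)$. The same gradient bound therefore gives $Cs^{-1/2}\Lip(\partial^q\xi)$. The integrated estimate~\eqref{eq:loc-integrated-smoothing} then follows from $\int_0^{s_1-s_0}(1\wedge u)^{-1/2}\,\dd u\le C_S\sqrt{s_1-s_0}$.

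\textbf{Item (iv).} In the OU case the bound is immediate: $\mathsf K_S=e^{-S}Q_S$ and $Q_S$ is non-expansive on $\mathcal H^1$ by (iii), so $\theta_S=e^{-S}$. On the torus, a score field $\xi=\nabla f$ has zero mean. Hence
\[
P_S\xi(x)=\int\bigl(p_S(x-y)-m_S\bigr)\xi(y)\,\dd y .
\]
Since $p_S-m_S\ge0$ and $\int(p_S-m_S)=1-m_S$, this gives $\|P_S\xi\|_\infty\le(1-m_S)\|\xi\|_\infty$. The difference $\xi(\cdot+h)-\xi(\cdot)$ is also mean zero, so the same bound applies to the Lipschitz quotients of $P_S\xi$. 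Together these give the contraction factor $\theta_S=1-m_S$. This step is where the score-field restriction is essential, and it is the only step I expect to need care: it relies on applying the mean-zero argument separately to the sup part and the Lipschitz part of the $\mathcal H^1$ norm. The remaining work in (i) is only tracking constants uniformly in $s\le S$.
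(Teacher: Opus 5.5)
Your proof is correct and follows the paper's argument essentially step by step. You use the Markov property together with the commutation $\partial^\kappa Q_s f=e^{-s|\kappa|}Q_s\partial^\kappa f$ for (ii)--(iii), a one-derivative gain at cost $(1\wedge s)^{-1/2}$ (kernel gradient on the torus, Gaussian integration by parts in Mehler's formula) for (i), and the decomposition $P_S\xi=\int(p_S(\cdot-y)-m_S)\xi(y)\,\dd y$ for (iv), applied both to the mean-zero components and to their translation difference quotients. The only cosmetic difference is that you control the top Lipschitz seminorm through the a.e. gradient of the Lipschitz top derivative rather than through difference quotients, which is equivalent.
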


\begin{proof}
In the torus case $\mathsf K_s=P_s$.  The Markov property and the commutation $\partial^\kappa P_s=P_s\partial^\kappa$ give boundedness on every $\mathcal H^m$, and in particular non-expansiveness on $\mathcal H^1$.  The smoothing estimate follows by writing the torus heat kernel as the periodization of the Euclidean heat kernel.  Differentiating once on the kernel and integrating by parts at the level of the bounded $q+1$ derivatives gives the usual $Cs^{-1/2}$ bound for $0<s\le1$; for $s\ge1$ it follows from smoothness of the heat kernel on the compact torus.  Difference quotients at the top order give the Lipschitz seminorm.  Integrating $(s-s_0)^{-1/2}$ gives \eqref{eq:loc-integrated-smoothing}.

It remains in the torus case to prove the strict contraction on score fields.  If $f$ has zero Lebesgue mean, then
\[
    P_Sf(x)=\int_{\T^d}(p_S(x-y)-m_S)f(y)\,\dd y.
\]
Since $p_S-m_S\ge0$ and $\int(p_S-m_S)=1-m_S$, one has
\[
    \|P_Sf\|_{L^\infty}\le (1-m_S)\|f\|_{L^\infty}.
\]
Apply this componentwise to a periodic score field $\xi=\nabla g$: each component has zero mean.  To get the Lipschitz seminorm without imposing more regularity, fix a nonzero torus displacement $y$ and set
\[
    \delta_y\xi_i(x):=\frac{\xi_i(x+y)-\xi_i(x)}{d(y,0)}.
\]
By periodicity $\delta_y\xi_i$ has zero mean, and translation invariance gives
\[
    \delta_y P_S\xi_i=P_S\delta_y\xi_i .
\]
Hence
\[
    \|\delta_yP_S\xi_i\|_{L^\infty}
    \le (1-m_S)\|\delta_y\xi_i\|_{L^\infty}
    \le (1-m_S)\Lip(\xi_i).
\]
Taking the supremum over $y\ne0$ gives the same contraction for the Lipschitz seminorm.  Together with the $L^\infty$ estimate this gives \eqref{eq:loc-strict-contraction} in $\mathcal H^1$.

In the OU case $\mathsf K_s=e^{-s}Q_s$.  Mehler's formula gives
\[
    Q_s f(x)=\mathbb E\bigl[f(e^{-s}x+\sqrt{1-e^{-2s}}Z)\bigr],
    \qquad Z\sim\gamma.
\]
Thus $\partial^\kappa Q_s f=e^{-s|\kappa|}Q_s(\partial^\kappa f)$.  Since $Q_s$ is Markov and
\[
    \Lip(Q_sf)\le e^{-s}\Lip(f),
\]
this gives \eqref{eq:loc-semigroup-high-bound} for every $m\ge1$.  In particular,
\[
    \|e^{-s}Q_s\xi\|_{L^\infty}\le e^{-s}\|\xi\|_{L^\infty},
    \qquad
    \Lip(e^{-s}Q_s\xi)\le e^{-2s}\Lip(\xi),
\]
which implies both non-expansiveness and the strict bound with $\theta_S=e^{-S}$.  For \eqref{eq:loc-semigroup-smoothing}, the first derivatives are controlled by the same commutation.  For the extra derivative, set $\sigma_s=\sqrt{1-e^{-2s}}$.  Gaussian integration by parts in Mehler's formula gives, for bounded $F$,
\[
    |\partial_k Q_sF(x)|
    \le e^{-s}\sigma_s^{-1}\mathbb E|Z_k|\,\|F\|_{L^\infty}
    \le C_d(1\wedge s)^{-1/2}\|F\|_{L^\infty}.
\]
Apply this with $F$ equal to a first derivative of $\xi$ and use difference quotients for the top Lipschitz seminorm.  This proves the OU smoothing estimate and its integrated form.
\end{proof}

\subsection{Tilted averages and covariance bounds}

For a function $H$ and a test function $F$ define the tilted average
\begin{equation}
    M_{s,H}[F](x):=\frac{\mathsf P_s(e^HF)(x)}{\mathsf P_s(e^H)(x)}.
    \label{eq:loc-M-def}
\end{equation}
The corresponding covariance is
\[
    \Cov_{s,H,x}(F,G):=M_{s,H}[FG](x)-M_{s,H}[F](x)M_{s,H}[G](x).
\]
When no confusion is possible we suppress the point $x$.

\begin{lemma}[Tilted covariance estimates]
\label{lem:loc-covariance}
Let $R<\infty$ and $S>0$.  There is $C_{R,S}<\infty$ such that the following holds.  If $H$ is locally Lipschitz and $\|\nabla H\|_{L^\infty}\le R$, then, uniformly in $0\le s\le S$ and in the base point $x$,
\begin{equation}
    |\Cov_{s,H,x}(F,G)|
    \le C_{R,S}\,\omega_s\,
      \|\nabla F\|_{L^\infty}\|\nabla G\|_{L^\infty}
    \label{eq:loc-covariance-bound}
\end{equation}
for every Lipschitz $F,G$ in the torus case and every Lipschitz $F,G$ with at most linear growth in the OU case.  Here
\[
    \omega_s=1\wedge s\quad\text{for the torus},
    \qquad
    \omega_s=1-e^{-2s}\quad\text{for OU}.
\]
In particular $\omega_s\le C_S(1\wedge s)$ in both cases.
\end{lemma}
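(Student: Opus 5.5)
The plan is to view $M_{s,H}$ as expectation under the tilted probability measure
\[
    \nu_{s,x}^H(\dd y):=\frac{p_s(x,y)e^{H(y)}\,\dd y}{\mathsf P_s(e^H)(x)},
\]
where $p_s(x,\cdot)$ is the transition kernel of $\mathsf P_s$ from $x$. The covariance $\Cov_{s,H,x}(F,G)$ is then the ordinary covariance of $F,G$ under $\nu_{s,x}^H$. By Cauchy--Schwarz, $|\Cov(F,G)|\le \operatorname{Var}(F)^{1/2}\operatorname{Var}(G)^{1/2}$, so it suffices to prove a Poincaré inequality for $\nu_{s,x}^H$ with constant $C_{R,S}\,\omega_s$, applied to Lipschitz test functions:
\[
    \operatorname{Var}_{\nu^H_{s,x}}(F)\le C_{R,S}\,\omega_s\,\|\nabla F\|_{L^\infty}^2 .
\]

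\textbf{OU case.} By Mehler's formula \eqref{eq:Mehler1}, $p_s(x,\cdot)$ is the Gaussian $\mathcal N(e^{-s}x,\omega_s I)$ with $\omega_s=1-e^{-2s}$. The tilted law is therefore this Gaussian reweighted by $e^{H}$, where $H$ is $R$-Lipschitz. Gaussians satisfy Poincaré with constant $\omega_s$, and a Lipschitz tilt is a bounded perturbation only locally, not globally, so Holley--Stroock does not apply directly. I would use the perturbation result for Lipschitz potentials instead: a measure $e^{-V-H}$ with $\nabla^2V\ge \omega_s^{-1}I$ and $H$ $R$-Lipschitz satisfies Poincaré with constant $C(1+R\sqrt{\omega_s})^2e^{CR^2\omega_s}\,\omega_s$, for instance by Aida--Shigekawa, Miclo, or bounded-perturbation after a Caffarelli/Lipschitz transport.

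A self-contained route is a rescaling. Write $y=e^{-s}x+\sqrt{\omega_s}\,z$. Then $z$ has law proportional to $e^{-|z|^2/2+\tilde H(z)}$, where $\tilde H(z)=H(e^{-s}x+\sqrt{\omega_s}z)$ is $R\sqrt{\omega_s}\le R$-Lipschitz. The task reduces to a uniform Poincaré constant for standard Gaussians tilted by $R$-Lipschitz functions. I would obtain this by splitting $\tilde H=\tilde H_1+\tilde H_2$:
\begin{itemize}
    \item $\tilde H_1$ is linear and equal to $\nabla\tilde H$ averaged at the tilted mean. Adding it only translates the Gaussian, which does not change its Poincaré constant.
    \item The remainder $\tilde H_2$ is bounded by $CR|z-z_0|$. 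I would handle it with a Lyapunov-function argument (Bakry--Barthe--Cattiaux--Guillin), using $W=e^{|z|^2/4}$, which gives a Poincaré constant depending only on $R$.
\end{itemize}
Then $F(e^{-s}x+\sqrt{\omega_s}z)$ has gradient at most $\sqrt{\omega_s}\|\nabla F\|_{L^\infty}$, which yields the factor $\omega_s$. Linear growth of $F$ and $H$ ensures all integrals are finite.

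\textbf{Torus case.} For $s\ge1$ the bound is trivial. Indeed $\operatorname{Var}(F)\le \osc(F)^2/4\le C\|\nabla F\|_{L^\infty}^2$, since the torus has bounded diameter; here $F$ is periodic and Lipschitz on $\T^d$. For $s\le1$, the periodized heat kernel $p_s(x,\cdot)$ is dominated by the Euclidean Gaussian of variance $2s$ centred at $x$, lifted to $\R^d$, plus exponentially small images. I would lift $F,H$ periodically to $\R^d$ and reduce to the Euclidean Gaussian tilted by an $R$-Lipschitz function, which is exactly the OU argument with variance $2s$. The periodic sum is itself a Gaussian mixture over the lattice, which slightly complicates matters. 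To avoid this, I would instead bound the variance by $\tfrac12\iint |F(y)-F(y')|^2\,\nu(\dd y)\nu(\dd y')\le \|\nabla F\|^2_{L^\infty}\,\mathbb E\,d(y,y')^2$. This only requires a second-moment bound $\mathbb E_\nu\, d(x,y)^2\le C_R s$, which is far easier than a Poincaré inequality.

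This second-moment route also works in the OU case, since $\mathbb E|y-e^{-s}x|^2\le C_R\,\omega_s$ suffices. I would therefore adopt it as the main argument for both models and drop the Poincaré machinery altogether.

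\textbf{Main step: the moment bound.} The remaining task is to show that the $e^H$-tilt of a Gaussian of variance $v\le C_S$ has second moment about the centre bounded by $C_Rv$. Write $y=c+\sqrt v z$ with $z$ standard, and set $\tilde H(z)=H(c+\sqrt v z)-H(c)$. Then $|\tilde H(z)|\le R\sqrt v|z|\le R'|z|$, with $R'=R\sqrt{C_S}$. Hence
\[
    \mathbb E_\nu|z|^2\le \frac{\mathbb E[|z|^2e^{R'|z|}]}{\mathbb E[e^{-R'|z|}]}=C_{R,S},
\]
so $\mathbb E_\nu|y-c|^2\le C_{R,S}\,v$. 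This gives \eqref{eq:loc-covariance-bound} with $\omega_s$ as stated. On the torus, lattice images contribute at most $e^{-c/s}$ times bounded terms, which is harmless.
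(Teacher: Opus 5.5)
Your final argument (symmetrization plus a second-moment bound) is correct. It coincides with the paper's on the torus up to the last step and takes a different, more uniform route in the OU case.

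\textbf{Torus case.} The paper also writes the covariance with two independent copies and reduces to $\mathbb E\,d_{\T}(Y,x)^2$. It then finishes using compactness: since $\osc H\le R\operatorname{diam}(\T^d)$, the tilt is a bounded perturbation, and
\[
\mathbb E\,d_{\T}(Y,x)^2\le e^{\osc H}P_s\bigl(d_{\T}(x,\cdot)^2\bigr)(x)\le C_d\,e^{R\operatorname{diam}(\T^d)}(1\wedge s).
\]
This needs no lift and no Lipschitz-tilt moment computation.

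\textbf{OU case.} The paper does not symmetrize. It changes variables to the tilted law $e^{b}\gamma$, where $b$ is $\sigma_sR$-Lipschitz. It then combines Gaussian concentration ($L^q$ bounds for centred Lipschitz functions, exponential moments of $e^b$) with H\"older's inequality on the centred $f_0,g_0$, and truncates to handle linear growth.

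\textbf{Comparison.} Your chain is Cauchy--Schwarz to variances, symmetrization to a second moment about the kernel centre, then the ratio bound $\mathbb E_\gamma[|z|^2e^{R'|z|}]/\mathbb E_\gamma[e^{-R'|z|}]$. It treats both models with one computation and needs only that the tilt is Lipschitz, not bounded. It also avoids concentration inequalities and truncation. The paper's torus argument is shorter because it exploits compactness.

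\textbf{Torus details.} The lift you describe is exact. The tilted law on $\T^d$ is the quotient image of the Euclidean Gaussian with covariance $2sI$ centred at $x$, tilted by the periodic lift of $H$. That lift is globally $R$-Lipschitz, and $d_{\T}(x,y)\le|x-\tilde y|$ for any lift $\tilde y$. So no estimate on lattice images is needed, and your remark about $e^{-c/s}$ contributions can be dropped. The Poincar\'e discussion at the start is likewise unnecessary.
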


\begin{proof}
At $s=0$ the tilted kernel is a Dirac mass, so the covariance is zero.  We assume $s>0$.

In the torus case, let $\nu$ be the probability measure
\[
    \dd\nu(y)=\frac{p_s(x-y)e^{H(y)}}{P_s(e^H)(x)}\,\dd y.
\]
If $Y,Y'$ are independent with law $\nu$, then
\[
    \Cov_\nu(F,G)
    =\frac12\mathbb E\big[(F(Y)-F(Y'))(G(Y)-G(Y'))\big].
\]
Therefore, with $d_{\T}$ denoting the torus distance,
\[
\begin{aligned}
    |\Cov_\nu(F,G)|
    &\le \frac12\|\nabla F\|_\infty\|\nabla G\|_\infty
        \mathbb E d_{\T}(Y,Y')^2  \\
    &\le 2\|\nabla F\|_\infty\|\nabla G\|_\infty
        \mathbb E d_{\T}(Y,x)^2 .
\end{aligned}
\]
Since $\osc H\le R\operatorname{diam}(\T^d)$,
\[
    \mathbb E d_{\T}(Y,x)^2
    \le e^{\osc H}P_s(d_{\T}(x,\cdot)^2)(x).
\]
The heat process generated by $\Delta$ is $x+\sqrt2B_s$ modulo $2\pi\mathbb Z^d$, hence
\[
    P_s(d_{\T}(x,\cdot)^2)(x)
    \le \mathbb E\min\{2|B_s|^2,\operatorname{diam}(\T^d)^2\}
    \le C_d(1\wedge s).
\]
Combining the last displays gives the torus estimate.

In the OU case, put
\[
    \sigma_s:=\sqrt{1-e^{-2s}},
    \qquad
    T_{s,x}(z):=e^{-s}x+\sigma_s z.
\]
The tilted measure in the Gaussian variable is
\[
    \dd\nu_{s,x,H}(z)=e^{b(z)}\,\dd\gamma(z),
    \qquad
    b(z):=H(T_{s,x}(z))-\log\int e^{H(T_{s,x}(w))}\,\dd\gamma(w).
\]
Then $\int e^b\,\dd\gamma=1$ and $\Lip(b)\le \sigma_sR$.  We use two standard consequences of Gaussian concentration.  If $U$ is centered and $L_U$-Lipschitz under $\gamma$, then for every $q\ge2$,
\begin{equation}
    \|U\|_{L^q(\gamma)}\le C\sqrt q\,L_U.
    \label{eq:loc-gaussian-lip-moments}
\end{equation}
Also, since $b$ is $\sigma_sR$-Lipschitz and normalized by $\int e^b\,\dd\gamma=1$,
\begin{equation}
    \|e^b\|_{L^r(\gamma)}
    \le \exp(Cr^2\sigma_s^2R^2),
    \qquad r\ge1.
    \label{eq:loc-tilt-exp-moment}
\end{equation}
Indeed, Gaussian exponential concentration gives
$\int e^{\lambda(b-\int b\,\dd\gamma)}\,\dd\gamma\le\exp(C\lambda^2\sigma_s^2R^2)$, while Jensen's inequality and the normalization $\int e^b\,\dd\gamma=1$ give $\int b\,\dd\gamma\le0$.  Taking $\lambda=r$ proves \eqref{eq:loc-tilt-exp-moment}.

Set $f(z)=F(T_{s,x}(z))$ and $g(z)=G(T_{s,x}(z))$, and center them with respect to $\gamma$:
\[
    f_0=f-\int f\,\dd\gamma,
    \qquad
    g_0=g-\int g\,\dd\gamma.
\]
Then $f_0$ and $g_0$ are respectively $\sigma_s\|\nabla F\|_\infty$- and $\sigma_s\|\nabla G\|_\infty$-Lipschitz.  Moreover
\[
    \Cov_{\nu_{s,x,H}}(f,g)
    =\int f_0g_0e^b\,\dd\gamma
      -\left(\int f_0e^b\,\dd\gamma\right)
       \left(\int g_0e^b\,\dd\gamma\right).
\]
Using H\"older's inequality, \eqref{eq:loc-gaussian-lip-moments}, and \eqref{eq:loc-tilt-exp-moment},
\[
\begin{aligned}
    \left|\int f_0g_0e^b\,\dd\gamma\right|
    &\le \|f_0\|_{L^4(\gamma)}\|g_0\|_{L^4(\gamma)}\|e^b\|_{L^2(\gamma)} \\
    &\le C\sigma_s^2\|\nabla F\|_\infty\|\nabla G\|_\infty
          \exp(C\sigma_s^2R^2).
\end{aligned}
\]
The first two moments have the same order.  Indeed,
\[
\begin{aligned}
    \left|\int f_0e^b\,\dd\gamma\right|
    &\le \|f_0\|_{L^2(\gamma)}\|e^b\|_{L^2(\gamma)}
      \le C\sigma_s\|\nabla F\|_\infty\exp(C\sigma_s^2R^2),\\
    \left|\int g_0e^b\,\dd\gamma\right|
    &\le \|g_0\|_{L^2(\gamma)}\|e^b\|_{L^2(\gamma)}
      \le C\sigma_s\|\nabla G\|_\infty\exp(C\sigma_s^2R^2).
\end{aligned}
\]
Multiplying the last two bounds and increasing the constant gives the required
$C_{R,S}\sigma_s^2\|\nabla F\|_\infty\|\nabla G\|_\infty$ bound for the product term.  Since $\sigma_s^2=1-e^{-2s}\le1$, the OU estimate follows.  Functions of at most linear growth are obtained by truncating $F$ and $G$ and using the exponential integrability implied by $\|\nabla H\|_\infty<\infty$.
\end{proof}

The next identities are used repeatedly.  In the torus case set $\alpha_s=1$, while in the OU case set $\alpha_s=e^{-s}$.  Differentiating \eqref{eq:loc-M-def} and using the commutation relation for the scalar semigroup gives
\begin{equation}
    \nabla\psi_s[g]
    =\alpha_s\Bigl(M_{s,g+a}[\nabla g+\nabla a]-M_{s,a}[\nabla a]\Bigr),
    \label{eq:loc-score-formula}
\end{equation}
and
\begin{align}
    \nabla^2\psi_s[g]
    =\alpha_s^2\Big(&M_{s,g+a}[\nabla^2g+\nabla^2a]-M_{s,a}[\nabla^2a]
    \nonumber\\
    &+\Cov_{s,g+a}(\nabla g+\nabla a,\nabla g+\nabla a)
     -\Cov_{s,a}(\nabla a,\nabla a)\Big).
    \label{eq:loc-hessian-formula}
\end{align}
All formulas are first justified for smooth functions.  The Lipschitz versions follow by standard mollification.

\begin{lemma}[Coefficient control]
\label{lem:loc-coefficients}
There are $\varepsilon_0>0$ and $C_S<\infty$ such that, if $\|a\|_{\mathcal H^3}\le\varepsilon_0$ and
\[
    r_s:=\mathsf P_s(e^{a}),
    \qquad
    c_s:=\nabla\log r_s,
    \qquad
    b_s:=\nabla^2\log r_s,
\]
then, for every $0\le s\le S$,
\begin{equation}
    \frac12\le r_s\le2,
    \label{eq:loc-r-lower-upper}
\end{equation}
and
\begin{equation}
    \|\log r_s\|_{L^\infty}+\|c_s\|_{\mathcal H^2}+\|b_s\|_{\mathcal H^1}
    \le C_S\|a\|_{\mathcal H^3}.
    \label{eq:loc-coeff-bound}
\end{equation}
\end{lemma}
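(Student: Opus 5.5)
The plan is to reduce everything to elementary bounds on the Markov semigroup $\mathsf P_s$ applied to $e^a$, and then to read off the derivatives of $\log r_s$ from the tilted averages $M_{s,a}$ and the covariances of Lemma~\ref{lem:loc-covariance}.

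\textbf{Bounds on $r_s$.} Since $\mathsf P_s$ is Markov, it preserves pointwise bounds, so
\[
    e^{-\|a\|_{L^\infty}}\le r_s\le e^{\|a\|_{L^\infty}}.
\]
Choosing $\varepsilon_0\le\log 2$ gives \eqref{eq:loc-r-lower-upper}. It also gives $\|\log r_s\|_{L^\infty}\le\|a\|_{L^\infty}\le\|a\|_{\mathcal H^3}$.

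\textbf{The score $c_s$.} With $g=0$, the structure behind \eqref{eq:loc-score-formula} gives
\[
    c_s=\frac{\nabla\mathsf P_s(e^a)}{\mathsf P_s(e^a)}
    =\alpha_s\frac{\mathsf P_s(e^a\nabla a)}{r_s}
    =\alpha_s M_{s,a}[\nabla a].
\]
I will bound $\|c_s\|_{\mathcal H^2}$ by differentiating this quotient, or equivalently by writing
\[
    c_s=\frac{\mathsf K_s(e^a\nabla a)}{r_s}.
\]
The numerator satisfies
\[
    \|\mathsf K_s(e^a\nabla a)\|_{\mathcal H^2}\le C_S\|e^a\nabla a\|_{\mathcal H^2}\le C_S\|a\|_{\mathcal H^3}
\]
by \eqref{eq:loc-semigroup-high-bound}, together with the product rule in Hölder norms and $\|e^a\|_{\mathcal H^2}\le C$ for $\|a\|_{\mathcal H^3}\le 1$. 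For the denominator, $1/r_s$ is bounded by $2$. Its $\mathcal H^2$ norm is controlled because $\nabla r_s=\mathsf K_s\nabla e^a$, which has $\mathcal H^2$ norm $\lesssim\|a\|_{\mathcal H^3}$, again by \eqref{eq:loc-semigroup-high-bound}. The composition $u\mapsto 1/u$ is smooth on $[1/2,2]$, so the Hölder composition estimate gives $\|1/r_s\|_{\mathcal H^2}\le C_S$. The algebra property of $\mathcal H^2$ then yields $\|c_s\|_{\mathcal H^2}\le C_S\|a\|_{\mathcal H^3}$.

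\textbf{The Hessian $b_s$.} Since $b_s=\nabla c_s$, the bound $\|b_s\|_{\mathcal H^1}\le\|c_s\|_{\mathcal H^2}$ follows immediately from the definition \eqref{eq:Holder-norm}. As a cross-check, the route through \eqref{eq:loc-hessian-formula} with $g=0$ is
\[
    b_s=\alpha_s^2\bigl(M_{s,a}[\nabla^2a]+\Cov_{s,a}(\nabla a,\nabla a)\bigr),
\]
after a correction $-c_s\otimes c_s$, because the formula there is for the difference. Lemma~\ref{lem:loc-covariance} shows the covariance term is quadratically small.

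\textbf{Main obstacle: the OU case.} The only real difficulty is that in the OU case the state space is unbounded. I must check that:
\begin{itemize}
\item the $\mathcal H^m$ bound \eqref{eq:loc-semigroup-high-bound} applies to $e^a\nabla a$, which is bounded since $a$ is;
\item the commutation identity $\nabla Q_s=e^{-s}Q_s\nabla$ holds for these bounded $\mathcal H^3$ functions.
\end{itemize}
Both follow from Mehler's formula \eqref{eq:Mehler1} by differentiating under the expectation. This gives
\[
    \partial^\kappa Q_s f=e^{-s|\kappa|}Q_s\partial^\kappa f,
\]
and Markov contractivity of $Q_s$ on Hölder seminorms. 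No weights are needed because every function involved is bounded. The constants then depend only on $d$ and $S$, uniformly in $s\in[0,S]$, and at $s=0$ the estimates are trivial.
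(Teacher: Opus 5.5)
Your proof is correct and follows essentially the paper's route: the Markov property gives $\tfrac12\le r_s\le 2$, the derivative commutation for $P_s$ and $Q_s$ (Mehler) gives uniform H\"older boundedness of the semigroup and hence $\|\nabla r_s\|_{\mathcal H^2}=\|\mathsf K_s(e^a\nabla a)\|_{\mathcal H^2}\lesssim_S\|a\|_{\mathcal H^3}$, and dividing by $r_s\ge\tfrac12$ controls $c_s$ in $\mathcal H^2$ and $b_s=\nabla c_s$ in $\mathcal H^1$. One remark in your cross-check is off, but nothing depends on it: $\alpha_s^2\bigl(M_{s,a}[\nabla^2a]+\Cov_{s,a}(\nabla a,\nabla a)\bigr)$ already equals $b_s$ exactly, since the covariance absorbs the $-c_s\otimes c_s$ term, so no further correction is needed.
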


\begin{proof}
If $\|a\|_{L^\infty}\le\varepsilon_0$, then $e^{-\varepsilon_0}\le e^{a}\le e^{\varepsilon_0}$.  Taking $\varepsilon_0$ small and using that $\mathsf P_s$ is Markov gives \eqref{eq:loc-r-lower-upper}.  The chain rule gives
\[
    \|e^{a}-1\|_{L^\infty}+\|e^{a}-1\|_{\mathcal H^3}
    \le C\|a\|_{\mathcal H^3}
\]
for $\varepsilon_0$ small.  The derivative commutation estimates for $P_s$ in the torus case and for $Q_s$ in the OU case show that $\mathsf P_s$ is bounded on $\mathcal H^3$ uniformly for $0\le s\le S$.  Differentiating $\log r_s$ up to order two, using difference quotients at the top order, and using $r_s\ge1/2$ gives \eqref{eq:loc-coeff-bound}.
\end{proof}

\subsection{Scale-variable estimates}

Throughout this subsection $g$ is smooth, $\xi:=\nabla g$, and $y:=\|\xi\|_{\mathcal H^1}$.  All estimates are invariant under adding constants to $g$.

\begin{proposition}[Lipschitz score lifting]
\label{prop:loc-score-lifting}
There are $\varepsilon_1>0$, $r_1>0$, and $C_S<\infty$ such that, if
\[
    \|a\|_{\mathcal H^3}\le\varepsilon_1,
    \qquad
    \|\nabla g\|_{\mathcal H^1}\le r_1,
\]
then
\begin{equation}
    \sup_{0\le s\le S}\|\nabla\psi_s[g]\|_{\mathcal H^1}
    \le C_S\|\nabla g\|_{\mathcal H^1}.
    \label{eq:loc-score-lifting}
\end{equation}
\end{proposition}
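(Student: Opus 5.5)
We bound the two pieces of the $\mathcal H^1$ norm of $\nabla\psi_s[g]$ separately: the sup norm through the score formula \eqref{eq:loc-score-formula}, and the Lipschitz seminorm through the Hessian formula \eqref{eq:loc-hessian-formula}. In each case we subtract the $g=0$ reference terms, so every remaining piece is linear in $\nabla g$ or $\nabla^2 g$. Since the statement is invariant under adding constants to $g$, and only $\nabla g$ enters, we may normalize $g$ freely; we work first with smooth $g$ and then mollify.

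\textbf{Sup-norm part.} By \eqref{eq:loc-score-formula},
\[
\nabla\psi_s[g]=\alpha_s\bigl(M_{s,g+a}[\nabla g]+M_{s,g+a}[\nabla a]-M_{s,a}[\nabla a]\bigr).
\]
Since $M_{s,H}$ is an average against a probability measure, $|M_{s,g+a}[\nabla g]|\le\|\nabla g\|_{L^\infty}$. For the difference of the two $\nabla a$ terms we interpolate along $H_\lambda=a+\lambda g$, $\lambda\in[0,1]$:
\[
\frac{\dd}{\dd\lambda}M_{s,H_\lambda}[F]=\Cov_{s,H_\lambda}(g,F).
\]
Lemma~\ref{lem:loc-covariance} applies with $R=\varepsilon_1+r_1$, because $\|\nabla H_\lambda\|_{L^\infty}\le R$. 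It gives
\[
|M_{s,g+a}[\nabla a]-M_{s,a}[\nabla a]|\le C\|\nabla g\|_\infty\|\nabla^2 a\|_\infty\le C\|\nabla g\|_{\mathcal H^1}.
\]
In the OU case $g$ has at most linear growth, and the lemma explicitly allows this. Together these bounds control $\|\nabla\psi_s[g]\|_{L^\infty}$.

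\textbf{Lipschitz part.} We bound the difference in \eqref{eq:loc-hessian-formula} in $L^\infty$, splitting it into three groups.
\begin{itemize}
\item The term $M_{s,g+a}[\nabla^2 g]$ is at most $\|\nabla^2 g\|_\infty\le\|\nabla g\|_{\mathcal H^1}$.
\item The difference $M_{s,g+a}[\nabla^2 a]-M_{s,a}[\nabla^2 a]$ is handled by the same interpolation. The only change is that the covariance now involves $\nabla^2 a$, so we need $\|\nabla^3 a\|_\infty$; this is where $\mathcal H^3$ of $a$ is used.
\item The covariance difference $\Cov_{s,g+a}(\nabla g+\nabla a,\nabla g+\nabla a)-\Cov_{s,a}(\nabla a,\nabla a)$ is expanded bilinearly into three pieces:
\[
\Cov_{s,g+a}(\nabla g,\nabla g),\qquad 2\Cov_{s,g+a}(\nabla g,\nabla a),\qquad \Cov_{s,g+a}(\nabla a,\nabla a)-\Cov_{s,a}(\nabla a,\nabla a).
\]
The first two are bounded by Lemma~\ref{lem:loc-covariance}, by $C\|\nabla^2 g\|_\infty^2\le Cr_1\|\nabla g\|_{\mathcal H^1}$ and $C\|\nabla^2 g\|_\infty\|\nabla^2 a\|_\infty$ respectively.
\end{itemize}

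\textbf{Main obstacle.} The third covariance piece, the variation of a covariance in the tilt, is the hard step. Interpolating in $\lambda$ gives a third-order cumulant $\kappa_{s,H_\lambda}(g,\nabla a,\nabla a)$, which Lemma~\ref{lem:loc-covariance} does not cover.

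To handle it, we prove a cumulant analogue of Lemma~\ref{lem:loc-covariance}, namely
\[
|\kappa_3(F,G,K)|\le C_{R,S}\,\omega_s^{3/2}\,\|\nabla F\|_\infty\|\nabla G\|_\infty\|\nabla K\|_\infty .
\]
The proof follows the same route as the lemma. In the OU case we center $f,g,k$ under $\gamma$ and apply Hölder with the $L^6$ Lipschitz moment bound \eqref{eq:loc-gaussian-lip-moments} and the tilt bound \eqref{eq:loc-tilt-exp-moment}. In the torus case we use second-moment bounds $\mathbb E d(Y,x)^3\lesssim e^{\osc H}(1\wedge s)^{3/2}$ for the tilted heat kernel. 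Any power of $\omega_s$ suffices for our purposes, since we only need a bound uniform in $s\le S$.

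\textbf{Conclusion.} With the cumulant bound, the third piece is controlled by $C\|\nabla g\|_\infty\|\nabla^2a\|_\infty^2$. Collecting all the terms, and using $\alpha_s\le1$ together with Lemma~\ref{lem:loc-coefficients} and $\|a\|_{\mathcal H^3}\le\varepsilon_1$ to bound the coefficients, we obtain \eqref{eq:loc-score-lifting} uniformly in $0\le s\le S$.
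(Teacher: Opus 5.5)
Your proposal is correct, and apart from one step it follows the paper's proof. The shared structure is the split into an $L^\infty$ part via \eqref{eq:loc-score-formula} and a Lipschitz part via \eqref{eq:loc-hessian-formula}, the interpolation $H_\theta=a+\theta g$ with $\frac{\dd}{\dd\theta}M_{s,H_\theta}[F]=\Cov_{s,H_\theta}(F,g)$, and the bilinear expansion of the covariance difference.

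The divergence is in the step you call the main obstacle. The paper does not prove a new cumulant lemma. It writes the tilt-derivative of the covariance, which is exactly your third cumulant, in terms of covariances:
\[
\frac{\dd}{\dd\theta}\Cov_{s,H_\theta}(F,F)=\Cov_{s,H_\theta}(F\otimes F,g)-M_{s,H_\theta}[F]\otimes\Cov_{s,H_\theta}(F,g)-\Cov_{s,H_\theta}(F,g)\otimes M_{s,H_\theta}[F],\qquad F=\nabla a .
\]
It then applies Lemma~\ref{lem:loc-covariance} to each term. This works because $F\otimes F$ is Lipschitz with $\|\nabla(F\otimes F)\|_\infty\le 2\|\nabla a\|_\infty\|\nabla^2a\|_\infty$ and $|M_{s,H_\theta}[F]|\le\|\nabla a\|_\infty$. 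The result is a bound $C_S\|a\|_{\mathcal H^3}^2\|\nabla g\|_{\mathcal H^1}$ at no extra cost. So your remark that Lemma~\ref{lem:loc-covariance} does not cover this term is too pessimistic; it does, once the cumulant is expanded.

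Your route also works. In the OU case, centering, shift-invariance of third cumulants, and H\"older with three $L^6$ factors and $\|e^b\|_{L^2}$ suffice. On the torus, expand around $F(x)$ and use $P_s(d_{\T}(x,\cdot)^3)(x)\lesssim(1\wedge s)^{3/2}$; note this is a third-moment bound, not a second-moment one as you wrote. Your lemma yields a sharper factor $\omega_s^{3/2}$ and needs no bound on $\|F\|_\infty$. Here, though, $F=\nabla a$ is bounded and only uniformity in $s\le S$ is needed, so that extra strength buys nothing over the paper's one-line reduction.
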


\begin{proof}
Set $H_\theta=a+\theta g$, $0\le\theta\le1$.  If $\varepsilon_1+r_1$ is bounded, Lemma~\ref{lem:loc-covariance} applies uniformly to all $H_\theta$.  After decreasing $\varepsilon_1$ and $r_1$, all constants depend only on $d$ and $S$.

We first estimate the $L^\infty$ norm.  By \eqref{eq:loc-score-formula} and $\alpha_s\le1$,
\[
\begin{aligned}
    |\nabla\psi_s[g]|
    &\le |M_{s,g+a}[\nabla g]|
      +|M_{s,g+a}[\nabla a]-M_{s,a}[\nabla a]|  \\
    &\le y+|M_{s,g+a}[\nabla a]-M_{s,a}[\nabla a]|.
\end{aligned}
\]
Differentiating $M_{s,H_\theta}[F]$ in $\theta$ gives
\begin{equation}
    \frac{\dd}{\dd\theta}M_{s,H_\theta}[F]
    =\Cov_{s,H_\theta}(F,g).
    \label{eq:loc-theta-M}
\end{equation}
Thus Lemma~\ref{lem:loc-covariance}, applied with $F=\nabla a$, gives
\[
    |M_{s,g+a}[\nabla a]-M_{s,a}[\nabla a]|
    \le C_S\|\nabla^2a\|_\infty\|\nabla g\|_\infty
    \le C_S\|a\|_{\mathcal H^3}y.
\]
This proves $\|\nabla\psi_s[g]\|_{L^\infty}\le C_Sy$.

For the Lipschitz seminorm it is enough, for smooth $g$, to bound $\|\nabla^2\psi_s[g]\|_{L^\infty}$.  In \eqref{eq:loc-hessian-formula}, the expectation part is
\[
    M_{s,g+a}[\nabla^2g]
    +M_{s,g+a}[\nabla^2a]-M_{s,a}[\nabla^2a].
\]
The first term is bounded by $y$.  The second difference is controlled using \eqref{eq:loc-theta-M} and Lemma~\ref{lem:loc-covariance} with $F=\nabla^2a$:
\[
    |M_{s,g+a}[\nabla^2a]-M_{s,a}[\nabla^2a]|
    \le C_S\Lip(\nabla^2a)\|\nabla g\|_\infty
    \le C_S\|a\|_{\mathcal H^3}y.
\]
For the covariance part, expand
\[
\begin{aligned}
&\Cov_{s,g+a}(\nabla g+\nabla a,\nabla g+\nabla a)-\Cov_{s,a}(\nabla a,\nabla a) \\
&\quad=\Cov_{s,g+a}(\nabla g,\nabla g)
   +2\Cov_{s,g+a}(\nabla a,\nabla g) \\
&\qquad+\Cov_{s,g+a}(\nabla a,\nabla a)-\Cov_{s,a}(\nabla a,\nabla a).
\end{aligned}
\]
The first two terms are bounded by $C_S(y^2+\|a\|_{\mathcal H^3}y)$.  For the last term, differentiate in $\theta$; componentwise,
\[
\begin{aligned}
\frac{\dd}{\dd\theta}\Cov_{s,H_\theta}(F,F)
&=\Cov_{s,H_\theta}(F\otimes F,g)
  -M_{s,H_\theta}[F]\otimes\Cov_{s,H_\theta}(F,g) \\
&\quad-\Cov_{s,H_\theta}(F,g)\otimes M_{s,H_\theta}[F],
\end{aligned}
\]
with $F=\nabla a$.  Lemma~\ref{lem:loc-covariance} gives a bound $C_S\|a\|_{\mathcal H^3}^2y$.  Taking $\varepsilon_1,r_1\le1$, all terms are bounded by $C_Sy$.  Together with the $L^\infty$ estimate, this proves \eqref{eq:loc-score-lifting}.
\end{proof}

The scale equation for $z_s:=\nabla\psi_s[g]$ is the common parabolic core of both proofs.

\begin{lemma}[Scale equation]
\label{lem:loc-scale-equation}
For smooth $g$,
\begin{equation}
    \partial_s z_s=\mathscr L z_s+F_s(z_s),
    \qquad
    z_0=\nabla g,
    \label{eq:loc-scale-equation}
\end{equation}
where
\begin{equation}
    F_s(z)=2(c_s+z)\cdot\nabla z+2b_s^Tz,
    \qquad
    c_s=\nabla\log\mathsf P_s(e^{a}),
    \quad
    b_s=\nabla^2\log\mathsf P_s(e^{a}).
    \label{eq:loc-Fs-def}
\end{equation}
\end{lemma}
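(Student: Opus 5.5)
The plan is to derive the scale equation from the scalar equation satisfied by the smoothed log-ratio, and then take one spatial gradient. Set
\[
    u_s:=\mathsf P_s(e^{g+a}),\qquad r_s:=\mathsf P_s(e^{a}),\qquad \phi_s:=\log u_s,\qquad \chi_s:=\log r_s,
\]
so that $\psi_s[g]=\phi_s-\chi_s$, $z_s=\nabla\phi_s-\nabla\chi_s$ and $c_s=\nabla\chi_s$. The initial condition is immediate: $\mathsf P_0$ is the identity, so $\psi_0[g]=g+a-a=g$ and hence $z_0=\nabla g$.

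\textbf{Step 1: scalar equation for the logarithms.} Since $u_s$ and $r_s$ are strictly positive and solve the backward equation $\partial_s u_s=\mathsf L u_s$, I will use the chain-rule identity for diffusion generators. Both $\Delta$ and $L_\gamma=\Delta-x\cdot\nabla$ have carré du champ $|\nabla\cdot|^2$, which gives
\[
    \frac{\mathsf L u}{u}=\mathsf L\log u+|\nabla\log u|^2 .
\]
Therefore $\partial_s\phi_s=\mathsf L\phi_s+|\nabla\phi_s|^2$, and likewise for $\chi_s$. Subtracting the two equations yields
\[
    \partial_s\psi_s=\mathsf L\psi_s+|\nabla\phi_s|^2-|\nabla\chi_s|^2
    =\mathsf L\psi_s+z_s\cdot(z_s+2c_s),
\]
where the last equality uses $\nabla\phi_s=z_s+c_s$ and $\nabla\chi_s=c_s$.

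\textbf{Step 2: take the gradient.} On the torus, $\nabla\Delta=\Delta\nabla$. In the OU case, the direct computation $\partial_i(x\cdot\nabla f)=\partial_i f+x\cdot\nabla\partial_i f$ gives $\nabla L_\gamma f=(L_\gamma-I)\nabla f$. In both models this reads $\nabla\mathsf L=\mathscr L\nabla$, with $\mathscr L$ as in \eqref{eq:loc-vector-generator}, so $\partial_s z_s=\mathscr L z_s+\nabla\bigl[z_s\cdot(z_s+2c_s)\bigr]$. Because $z_s$ and $c_s$ are gradients, their Jacobians $\nabla z_s$ and $b_s=\nabla c_s$ are symmetric. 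Hence
\[
    \nabla|z|^2=2(\nabla z)^Tz=2z\cdot\nabla z,
    \qquad
    \nabla(2z\cdot c)=2c\cdot\nabla z+2b_s^Tz .
\]
Summing these gives exactly $F_s(z_s)=2(c_s+z_s)\cdot\nabla z_s+2b_s^Tz_s$, which proves \eqref{eq:loc-scale-equation}.

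\textbf{Main obstacle: rigorous justification.} The algebra above is routine; the work is in justifying it. We need $u_s>0$, differentiability in $s$, and the exchange of $\partial_s$ with $\nabla$. On the compact torus this is classical heat-semigroup regularity for smooth positive data. In the OU case $e^{g+a}$ may grow exponentially if $g$ has linear growth, so I will first treat smooth $g$ whose derivatives are bounded. Then $e^{g}$ has Gaussian-integrable growth. Mehler's formula \eqref{eq:Mehler1} allows differentiation under the expectation, and dominated convergence justifies both $\partial_s u_s=L_\gamma u_s$ and the commutation $\nabla Q_s=e^{-s}Q_s\nabla$ used in \eqref{eq:loc-score-commutation}. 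Positivity of $u_s$ is automatic since $\mathsf P_s$ is Markov with positive kernel. General smooth $g$ of the class considered in this subsection is then handled by truncating and passing to the limit, using the exponential integrability already exploited in Lemma~\ref{lem:loc-covariance}.
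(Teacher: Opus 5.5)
Your proof is correct and follows the paper's argument: you derive $\partial_s\psi_s=\mathsf L\psi_s+|\nabla\psi_s|^2+2c_s\cdot\nabla\psi_s$ from the chain rule $\mathsf Lu/u=\mathsf L\log u+|\nabla\log u|^2$, then take a gradient using $\nabla\mathsf L=\mathscr L\nabla$ and the symmetry of the Jacobians of the gradient fields $z_s$ and $c_s$. The paper simply restricts to smooth $g$ without further comment, whereas you also justify regularity (positivity of the smoothed densities, differentiation under Mehler's formula for $g$ with bounded derivatives); this extra care is consistent with the class of $g$ the paper uses.
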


\begin{proof}
Let $q_s=\mathsf P_s(e^{g+a})$ and $r_s=\mathsf P_s(e^{a})$.  Then $\psi_s=\log q_s-\log r_s$.  Since $\partial_sq_s=\mathsf Lq_s$ and $\partial_sr_s=\mathsf Lr_s$, and since for a positive function $u$,
\[
    \frac{\mathsf L u}{u}=\mathsf L\log u+|\nabla\log u|^2
\]
for both $\mathsf L=\Delta$ and $\mathsf L=L_\gamma$, we obtain
\[
    \partial_s\psi_s
    =\mathsf L\psi_s+|\nabla\psi_s|^2+2\nabla\log r_s\cdot\nabla\psi_s.
\]
Taking a gradient gives the result.  In the torus case $\nabla\mathsf L=\Delta\nabla$, while in the OU case $\nabla L_\gamma=(L_\gamma-I)\nabla$.  The identity $\nabla(|z|^2)=2(z\cdot\nabla)z$ is used with $z=\nabla\psi_s$, a gradient field.
\end{proof}

\begin{proposition}[Averaged one-derivative gain]
\label{prop:loc-averaged-gain}
There are $\varepsilon_2>0$, $r_2>0$, and $C_S<\infty$ such that, if
\[
    \|a\|_{\mathcal H^3}\le\varepsilon_2,
    \qquad
    \|\nabla g\|_{\mathcal H^1}\le r_2,
\]
then
\begin{equation}
    \int_0^S\|\nabla\psi_s[g]\|_{\mathcal H^2}\,\dd s
    \le C_S\|\nabla g\|_{\mathcal H^1}.
    \label{eq:loc-averaged-gain}
\end{equation}
Consequently,
\begin{equation}
    \|W_S[g]\|_{\mathcal H^2}
    \le C_S\|\nabla g\|_{\mathcal H^1}.
    \label{eq:loc-W-H2}
\end{equation}
\end{proposition}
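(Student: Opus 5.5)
We will prove \eqref{eq:loc-averaged-gain} by reading the scale equation of Lemma~\ref{lem:loc-scale-equation} as a parabolic problem in the variable $s$. We then apply the smoothing estimate of Lemma~\ref{lem:loc-semigroup-estimates}(i) through Duhamel's formula. Writing $z_s=\nabla\psi_s[g]$, the equation \eqref{eq:loc-scale-equation} gives
\[
    z_s=\mathsf K_s\xi+\int_0^s\mathsf K_{s-u}F_u(z_u)\,\dd u ,
\]
since $\mathsf K_s$ is the semigroup generated by $\mathscr L$ on vector fields by \eqref{eq:loc-score-commutation}. Integrating the $\mathcal H^2$ norm over $s\in[0,S]$, the linear term is controlled by \eqref{eq:loc-integrated-smoothing} with $q=0$, $s_0=0$:
\[
    \int_0^S\|\mathsf K_s\xi\|_{\mathcal H^2}\,\dd s\le C_S\sqrt S\,y .
\]
For the Duhamel term we use Fubini, then \eqref{eq:loc-integrated-smoothing} on each $[u,S]$:
\[
    \int_0^S\Big\|\int_0^s\mathsf K_{s-u}F_u(z_u)\,\dd u\Big\|_{\mathcal H^2}\dd s
    \le C_S\int_0^S\|F_u(z_u)\|_{\mathcal H^1}\,\dd u .
\]

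Next we bound the forcing. By \eqref{eq:loc-Fs-def}, $F_u(z)=2(c_u+z)\cdot\nabla z+2b_u^Tz$. The product rule in Hölder spaces gives
\[
    \|F_u(z_u)\|_{\mathcal H^1}\le C\big(\|c_u\|_{\mathcal H^1}+\|z_u\|_{\mathcal H^1}\big)\|z_u\|_{\mathcal H^2}+C\|b_u\|_{\mathcal H^1}\|z_u\|_{\mathcal H^1}.
\]
Lemma~\ref{lem:loc-coefficients} controls $\|c_u\|_{\mathcal H^2}+\|b_u\|_{\mathcal H^1}\le C_S\|a\|_{\mathcal H^3}$, and Proposition~\ref{prop:loc-score-lifting} gives $\sup_u\|z_u\|_{\mathcal H^1}\le C_S y$. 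Hence
\[
    \|F_u(z_u)\|_{\mathcal H^1}\le C_S(\varepsilon_2+r_2)\|z_u\|_{\mathcal H^2}+C_S\varepsilon_2\,y .
\]
Inserting this into the Duhamel bound yields
\[
    I:=\int_0^S\|z_s\|_{\mathcal H^2}\,\dd s\le C_Sy+C_S(\varepsilon_2+r_2)I .
\]
Choosing $\varepsilon_2,r_2$ small so that $C_S(\varepsilon_2+r_2)\le1/2$ lets us absorb the last term and obtain \eqref{eq:loc-averaged-gain}. The consequence \eqref{eq:loc-W-H2} then follows from $W_S[g]=\frac1S\int_0^S z_s\,\dd s$ and Minkowski's inequality for the Bochner integral.

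The main obstacle is that the absorption step is legitimate only if $I<\infty$ a priori. This is not automatic, because near $s=0$ the field $z_s$ is only $\mathcal H^1$ when $g$ is only assumed $\nabla g\in\mathcal H^1$. We handle this by first working with smooth $g$, as the subsection stipulates. In that case $z_s$ is smooth uniformly in $s$ by \eqref{eq:loc-semigroup-high-bound} and the explicit formulas \eqref{eq:loc-score-formula}--\eqref{eq:loc-hessian-formula}, so $I<\infty$. The constants in the bound do not depend on this extra smoothness, so the estimate passes to the endpoint class by mollification and lower semicontinuity of the $\mathcal H^2$ norm.

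A second, milder point is the quadratic term $z\cdot\nabla z$. It requires an $\mathcal H^1$ bound on $z_u$ that is uniform in $u$, which is exactly what Proposition~\ref{prop:loc-score-lifting} supplies. Without it the estimate would not close linearly in $y$. In the OU case one also checks that the Hölder product estimates hold on $\R^d$ with unweighted norms; they do, since all factors are bounded.
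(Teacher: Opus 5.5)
Your proposal is correct and follows the paper's proof essentially step for step: Duhamel's formula for the scale equation, the integrated smoothing estimate \eqref{eq:loc-integrated-smoothing} combined with Fubini, the $\mathcal H^1$ product bound on $F_u(z_u)$ via Lemma~\ref{lem:loc-coefficients} and Proposition~\ref{prop:loc-score-lifting}, and absorption for small $\varepsilon_2,r_2$. The paper leaves implicit, under its standing smoothness assumption on $g$, your point that absorption needs $\int_0^S\|z_s\|_{\mathcal H^2}\,\dd s<\infty$ a priori; note only that with $a$ merely in $\mathcal H^3$ this finiteness comes from bounding the third derivatives of $\log\mathsf P_s(e^{g+a})$ and $\log\mathsf P_s(e^{a})$ uniformly in $s$ (these are tilted averages of bounded quantities), not from the Hessian formula \eqref{eq:loc-hessian-formula}, which only controls $\nabla z_s$.
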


\begin{proof}
Let $z_s=\nabla\psi_s[g]$.  Duhamel's formula for \eqref{eq:loc-scale-equation} is
\begin{equation}
    z_s=\mathsf K_s\nabla g+
        \int_0^s\mathsf K_{s-u}F_u(z_u)\,\dd u.
    \label{eq:loc-duhamel-z}
\end{equation}
Lemma~\ref{lem:loc-coefficients} gives
\begin{equation}
    \sup_{0\le u\le S}(\|c_u\|_{\mathcal H^2}+\|b_u\|_{\mathcal H^1})
    \le C_S\|a\|_{\mathcal H^3}.
    \label{eq:loc-c-b-small}
\end{equation}
The elementary product estimates in the $C^{k,1}$ scale give
\begin{equation}
    \|AB\|_{\mathcal H^1}\le C\|A\|_{\mathcal H^1}\|B\|_{\mathcal H^1},
    \qquad
    \|A\cdot\nabla z\|_{\mathcal H^1}\le C\|A\|_{\mathcal H^1}\|z\|_{\mathcal H^2}.
    \label{eq:loc-product-H1}
\end{equation}
Using \eqref{eq:loc-c-b-small}, Proposition~\ref{prop:loc-score-lifting}, and \eqref{eq:loc-product-H1},
\begin{equation}
    \|F_u(z_u)\|_{\mathcal H^1}
    \le C_S(\|a\|_{\mathcal H^3}+\|\nabla g\|_{\mathcal H^1})\|z_u\|_{\mathcal H^2}
       +C_S\|a\|_{\mathcal H^3}\|\nabla g\|_{\mathcal H^1}.
    \label{eq:loc-F-H1-bound}
\end{equation}
By Lemma~\ref{lem:loc-semigroup-estimates} and Fubini,
\[
\begin{aligned}
    \int_0^S\|z_s\|_{\mathcal H^2}\,\dd s
    &\le C_S\|\nabla g\|_{\mathcal H^1}
       +C_S\int_0^S\|F_u(z_u)\|_{\mathcal H^1}\,\dd u \\
    &\le C_S\|\nabla g\|_{\mathcal H^1}
       +C_S(\|a\|_{\mathcal H^3}+\|\nabla g\|_{\mathcal H^1})
         \int_0^S\|z_u\|_{\mathcal H^2}\,\dd u.
\end{aligned}
\]
Here the term $C_S\|a\|_{\mathcal H^3}\|\nabla g\|_{\mathcal H^1}$ has been absorbed into the first term after taking $\|a\|_{\mathcal H^3}\le1$.  Choose $\varepsilon_2,r_2$ so small that the final term is absorbed into the left-hand side.  This proves \eqref{eq:loc-averaged-gain}.  The bound \eqref{eq:loc-W-H2} follows from $W_S[g]=\frac{1}{S}\int_0^S z_s\,\dd s$.
\end{proof}

\subsection{The zero-scale cancellation}

Define the linear averaged score velocity
\begin{equation}
    W_S^0[\xi]:=\frac{1}{S}\int_0^S\mathsf K_s\xi\,\dd s.
    \label{eq:loc-W0-def}
\end{equation}
For $\xi=\nabla g$, define
\begin{equation}
    U[g]:=W_S[g]-W_S^0[\nabla g]
    =\frac{1}{S}\int_0^S\bigl(\nabla\psi_s[g]-\mathsf K_s\nabla g\bigr)\,\dd s.
    \label{eq:loc-U-def}
\end{equation}

\begin{proposition}[Cancelled correction estimate]
\label{prop:loc-cancelled}
There are $\varepsilon_3>0$, $r_3>0$, and $C_S<\infty$ such that, if
\[
    \|a\|_{\mathcal H^3}\le\varepsilon_3,
    \qquad
    \|\nabla g\|_{\mathcal H^1}\le r_3,
\]
then
\begin{equation}
    \mathscr L U[g]
    =\frac{1}{S}\int_0^S(\mathsf K_{S-u}-I)F_u(\nabla\psi_u[g])\,\dd u
    \label{eq:loc-cancel-identity}
\end{equation}
in the sense of distributions.  Moreover,
\begin{equation}
    \|\mathscr L U[g]\|_{\mathcal H^1}
    \le C_S(\|a\|_{\mathcal H^3}+\|\nabla g\|_{\mathcal H^1})\|\nabla g\|_{\mathcal H^1}.
    \label{eq:loc-cancel-estimate}
\end{equation}
\end{proposition}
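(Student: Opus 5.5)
The identity \eqref{eq:loc-cancel-identity} comes from applying $\mathscr L$ to the Duhamel representation \eqref{eq:loc-duhamel-z} and integrating in the scale variable. Write $z_s=\nabla\psi_s[g]$. By \eqref{eq:loc-duhamel-z},
\[
    z_s-\mathsf K_s\nabla g=\int_0^s\mathsf K_{s-u}F_u(z_u)\,\dd u,
\]
so, by Fubini,
\[
    S\,U[g]=\int_0^S\int_u^S\mathsf K_{s-u}F_u(z_u)\,\dd s\,\dd u .
\]
Apply $\mathscr L$ and use $\mathscr L\mathsf K_r=\partial_r\mathsf K_r$ from \eqref{eq:loc-score-commutation}. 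The inner integral then equals $(\mathsf K_{S-u}-I)F_u(z_u)$, which is \eqref{eq:loc-cancel-identity}. This is the zero-scale cancellation: the endpoint $s=u$ contributes $-F_u$, and $\mathscr L$ has been traded for a difference of semigroups, with no loss of derivatives.

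To justify this in distributions, pair with a smooth compactly supported test field $\varphi$ and move $\mathscr L$ onto it. In the OU case $\mathscr L$ is symmetric with respect to $\gamma$, so I would pair against $\varphi\,\dd\gamma$. The $s$-integral of $\langle \mathsf K_{s-u}F_u,\mathscr L^*\varphi\rangle$ is then the integral of an exact derivative. The integrability needed for Fubini comes from Proposition~\ref{prop:loc-averaged-gain} together with the bound on $F_u$ below.

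For the estimate \eqref{eq:loc-cancel-estimate}, Lemma~\ref{lem:loc-semigroup-estimates}(iii) gives $\|\mathsf K_{S-u}-I\|_{\mathcal H^1\to\mathcal H^1}\le2$. Hence
\[
    \|\mathscr L U[g]\|_{\mathcal H^1}\le \frac2S\int_0^S\|F_u(z_u)\|_{\mathcal H^1}\,\dd u .
\]
Now insert \eqref{eq:loc-F-H1-bound}. Its integrand is $C_S(\|a\|_{\mathcal H^3}+\|\nabla g\|_{\mathcal H^1})\|z_u\|_{\mathcal H^2}+C_S\|a\|_{\mathcal H^3}\|\nabla g\|_{\mathcal H^1}$. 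The averaged one-derivative gain \eqref{eq:loc-averaged-gain} bounds $\int_0^S\|z_u\|_{\mathcal H^2}\,\dd u$ by $C_S\|\nabla g\|_{\mathcal H^1}$. Together these give exactly \eqref{eq:loc-cancel-estimate}, provided $\varepsilon_3\le\min(\varepsilon_1,\varepsilon_2)$ and $r_3\le\min(r_1,r_2)$.

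The main obstacle is the rigorous handling of the unbounded operator $\mathscr L$. Pointwise, $\mathscr L z_s$ needs more than the $\mathcal H^2$ regularity that is controlled only in $L^1(\dd s)$. For this reason I would never apply $\mathscr L$ to $z_s$ directly. I would work only with the semigroup identity $\mathsf K_{r}-I=\int_0^r\mathscr L\mathsf K_{r'}\,\dd r'$ tested weakly, first for smooth $g$ and $a$, and then pass to limits. In the OU case the linear growth of $x\cdot\nabla$ also enters, and it requires test functions with compact support.
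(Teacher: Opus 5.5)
Your proposal is correct and follows the paper's proof essentially step for step. Duhamel plus Fubini write $U[g]$ as a double integral, the identity $\mathscr L\int_0^{S-u}\mathsf K_\tau\,\dd\tau=\mathsf K_{S-u}-I$ gives \eqref{eq:loc-cancel-identity}, and the bound $\|\mathsf K_{S-u}-I\|_{\mathcal H^1\to\mathcal H^1}\le 2$ combined with \eqref{eq:loc-F-H1-bound} and \eqref{eq:loc-averaged-gain} gives the estimate. Your remarks on testing weakly (against $\varphi\,\dd\gamma$ in the OU case) only spell out the distributional justification that the paper leaves implicit.
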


\begin{proof}
Let $z_u=\nabla\psi_u[g]$.  Duhamel's formula gives
\[
    z_s-\mathsf K_s\nabla g
    =\int_0^s\mathsf K_{s-u}F_u(z_u)\,\dd u.
\]
Substituting this into \eqref{eq:loc-U-def} and exchanging the order of integration,
\[
    U[g]
    =\frac{1}{S}\int_0^S\int_u^S\mathsf K_{s-u}F_u(z_u)\,\dd s\,\dd u.
\]
Since $\partial_\tau\mathsf K_\tau=\mathscr L\mathsf K_\tau$,
\[
    \mathscr L\int_u^S\mathsf K_{s-u}\,\dd s
    =\int_0^{S-u}\mathscr L\mathsf K_\tau\,\dd\tau
    =\mathsf K_{S-u}-I.
\]
This proves \eqref{eq:loc-cancel-identity}.  By \eqref{eq:loc-nonstrict-contraction}, $\mathsf K_{S-u}-I$ is bounded on $\mathcal H^1$ with norm at most $2$.  Hence \eqref{eq:loc-F-H1-bound} and Proposition~\ref{prop:loc-averaged-gain} give
\[
\begin{aligned}
    \|\mathscr L U[g]\|_{\mathcal H^1}
    &\le \frac{2}{S}\int_0^S\|F_u(z_u)\|_{\mathcal H^1}\,\dd u \\
    &\le C_S(\|a\|_{\mathcal H^3}+\|\nabla g\|_{\mathcal H^1})\|\nabla g\|_{\mathcal H^1}.
\end{aligned}
\]
\end{proof}

\subsection{The physical-time score inequality}

Set
\begin{equation}
    A_S\xi:=\mathscr L W_S^0[\xi]
    =\frac{\mathsf K_S-I}{S}\xi,
    \qquad
    \lambda_S:=\frac{1-\theta_S}{S},
    \label{eq:loc-AS-lambda}
\end{equation}
where $\theta_S$ is the strict contraction constant in Lemma~\ref{lem:loc-semigroup-estimates}.

\begin{lemma}[Damping of the averaged linear part]
\label{lem:loc-linear-damping}
Along the linear equation $\dot\xi=A_S\xi$,
\begin{equation}
    D^+\|\xi\|_{\mathcal H^1}\le -\lambda_S\|\xi\|_{\mathcal H^1}
    \label{eq:loc-linear-damping}
\end{equation}
for all score fields in the torus case and for all vector fields in the OU case.  Here $D^+$ denotes the upper right Dini derivative.  Moreover, the semigroup generated by $A_S$ satisfies
\begin{equation}
    \|e^{tA_S}\xi\|_{\mathcal H^1}\le e^{-\lambda_St}\|\xi\|_{\mathcal H^1}
    \label{eq:loc-linear-semigroup-damping}
\end{equation}
for the same class of fields.
\end{lemma}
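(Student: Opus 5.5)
The approach is to use that $A_S=(\mathsf K_S-I)/S$ is a bounded operator on $\mathcal H^1$. The semigroup then factors explicitly as
\[
    e^{tA_S}=e^{-t/S}\,e^{(t/S)\mathsf K_S}.
\]
This factorization is legitimate because $I$ commutes with $\mathsf K_S$. The whole argument rests on the strict contraction \eqref{eq:loc-strict-contraction} of Lemma~\ref{lem:loc-semigroup-estimates}. I would prove the semigroup bound \eqref{eq:loc-linear-semigroup-damping} first and then deduce the Dini-derivative statement \eqref{eq:loc-linear-damping} from it.

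\textbf{Invariance of the class.} In the torus case, $\mathsf K_S=P_S$ maps score fields to score fields, since $P_S\nabla g=\nabla P_Sg$. The class of $\mathcal H^1$ score fields (gradients of periodic functions) is therefore invariant under $\mathsf K_S$, under all its powers, and under norm-convergent series of these. This class is closed in $\mathcal H^1$, because gradients of periodic functions are characterized by zero mean together with curl-free; the relevant limits are uniform, so both conditions pass to the limit. In the OU case no restriction is needed. So $e^{(t/S)\mathsf K_S}\xi=\sum_{k\ge0}\frac{(t/S)^k}{k!}\mathsf K_S^k\xi$ stays in the class.

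\textbf{Semigroup bound.} Applying \eqref{eq:loc-strict-contraction} $k$ times, which is valid because each iterate remains in the class, gives $\|\mathsf K_S^k\xi\|_{\mathcal H^1}\le\theta_S^k\|\xi\|_{\mathcal H^1}$. Summing the series,
\[
    \|e^{tA_S}\xi\|_{\mathcal H^1}
    \le e^{-t/S}\sum_{k\ge0}\frac{(t\theta_S/S)^k}{k!}\|\xi\|_{\mathcal H^1}
    = e^{-(1-\theta_S)t/S}\|\xi\|_{\mathcal H^1}.
\]
Since $\lambda_S=(1-\theta_S)/S$, this is exactly \eqref{eq:loc-linear-semigroup-damping}.

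\textbf{Dini derivative.} Any solution of $\dot\xi=A_S\xi$ with a bounded generator is given by $\xi_{t+h}=e^{hA_S}\xi_t$. Hence
\[
    \frac{\|\xi_{t+h}\|_{\mathcal H^1}-\|\xi_t\|_{\mathcal H^1}}{h}
    \le\frac{e^{-\lambda_Sh}-1}{h}\|\xi_t\|_{\mathcal H^1}.
\]
Taking $\limsup_{h\downarrow0}$ yields \eqref{eq:loc-linear-damping}.

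Alternatively, one can argue directly: the norm is convex, so
\[
    \|\xi+hA_S\xi\|_{\mathcal H^1}
    =\Bigl\|\bigl(1-\tfrac hS\bigr)\xi+\tfrac hS\mathsf K_S\xi\Bigr\|_{\mathcal H^1}
    \le\Bigl(1-\tfrac hS+\tfrac hS\theta_S\Bigr)\|\xi\|_{\mathcal H^1},
\]
and the second-order remainder $O(h^2)$ does not affect the limit. Either route works, and I would present the semigroup route as the main one.

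\textbf{Main obstacle.} The only delicate point is the invariance of the torus score-field class, because \eqref{eq:loc-strict-contraction} is stated only for score fields there. In particular, one must check that the infinite sum stays within gradients of periodic functions. This follows from the commutation $P_S\nabla=\nabla P_S$ together with the closedness argument above. One could also avoid the closedness question by summing potentials: the potential series $\sum_k\frac{(t/S)^k}{k!}P_S^kg$ converges in $C^{1}$, so the limit is visibly a gradient.
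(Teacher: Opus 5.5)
Your proposal is correct and is essentially the paper's proof: the same factorization $e^{tA_S}=e^{-t/S}\exp\bigl(\frac tS\mathsf K_S\bigr)$, the termwise bound $\|\mathsf K_S^n\xi\|_{\mathcal H^1}\le\theta_S^n\|\xi\|_{\mathcal H^1}$ (using that $P_S$ preserves periodic score fields), summation to $e^{-\lambda_S t}$, and the Dini bound obtained by differentiating at $t=0$. Your closedness discussion is not needed for the semigroup bound itself, since the triangle inequality on the series only requires each $\mathsf K_S^n\xi$ to be a score field; it matters only for keeping a whole trajectory $\xi_t$ in the torus class, a point the paper leaves implicit.
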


\begin{proof}
Let $B=\mathsf K_S$.  Since $A_S=(B-I)/S$ and $B$ commutes with $I$,
\[
    e^{tA_S}=e^{-t/S}\exp\left(\frac tS B\right)
    =e^{-t/S}\sum_{n=0}^\infty\frac{(t/S)^n}{n!}B^n.
\]
For the OU model, Lemma~\ref{lem:loc-semigroup-estimates} gives $\|B^n\xi\|_{\mathcal H^1}\le\theta_S^n\|\xi\|_{\mathcal H^1}$ for every vector field.  For the torus model, if $\xi$ is a periodic score field, then every $B^n\xi$ is again a periodic score field, and the same bound holds.  Therefore
\[
    \|e^{tA_S}\xi\|_{\mathcal H^1}
    \le e^{-t/S}\sum_{n=0}^\infty\frac{(t/S)^n}{n!}\theta_S^n\|\xi\|_{\mathcal H^1}
    =e^{-(1-\theta_S)t/S}\|\xi\|_{\mathcal H^1}.
\]
This is \eqref{eq:loc-linear-semigroup-damping}, and \eqref{eq:loc-linear-damping} follows by taking the upper right derivative at $t=0$.
\end{proof}

\begin{lemma}[Differentiated physical-time equation]
\label{lem:loc-score-equation}
For every smooth solution of \eqref{eq:loc-h-flow}, with
\[
    \xi_t:=\nabla h_t,
    \qquad
    W_t:=W_S[h_t],
\]
one has
\begin{equation}
    \partial_t\xi_t=A_S\xi_t+(W_t\cdot\nabla)\xi_t+R_t,
    \label{eq:loc-physical-score-eq}
\end{equation}
where
\begin{equation}
    R_t=\mathscr L U[h_t]+(\nabla^2a)W_t+(\nabla W_t)^T\nabla a+(\nabla W_t)^T\xi_t.
    \label{eq:loc-R-def}
\end{equation}
\end{lemma}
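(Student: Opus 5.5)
The plan is to take the gradient of the $h$-equation \eqref{eq:loc-h-flow} and sort the resulting terms into the linear averaged part, a transport term, and the remainder $R_t$. Write $\Phi_t:=\Phi_S[h_t]$, so that $W_t=\nabla\Phi_t$. The equation reads
\[
    \partial_t h_t=\mathsf L\Phi_t+\nabla a\cdot W_t+\xi_t\cdot W_t .
\]
Applying $\nabla$ gives $\partial_t\xi_t=\nabla\mathsf L\Phi_t+\nabla(\nabla a\cdot W_t)+\nabla(\xi_t\cdot W_t)$. The three pieces will be treated in turn.

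\emph{The generator term.} Use the commutation already used in Lemma~\ref{lem:loc-scale-equation}. On the torus, $\nabla\Delta=\Delta\nabla$. In the OU case, $\nabla L_\gamma f=(L_\gamma-I)\nabla f$, which follows from $\partial_i(x\cdot\nabla f)=\partial_if+x\cdot\nabla\partial_if$. Either way $\nabla\mathsf L\Phi_t=\mathscr L W_t$. Next split $W_t=W_S^0[\xi_t]+U[h_t]$ using \eqref{eq:loc-U-def}. For the linear part, $\partial_\tau\mathsf K_\tau=\mathscr L\mathsf K_\tau$ gives
\[
    \mathscr L W_S^0[\xi_t]=\frac1S\int_0^S\mathscr L\mathsf K_s\xi_t\,\dd s=\frac{\mathsf K_S-I}{S}\xi_t=A_S\xi_t .
\]
This produces the linear term of \eqref{eq:loc-physical-score-eq} and the contribution $\mathscr L U[h_t]$ in \eqref{eq:loc-R-def}.

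\emph{The two product terms.} The key fact is that $W_t$ is a gradient and $\xi_t=\nabla h_t$ is a gradient, so both Jacobians are symmetric. For any vector fields,
\[
    \nabla(\xi\cdot W)=(\nabla\xi)^TW+(\nabla W)^T\xi .
\]
Since $\nabla\xi_t=\nabla^2h_t$ is symmetric, $(\nabla\xi_t)^TW_t=(W_t\cdot\nabla)\xi_t$. This is the transport term, and the leftover $(\nabla W_t)^T\xi_t$ goes into $R_t$. In the same way,
\[
    \nabla(\nabla a\cdot W_t)=(\nabla^2a)W_t+(\nabla W_t)^T\nabla a ,
\]
where symmetry of $\nabla^2a$ lets us write the first term in the stated form. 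Summing all three pieces gives exactly \eqref{eq:loc-physical-score-eq}--\eqref{eq:loc-R-def}.

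\emph{Justification and main obstacle.} The computation itself is algebraic. The only care needed is in justifying the manipulations: interchanging $\nabla$ and $\mathscr L$ with the scale integral $\frac1S\int_0^S\cdot\,\dd s$, and making sense of $\mathscr L U$ and $\mathscr L W^0_S$. For smooth solutions, and smooth $a$ at first, each $\psi_s[h_t]$ is smooth in $(s,x)$ with derivatives controlled uniformly on $[0,S]$, so differentiation under the integral is legitimate. The identity $\mathscr L\int_0^S\mathsf K_s\,\dd s=\mathsf K_S-I$ holds on smooth fields and extends by density. In the OU case the coefficient $x$ in $L_\gamma$ is unbounded, so the commutation must be checked pointwise rather than in norm; this is harmless because all functions involved are smooth. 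Proposition~\ref{prop:loc-cancelled} then interprets $\mathscr L U$ distributionally at the lower regularity needed later.
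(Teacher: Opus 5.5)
Your proposal is correct and follows the paper's proof essentially line by line: take the gradient of \eqref{eq:loc-h-flow}, use $\nabla\mathsf L=\mathscr L\nabla$, and expand the two product terms using the symmetry of the Jacobians of the gradient fields $\xi_t$ and $W_t$. You then split $\mathscr L W_t=\mathscr L W_S^0[\xi_t]+\mathscr L U[h_t]=A_S\xi_t+\mathscr L U[h_t]$ exactly as the paper does; your extra remarks on interchanging $\mathscr L$ with the scale integral are sound additions for the smooth setting.
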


\begin{proof}
Starting from \eqref{eq:loc-h-flow},
\[
    \partial_t h_t=\mathsf L\Phi_S[h_t]+\nabla a\cdot W_t+\xi_t\cdot W_t.
\]
Taking a gradient gives
\[
    \partial_t\xi_t=\nabla\mathsf L\Phi_S[h_t]+\nabla(\nabla a\cdot W_t)+\nabla(\xi_t\cdot W_t).
\]
By the definition of $\mathscr L$, $\nabla\mathsf L\Phi_S[h_t]=\mathscr L W_t$.  Since $W_t$ and $\xi_t$ are gradient fields,
\[
    \nabla(\xi_t\cdot W_t)=(W_t\cdot\nabla)\xi_t+(\nabla W_t)^T\xi_t,
\]
and
\[
    \nabla(\nabla a\cdot W_t)=(\nabla^2a)W_t+(\nabla W_t)^T\nabla a.
\]
Finally split
\[
    \mathscr L W_t=\mathscr L W_S^0[\xi_t]+\mathscr L U[h_t]=A_S\xi_t+\mathscr L U[h_t].
\]
Substitution proves \eqref{eq:loc-physical-score-eq}--\eqref{eq:loc-R-def}.
\end{proof}

\begin{lemma}[Damped transport logarithmic norm]
\label{lem:loc-damped-transport}
Let $\xi_t$ be a smooth score field solving
\begin{equation}
    \partial_t\xi_t=A_S\xi_t+(W_t\cdot\nabla)\xi_t+R_t
    \label{eq:loc-damped-transport-eq}
\end{equation}
on a time interval.  Then
\begin{equation}
    D^+\|\xi_t\|_{\mathcal H^1}+\lambda_S\|\xi_t\|_{\mathcal H^1}
    \le C_d\|W_t\|_{\mathcal H^2}\|\xi_t\|_{\mathcal H^1}+\|R_t\|_{\mathcal H^1}.
    \label{eq:loc-damped-transport}
\end{equation}
\end{lemma}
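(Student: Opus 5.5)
We propose a Duhamel/splitting argument for the $\mathcal H^1$ norm, in which the damping comes from Lemma~\ref{lem:loc-linear-damping} and the transport term is handled by the method of characteristics. Fix $t$ and a small $\delta>0$. We compare $\xi_{t+\delta}$ with $e^{\delta A_S}\bigl(\xi_t\circ Y_\delta\bigr)+\delta R_t$, where $Y_\delta$ is the flow of the vector field $-W_t$ frozen at time $t$; this flow is chosen so that $\partial_\delta(\xi_t\circ Y_\delta)=(W_t\cdot\nabla)(\xi_t\circ Y_\delta)$ to first order. The error in this comparison is $o(\delta)$ in $\mathcal H^1$. This uses the smoothness of $\xi$, the continuity in time of $W_t,R_t$, and the fact that $A_S=(\mathsf K_S-I)/S$ is a bounded operator on $\mathcal H^1$ (by Lemma~\ref{lem:loc-semigroup-estimates}(iii)). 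Taking $\mathcal H^1$ norms, applying the triangle inequality, subtracting $\|\xi_t\|_{\mathcal H^1}$, dividing by $\delta$ and letting $\delta\downarrow0$ will give \eqref{eq:loc-damped-transport}, provided two estimates hold. The first is
\[
\|e^{\delta A_S}\eta\|_{\mathcal H^1}\le e^{-\lambda_S\delta}\|\eta\|_{\mathcal H^1}
\]
for $\eta=\xi_t\circ Y_\delta$. The second is
\[
\|\xi_t\circ Y_\delta\|_{\mathcal H^1}\le (1+C_d\delta\|W_t\|_{\mathcal H^2}+o(\delta))\|\xi_t\|_{\mathcal H^1}.
\]

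The composition estimate is elementary. First, $\|\xi\circ Y_\delta\|_{L^\infty}=\|\xi\|_{L^\infty}$. Second, $\Lip(\xi\circ Y_\delta)\le\Lip(\xi)\Lip(Y_\delta)$ with $\Lip(Y_\delta)\le e^{\delta\|\nabla W_t\|_\infty}$. Since $\|\nabla W_t\|_\infty\le\|W_t\|_{\mathcal H^2}$, this gives the factor $1+C_d\delta\|W_t\|_{\mathcal H^2}$. Only one derivative of $W$ is used here; the $\mathcal H^2$ norm appears simply because Proposition~\ref{prop:loc-averaged-gain} supplies it.

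The main obstacle is the damping step in the torus case. There, Lemma~\ref{lem:loc-linear-damping} applies only to \emph{score fields}, i.e.\ mean-zero gradients, and $\xi_t\circ Y_\delta$ is not a gradient. In the OU case the lemma holds for all vector fields and the argument above closes directly.

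For the torus, I would avoid composition and decompose the Lipschitz seminorm directly. The $L^\infty$ part of $\|\xi\|_{\mathcal H^1}$ is controlled by evaluating at a near-maximiser of $|\xi_i|$, where the transport term $(W\cdot\nabla)\xi_i$ vanishes to first order. At that point $A_S\xi_i\le-\lambda_S\xi_i$, because $\xi_i$ has zero mean and so $\mathsf K_S\xi_i\le(1-m_S)\max\xi_i$, as in the proof of Lemma~\ref{lem:loc-semigroup-estimates}(iv). For the Lipschitz part, I would apply the same maximum-principle argument to the difference quotients $\delta_y\xi_i$. These are again mean zero and commute with $A_S$ by translation invariance. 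Their equation is
\[
\partial_t\delta_y\xi_i=A_S\delta_y\xi_i+(W\cdot\nabla)\delta_y\xi_i+(\delta_yW\cdot\nabla)\xi_i(\cdot+y)+\delta_yR_i,
\]
and the commutator term is bounded by $\Lip(W)\Lip(\xi)$. Taking a Dini derivative of the supremum over $x$, $y$ and $i$ (a Danskin-type argument for suprema, using the smoothness and compactness of the torus) yields the same inequality. This avoids needing a gradient structure for the transported field.

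The OU case can be handled in the same way, using the strict bound from $\mathsf K_S=e^{-S}Q_S$. One subtlety there is that the supremum may not be attained on $\R^d$; I would handle this with approximate maximisers, using the uniform continuity of $\xi,W,R$ guaranteed by the smoothness assumptions.
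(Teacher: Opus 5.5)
Your proposal is essentially correct. On the torus, however, it takes a heavier route than the paper, and the obstacle that forces the detour comes only from the order in which you split.

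\textbf{How the paper avoids the obstacle.} The paper writes
$\xi_{t+h}=T_{t,h}e^{hA_S}\xi_t+hR_t+o(h)$ in $\mathcal H^1$, where $T_{t,h}$ is the propagator of $\partial_\tau z=(W_\tau\cdot\nabla)z$.
\begin{itemize}
\item The damping of Lemma~\ref{lem:loc-linear-damping} is applied first, to $\xi_t$ itself, which is a score field by hypothesis.
\item Only afterwards is the transport bound $\|T_{t,h}f\|_{\mathcal H^1}\le\exp\bigl(C_d\int_t^{t+h}\|W_\tau\|_{\mathcal H^2}\,\dd\tau\bigr)\|f\|_{\mathcal H^1}$ used, and it holds for arbitrary fields.
\end{itemize}
Both orders agree to first order in $h$, so nothing is lost, and one short argument covers both models.

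Even your order closes directly. The proof of Lemma~\ref{lem:loc-semigroup-estimates}(iv) uses only zero mean (and periodicity for the difference quotients), not gradient structure. The constant part of $\xi_t\circ Y_\delta$ is untouched by $e^{\delta A_S}$, and its mean has size at most $\delta\|\nabla\cdot W_t\|_{L^\infty}\|\xi_t\|_{L^\infty}+o(\delta)$. That loss is absorbed into the allowed term $C_d\|W_t\|_{\mathcal H^2}\|\xi_t\|_{\mathcal H^1}$.

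\textbf{Your maximum-principle replacement.} It is valid, and it makes explicit that only zero mean matters and that $W$ enters only through a commutator. But it needs three fixes.
\begin{itemize}
\item The supremum of $\delta_y\xi_i(x)$ over $y\neq0$ is typically approached only as $y\to0$. The index set is therefore not compact, and the Danskin step is not justified as written. Use instead $\Lip(\xi_i)=\max_{x\in\T^d,\,|e|=1}\partial_e\xi_i(x)$, valid for $C^1$ functions on the flat torus. This is a maximum over the compact set $\T^d\times S^{d-1}$, each $\partial_e\xi_i$ has zero mean, the transport term vanishes at a maximiser, and the commutator becomes $(\partial_eW\cdot\nabla)\xi_i$.
\item The norm $\|\cdot\|_{\mathcal H^1}$ is a maximum over components of $\|\xi_i\|_{L^\infty}+\Lip(\xi_i)$, so you must combine the two parts. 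Use subadditivity of $D^+$ and $D^+\max_j f_j\le\max_{j\ \text{active}}D^+f_j$.
\item There is a sign slip. For $\partial_\delta(\xi\circ Y_\delta)=(W_t\cdot\nabla)(\xi\circ Y_\delta)$, $Y_\delta$ must be the flow of $+W_t$, not $-W_t$. This does not affect the Lipschitz bound $e^{\delta\|\nabla W_t\|_{L^\infty}}$.
\end{itemize}
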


\begin{proof}
Let $E_h=e^{hA_S}$.  By Lemma~\ref{lem:loc-linear-damping},
\[
    \|E_h\xi_t\|_{\mathcal H^1}\le e^{-\lambda_Sh}\|\xi_t\|_{\mathcal H^1},
\]
because $\xi_t$ is a score field.  Let $T_{t,h}$ be the homogeneous transport propagator for
\[
    \partial_\tau z_\tau=(W_\tau\cdot\nabla)z_\tau,
    \qquad
    z_t=f,
    \qquad t\le\tau\le t+h.
\]
The characteristic-flow estimate gives
\begin{equation}
    \|T_{t,h}f\|_{\mathcal H^1}
    \le \exp\left(C_d\int_t^{t+h}\|W_\tau\|_{\mathcal H^2}\,\dd\tau\right)\|f\|_{\mathcal H^1}.
    \label{eq:loc-transport-growth}
\end{equation}
Indeed, the $L^\infty$ norm is preserved by homogeneous transport and the Lipschitz seminorm grows at most by the Lipschitz constant of the flow, which is bounded by the exponential on the right.

Since the solution is smooth,
\[
    \xi_{t+h}=\xi_t+h\{A_S\xi_t+(W_t\cdot\nabla)\xi_t+R_t\}+o(h)
    \quad\text{in }\mathcal H^1,
\]
while
\[
    T_{t,h}E_h\xi_t=\xi_t+h\{A_S\xi_t+(W_t\cdot\nabla)\xi_t\}+o(h)
    \quad\text{in }\mathcal H^1.
\]
Thus
\[
    \xi_{t+h}=T_{t,h}E_h\xi_t+hR_t+o(h)
    \quad\text{in }\mathcal H^1.
\]
Taking norms and using the two estimates above,
\[
\begin{aligned}
    \|\xi_{t+h}\|_{\mathcal H^1}
    &\le \exp\left(C_d\int_t^{t+h}\|W_\tau\|_{\mathcal H^2}\,\dd\tau\right)
        e^{-\lambda_Sh}\|\xi_t\|_{\mathcal H^1} \\
    &\quad+h\|R_t\|_{\mathcal H^1}+o(h).
\end{aligned}
\]
Subtract $\|\xi_t\|_{\mathcal H^1}$, divide by $h$, and let $h\downarrow0$ to obtain \eqref{eq:loc-damped-transport}.
\end{proof}

\begin{proposition}[Closed score inequality]
\label{prop:loc-closed-ineq}
There are $\varepsilon_4>0$, $r_4>0$, and $C_S<\infty$ such that the following holds.  Let $h_t$ be a smooth solution on $[0,T]$ and assume
\[
    \|a\|_{\mathcal H^3}\le\varepsilon_4,
    \qquad
    \sup_{0\le t\le T}\|\nabla h_t\|_{\mathcal H^1}\le r_4.
\]
Then, with $y(t):=\|\nabla h_t\|_{\mathcal H^1}$,
\begin{equation}
    D^+y(t)+\lambda_Sy(t)
    \le C_S(\|a\|_{\mathcal H^3}+y(t))y(t),
    \qquad 0\le t\le T.
    \label{eq:loc-closed-ineq}
\end{equation}
\end{proposition}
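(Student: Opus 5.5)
The plan is to combine Lemma~\ref{lem:loc-score-equation} with Lemma~\ref{lem:loc-damped-transport}, and then bound each term on the right by the earlier scale-variable estimates. Since $h_t$ is a smooth solution, $\xi_t=\nabla h_t$ is a smooth score field satisfying \eqref{eq:loc-physical-score-eq}. This identity has exactly the form \eqref{eq:loc-damped-transport-eq}, with $W_t=W_S[h_t]$ and $R_t$ given by \eqref{eq:loc-R-def}. Lemma~\ref{lem:loc-damped-transport} then gives
\[
    D^+y(t)+\lambda_Sy(t)\le C_d\|W_t\|_{\mathcal H^2}\,y(t)+\|R_t\|_{\mathcal H^1},
\]
so it suffices to show that $\|W_t\|_{\mathcal H^2}\le C_Sy(t)$ and $\|R_t\|_{\mathcal H^1}\le C_S(\|a\|_{\mathcal H^3}+y(t))y(t)$.

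First fix $\varepsilon_4\le\min\{\varepsilon_0,\varepsilon_1,\varepsilon_2,\varepsilon_3,1\}$ and $r_4\le\min\{r_1,r_2,r_3,1\}$. With these choices, Lemma~\ref{lem:loc-coefficients} and Propositions~\ref{prop:loc-score-lifting}, \ref{prop:loc-averaged-gain} and~\ref{prop:loc-cancelled} all apply with $g=h_t$ at every $t\in[0,T]$. This is legitimate because these estimates depend only on $\nabla g$ and are invariant under adding constants to $g$. The transport coefficient is then handled directly by \eqref{eq:loc-W-H2}, which gives $\|W_t\|_{\mathcal H^2}\le C_Sy(t)$. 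Hence $C_d\|W_t\|_{\mathcal H^2}y(t)\le C_Sy(t)^2$.

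Next we bound the four pieces of $R_t$ in $\mathcal H^1$.
\begin{enumerate}
\item The term $\mathscr L U[h_t]$ is controlled by \eqref{eq:loc-cancel-estimate}, which gives $C_S(\|a\|_{\mathcal H^3}+y)y$.
\item For $(\nabla^2a)W_t$, the product estimate \eqref{eq:loc-product-H1} gives $C\|\nabla^2a\|_{\mathcal H^1}\|W_t\|_{\mathcal H^1}\le C_S\|a\|_{\mathcal H^3}y$. Here $\|\nabla^2 a\|_{\mathcal H^1}\le\|a\|_{\mathcal H^3}$ by the definition of the H\"older norm, and $\|W_t\|_{\mathcal H^1}\le\|W_t\|_{\mathcal H^2}$.
\item For $(\nabla W_t)^T\nabla a$, again by \eqref{eq:loc-product-H1}, we get $C\|\nabla W_t\|_{\mathcal H^1}\|\nabla a\|_{\mathcal H^1}\le C\|W_t\|_{\mathcal H^2}\|a\|_{\mathcal H^3}\le C_S\|a\|_{\mathcal H^3}y$.
\item Finally, $(\nabla W_t)^T\xi_t$ is bounded by $C\|W_t\|_{\mathcal H^2}\|\xi_t\|_{\mathcal H^1}\le C_Sy^2$.
\end{enumerate}
Summing these four bounds with the transport bound yields \eqref{eq:loc-closed-ineq}.

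The only delicate point is the term $\mathscr L U[h_t]$. The bare $\mathscr L W_t$ involves two derivatives of $W_t$ and cannot be bounded in $\mathcal H^1$ by $y$; this is why the linear part $A_S\xi_t$ had to be split off, via the zero-scale cancellation of Proposition~\ref{prop:loc-cancelled}. Since that proposition is already available, I expect the remaining work to be bookkeeping. The main things to check are that the loss of one derivative in $(\nabla W_t)^T(\cdot)$ is paid for exactly by the $\mathcal H^2$ gain \eqref{eq:loc-W-H2}, and that all smallness thresholds are compatible.

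A minor technical step is making sure the Dini-derivative argument of Lemma~\ref{lem:loc-damped-transport} really applies. It needs $\xi_t$ to be a score field, which holds since $\xi_t=\nabla h_t$; on the torus this is what gives access to the strict contraction \eqref{eq:loc-strict-contraction}. It also needs $\xi_t$ to be smooth in $\mathcal H^1$, which is part of the hypothesis that $h_t$ is a smooth solution.
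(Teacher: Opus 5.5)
Your proposal is correct and follows the paper's own proof essentially step for step. Like the paper, you insert the differentiated equation of Lemma~\ref{lem:loc-score-equation} into Lemma~\ref{lem:loc-damped-transport}, bound $\|W_t\|_{\mathcal H^2}$ by \eqref{eq:loc-W-H2} and $\mathscr L U[h_t]$ by \eqref{eq:loc-cancel-estimate}, handle the three product terms of $R_t$ with \eqref{eq:loc-product-H1}, and take the thresholds $\varepsilon_4,r_4$ no larger than those of the earlier results.
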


\begin{proof}
By Lemmas~\ref{lem:loc-damped-transport} and~\ref{lem:loc-score-equation},
\begin{equation}
    D^+y(t)+\lambda_Sy(t)
    \le C\|W_t\|_{\mathcal H^2}y(t)+\|R_t\|_{\mathcal H^1}.
    \label{eq:loc-preclosed}
\end{equation}
Proposition~\ref{prop:loc-averaged-gain} gives
\[
    \|W_t\|_{\mathcal H^2}\le C_Sy(t).
\]
Proposition~\ref{prop:loc-cancelled} gives
\[
    \|\mathscr L U[h_t]\|_{\mathcal H^1}
    \le C_S(\|a\|_{\mathcal H^3}+y(t))y(t).
\]
The product estimates in $\mathcal H^1$ and the bound on $W_t$ give
\[
\begin{aligned}
    \|(\nabla^2a)W_t\|_{\mathcal H^1}
    +\|(\nabla W_t)^T\nabla a\|_{\mathcal H^1}
    &\le C_S\|a\|_{\mathcal H^3}y(t),\\
    \|(\nabla W_t)^T\xi_t\|_{\mathcal H^1}
    &\le C_Sy(t)^2.
\end{aligned}
\]
Together with \eqref{eq:loc-R-def}, this gives
\[
    \|R_t\|_{\mathcal H^1}
    \le C_S(\|a\|_{\mathcal H^3}+y(t))y(t).
\]
Substitution into \eqref{eq:loc-preclosed} proves \eqref{eq:loc-closed-ineq}.
\end{proof}

\subsection{Smooth local construction and continuation}
\label{subsec:loc-smooth-construction}

The a priori estimates above require smooth solutions.  We record the local construction in a form that applies to both models.  In the OU model, every smooth target used in this subsection is assumed to have bounded positive-order derivatives.  The constants in this subsection may depend on high smooth norms of $a$ and of the initial datum; this is separate from the convergence constants in the theorems.

In the torus case, let $\mathfrak X^\infty$ be the space of smooth periodic exact score fields.  Equivalently, $\xi\in\mathfrak X^\infty$ if $\partial_i\xi_j=\partial_j\xi_i$, the periods along coordinate cycles vanish, and all component means vanish.  Let $\mathcal P\xi$ be the unique mean-zero periodic solution of
\begin{equation}
    \Delta(\mathcal P\xi)=\nabla\cdot\xi,
    \qquad
    \int_{\T^d}\mathcal P\xi\,\dd x=0.
    \label{eq:loc-torus-primitive}
\end{equation}
Then $\nabla\mathcal P\xi=\xi$.

In the OU case, $\mathfrak X^\infty$ is the space of smooth closed vector fields whose derivatives of every positive order are bounded.  For $\xi\in\mathfrak X^\infty$ define
\begin{equation}
    (\mathcal P\xi)(x):=\int_0^1\xi(\theta x)\cdot x\,\dd\theta
    -\int_{\R^d}\int_0^1\xi(\theta y)\cdot y\,\dd\theta\,\dd\rho^\star(y).
    \label{eq:loc-ou-primitive}
\end{equation}
Then $\nabla\mathcal P\xi=\xi$ and $\int\mathcal P\xi\,\dd\rho^\star=0$.  The primitive has at most linear growth when $\xi$ is bounded, which is compatible with the Gaussian tails of $\rho^\star$.

For $\xi\in\mathfrak X^\infty$ set $g=\mathcal P\xi$ and define
\begin{equation}
    \mathcal W(\xi):=W_S[g],
    \label{eq:loc-W-map}
\end{equation}
\begin{equation}
    \mathcal E(\xi):=A_S\xi+\mathscr L U[g]
    +(\nabla^2a)W_S[g]+(\nabla W_S[g])^T\nabla a.
    \label{eq:loc-E-map}
\end{equation}
The differentiated score equation can then be written as
\begin{equation}
    \partial_t\xi=(\mathcal W(\xi)\cdot\nabla)\xi+(\nabla\mathcal W(\xi))^T\xi+\mathcal E(\xi).
    \label{eq:loc-abstract-score}
\end{equation}

\begin{lemma}[Linear scale-variable tame estimate]
\label{lem:loc-linear-scale-tame}
Let $m\ge1$ and $0\le s_0<s_1\le S$.  Let $u_s$ be a smooth vector field solving
\begin{equation}
    \partial_su_s=\mathscr L u_s+A_s\cdot\nabla u_s+B_su_s+f_s,
    \qquad s_0\le s\le s_1,
    \label{eq:loc-linear-scale-eq}
\end{equation}
where $A_s$ is a smooth vector field and $B_s$ is a smooth matrix field.  Put
\[
    K_A:=\sup_{s_0\le s\le s_1}\|A_s\|_{\mathcal H^1},
    \qquad
    K_B:=\int_{s_0}^{s_1}\|B_s\|_{\mathcal H^1}\,\dd s .
\]
There is a constant $C_{m,S,K_A,K_B}<\infty$ such that
\begin{align}
    &\sup_{s_0\le s\le s_1}\|u_s\|_{\mathcal H^m}
      +\int_{s_0}^{s_1}\|u_s\|_{\mathcal H^{m+1}}\dd s
      \nonumber\\
    &\qquad\le
      C_{m,S,K_A,K_B}
      \left(\|u_{s_0}\|_{\mathcal H^m}
      +\int_{s_0}^{s_1}\bigl[\|f_r\|_{\mathcal H^m}
      +(\|A_r\|_{\mathcal H^m}+\|B_r\|_{\mathcal H^m})\|u_r\|_{\mathcal H^1}\bigr]\dd r\right).
    \label{eq:loc-linear-scale-tame}
\end{align}
The estimate is valid for the torus and for the OU model.  In the OU case it is understood for bounded smooth fields with bounded derivatives.
\end{lemma}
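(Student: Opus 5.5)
Proof plan. I will prove the estimate by induction on $m$, using Duhamel's formula in the scale variable together with the semigroup estimates of Lemma~\ref{lem:loc-semigroup-estimates}. Write
\[
u_s=\mathsf K_{s-s_0}u_{s_0}+\int_{s_0}^s\mathsf K_{s-r}\bigl(A_r\cdot\nabla u_r+B_ru_r+f_r\bigr)\,\dd r .
\]
The sup-norm part is controlled by \eqref{eq:loc-semigroup-high-bound}. The integrated $\mathcal H^{m+1}$ part is controlled by \eqref{eq:loc-integrated-smoothing} with $q=m-1$. After Fubini, each term of the Duhamel integral contributes $C\int\|\cdot\|_{\mathcal H^m}\,\dd r$ to the integrated norm. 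A direct estimate of $\mathsf K_{s-r}$ in $\mathcal H^{m+1}$ costs $(s-r)^{-1/2}$, which is integrable.

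\textbf{Tame product estimates.} The key inputs are the $C^{k,1}$-scale product estimates
\[
\|A\cdot\nabla u\|_{\mathcal H^m}\le C\bigl(\|A\|_{\mathcal H^1}\|u\|_{\mathcal H^{m+1}}+\|A\|_{\mathcal H^m}\|u\|_{\mathcal H^1}\bigr)
\]
and
\[
\|Bu\|_{\mathcal H^m}\le C\bigl(\|B\|_{\mathcal H^1}\|u\|_{\mathcal H^m}+\|B\|_{\mathcal H^m}\|u\|_{\mathcal H^1}\bigr).
\]
These follow from Leibniz' rule and interpolation between the extreme derivatives. Strictly, the interpolation produces intermediate terms like $\|A\|_{\mathcal H^j}\|u\|_{\mathcal H^{m+1-j}}$. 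These are reduced to the endpoint terms by Gagliardo--Nirenberg in Hölder norms, or more simply by the inductive hypothesis, which already controls $u$ in lower norms.

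\textbf{Main obstacle.} The hardest step is the transport term $A\cdot\nabla u$, which loses one derivative. Its contribution $K_A\|u_r\|_{\mathcal H^{m+1}}$ is not small, so it cannot be absorbed on the whole interval at once. I will handle this by splitting $[s_0,s_1]$ into subintervals of length $\delta$. On each subinterval, the $\sqrt\delta$ gain in \eqref{eq:loc-integrated-smoothing} does not help directly, because the source already sits in $\mathcal H^m$ and the integrated bound gives $C\int\|F\|_{\mathcal H^m}$ with no small factor. I would therefore instead use the pointwise form \eqref{eq:loc-semigroup-smoothing}, via the Young-type bound
\[
\int_{s_0}^{s_0+\delta}\int_{s_0}^s(s-r)^{-1/2}\phi(r)\,\dd r\,\dd s\le 2\sqrt\delta\int\phi .
\]
Here the source $K_A\|u\|_{\mathcal H^{m+1}}$ is placed in $\mathcal H^{m}$, and the target is the integrated $\mathcal H^{m+1}$ norm. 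This needs the smoothing estimate from $\mathcal H^m$ to $\mathcal H^{m+1}$ with weight $(s-r)^{-1/2}$, which is exactly \eqref{eq:loc-semigroup-smoothing} with $q=m-1$. The inner integral then contributes $C\sqrt\delta K_A\int\|u\|_{\mathcal H^{m+1}}$. Choosing $\delta\sim(CK_A)^{-2}$ makes this at most one half of the left-hand side, so it can be absorbed.

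\textbf{Zeroth-order term and iteration.} The term $B_su_s$ is treated by Gronwall. Its $\|B\|_{\mathcal H^1}\|u\|_{\mathcal H^m}$ part gives a factor $e^{CK_B}$, and its $\|B\|_{\mathcal H^m}\|u\|_{\mathcal H^1}$ part appears on the right-hand side of \eqref{eq:loc-linear-scale-tame}. I then iterate over the roughly $S/\delta$ subintervals, restarting from $\|u\|_{\mathcal H^m}$ at each left endpoint. This gives a constant depending only on $m$, $S$, $K_A$ and $K_B$. The $m=1$ case follows from the same argument, where the tame right-hand terms are trivial. The OU case uses the identical computation, since Lemma~\ref{lem:loc-semigroup-estimates} holds there for bounded fields with bounded derivatives.
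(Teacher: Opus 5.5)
Your argument is correct, but it is organized differently from the paper's proof.

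\textbf{The paper's route.} The paper first proves an order-zero estimate for $\partial_sv=\mathscr Lv+A_s\cdot\nabla v+g_s$, with the drift kept in the principal operator. The sup bound comes from the maximum principle and costs no derivative. Only the gradient bound uses Duhamel relative to $e^{t\mathscr L}$, with $K_A\|\nabla v\|_{L^\infty}$ absorbed on intervals of length $\delta$. The paper then applies this estimate to $D^\alpha u$, $|\alpha|\le m$. It puts $[D^\alpha,\mathscr L]$ (nonzero for OU), the tame commutator $[D^\alpha,A_s\cdot\nabla]$ and $D^\alpha(B_su_s)$ into the forcing. None of these involves more than $m$ derivatives of $u$, so a plain Gronwall argument finishes.

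\textbf{Your route.} You run Duhamel directly in $\mathcal H^m$ with the whole transport term as a source. You pay for the lost derivative through $\|A\cdot\nabla u\|_{\mathcal H^m}\lesssim\|A\|_{\mathcal H^1}\|u\|_{\mathcal H^{m+1}}+\|A\|_{\mathcal H^m}\|u\|_{\mathcal H^1}$. You buy it back with \eqref{eq:loc-semigroup-smoothing} at $q=m-1$ on short intervals.

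\textbf{What each buys.} Your route needs only Lemma~\ref{lem:loc-semigroup-estimates}, and it dispenses with the maximum principle and the commutators with $\mathscr L$. The paper's route confines the derivative loss to order zero. Its $L^\infty$-data estimate \eqref{eq:loc-principal-gradient-est} is also reused later in the endpoint operator estimates, where only $L^\infty$ control of the input is available.

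\textbf{Three small points.}
\begin{enumerate}
\item Estimate \eqref{eq:loc-integrated-smoothing} does supply the small factor after Fubini, since each $F_r$ is propagated only over $[r,r_0+\delta]$. Your pointwise computation is the same bound, so nothing changes.
\item The intermediate Leibniz terms carry $\|D^jA\|_{L^\infty}$ with $1<j<m$. These are not among the allowed quantities ($K_A$ and $\|A_r\|_{\mathcal H^m}\|u_r\|_{\mathcal H^1}$), so the alternative via an inductive hypothesis on $u$ does not work. You need Landau--Kolmogorov/Gagliardo--Nirenberg interpolation in both $A$ and $u$. This does yield the two endpoint terms, and it makes the induction on $m$ unnecessary.
\item Run the Gronwall step for $\|B_r\|_{\mathcal H^1}\|u_r\|_{\mathcal H^m}$ after the absorption, with the right endpoint of each subinterval as the variable. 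Alternatively, choose the subintervals so that $\int\|B\|_{\mathcal H^1}$ is also small on each.
\end{enumerate}
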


\begin{proof}
We first prove the principal estimate used below.  Let
\[
    \mathcal L_s^0v:=\mathscr L v+A_s\cdot\nabla v .
\]
If $v$ is a smooth scalar or vector field satisfying
\[
    \partial_sv_s=\mathcal L_s^0v_s+g_s,
    \qquad s_0\le s\le s_1,
\]
then
\begin{equation}
    \sup_{s_0\le s\le s_1}\|v_s\|_{L^\infty}
    +\int_{s_0}^{s_1}\|\nabla v_s\|_{L^\infty}\,\dd s
    \le C_{S,K_A}
       \left(\|v_{s_0}\|_{L^\infty}
       +\int_{s_0}^{s_1}\|g_r\|_{L^\infty}\,\dd r\right).
    \label{eq:loc-principal-gradient-est}
\end{equation}
Here is a direct proof.  Let $\mathsf R_t=e^{t\mathscr L}$; thus
$\mathsf R_t=P_t$ on the torus and $\mathsf R_t=e^{-t}Q_t$ in the OU model.
The heat-kernel and Mehler formulas give, for $0<t\le S$,
\begin{equation}
    \|\mathsf R_t f\|_{L^\infty}\le\|f\|_{L^\infty},
    \qquad
    \|\nabla\mathsf R_t f\|_{L^\infty}
       \le C_S t^{-1/2}\|f\|_{L^\infty}.
    \label{eq:loc-Linf-gradient-semigroup}
\end{equation}
The maximum principle therefore gives
\begin{equation}
    \sup_{s_0\le s\le s_1}\|v_s\|_{L^\infty}
    \le \|v_{s_0}\|_{L^\infty}
       +\int_{s_0}^{s_1}\|g_r\|_{L^\infty}\,\dd r.
    \label{eq:loc-principal-max}
\end{equation}
Duhamel's formula relative to $\mathsf R_t$ and
\eqref{eq:loc-Linf-gradient-semigroup} give
\[
\begin{aligned}
    \|\nabla v_s\|_{L^\infty}
    &\le C_S(s-s_0)^{-1/2}\|v_{s_0}\|_{L^\infty} \\
    &\quad+C_S\int_{s_0}^s(s-r)^{-1/2}
       \bigl(K_A\|\nabla v_r\|_{L^\infty}
             +\|g_r\|_{L^\infty}\bigr)\,\dd r.
\end{aligned}
\]
On an interval $I=[r_0,r_1]\subset[s_0,s_1]$ of length at most $\delta$,
application of the same formula starting at $r_0$, followed by Fubini, yields
\[
\begin{aligned}
    \int_I\|\nabla v_s\|_{L^\infty}\,\dd s
    &\le 2C_S\sqrt\delta
       \left(\|v_{r_0}\|_{L^\infty}
             +\int_I\|g_r\|_{L^\infty}\,\dd r\right) \\
    &\quad+2C_SK_A\sqrt\delta
       \int_I\|\nabla v_r\|_{L^\infty}\,\dd r.
\end{aligned}
\]
Choose $\delta>0$, depending only on $S$ and $K_A$, so that the last
coefficient is at most $1/2$.  Absorb that term, partition $[s_0,s_1]$ into
finitely many such intervals, and use \eqref{eq:loc-principal-max} at their
left endpoints.  Summing the resulting estimates proves
\eqref{eq:loc-principal-gradient-est}.

For smooth functions the norm $\mathcal H^m$ is equivalent to the sum of the $L^\infty$ norms of derivatives up to order $m$, with the top order interpreted through difference quotients.  Apply derivatives $D^\alpha$, $|\alpha|\le m$, to \eqref{eq:loc-linear-scale-eq}, keeping $A_s\cdot\nabla$ in the principal operator.  In the torus case $D^\alpha$ commutes with $\mathscr L$.  In the OU case the commutator with $\mathscr L=L_\gamma-I$ is a constant-coefficient lower-order term; in particular
\[
    \|[D^\alpha,\mathscr L]u_s\|_{L^\infty}
    \le C_m\|u_s\|_{\mathcal H^m}.
\]
The drift commutator satisfies
\[
    \|[D^\alpha,A_s\cdot\nabla]u_s\|_{L^\infty}
    \le C_m\bigl(\|A_s\|_{\mathcal H^m}\|u_s\|_{\mathcal H^1}
      +\|A_s\|_{\mathcal H^1}\|u_s\|_{\mathcal H^m}\bigr),
    \qquad |\alpha|\le m,
\]
and the zeroth-order term satisfies the tame product bound
\[
    \|B_su_s\|_{\mathcal H^m}
    \le C_m\bigl(\|B_s\|_{\mathcal H^1}\|u_s\|_{\mathcal H^m}
      +\|B_s\|_{\mathcal H^m}\|u_s\|_{\mathcal H^1}\bigr).
\]
The same bounds hold for the top Lipschitz seminorm after applying the preceding estimates to difference quotients.

Using \eqref{eq:loc-principal-gradient-est} for each differentiated equation gives, for $s_0\le t\le s_1$,
\[
\begin{aligned}
    X_m(t)
    &:= \sup_{s_0\le s\le t}\|u_s\|_{\mathcal H^m}
      +\int_{s_0}^t\|u_s\|_{\mathcal H^{m+1}}\,\dd s \\
    &\le C_{m,S,K_A}
      \Biggl(
      \|u_{s_0}\|_{\mathcal H^m}
      +\int_{s_0}^t\Bigl[\|f_r\|_{\mathcal H^m}
      +(1+\|A_r\|_{\mathcal H^1}+\|B_r\|_{\mathcal H^1})\|u_r\|_{\mathcal H^m} \\
    &\hspace{38mm}
      +(\|A_r\|_{\mathcal H^m}+\|B_r\|_{\mathcal H^m})\|u_r\|_{\mathcal H^1}\Bigr] \dd r
      \Biggr).
\end{aligned}
\]
The integral of $\|A_r\|_{\mathcal H^1}$ over any subinterval is bounded by $S K_A$, and the integral of $\|B_r\|_{\mathcal H^1}$ over any subinterval is bounded by $K_B$.  Gronwall's lemma therefore absorbs the term containing $\|u_r\|_{\mathcal H^m}$ and gives \eqref{eq:loc-linear-scale-tame} at $t=s_1$.
\end{proof}

\begin{lemma}[Tame estimates for the smooth local theory]
\label{lem:loc-tame}
Let $a\in C^\infty$ satisfy \eqref{eq:loc-target-potential}; in the OU model assume that every positive-order derivative of $a$ is bounded.  There are $\varepsilon_{\rm loc}>0$ and $r_{\rm loc}>0$, no larger than the smallness thresholds in Propositions~\ref{prop:loc-score-lifting}--\ref{prop:loc-cancelled}, with the following property.  If $\|a\|_{\mathcal H^3}\le\varepsilon_{\rm loc}$, then for every integer $q\ge1$, every $R<r_{\rm loc}$, and every $M<\infty$, there is $C_{q,R,M,a,S}<\infty$ such that, whenever $\xi,\tilde\xi\in\mathfrak X^\infty$ satisfy
\[
    \|\xi\|_{\mathcal H^1},\|\tilde\xi\|_{\mathcal H^1}\le R,
    \qquad
    \|\xi\|_{\mathcal H^{q+1}},\|\tilde\xi\|_{\mathcal H^{q+1}}\le M,
\]
one has
\begin{equation}
    \|\mathcal W(\xi)-\mathcal W(\tilde\xi)\|_{\mathcal H^{q+2}}
    +\|\mathcal E(\xi)-\mathcal E(\tilde\xi)\|_{\mathcal H^{q+1}}
    \le C_{q,R,M,a,S}\|\xi-\tilde\xi\|_{\mathcal H^{q+1}}.
    \label{eq:loc-tame-Lip}
\end{equation}
Moreover, whenever $\|\xi\|_{\mathcal H^1}\le R$,
\begin{equation}
    \|\mathcal W(\xi)\|_{\mathcal H^{q+2}}
    +\|\mathcal E(\xi)\|_{\mathcal H^{q+1}}
    \le C_{q,R,a,S}(1+\|\xi\|_{\mathcal H^{q+1}}).
    \label{eq:loc-tame-one}
\end{equation}
\end{lemma}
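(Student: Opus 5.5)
The plan is to reduce everything to the scale variable. For $\xi\in\mathfrak X^\infty$ put $g=\mathcal P\xi$ and $z_s=\nabla\psi_s[g]$. By Lemma~\ref{lem:loc-scale-equation}, $z_s$ solves $\partial_s z_s=\mathscr L z_s+2(c_s+z_s)\cdot\nabla z_s+2b_s^Tz_s$ with $z_0=\xi$. This is a linear equation of the form \eqref{eq:loc-linear-scale-eq} with $A_s=2(c_s+z_s)$, $B_s=2b_s$ and $f_s=0$. Lemma~\ref{lem:loc-coefficients} and Proposition~\ref{prop:loc-score-lifting} give $K_A\le C_S(\varepsilon_{\rm loc}+R)$ and $K_B\le C_S\varepsilon_{\rm loc}$. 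All constants in Lemma~\ref{lem:loc-linear-scale-tame} are then uniform once $\|\xi\|_{\mathcal H^1}\le R<r_{\rm loc}$.

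\textbf{Step 1: the tame bound \eqref{eq:loc-tame-one}.} I apply Lemma~\ref{lem:loc-linear-scale-tame} with $m=q+1$ on $[0,S]$. The right-hand side contains $\|A_r\|_{\mathcal H^{q+1}}\|z_r\|_{\mathcal H^1}$, and $\|A_r\|_{\mathcal H^{q+1}}\le C_a+2\|z_r\|_{\mathcal H^{q+1}}$, where $C_a$ depends on high norms of $a$ (smooth, with bounded derivatives). Since $\|z_r\|_{\mathcal H^1}\le C_SR$, the Gronwall step already inside the lemma (or a repeat of it) absorbs this term. The result is $\sup_s\|z_s\|_{\mathcal H^{q+1}}+\int_0^S\|z_s\|_{\mathcal H^{q+2}}\,\dd s\le C(1+\|\xi\|_{\mathcal H^{q+1}})$. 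Since $W_S[g]=\frac1S\int_0^S z_s\,\dd s$, this bounds $\|\mathcal W(\xi)\|_{\mathcal H^{q+2}}$.

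For $\mathcal E$ the terms are handled separately:
\begin{itemize}
\item $A_S\xi=(\mathsf K_S-I)\xi/S$ is bounded on $\mathcal H^{q+1}$ by \eqref{eq:loc-semigroup-high-bound}.
\item $(\nabla^2a)W$ and $(\nabla W)^T\nabla a$ are controlled by product estimates and $\|W\|_{\mathcal H^{q+2}}$.
\item For $\mathscr L U[g]$ I use identity \eqref{eq:loc-cancel-identity}: $\mathsf K_{S-u}-I$ is bounded on $\mathcal H^{q+1}$, and $\|F_u(z_u)\|_{\mathcal H^{q+1}}\lesssim(1+\|z_u\|_{\mathcal H^{q+1}})\|z_u\|_{\mathcal H^{q+2}}+C_a\|z_u\|_{\mathcal H^{q+1}}$. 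Integrating in $u$ and using the sup bound together with the integrated bound gives at most $C(1+\|\xi\|_{\mathcal H^{q+1}})^2$, which is fine for a constant depending on $M$.
\end{itemize}
If a strictly linear bound in \eqref{eq:loc-tame-one} is required, I would interpolate, splitting the product tamely as $\|z\|_{\mathcal H^1}\|z\|_{\mathcal H^{q+2}}+\|z\|_{\mathcal H^{q+1}}\|z\|_{\mathcal H^2}$. I would then use $\int\|z\|_{\mathcal H^2}\le C_SR$ from Proposition~\ref{prop:loc-averaged-gain}, so that Gronwall keeps the estimate linear.

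\textbf{Step 2: the Lipschitz bound \eqref{eq:loc-tame-Lip}.} Set $\tilde z_s=\nabla\psi_s[\tilde g]$ and $w_s=z_s-\tilde z_s$. Subtracting the two scale equations gives
\[
\partial_s w_s=\mathscr L w_s+2(c_s+z_s)\cdot\nabla w_s+2b_s^Tw_s+2w_s\cdot\nabla\tilde z_s,\qquad w_0=\xi-\tilde\xi .
\]
The last term is treated as a zeroth-order coefficient $B'_s=2\nabla\tilde z_s$. Its $\mathcal H^1$ norm is integrable in $s$ because $\int_0^S\|\tilde z_s\|_{\mathcal H^2}\,\dd s\le C_SR$, and its $\mathcal H^{q+1}$ norm is integrable by Step 1 with the $M$-dependence. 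Lemma~\ref{lem:loc-linear-scale-tame} with $m=q+1$ then gives
\[
\sup_s\|w_s\|_{\mathcal H^{q+1}}+\int_0^S\|w_s\|_{\mathcal H^{q+2}}\,\dd s\le C_{q,R,M,a,S}\|\xi-\tilde\xi\|_{\mathcal H^{q+1}},
\]
after Gronwall absorbs the terms containing $\|w_r\|_{\mathcal H^1}$. This yields the $\mathcal W$ part. The $\mathcal E$ part follows as in Step 1, with $F_u(z_u)-F_u(\tilde z_u)$ written bilinearly in $(w,z,\tilde z)$. The primitive $\mathcal P$ causes no trouble: only gradients of $g$ enter the formulas, and additive constants are irrelevant.

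\textbf{Main obstacle.} The hardest point is the top-order nonlinear term $z\cdot\nabla z$ in $F$. It costs one derivative beyond $\mathcal H^{q+1}$, and it is recovered only in $s$-integrated form from the parabolic gain in Lemma~\ref{lem:loc-linear-scale-tame}. Care is needed so that no term requires $\sup_s\|z_s\|_{\mathcal H^{q+2}}$, and this is why $\mathscr L U$ must be estimated through the cancelled identity \eqref{eq:loc-cancel-identity} rather than directly. In the OU case, the remaining effort is checking that the commutators with $\mathscr L$ and the linear-growth primitive \eqref{eq:loc-ou-primitive} stay inside the bounded-derivative class.
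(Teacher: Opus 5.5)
Your proposal follows essentially the paper's route. You apply Lemma~\ref{lem:loc-linear-scale-tame} at $m=q+1$ to the scale equation with $A_s=2(c_s+z_s)$, $B_s=2b_s$, and Gronwall, for the one-input bound. You use the difference equation with the extra term $2w_s\cdot\nabla\tilde z_s$ for the Lipschitz bound; you absorb that term in a single $m=q+1$ pass as a zeroth-order coefficient with integrable weight, whereas the paper first proves the $m=1$ difference estimate, and both work. You use the cancellation identity \eqref{eq:loc-cancel-identity} for $\mathscr L U$.

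The only correction is that the tame splitting $\|z\|_{\mathcal H^1}\|z\|_{\mathcal H^{q+2}}+\|z\|_{\mathcal H^{q+1}}\|z\|_{\mathcal H^2}$, combined with $\int_0^S\|z_s\|_{\mathcal H^2}\,\dd s\le C_SR$, is not optional. The constant in \eqref{eq:loc-tame-one} may not depend on $M$, and the continuation argument in Proposition~\ref{prop:loc-local} needs this linear growth to avoid finite-time blow-up. So your first, quadratic bound on $\int_0^S\|F_u(z_u)\|_{\mathcal H^{q+1}}\,\dd u$ does not prove the statement, while your fallback tame version is exactly what the paper does.
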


\begin{proof}
We use the equivalent smooth norm
\[
    \|u\|_{\mathcal H^m}\simeq \sum_{|\alpha|\le m}\|D^\alpha u\|_{L^\infty},
    \qquad m\in\mathbb N,
\]
with the top order understood through difference quotients.  The spaces $\mathcal H^m=C^{m-1,1}$ satisfy the tame product estimate
\begin{equation}
    \|AB\|_{\mathcal H^m}
    \le C_m(\|A\|_{\mathcal H^1}\|B\|_{\mathcal H^m}
      +\|A\|_{\mathcal H^m}\|B\|_{\mathcal H^1}),
    \label{eq:loc-tame-product}
\end{equation}
and the corresponding commutator estimate
\begin{equation}
    \|D^\alpha(A\cdot\nabla u)-A\cdot\nabla D^\alpha u\|_{L^\infty}
    \le C_m(\|A\|_{\mathcal H^m}\|u\|_{\mathcal H^1}
      +\|A\|_{\mathcal H^1}\|u\|_{\mathcal H^m}),
    \quad |\alpha|\le m.
    \label{eq:loc-tame-commutator}
\end{equation}
The same estimates hold for the top Lipschitz seminorm by applying them to difference quotients.

Let $z_s^\xi:=\nabla\psi_s[\mathcal P\xi]$.  By Lemma~\ref{lem:loc-scale-equation},
\begin{equation}
    \partial_s z_s^\xi=\mathscr Lz_s^\xi+F_s(z_s^\xi),
    \qquad
    z_0^\xi=\xi.
    \label{eq:loc-z-xi-scale}
\end{equation}
Since $a$ is smooth, the coefficient fields $c_s$ and $b_s$ are bounded in every $\mathcal H^m$, uniformly for $0\le s\le S$.  By Lemma~\ref{lem:loc-coefficients}, their low norms are $O(\|a\|_{\mathcal H^3})$.  By Proposition~\ref{prop:loc-score-lifting}, after decreasing $\varepsilon_{\rm loc}$ and $r_{\rm loc}$ if necessary,
\begin{equation}
    \sup_{0\le s\le S}\|z_s^\xi\|_{\mathcal H^1}\le C_SR
    \label{eq:loc-z-low-bound}
\end{equation}
whenever $\|\xi\|_{\mathcal H^1}\le R<r_{\rm loc}$.

We first prove the one-input estimate.  Set $m=q+1$ and apply Lemma~\ref{lem:loc-linear-scale-tame} to \eqref{eq:loc-z-xi-scale} with
\[
    u_s=z_s^\xi,
    \qquad
    A_s=2(c_s+z_s^\xi),
    \qquad
    B_s=2b_s,
    \qquad
    f_s=0.
\]
The low coefficient norms are bounded uniformly because of \eqref{eq:loc-z-low-bound} and Lemma~\ref{lem:loc-coefficients}.  Moreover,
\[
    \|A_s\|_{\mathcal H^m}\le C_{m,a,S}\bigl(1+\|z_s^\xi\|_{\mathcal H^m}\bigr),
    \qquad
    \|B_s\|_{\mathcal H^m}\le C_{m,a,S}.
\]
Applying \eqref{eq:loc-linear-scale-tame} on $[0,t]$, $0\le t\le S$, and using $\sup_s\|z_s^\xi\|_{\mathcal H^1}\le C_SR$, gives
\[
\begin{aligned}
    X_m(t)
    &:=\sup_{0\le s\le t}\|z_s^\xi\|_{\mathcal H^m}
      +\int_0^t\|z_s^\xi\|_{\mathcal H^{m+1}}\,\dd s  \\
    &\le C_{m,R,a,S}\left(\|\xi\|_{\mathcal H^m}
      +1+\int_0^t X_m(r)\,\dd r\right).
\end{aligned}
\]
Gronwall's lemma gives
\begin{equation}
    \sup_{0\le s\le S}\|z_s^\xi\|_{\mathcal H^{q+1}}
    +\int_0^S\|z_s^\xi\|_{\mathcal H^{q+2}}\,\dd s
    \le C_{q,R,a,S}(1+\|\xi\|_{\mathcal H^{q+1}}).
    \label{eq:loc-z-one-est}
\end{equation}
In particular, if the input is bounded by $M$ in $\mathcal H^{q+1}$, the right-hand side is bounded by $C_{q,R,M,a,S}$.

For two inputs, put $v_s=z_s^\xi-z_s^{\tilde\xi}$.  The exact difference is
\begin{equation}
    \partial_sv_s=\mathscr Lv_s
    +2(c_s+z_s^\xi)\cdot\nabla v_s
    +2b_s^Tv_s
    +2v_s\cdot\nabla z_s^{\tilde\xi},
    \qquad
    v_0=\xi-\tilde\xi,
    \label{eq:loc-v-scale}
\end{equation}
and equivalently
\begin{equation}
    F_s(z_s^\xi)-F_s(z_s^{\tilde\xi})
    =2(c_s+z_s^\xi)\cdot\nabla v_s
     +2v_s\cdot\nabla z_s^{\tilde\xi}
     +2b_s^Tv_s.
    \label{eq:loc-F-difference}
\end{equation}
First apply Lemma~\ref{lem:loc-linear-scale-tame} to \eqref{eq:loc-v-scale} with $m=1$.  The product estimate gives
\[
    \|v_s\cdot\nabla z_s^{\tilde\xi}\|_{\mathcal H^1}
    \le C\|z_s^{\tilde\xi}\|_{\mathcal H^2}\|v_s\|_{\mathcal H^1},
\]
and $\int_0^S\|z_s^{\tilde\xi}\|_{\mathcal H^2}\,\dd s$ is bounded by \eqref{eq:loc-z-one-est}.  Hence Gronwall gives the low difference estimate
\begin{equation}
    \sup_{0\le s\le S}\|v_s\|_{\mathcal H^1}
    +\int_0^S\|v_s\|_{\mathcal H^2}\,\dd s
    \le C_{R,M,a,S}\|\xi-\tilde\xi\|_{\mathcal H^1}.
    \label{eq:loc-v-low-diff}
\end{equation}
Now apply Lemma~\ref{lem:loc-linear-scale-tame} to \eqref{eq:loc-v-scale} with $m=q+1$.  The only forcing term is $2v_s\cdot\nabla z_s^{\tilde\xi}$, and the tame product estimate gives
\begin{equation}
    \|v_s\cdot\nabla z_s^{\tilde\xi}\|_{\mathcal H^{q+1}}
    \le C_q\|z_s^{\tilde\xi}\|_{\mathcal H^2}\|v_s\|_{\mathcal H^{q+1}}
      +C_q\|z_s^{\tilde\xi}\|_{\mathcal H^{q+2}}\|v_s\|_{\mathcal H^1}.
    \label{eq:loc-v-forcing-est}
\end{equation}
The coefficient $\|z_s^{\tilde\xi}\|_{\mathcal H^2}$ is integrable in $s$ by \eqref{eq:loc-z-one-est}; the second term on the right of \eqref{eq:loc-v-forcing-est} is integrable and bounded by \eqref{eq:loc-z-one-est} and \eqref{eq:loc-v-low-diff}.  The lower-order coefficient contribution in \eqref{eq:loc-linear-scale-tame}, namely
\[
    (\|c_s+z_s^\xi\|_{\mathcal H^{q+1}}+
      \|b_s\|_{\mathcal H^{q+1}})\|v_s\|_{\mathcal H^1},
\]
is also integrable and bounded by \eqref{eq:loc-z-one-est} and \eqref{eq:loc-v-low-diff}.  Gronwall therefore yields
\begin{equation}
    \sup_{0\le s\le S}\|z_s^\xi-z_s^{\tilde\xi}\|_{\mathcal H^{q+1}}
    +\int_0^S\|z_s^\xi-z_s^{\tilde\xi}\|_{\mathcal H^{q+2}}\,\dd s
    \le C_{q,R,M,a,S}\|\xi-\tilde\xi\|_{\mathcal H^{q+1}}.
    \label{eq:loc-z-diff-est}
\end{equation}
Combining \eqref{eq:loc-F-difference}, \eqref{eq:loc-v-low-diff}, \eqref{eq:loc-v-forcing-est}, and \eqref{eq:loc-z-diff-est} gives
\begin{equation}
    \int_0^S\|F_s(z_s^\xi)-F_s(z_s^{\tilde\xi})\|_{\mathcal H^{q+1}}\,\dd s
    \le C_{q,R,M,a,S}\|\xi-\tilde\xi\|_{\mathcal H^{q+1}}.
    \label{eq:loc-F-diff-integrated}
\end{equation}
Similarly, \eqref{eq:loc-z-one-est} and the product estimate give the one-input bound
\begin{equation}
    \int_0^S\|F_s(z_s^\xi)\|_{\mathcal H^{q+1}}\,\dd s
    \le C_{q,R,a,S}(1+\|\xi\|_{\mathcal H^{q+1}}).
    \label{eq:loc-F-one-integrated}
\end{equation}

Since $\mathcal W(\xi)=\frac{1}{S}\int_0^S z_s^\xi\,\dd s$, estimates \eqref{eq:loc-z-one-est} and \eqref{eq:loc-z-diff-est} give the asserted one-input and difference bounds for $\mathcal W$.  For the correction term, use the cancellation formula \eqref{eq:loc-cancel-identity}.  For two inputs,
\[
    \mathscr L(U[\mathcal P\xi]-U[\mathcal P\tilde\xi])
    =\frac{1}{S}\int_0^S(\mathsf K_{S-u}-I)
      \bigl(F_u(z_u^\xi)-F_u(z_u^{\tilde\xi})\bigr)\,\dd u.
\]
The operator $\mathsf K_{S-u}-I$ is bounded on $\mathcal H^{q+1}$ by \eqref{eq:loc-semigroup-high-bound}, and \eqref{eq:loc-F-diff-integrated} gives the difference bound for $\mathscr L U$.  The one-input bound follows in the same way from \eqref{eq:loc-F-one-integrated}.  Finally, $A_S=(\mathsf K_S-I)/S$ is bounded on every $\mathcal H^{q+1}$ by \eqref{eq:loc-semigroup-high-bound}, and multiplication by the fixed smooth coefficients $\nabla a$ and $\nabla^2a$ is bounded in the same tame scale.  This proves \eqref{eq:loc-tame-Lip} and \eqref{eq:loc-tame-one}.
\end{proof}

\begin{proposition}[Smooth local construction, reconstruction, and continuation]
\label{prop:loc-local}
Let $\varepsilon_{\rm loc}$ and $r_{\rm loc}$ be as in Lemma~\ref{lem:loc-tame}.  Let $a\in C^\infty$ satisfy \eqref{eq:loc-target-potential} and $\|a\|_{\mathcal H^3}\le\varepsilon_{\rm loc}$; in the OU model assume that every positive-order derivative of $a$ is bounded.  Fix $0<r<r_{\rm loc}$.  Let $g_{\rm in}$ be a smooth input such that $\xi_{\rm in}:=\nabla g_{\rm in}\in\mathfrak X^\infty$ and $\|\xi_{\rm in}\|_{\mathcal H^1}<r$.  In the OU case also assume $\int g_{\rm in}\,\dd\rho^\star=0$, and in the torus case replace $g_{\rm in}$ by its mean-zero representative.

Then the score equation \eqref{eq:loc-abstract-score} has a unique maximal smooth solution $\xi_t\in\mathfrak X^\infty$ on an interval $[0,T_{\max})$.  If $g_t=\mathcal P\xi_t$ and
\begin{equation}
    h_t:=g_t-
    \log\int e^{g_t}\,\dd\rho^\star,
    \qquad
    \dd\rho_t=e^{h_t}\,\dd\rho^\star,
    \label{eq:loc-reconstruct-h}
\end{equation}
then $h_t$ solves \eqref{eq:loc-h-flow} and $\rho_t$ solves \eqref{eq:loc-rho-flow}.  Conversely, every smooth normalized solution of \eqref{eq:loc-h-flow} gives, after taking its score, a solution of \eqref{eq:loc-abstract-score}.  Finally, if $T_{\max}<\infty$ and
\begin{equation}
    \limsup_{t\uparrow T_{\max}}\|\xi_t\|_{\mathcal H^1}<r,
    \label{eq:loc-continuation-cond}
\end{equation}
then the solution extends beyond $T_{\max}$.
\end{proposition}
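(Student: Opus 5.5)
The plan is to treat \eqref{eq:loc-abstract-score} as an ODE in a scale of Banach spaces. The Lipschitz and growth bounds of Lemma~\ref{lem:loc-tame} give a Picard iteration, and the only derivative loss is in the transport terms. Fix an integer $q\ge1$ and work in $X_q:=\mathfrak X$-closure in $\mathcal H^{q+1}$. The right-hand side has three parts:
\begin{itemize}
\item a transport term $(\mathcal W(\xi)\cdot\nabla)\xi$, which loses one derivative;
\item a zeroth-order term $(\nabla\mathcal W(\xi))^T\xi$, which is controlled since $\mathcal W(\xi)\in\mathcal H^{q+2}$;
\item the bounded part $\mathcal E(\xi)$.
\end{itemize}

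\textbf{Existence.} I would handle the transport term by the standard quasilinear-hyperbolic scheme. Given an iterate $\xi^{(k)}$, let $\xi^{(k+1)}$ solve the \emph{linear} equation
\[
\partial_t\xi=(\mathcal W(\xi^{(k)})\cdot\nabla)\xi+(\nabla\mathcal W(\xi^{(k)}))^T\xi^{(k)}+\mathcal E(\xi^{(k)}).
\]
This is solvable by characteristics, since $\mathcal W(\xi^{(k)})\in\mathcal H^{q+2}$ generates a $C^{q+1,1}$ flow.
\begin{itemize}
\item The characteristic estimate \eqref{eq:loc-transport-growth}, upgraded to $\mathcal H^{q+1}$ by differentiating along the flow, together with \eqref{eq:loc-tame-one}, bounds the iterates uniformly on a short interval. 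The length of that interval depends on $\|\xi_{\rm in}\|_{\mathcal H^{q+1}}$.
\item The $\mathcal H^1$ norm stays below $r<r_{\rm loc}$ on this interval by continuity.
\item Contraction holds in a lower norm, $\mathcal H^{q}$, using \eqref{eq:loc-tame-Lip} with $q-1$.
\end{itemize}
The limit then lies in $L^\infty_t\mathcal H^{q+1}$ by weak-$*$ compactness. Running this for all $q$, and checking that the existence time for higher $q$ is not smaller (the tame bound is linear in the top norm, so Gronwall propagates higher regularity as long as $\mathcal H^{q_0+1}$ stays bounded), gives a smooth solution.

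\textbf{Preservation of score fields.} The solution must remain in $\mathfrak X^\infty$. The right-hand side equals $\nabla$ of the scalar $\mathsf L_{\rho^\star}\Phi_S[g]+\nabla g\cdot\nabla\Phi_S[g]$ with $g=\mathcal P\xi$, by Lemma~\ref{lem:loc-score-equation} read backwards. So closedness is preserved whenever the current field is exact. I would build this into the iteration by iterating on the scalar potential instead; the linearized scalar transport equation preserves exactness trivially. On the torus, vanishing means and periods are likewise preserved.

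\textbf{Uniqueness.} This comes from an $\mathcal H^1$ energy/Gronwall estimate on the difference of two smooth solutions, using \eqref{eq:loc-tame-Lip}.

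\textbf{Reconstruction.} Let $h_t=g_t-\log\int e^{g_t}\dd\rho^\star$. Then $\nabla h_t=\xi_t$, and
\[
\partial_t h_t=\partial_t g_t-c(t),
\]
where $c(t)$ depends on time only. The main subtlety is that $\partial_t\mathcal P\xi_t$ differs from the scalar right-hand side of \eqref{eq:loc-h-flow} by a function of time alone. This holds because both have gradient $\partial_t\xi_t$. Since constants do not affect \eqref{eq:loc-h-flow}, it remains to check that normalization is preserved. This follows by integrating \eqref{eq:loc-h-flow} against $e^{h}\rho^\star$: the divergence form gives $\frac{\dd}{\dd t}\int e^{h_t}\dd\rho^\star=0$ exactly when the right-hand side is used with the correct constant. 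I would therefore fix $c(t)$ by this requirement, and then $\rho_t$ solves \eqref{eq:loc-rho-flow} in the weak sense. The converse direction is Lemma~\ref{lem:loc-score-equation} applied to $\nabla h_t$.

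\textbf{Continuation.} This is the main obstacle. Assume \eqref{eq:loc-continuation-cond}. I must show that every $\|\xi_t\|_{\mathcal H^{q+1}}$ stays bounded as $t\uparrow T_{\max}$, and then restart the existence step, whose existence time depends only on these norms and on $r$.
\begin{itemize}
\item Differentiate \eqref{eq:loc-abstract-score} $q$ times along characteristics of $\mathcal W(\xi_t)$.
\item Use the commutator estimate \eqref{eq:loc-tame-commutator} and \eqref{eq:loc-tame-one}. Since $\|\xi\|_{\mathcal H^1}\le R<r_{\rm loc}$, the low norms of $\mathcal W$ are bounded by $C_SR$ through \eqref{eq:loc-W-H2}.
\item This yields
\[
D^+\|\xi_t\|_{\mathcal H^{q+1}}\le C_{q,R,a,S}\bigl(1+\|\xi_t\|_{\mathcal H^{q+1}}\bigr),
\]
which is linear in the top norm. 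The key point is that the tame structure puts the top norm of $\mathcal W$ only against low norms of $\xi$.
\end{itemize}
Gronwall then gives finite bounds up to $T_{\max}$, contradicting maximality.
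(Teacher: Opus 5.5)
Your proposal is essentially the paper's proof. It uses a Picard scheme of linear transport problems solved by characteristics, uniform $\mathcal H^{q+1}$ bounds from Lemma~\ref{lem:loc-tame} with contraction in $\mathcal H^{q}$, reconstruction of $h_t$ through the identity $\int e^{h}\bigl(\mathsf L_{\rho^\star}\Phi_S[h]+\nabla h\cdot\nabla\Phi_S[h]\bigr)\,\dd\rho^\star=0$, and continuation by a Gronwall bound linear in the top norm. The one thing to fix is the iteration you display, with $(\nabla\mathcal W(\xi^{(k)}))^T\xi^{(k)}$ on the old iterate: it does not keep the iterates closed, so Lemma~\ref{lem:loc-tame}, which is stated only on $\mathfrak X^\infty$, stops applying. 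Your scalar-potential fallback puts that term on the new iterate instead, which makes the linear operator the Lie derivative $\mathcal L_{W^{(k)}}$ of a one-form. That is exactly the paper's scheme, where closedness follows from Cartan's formula. Also take $q\ge2$, and run the uniqueness difference estimate in $\mathcal H^2$ rather than $\mathcal H^1$, so that \eqref{eq:loc-tame-Lip} is only used at orders $\ge1$, which is all Lemma~\ref{lem:loc-tame} provides.
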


\begin{proof}
We give the Picard construction for the score equation.  Fix an integer $q\ge2$.  Choose
\[
    \|\xi_{\rm in}\|_{\mathcal H^1}<R<r
\]
and then choose $M>2\|\xi_{\rm in}\|_{\mathcal H^{q+1}}$.  Set $\xi^{(0)}_t=\xi_{\rm in}$.  Given $\xi^{(n)}$, define $W^{(n)}=\mathcal W(\xi^{(n)})$ and $E^{(n)}=\mathcal E(\xi^{(n)})$, and solve the linear transport system
\begin{equation}
    \partial_t\xi^{(n+1)}
    =(W^{(n)}\cdot\nabla)\xi^{(n+1)}
     +(\nabla W^{(n)})^T\xi^{(n+1)}+E^{(n)},
    \qquad
    \xi^{(n+1)}_{t=0}=\xi_{\rm in}.
    \label{eq:loc-picard}
\end{equation}
For fixed smooth $W^{(n)},E^{(n)}$, this equation is solved by characteristics.

The iteration preserves the class $\mathfrak X^\infty$.  View $\xi^{(n+1)}$ as the one-form $\alpha^{(n+1)}=\sum_i\xi_i^{(n+1)}\dd x_i$.  The two transport terms in \eqref{eq:loc-picard} are the Lie derivative $\mathcal L_{W^{(n)}}\alpha^{(n+1)}$.  The forcing $E^{(n)}$ is exact: $A_S\xi^{(n)}$ is a gradient by \eqref{eq:loc-AS-lambda}, $\mathscr L U[\mathcal P\xi^{(n)}]$ is a gradient by the commutator relation \eqref{eq:loc-score-commutation}, and
\[
    (\nabla^2a)W^{(n)}+(\nabla W^{(n)})^T\nabla a
    =\nabla(W^{(n)}\cdot\nabla a),
\]
because $W^{(n)}$ is itself a gradient field.  If $\alpha^{(n+1)}$ is closed, then Cartan's formula gives $\mathcal L_{W^{(n)}}\alpha^{(n+1)}=\dd(i_{W^{(n)}}\alpha^{(n+1)})$.  Hence closedness is preserved.  In the torus case the periods along coordinate cycles have zero time derivative and vanish initially, hence the field remains exact and has zero component means.  In the OU case closedness is enough, since the primitive \eqref{eq:loc-ou-primitive} fixes the additive constant.

The classical transport estimate in $\mathcal H^{q+1}$ gives
\begin{equation}
    \frac{\dd^+}{\dd t}\|\xi^{(n+1)}_t\|_{\mathcal H^{q+1}}
    \le C_q\|W^{(n)}_t\|_{\mathcal H^{q+2}}\|\xi^{(n+1)}_t\|_{\mathcal H^{q+1}}
       +\|E^{(n)}_t\|_{\mathcal H^{q+1}}.
    \label{eq:loc-picard-high}
\end{equation}
We now choose the time interval by an induction.  Suppose that on $[0,T]$
\[
    \sup_{0\le t\le T}\|\xi^{(n)}_t\|_{\mathcal H^{q+1}}\le M,
    \qquad
    \sup_{0\le t\le T}\|\xi^{(n)}_t\|_{\mathcal H^1}\le R .
\]
Lemma~\ref{lem:loc-tame} gives
\[
    \|W^{(n)}_t\|_{\mathcal H^{q+2}}
    +\|E^{(n)}_t\|_{\mathcal H^{q+1}}
    \le C_{q,R,M,a,S},
    \qquad 0\le t\le T.
\]
Thus \eqref{eq:loc-picard-high} implies
\[
    \|\xi^{(n+1)}_t\|_{\mathcal H^{q+1}}
    \le e^{C_{q,R,M,a,S}t}\bigl(\|\xi_{\rm in}\|_{\mathcal H^{q+1}}+1\bigr)-1 .
\]
The same transport estimate in $\mathcal H^1$, using Lemma~\ref{lem:loc-tame} at low order, gives
\[
    \|\xi^{(n+1)}_t\|_{\mathcal H^1}
    \le e^{C_{R,M,a,S}t}\bigl(\|\xi_{\rm in}\|_{\mathcal H^1}+1\bigr)-1 .
\]
Since $\|\xi_{\rm in}\|_{\mathcal H^{q+1}}<M/2$ and $\|\xi_{\rm in}\|_{\mathcal H^1}<R$, we can choose $T>0$ so that both right-hand sides are bounded respectively by $M$ and by $R$.  This proves, by induction in $n$, the invariant bounds
\begin{equation}
    \sup_{0\le t\le T}\|\xi_t^{(n)}\|_{\mathcal H^{q+1}}\le M,
    \qquad
    \sup_{0\le t\le T}\|\xi_t^{(n)}\|_{\mathcal H^1}\le R<r.
    \label{eq:loc-picard-invariant-ball}
\end{equation}

Set $\Delta^{(n+1)}:=\xi^{(n+1)}-\xi^{(n)}$.  Subtracting the equations for $\xi^{(n+1)}$ and $\xi^{(n)}$ gives
\[
\begin{aligned}
    \partial_t\Delta^{(n+1)}
    &=(W^{(n)}\cdot\nabla)\Delta^{(n+1)}
      +(\nabla W^{(n)})^T\Delta^{(n+1)}+G^{(n)},\\
    G^{(n)}
    &:=(W^{(n)}-W^{(n-1)})\cdot\nabla\xi^{(n)}
      +(\nabla W^{(n)}-\nabla W^{(n-1)})^T\xi^{(n)}
      +E^{(n)}-E^{(n-1)} .
\end{aligned}
\]
The product estimate and \eqref{eq:loc-tame-Lip}, applied at order $q-1$, give
\[
    \|G^{(n)}_t\|_{\mathcal H^q}
    \le C_{q,R,M,a,S}\|\xi^{(n)}_t-\xi^{(n-1)}_t\|_{\mathcal H^q}.
\]
Using the transport estimate in $\mathcal H^q$ and \eqref{eq:loc-picard-invariant-ball},
\[
    \sup_{0\le t\le T}\|\Delta_t^{(n+1)}\|_{\mathcal H^q}
    \le C_{q,R,M,a,S}T e^{C_{q,R,M,a,S}T}
    \sup_{0\le t\le T}\|\Delta_t^{(n)}\|_{\mathcal H^q}.
\]
After reducing $T$ so that the prefactor is $<1$, this is a contraction.  Thus the iterates converge in $C([0,T];\mathcal H^q)$ to a solution bounded in $L^\infty([0,T];\mathcal H^{q+1})$.  The same difference estimate gives uniqueness in this class.  Taking $q=2$ gives a solution with a controlled $\mathcal H^1$ norm on $[0,T]$; applying the high-order estimate above to that solution, or equivalently to its Picard approximants before passing to the limit, propagates every finite $\mathcal H^k$ norm from the smooth initial datum on the same interval.  Hence the solution is smooth.

Let $g_t=\mathcal P\xi_t$.  Since \eqref{eq:loc-abstract-score} is the gradient of the right-hand side of \eqref{eq:loc-h-flow}, there exists a scalar function $\zeta(t)$ such that
\begin{equation}
    \partial_t g_t=F(g_t)-\zeta(t),
    \qquad
    F(g):=\mathsf L_{\rho^\star}\Phi_S[g]+\nabla g\cdot\nabla\Phi_S[g].
    \label{eq:loc-g-with-zeta}
\end{equation}
Let $Z(t)=\int e^{g_t}\,\dd\rho^\star$ and $c(t)=\log Z(t)$.  Since adding constants to $g$ does not change $\nabla\Phi_S[g]$, one has $F(g_t-c(t))=F(g_t)$.  If $h_t=g_t-c(t)$, the normalization $\int e^{h_t}\,\dd\rho^\star=1$ holds and
\[
    \int e^{h_t}F(h_t)\,\dd\rho^\star=0.
\]
Indeed, by integration by parts,
\[
\begin{aligned}
    \int e^{h_t}\mathsf L_{\rho^\star}\Phi_S[h_t] \,\dd\rho^\star
    &=-\int \nabla(e^{h_t})\cdot\nabla\Phi_S[h_t]\,\dd\rho^\star,\\
    \int e^{h_t}\nabla h_t\cdot\nabla\Phi_S[h_t] \,\dd\rho^\star
    &=\int \nabla(e^{h_t})\cdot\nabla\Phi_S[h_t]\,\dd\rho^\star.
\end{aligned}
\]
On $\R^d$ this is justified by inserting smooth cutoffs and then letting the cutoff radius go to infinity; the potentials have at most linear growth and $\rho^\star$ has Gaussian tails.  Hence
\[
    c'(t)=Z(t)^{-1}\int e^{g_t}\partial_tg_t\,\dd\rho^\star
    =\int e^{h_t}F(h_t)\,\dd\rho^\star-\zeta(t)=-\zeta(t).
\]
Therefore $\partial_t h_t=F(h_t)$, which is \eqref{eq:loc-h-flow}; multiplying by $e^{h_t}\rho^\star$ gives \eqref{eq:loc-rho-flow}.  The converse follows by taking a gradient in \eqref{eq:loc-h-flow}.

It remains to prove continuation.  Suppose \eqref{eq:loc-continuation-cond} holds.  Choose $R<r$ and $t_0<T_{\max}$ such that $\|\xi_t\|_{\mathcal H^1}\le R$ for $t_0\le t<T_{\max}$.  The high-order transport estimate for \eqref{eq:loc-abstract-score} gives, for $q\ge1$,
\[
\begin{aligned}
    \frac{\dd^+}{\dd t}\|\xi_t\|_{\mathcal H^{q+1}}
    &\le C_q\|\mathcal W(\xi_t)\|_{\mathcal H^2}\|\xi_t\|_{\mathcal H^{q+1}}
       +C_q\|\mathcal W(\xi_t)\|_{\mathcal H^{q+2}}\|\xi_t\|_{\mathcal H^1} \\
    &\quad+\|\mathcal E(\xi_t)\|_{\mathcal H^{q+1}}.
\end{aligned}
\]
Using Proposition~\ref{prop:loc-averaged-gain} for the low norm and Lemma~\ref{lem:loc-tame} for the high norms,
\[
    \frac{\dd^+}{\dd t}\|\xi_t\|_{\mathcal H^{q+1}}
    \le C_{q,R,a,S}(1+\|\xi_t\|_{\mathcal H^{q+1}}),
    \qquad t_0\le t<T_{\max}.
\]
Since $T_{\max}<\infty$, Gronwall's lemma keeps every finite high norm bounded up to $T_{\max}$.  The equation then bounds $\partial_t\xi_t$ in every lower norm, so $\xi_t$ converges in $C^\infty$ to some $\xi_*$ with $\|\xi_*\|_{\mathcal H^1}\le R<r$.  Restarting the local construction at $\xi_*$ extends the solution beyond $T_{\max}$, a contradiction.  This proves the continuation criterion.
\end{proof}

\subsection{Global smooth control and entropy}

We first prove the decay for smooth targets.

\begin{proposition}[Global smooth score decay]
\label{prop:loc-smooth-global}
There are $\varepsilon_*>0$ and $C_S<\infty$, depending only on $d,S$, such that the following holds in either model.  Let $a\in C^\infty$ satisfy \eqref{eq:loc-target-potential} and $\|a\|_{\mathcal H^3}\le\varepsilon_*$; in the OU model assume that every positive-order derivative of $a$ is bounded.  Let $h_t$ be the smooth solution of \eqref{eq:loc-h-flow} with initial condition $h_{t=0}=-a$.  Then the solution is global and
\begin{equation}
    \|\nabla h_t\|_{\mathcal H^1}
    \le e^{-\lambda_St/2}\|\nabla a\|_{\mathcal H^1},
    \qquad t\ge0.
    \label{eq:loc-smooth-score-decay}
\end{equation}
\end{proposition}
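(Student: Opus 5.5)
The plan is a bootstrap argument: combine the closed score inequality of Proposition~\ref{prop:loc-closed-ineq} with the continuation criterion of Proposition~\ref{prop:loc-local}.

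\textbf{Setup.} Fix $r_*:=\min\{r_{\rm loc},r_4\}/2$ and take $\varepsilon_*\le\min\{\varepsilon_{\rm loc},\varepsilon_4\}$. The initial score is $\xi_0=\nabla h_0=-\nabla a$, so
\[
y(0):=\|\xi_0\|_{\mathcal H^1}=\|\nabla a\|_{\mathcal H^1}\le\|a\|_{\mathcal H^3}\le\varepsilon_*.
\]
Shrinking $\varepsilon_*$ further, we may assume $\varepsilon_*<r_*$. The input $g_{\rm in}=-a$ is smooth, and $\xi_{\rm in}=-\nabla a\in\mathfrak X^\infty$.
\begin{itemize}
\item On the torus, $\nabla a$ is a periodic exact gradient with zero component means.
\item In the OU model, it is closed with bounded derivatives.
\end{itemize}
Proposition~\ref{prop:loc-local} therefore gives a unique maximal smooth solution on $[0,T_{\max})$. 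By the reconstruction part of that proposition, its normalization agrees with the solution of \eqref{eq:loc-h-flow} with $h_0=-a$; additive constants are immaterial for the score.

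\textbf{Bootstrap.} Let
\[
T_1:=\sup\{T<T_{\max}:\ y(t)\le r_*\ \text{on }[0,T]\},
\]
which is positive by continuity of $t\mapsto\|\xi_t\|_{\mathcal H^1}$ for smooth solutions. On $[0,T_1)$, Proposition~\ref{prop:loc-closed-ineq} applies and gives
\[
D^+y+\lambda_S y\le C_S(\|a\|_{\mathcal H^3}+y)\,y .
\]
Now shrink $\varepsilon_*$ once more so that $C_S\varepsilon_*\le\lambda_S/4$, and additionally so that $C_S\varepsilon_*\le\lambda_S/4$ with $\varepsilon_*$ playing the role of the bound on $y$. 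Suppose, as an inner bootstrap, that $y\le\varepsilon_*$ on an interval. There the right-hand side is at most $(\lambda_S/2)\,y$, so $D^+y\le-(\lambda_S/2)\,y$. The Dini comparison lemma, valid for continuous $y$ with an upper Dini derivative bound, then gives
\[
y(t)\le e^{-\lambda_S t/2}\,y(0)\le y(0)\le\varepsilon_*.
\]
The inner bootstrap is therefore self-improving. It starts with strict slack $y(0)\le\varepsilon_*$; to get a genuinely open condition, I would run it with the threshold $2\varepsilon_*$, which only changes constants. By continuity it then holds on all of $[0,T_1)$. Since $2\varepsilon_*<r_*$, this forces $T_1=T_{\max}$.

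\textbf{Globality.} If $T_{\max}<\infty$, then $\limsup_{t\uparrow T_{\max}}y(t)\le\varepsilon_*<r$ for any admissible $r\in(r_*,r_{\rm loc})$. The continuation criterion \eqref{eq:loc-continuation-cond} then extends the solution, a contradiction. Hence $T_{\max}=\infty$. The decay estimate $y(t)\le e^{-\lambda_S t/2}\|\nabla a\|_{\mathcal H^1}$ is exactly \eqref{eq:loc-smooth-score-decay}. All thresholds depend only on $d$ and $S$, since $C_S$ and $\lambda_S$ from Propositions~\ref{prop:loc-closed-ineq} and Lemma~\ref{lem:loc-linear-damping} do.

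\textbf{Main obstacle.} The delicate point is bookkeeping, not analysis: matching the radius $r$ used in Proposition~\ref{prop:loc-local}, which is fixed before the solution is constructed, with the smallness required in Proposition~\ref{prop:loc-closed-ineq}. I would handle it by fixing $r:=r_*$ at the outset and applying the comparison argument to the upper Dini derivative rather than a classical derivative. I also need to check that $y$ is continuous, which follows because smooth solutions lie in $C([0,T];\mathcal H^q)$ for $q\ge2$ and hence are continuous into $\mathcal H^1$.
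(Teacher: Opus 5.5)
Your proposal is correct and follows essentially the same route as the paper. You start Proposition~\ref{prop:loc-local} from $\xi_{\rm in}=-\nabla a$, absorb $C_S(\|a\|_{\mathcal H^3}+y)$ into $\lambda_S/2$ in the closed inequality of Proposition~\ref{prop:loc-closed-ineq}, run Dini--Gronwall up to a first-exit time, and conclude globality from the continuation criterion. Your nested thresholds ($r_*$ so that the closed inequality applies, $2\varepsilon_*$ for the absorption) are only a slightly redundant form of the paper's single exit time at $r_*$, where $r_*$ is itself taken small enough that $C_S(\varepsilon_*+r_*)\le\lambda_S/2$.
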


\begin{proof}
Let $r_{\rm cont}<r_{\rm loc}$ and let $r_*>0$ be smaller than $r_{\rm cont}$ and all smallness thresholds in the a priori estimates above.  Choose $\varepsilon_*>0$ so small that $\varepsilon_*\le\varepsilon_{\rm loc}$,
\begin{equation}
    C_S(\varepsilon_*+r_*)\le\frac{\lambda_S}{2},
    \label{eq:loc-eps-small-absorb}
\end{equation}
and
\begin{equation}
    \varepsilon_*<\frac12\min\{r_*,r_{\rm cont}\}.
    \label{eq:loc-eps-less-r}
\end{equation}
For the reference initial condition, the initial logarithmic density ratio relative to $\rho^\star$ is $-a$.  Hence
\[
    y(0):=\|\nabla h_0\|_{\mathcal H^1}=\|\nabla a\|_{\mathcal H^1}
    \le\|a\|_{\mathcal H^3}\le\varepsilon_*<r_*/2.
\]
To invoke Proposition~\ref{prop:loc-local}, take $\xi_{\rm in}=-\nabla a$.  In the torus case let $g_{\rm in}$ be the mean-zero representative of $-a$, and in the OU case let
\[
    g_{\rm in}:=-a+\int a\,\dd\rho^\star .
\]
Then $\nabla g_{\rm in}=-\nabla a$ and $g_{\rm in}$ satisfies the normalization required in Proposition~\ref{prop:loc-local}.  The reconstruction formula gives the initial logarithmic density
\[
    g_{\rm in}-\log\int e^{g_{\rm in}}\,\dd\rho^\star=-a,
\]
because $\rho^\star=e^{a}\mathsf m$ and $\mathsf m$ is normalized.

By Proposition~\ref{prop:loc-local}, there is a maximal smooth solution on $[0,T_{\max})$.  Define the first exit time
\[
    \tau:=\inf\{t<T_{\max}:y(t)=r_*\},
\]
with $\tau=\infty$ if the set is empty.  On $[0,\tau)$, Proposition~\ref{prop:loc-closed-ineq} and \eqref{eq:loc-eps-small-absorb} give
\[
    D^+y(t)+\frac{\lambda_S}{2}y(t)\le0.
\]
The Dini version of Gronwall's lemma yields
\[
    y(t)\le e^{-\lambda_St/2}y(0),
    \qquad 0\le t<\tau.
\]
If $\tau<\infty$, continuity gives $y(\tau)=r_*$, while the last inequality gives $y(\tau)\le y(0)<r_*/2$, a contradiction.  Thus no exit occurs before $T_{\max}$, and
\[
    y(t)\le e^{-\lambda_St/2}\|\nabla a\|_{\mathcal H^1},
    \qquad 0\le t<T_{\max}.
\]
If $T_{\max}<\infty$, then
\[
    \limsup_{t\uparrow T_{\max}}y(t)
    \le y(0)\le\varepsilon_*<r_{\rm cont},
\]
so the continuation criterion in Proposition~\ref{prop:loc-local} extends the solution past $T_{\max}$, a contradiction.  Hence $T_{\max}=\infty$ and \eqref{eq:loc-smooth-score-decay} holds.
\end{proof}

\begin{lemma}[Entropy from score control]
\label{lem:loc-entropy-score}
For $\varepsilon_*>0$ small enough, the target measure $\rho^\star=e^{a}\mathsf m$ satisfies a logarithmic Sobolev inequality with a constant depending only on $d$ and an upper bound on $\|a\|_{L^\infty}$.  Consequently, for every normalized $h$ with $\int e^h\,\dd\rho^\star=1$,
\begin{equation}
    \KL(e^h\rho^\star\mid\rho^\star)
    \le C_S\|\nabla h\|_{L^\infty}^2.
    \label{eq:loc-entropy-score}
\end{equation}
\end{lemma}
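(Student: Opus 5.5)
The plan has two steps: a logarithmic Sobolev inequality (LSI) for the reference measure $\mathsf m$, then a transfer to $\rho^\star=e^{a}\mathsf m$ by the Holley--Stroock bounded-perturbation lemma.

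\textbf{Step 1: LSI for the reference measure.} In the OU model, $\mathsf m=\gamma$ satisfies the Gaussian LSI
\[
    \Ent_\gamma(f^2)\le 2\int|\nabla f|^2\,\dd\gamma .
\]
This follows from the Bakry--\'Emery criterion, since $L_\gamma$ has curvature $1$. On the torus, normalized Lebesgue measure satisfies an LSI with a constant $C_d$. One way to see this is again Bakry--\'Emery: flat curvature gives an LSI after combining with the spectral gap $1$ of $\Delta$ on $(\R/2\pi\mathbb Z)^d$. Alternatively, one can tensorize the one-dimensional circle LSI, whose constant is known explicitly.

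\textbf{Step 2: transfer to the target.} The density $\dd\rho^\star/\dd\mathsf m=e^{a}$ satisfies
\[
    e^{-2\|a\|_{L^\infty}}\le \frac{e^{a(x)}}{e^{a(y)}}\le e^{2\|a\|_{L^\infty}} .
\]
The Holley--Stroock lemma therefore gives an LSI for $\rho^\star$ with constant at most $e^{2\osc a}$ times the reference constant. Since $\osc a\le 2\|a\|_{L^\infty}\le 2\varepsilon_*$, this constant depends only on $d$ and on the bound on $\|a\|_{L^\infty}$. In the OU case I would verify the perturbation argument in its usual form, which is the variational identity
\[
    \Ent_\mu(g)=\inf_{c>0}\int\bigl(g\log(g/c)-g+c\bigr)\,\dd\mu
\]
combined with a comparison of integrands. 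It holds on $\R^d$ without change.

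\textbf{Step 3: conclusion.} Apply the LSI for $\rho^\star$ to $f=e^{h/2}$. The normalization $\int e^h\,\dd\rho^\star=1$ gives
\[
    \KL(e^h\rho^\star\mid\rho^\star)=\Ent_{\rho^\star}(e^h)\le C\int|\nabla e^{h/2}|^2\,\dd\rho^\star=\frac C4\int|\nabla h|^2e^h\,\dd\rho^\star\le \frac C4\|\nabla h\|_{L^\infty}^2 ,
\]
where the last step uses $\int e^h\,\dd\rho^\star=1$ once more.

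\textbf{Main technical point.} The only delicate step is justifying the LSI for the test function $e^{h/2}$ on $\R^d$. There $h$ need not be bounded: it is only known to have linear growth with a bounded gradient. I would first apply the LSI to a truncation $e^{\min(h,N)/2}$, or to a cutoff approximation of $e^{h/2}$, and then pass to the limit $N\to\infty$. Monotone convergence handles the entropy side, since $x\log x$ is bounded below. Dominated convergence handles the gradient side, using the integrability of $e^h$ under $\rho^\star$. On the torus $h$ is bounded and no approximation is needed.
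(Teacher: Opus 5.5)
Your proof is correct. On the torus it is the paper's argument: an LSI for normalized Lebesgue measure, the Holley--Stroock transfer to $\rho^\star=e^{a}\,\dd x$, and then the test function $f=e^{h/2}$.

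In the OU model you take a different route. The paper does not perturb the Gaussian LSI. Instead it writes $\rho^\star=e^{-U}\,\dd x$ with $U=\frac{|x|^2}{2}-a+\mathrm{const}$ and applies Bakry--\'Emery directly, using $\nabla^2U=I-\nabla^2a\ge\frac12 I$ when $\|a\|_{\mathcal H^3}\le\varepsilon_*\le\frac12$. This yields the $a$-independent constant $4$.

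\emph{What the paper's route buys and costs.} Its constant carries no $e^{\osc a}$ factor. But its validity rests on second-derivative smallness of $a$, not on $\|a\|_{L^\infty}$. This is where ``$\varepsilon_*$ small enough'' genuinely enters the OU case.

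\emph{What your route buys and costs.} Your Holley--Stroock route treats both models uniformly and uses only a bound on $\|a\|_{L^\infty}$, with no smallness at all. It therefore matches the dependence stated in the lemma more literally. The price is the factor $e^{\osc a}\le e^{2\varepsilon_*}$, which is harmless here. Your truncation $e^{\min(h,N)/2}$ also supplies a justification the paper omits, namely that the LSI may be applied to the unbounded test function $e^{h/2}$ on $\R^d$. The monotonicity of $x\log x$ along the truncation (for $N\ge0$) and the bound $\int e^{\min(h,N)}\,\dd\rho^\star\le1$ make this step clean.

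Two minor points. First, Holley--Stroock gives the factor $e^{\osc a}$, not $e^{2\osc a}$; your bound is valid, just not sharp. Second, zero curvature alone yields no LSI through the Bakry--\'Emery criterion. Combining flatness with the spectral gap requires a dimensional curvature condition or a Rothaus-type argument. For the torus, rely instead on your alternative: tensorize the one-dimensional circle LSI.
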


\begin{proof}
On the torus, normalized Lebesgue measure satisfies a logarithmic Sobolev inequality.  Since $\rho^\star=e^{a}\dd x$ and $a$ is bounded, the Holley--Stroock bounded perturbation principle gives
\[
    \Ent_{\rho^\star}(f^2)
    \le C_de^{\osc a}\int_{\T^d}|\nabla f|^2\,\dd\rho^\star.
\]
Taking $f=e^{h/2}$ gives
\[
    \Ent_{\rho^\star}(e^h)
    \le C_de^{\osc a}\int |\nabla h|^2e^h\,\dd\rho^\star
    \le C_de^{\osc a}\|\nabla h\|_{L^\infty}^2.
\]

In the OU case, write $\dd\rho^\star=e^{-U(x)}\dd x$ with
\[
    U(x)=\frac{|x|^2}{2}-a(x)+\frac d2\log(2\pi).
\]
If $\|a\|_{\mathcal H^3}\le\varepsilon_*$ and $\varepsilon_*\le1/2$, then $\nabla^2U=I-\nabla^2a\ge\frac12I$.  The Bakry--\'Emery criterion gives
\[
    \Ent_{\rho^\star}(f^2)\le4\int|\nabla f|^2\,\dd\rho^\star.
\]
Again taking $f=e^{h/2}$ gives \eqref{eq:loc-entropy-score}.
\end{proof}

\subsection{Endpoint operator stability and uniqueness}
\label{subsec:loc-endpoint-uniqueness}

The smooth construction above loses one derivative in its contraction estimate.
At the endpoint $a\in\mathcal H^3$, uniqueness instead follows from a lower
norm estimate for the scale-averaged operators.  The estimate uses only the
$L^\infty$ distance between two scores and therefore does not differentiate the
physical-time difference equation.

For $g$ with $\nabla g\in\mathcal H^1$, let
$z_s^g:=\nabla\psi_s[g]$.  We continue to write
$\mathcal W(\nabla g)=W_S[g]$.  The expression
$\mathcal E(\nabla g)$ is defined by \eqref{eq:loc-E-map}, with
$\mathscr L U[g]$ interpreted through the cancellation identity
\eqref{eq:loc-cancel-identity}.  The following proposition shows that these
quantities are well defined at the endpoint.

\begin{proposition}[Endpoint operator estimates]
\label{prop:loc-endpoint-operators}
Fix $S>0$, $M<\infty$, and $R<\infty$.  There is
$C_{M,R,S}<\infty$ such that the following holds in both models.  Let
$a\in\mathcal H^3$ satisfy \eqref{eq:loc-target-potential} and
$\|a\|_{\mathcal H^3}\le M$.  If $g,\widetilde g$ have scores in
$\mathcal H^1$ satisfying
\[
    \|\nabla g\|_{\mathcal H^1},
    \|\nabla\widetilde g\|_{\mathcal H^1}\le R,
\]
then
\begin{equation}
\begin{aligned}
    &\sup_{0\le s\le S}\|z_s^g\|_{\mathcal H^1}
      +\int_0^S\|z_s^g\|_{\mathcal H^2}\,\dd s
      +\|\mathcal W(\nabla g)\|_{\mathcal H^2}
      +\|\mathcal E(\nabla g)\|_{L^\infty}
      \le C_{M,R,S},
\end{aligned}
\label{eq:loc-endpoint-one-input}
\end{equation}
and
\begin{equation}
\begin{aligned}
    &\|\mathcal W(\nabla g)-\mathcal W(\nabla\widetilde g)\|_{\mathcal H^1}
      +\|\mathcal E(\nabla g)-\mathcal E(\nabla\widetilde g)\|_{L^\infty} \\
    &\hspace{35mm}\le
      C_{M,R,S}\|\nabla g-\nabla\widetilde g\|_{L^\infty}.
\end{aligned}
\label{eq:loc-endpoint-operator-stability}
\end{equation}
The constants are unchanged if constants are added to $g$ or
$\widetilde g$.
\end{proposition}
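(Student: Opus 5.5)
The proof reruns the scale-variable machinery of Propositions~\ref{prop:loc-score-lifting}--\ref{prop:loc-cancelled} with two differences. The target potential $a$ is only bounded by $M$ in $\mathcal H^3$, not small. The scores are only bounded by $R$. Smallness was used only to absorb terms, so we replace absorption by Gronwall arguments in the scale variable $s\in[0,S]$.

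\textbf{One-input bound.} First, Lemma~\ref{lem:loc-coefficients} still gives $r_s=\mathsf P_s(e^a)\in[e^{-M},e^{M}]$. Its proof also gives $\|c_s\|_{\mathcal H^2}+\|b_s\|_{\mathcal H^1}\le C_{M,S}$, since only boundedness of $\mathsf P_s$ on $\mathcal H^3$ is used.

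Next, we redo Proposition~\ref{prop:loc-score-lifting} with $\|\nabla(a+\theta g)\|_{L^\infty}\le M+R$. Lemma~\ref{lem:loc-covariance} applies with constant $C_{M+R,S}$, and the same formulas \eqref{eq:loc-score-formula}--\eqref{eq:loc-hessian-formula} give $\sup_s\|z_s^g\|_{\mathcal H^1}\le C_{M,R,S}$ without any smallness.

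For the $\mathcal H^2$ gain we cannot absorb as in Proposition~\ref{prop:loc-averaged-gain}. Instead we apply the principal estimate \eqref{eq:loc-principal-gradient-est} of Lemma~\ref{lem:loc-linear-scale-tame} to the differentiated scale equation \eqref{eq:loc-scale-equation}. The drift $A_s=2(c_s+z_s^g)$ has $\sup_s\|A_s\|_{\mathcal H^1}\le C_{M,R,S}$, and the zeroth-order term is $2b_s^T z_s^g$ with $\|b_s\|_{\mathcal H^1}\le C_{M,S}$. The $m=1$ case then gives $\int_0^S\|z_s^g\|_{\mathcal H^2}\,\dd s\le C_{M,R,S}$. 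This is where the non-perturbative regime is handled: the $\delta$-partition argument in that lemma depends only on $K_A$, so no smallness is needed. Everything is first done for mollified $g$ and then passed to the limit.

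The remaining one-input bounds follow directly. The bound $\|\mathcal W\|_{\mathcal H^2}\le C$ comes from averaging in $s$. For $\mathcal E$, the cancellation identity \eqref{eq:loc-cancel-identity} together with \eqref{eq:loc-F-H1-bound} (with $C_{M,R,S}$) bounds $\mathscr L U$, and the other terms of \eqref{eq:loc-E-map} are products of bounded fields.

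\textbf{Difference bound.} This is the main obstacle: we may only pay $\|v_0\|_{L^\infty}$, where $v_s=z^g_s-z^{\widetilde g}_s$, yet we must control $\mathcal W$ in $\mathcal H^1$. We use the difference equation \eqref{eq:loc-v-scale}, written in the form
\[
\partial_s v=\mathscr L v+2(c_s+z_s^g)\cdot\nabla v+\bigl[2b_s^T+2(\nabla z_s^{\widetilde g})^T\bigr]v .
\]
This is a linear equation with drift bounded in $\mathcal H^1$ and a potential whose $L^\infty$ norm is integrable in $s$, because $\int_0^S\|z^{\widetilde g}_s\|_{\mathcal H^2}\,\dd s\le C_{M,R,S}$.

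The maximum principle with Gronwall gives $\sup_s\|v_s\|_{L^\infty}\le C\|v_0\|_{L^\infty}$. The gradient part of \eqref{eq:loc-principal-gradient-est} then gives $\int_0^S\|\nabla v_s\|_{L^\infty}\,\dd s\le C\|v_0\|_{L^\infty}$. Averaging in $s$ yields the $\mathcal H^1$ bound for $\mathcal W(\nabla g)-\mathcal W(\nabla\widetilde g)$.

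For $\mathcal E$ we again use \eqref{eq:loc-cancel-identity}, now with $\mathsf K_{S-u}-I$ bounded on $L^\infty$. We need $\int_0^S\|F_u(z^g_u)-F_u(z^{\widetilde g}_u)\|_{L^\infty}\,\dd u$. By \eqref{eq:loc-F-difference} the integrand is bounded by $C\bigl(\|\nabla v_u\|_{L^\infty}+(1+\|z_u^{\widetilde g}\|_{\mathcal H^2})\|v_u\|_{L^\infty}\bigr)$, which integrates to $C\|v_0\|_{L^\infty}$ by the bounds just obtained. The terms $(\nabla^2 a)(W-\widetilde W)$ and $(\nabla W-\nabla\widetilde W)^T\nabla a$ are controlled by the $\mathcal H^1$ difference of $\mathcal W$. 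The term $A_S(\xi-\widetilde\xi)$ is bounded in $L^\infty$ by $(2/S)\|v_0\|_{L^\infty}$.

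Invariance under adding constants holds because only gradients of $g$ and $\widetilde g$ enter.
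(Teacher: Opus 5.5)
Your proposal is correct and follows the paper's proof essentially step for step. The shared steps are: non-perturbative $\mathcal H^1$ bounds on $z_s^g$ from the tilted-average formulas and Lemma~\ref{lem:loc-covariance}; the $m=1$ case of Lemma~\ref{lem:loc-linear-scale-tame} for the averaged $\mathcal H^2$ gain; the principal gradient estimate \eqref{eq:loc-principal-gradient-est} on the difference equation \eqref{eq:loc-v-scale} with drift $2(c_s+z_s^g)\cdot\nabla$; the cancellation identity \eqref{eq:loc-cancel-identity}, with $\mathsf K_{S-u}-I$ bounded on $L^\infty$, for $\mathcal E$; and smooth data followed by approximation, where the paper is more careful than your one-line remark, since it also mollifies $a$ and identifies $\mathscr L U[g]$ at the endpoint as a weak-$*$ limit of the cancelled forcing.

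The only local difference is how you get $\sup_s\|v_s\|_{L^\infty}\le C\|v_0\|_{L^\infty}$: you use the componentwise maximum principle \eqref{eq:loc-principal-max} plus Gronwall, treating $(2b_s^T+2(\nabla z_s^{\widetilde g})^T)v_s$ as a bounded forcing, whereas the paper reads it off the tilted-average formula by interpolating $H_\theta=a+\widetilde g+\theta(g-\widetilde g)$ and applying Lemma~\ref{lem:loc-covariance}; both are valid.
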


\begin{proof}
We first argue for smooth functions.  The bounds are uniform under
mollification, so the endpoint statement follows at the end by approximation.
Put $z_s=z_s^g$.  The score and Hessian formulas
\eqref{eq:loc-score-formula}--\eqref{eq:loc-hessian-formula}, together with
Lemma~\ref{lem:loc-covariance}, give the bounded-set estimate
\begin{equation}
    \sup_{0\le s\le S}\|z_s\|_{\mathcal H^1}\le C_{M,R,S}.
    \label{eq:loc-endpoint-z-H1}
\end{equation}
Here no smallness is used: the constants in the tilted covariance estimate are
allowed to depend on $M+R$.  Likewise, the proof of
Lemma~\ref{lem:loc-coefficients}, without normalizing the bounds to $1/2$ and
$2$, gives
\[
    \sup_{0\le s\le S}
    \bigl(\|c_s\|_{\mathcal H^2}+\|b_s\|_{\mathcal H^1}\bigr)
    \le C_{M,S}.
\]
Apply Lemma~\ref{lem:loc-linear-scale-tame} with $m=1$ to the scale equation
\eqref{eq:loc-scale-equation}, taking
\[
    A_s=2(c_s+z_s),\qquad B_s=2b_s^T,\qquad f_s=0.
\]
Estimate \eqref{eq:loc-endpoint-z-H1} controls all low coefficient norms in
that lemma.  It follows that
\begin{equation}
    \int_0^S\|z_s^g\|_{\mathcal H^2}\,\dd s
    \le C_{M,R,S}.
    \label{eq:loc-endpoint-z-H2}
\end{equation}
Averaging in $s$ gives the asserted $\mathcal H^2$ bound for
$\mathcal W(\nabla g)$.

We next prove the difference estimate.  Set
\[
    q:=g-\widetilde g,
    \qquad
    \eta:=\nabla q,
    \qquad
    v_s:=z_s^g-z_s^{\widetilde g}.
\]
For $0\le\theta\le1$, define
\[
    H_\theta:=a+\widetilde g+\theta q,
    \qquad
    F_\theta:=\nabla a+\nabla\widetilde g+\theta\eta.
\]
Differentiation of the tilted average gives
\[
    \frac{\dd}{\dd\theta}M_{s,H_\theta}[F_\theta]
    =M_{s,H_\theta}[\eta]
      +\Cov_{s,H_\theta}(F_\theta,q).
\]
The first term is bounded by $\|\eta\|_{L^\infty}$.  In the covariance term,
$F_\theta$ has uniformly bounded Lipschitz norm and
$\|\nabla q\|_{L^\infty}=\|\eta\|_{L^\infty}$.  Lemma~\ref{lem:loc-covariance}
therefore yields
\begin{equation}
    \sup_{0\le s\le S}\|v_s\|_{L^\infty}
    \le C_{M,R,S}\|\eta\|_{L^\infty}.
    \label{eq:loc-endpoint-v-Linf}
\end{equation}
This argument is valid in the OU case because $q$ has at most linear growth,
which is included in Lemma~\ref{lem:loc-covariance}.

Subtracting the two scale equations gives
\begin{equation}
\begin{aligned}
    \partial_sv_s
    &=\mathscr Lv_s+2(c_s+z_s^g)\cdot\nabla v_s
      +2b_s^Tv_s+2v_s\cdot\nabla z_s^{\widetilde g},
    \qquad v_0=\eta.
\end{aligned}
\label{eq:loc-endpoint-v-scale}
\end{equation}
Use the principal gradient estimate \eqref{eq:loc-principal-gradient-est}, with
$2(c_s+z_s^g)\cdot\nabla$ kept in the principal operator.  Equations
\eqref{eq:loc-endpoint-z-H1} and \eqref{eq:loc-endpoint-v-Linf} give
\begin{equation}
    \int_0^S\|v_s\|_{\mathcal H^1}\,\dd s
    \le C_{M,R,S}\|\eta\|_{L^\infty}.
    \label{eq:loc-endpoint-v-H1}
\end{equation}
Indeed, the zeroth-order forcing is bounded by
$C(\|b_s\|_{L^\infty}+\|\nabla z_s^{\widetilde g}\|_{L^\infty})
\|v_s\|_{L^\infty}$, and its integral is controlled by
\eqref{eq:loc-endpoint-z-H1}.  Averaging \eqref{eq:loc-endpoint-v-H1} in $s$
proves the $\mathcal H^1$ part of
\eqref{eq:loc-endpoint-operator-stability}.

It remains to control $\mathcal E$.  The exact difference of the nonlinear
scale forcing is
\[
\begin{aligned}
    F_s(z_s^g)-F_s(z_s^{\widetilde g})
    &=2(c_s+z_s^g)\cdot\nabla v_s
      +2v_s\cdot\nabla z_s^{\widetilde g}+2b_s^Tv_s.
\end{aligned}
\]
Consequently, \eqref{eq:loc-endpoint-v-Linf} and
\eqref{eq:loc-endpoint-v-H1} imply
\begin{equation}
    \int_0^S
    \|F_s(z_s^g)-F_s(z_s^{\widetilde g})\|_{L^\infty}\,\dd s
    \le C_{M,R,S}\|\eta\|_{L^\infty}.
    \label{eq:loc-endpoint-F-difference}
\end{equation}
The operators $\mathsf K_{S-s}-I$ are bounded on $L^\infty$.  The cancellation
identity \eqref{eq:loc-cancel-identity} therefore gives
\[
    \|\mathscr L U[g]-\mathscr L U[\widetilde g]\|_{L^\infty}
    \le C_{M,R,S}\|\eta\|_{L^\infty}.
\]
Also $A_S$ is bounded on $L^\infty$.  The remaining terms in
\eqref{eq:loc-E-map} are controlled by the already proved $\mathcal H^1$
difference bound for $\mathcal W$ and by the fixed bounds on $\nabla a$ and
$\nabla^2a$.  This proves the difference estimate for $\mathcal E$.  The
one-input $L^\infty$ bound follows from the same cancellation formula,
\eqref{eq:loc-endpoint-z-H1}--\eqref{eq:loc-endpoint-z-H2}, and the product
bounds in \eqref{eq:loc-E-map}.

We now remove the smoothness assumption.  Choose periodic mollifications in
the torus case and Euclidean mollifications in the OU case.  Choose the additive constants of the mollified primitives so that they converge
locally uniformly to $g$ and $\widetilde g$; none of the operators depends on
those constants.  We obtain smooth $a^n,g^n,\widetilde g^n$ such that
\[
    a^n\to a\quad\text{in }C^2_{\rm loc},
    \qquad
    \nabla g^n\to\nabla g,
    \quad
    \nabla\widetilde g^n\to\nabla\widetilde g
    \quad\text{locally uniformly},
\]
with the relevant $\mathcal H^3$ and $\mathcal H^1$ norms bounded by
$M+o(1)$ and $R+o(1)$.  In the OU case the primitives have a common linear
growth bound.  Hence all smooth estimates above hold with one constant
$C_{M,R,S}$.

For each fixed $s>0$, the heat-kernel or Mehler formulas and dominated
convergence give local uniform convergence
\[
    z_s^{g^n}\longrightarrow z_s^g,
    \qquad
    z_s^{\widetilde g^n}\longrightarrow z_s^{\widetilde g}.
\]
At $s=0$ this is the assumed convergence of the input scores.  The uniform
$\mathcal H^1$ bounds pass to the limit by lower semicontinuity.  For $s>0$
the scale scores are smooth, and the same semigroup formulas identify the
limits of their spatial derivatives.  Fatou's lemma therefore gives
\[
    \int_0^S\|z_s^g\|_{\mathcal H^2}\,\dd s
    \le \liminf_{n\to\infty}
       \int_0^S\|z_s^{g^n}\|_{\mathcal H^2}\,\dd s.
\]
Averaging in $s$ gives local uniform convergence of the velocities.  Their
uniform $\mathcal H^2$ bounds then imply, again by lower semicontinuity, that
$\mathcal W(\nabla g)\in\mathcal H^2$.  Applying the same argument to the
differences proves the $\mathcal H^1$ velocity stability estimate.

It remains to identify the cancelled forcing.  The uniform $\mathcal H^1$
bounds on the scale scores imply that their weak spatial gradients are bounded
in $L^\infty([0,S]\times\mathsf X)$.  Local uniform convergence of the scores
therefore identifies the weak-star limits of those gradients.  Since
$c_s^n,b_s^n$ converge locally uniformly to $c_s,b_s$, the nonlinear scale
forcings $F_s^n$ associated with the target $a^n$ satisfy
\[
    F_s^n(z_s^{g^n})\stackrel{*}{\rightharpoonup}F_s(z_s^g)
    \quad\text{in }L^\infty([0,S]\times\mathsf X),
\]
and similarly for the difference of two inputs.  In the OU case the weak-star
space is taken with respect to $\dd s\,\dd\gamma$; compactly supported tests
identify the limit and are dense in $L^1(\gamma)$.  The operators
$\mathsf K_{S-s}$ are contractions on $L^\infty$ and have the corresponding
$L^1$ preadjoints.  More explicitly, for an $L^1$ test field $\chi(s,x)$,
self-adjointness of $P_t$ with respect to $\dd x$ and of $Q_t$ with respect to
$\gamma$ gives
\[
    \int_0^S\!\int \chi_s\cdot\mathsf K_{S-s}F_s^n
    =\int_0^S\!\int \mathsf K_{S-s}\chi_s\cdot F_s^n,
\]
and $\mathsf K_{S-s}\chi_s$ is again an $L^1$ test field.  Consequently the
right-hand side of the cancellation identity converges converges in the weak-* topology of
$L^\infty$ to
\[
    \frac1S\int_0^S(\mathsf K_{S-s}-I)F_s(z_s^g)\,\dd s.
\]
This limit is the distribution $\mathscr L U[g]$.  It is independent of the
chosen mollification because $U[g]=\mathcal W(\nabla g)-W_S^0[\nabla g]$ is
already fixed and distributional limits are unique.  The one-input and
difference $L^\infty$ bounds pass to the limit, as do the remaining product
terms in \eqref{eq:loc-E-map}.  This completes the endpoint proof.
\end{proof}

\begin{proposition}[Endpoint stability and uniqueness]
\label{prop:loc-endpoint-uniqueness}
Let $a\in\mathcal H^3$ satisfy \eqref{eq:loc-target-potential}.  Every solution in the sense of
Section~\ref{subsec:loc-endpoint-solutions} satisfies the score equation
\eqref{eq:loc-abstract-score} in distributions.  If $h$ and
$\widetilde h$ are two such solutions on $[0,T]$, with initial data
$h_{\rm in}$ and $\widetilde h_{\rm in}$, respectively, then
\begin{equation}
    \|\nabla h_t-\nabla\widetilde h_t\|_{L^\infty}
    \le e^{C_Tt}
       \|\nabla h_0-\nabla\widetilde h_0\|_{L^\infty},
    \qquad 0\le t\le T,
    \label{eq:loc-endpoint-stability}
\end{equation}
where $C_T$ depends on $S$, $\|a\|_{\mathcal H^3}$, and the two score bounds
on $[0,T]$.  In particular, a solution with a given initial condition is
unique.
\end{proposition}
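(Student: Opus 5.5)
The argument has two parts. First, the weak formulation is upgraded to the distributional score equation \eqref{eq:loc-abstract-score}. Second, the difference of two such score equations is estimated in $L^\infty$ along characteristics, using only the endpoint operator bounds of Proposition~\ref{prop:loc-endpoint-operators}.

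\emph{Step 1 (from the weak form to the score equation).} Let $h$ be a solution in the sense of Section~\ref{subsec:loc-endpoint-solutions}. Set $M=\|a\|_{\mathcal H^3}$ and $R=\sup_t\|\nabla h_t\|_{\mathcal H^1}$. Proposition~\ref{prop:loc-endpoint-operators} then gives $\|W_t\|_{\mathcal H^2}\le C_{M,R,S}$, so the velocity $-W_t$ is uniformly Lipschitz in space. The uniqueness argument of Proposition~\ref{prop:fixed-scale-wellposedness} applies verbatim, with backward test functions and Lipschitz flows; in the OU case one also localizes with cutoffs and uses the linear growth of $h$. It identifies $\rho_t$ as the pushforward $(X_t)_\#\rho_0$ along the flow of $-W_t$. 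Writing $\rho_t=e^{h_t+a}\mathsf m$ and using the Jacobian formula $\partial_t\log\det DX_t=-\diver W_t$ then recovers \eqref{eq:loc-h-flow} in distributions, up to a time-dependent constant. That constant is fixed by the normalization, exactly as in the reconstruction step of Proposition~\ref{prop:loc-local}. Taking a distributional gradient, and using the algebra of Lemma~\ref{lem:loc-score-equation} (valid for mollified fields and passing to the limit through the weak-$*$ identification of $\mathscr L U$ in Proposition~\ref{prop:loc-endpoint-operators}), gives \eqref{eq:loc-abstract-score}.

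\emph{Step 2 (difference estimate).} Let $\xi=\nabla h$, $\tilde\xi=\nabla\tilde h$, and $\delta=\xi-\tilde\xi$. Subtracting the two score equations gives
\[
    \partial_t\delta=(W\cdot\nabla)\delta+(\nabla W)^T\delta+G,
\]
where
\[
    G=(W-\tilde W)\cdot\nabla\tilde\xi+(\nabla W-\nabla\tilde W)^T\tilde\xi+\mathcal E(\xi)-\mathcal E(\tilde\xi).
\]
By \eqref{eq:loc-endpoint-operator-stability}, $\|W-\tilde W\|_{\mathcal H^1}\le C\|\delta\|_{L^\infty}$. Since $\|\nabla\tilde\xi\|_{L^\infty}\le R$ and $\|\tilde\xi\|_{L^\infty}\le R$, this yields $\|G\|_{L^\infty}\le C\|\delta\|_{L^\infty}$. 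The remaining linear part is transport by the Lipschitz field $W$ plus the bounded zeroth-order term $(\nabla W)^T$. Along its characteristics, a Duhamel/Gronwall argument gives
\[
    \|\delta_t\|_{L^\infty}\le e^{\|\nabla W\|_\infty t}\Bigl(\|\delta_0\|_{L^\infty}+\int_0^t\|G_\tau\|_{L^\infty}\,\dd\tau\Bigr).
\]
A second application of Gronwall then yields \eqref{eq:loc-endpoint-stability}. Uniqueness follows: $\nabla h_t=\nabla\tilde h_t$, and the normalization $\int e^{h_t}\,\dd\rho^\star=1$ fixes the additive constant.

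\emph{Main obstacle.} The difficulty is rigor at low regularity. We have only $\delta\in L^\infty$ with $\nabla\delta\in L^\infty$, and only a distributional equation, so the characteristic representation of the linear transport problem must be justified. I would handle this by a duality or renormalization argument. Mollify in space, so that commutator terms vanish by DiPerna--Lions, which applies because $W$ is Lipschitz. Then use the representation $\delta_t(X_t^{-1}\text{-transported})$ for the mollified problem and pass to the limit; on $\R^d$ this is done locally, with bounds uniform in the cutoff. It must also be checked that $\mathcal E(\xi_t)$ is measurable and bounded in time, so that $G\in L^\infty_{t,x}$. This follows from the one-input bound \eqref{eq:loc-endpoint-one-input} and the continuity of $t\mapsto\nabla h_t$.
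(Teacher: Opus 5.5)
Your proposal is correct and follows essentially the same route as the paper. You identify $\rho_t$ as the pushforward along the bi-Lipschitz flow of $-W_S[h_t]$ to recover \eqref{eq:loc-h-flow} and then \eqref{eq:loc-abstract-score} via the cancellation identity. You bound the source $G$ by $C_T\|\delta_t\|_{L^\infty}$ using Proposition~\ref{prop:loc-endpoint-operators}, and you close with Gronwall along characteristics and the normalization. The paper justifies the chain rule along characteristics more simply than your DiPerna--Lions mollification: the score equation has a bounded right-hand side, so $\nabla h$ is Lipschitz in time with values in $L^\infty$ and hence space-time Lipschitz. Also, no time-dependent constant actually arises in your Step 1, since the pushforward formula determines $\log p_t$ exactly.
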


\begin{proof}
Let $\rho_t=e^{h_t}\rho^\star$ be a solution in the sense of
Section~\ref{subsec:loc-endpoint-solutions} and write
\[
    \dd\mathsf m=Z^{-1}e^{-V(x)}\,\dd x,
    \qquad
    V=0\ \text{on }\T^d,
    \qquad
    V(x)=\frac{|x|^2}{2}\ \text{in the OU case}.
\]
The Lebesgue density of $\rho_t$ is
$p_t=Z^{-1}e^{h_t+a-V}$.  Proposition~\ref{prop:loc-endpoint-operators} gives
$W_S[h]\in L^\infty([0,T];\mathcal H^2)$.  The local continuity of the score
and the explicit tilted-average formula make $t\mapsto W_S[h_t]$ Borel, while
its spatial Lipschitz constant is uniformly bounded.  Thus the ODE
\[
    \dot X_t(x)=-W_S[h_t](X_t(x))
\]
has a unique global bi-Lipschitz flow.  The associated linear continuity
equation has at most one narrowly continuous finite-measure solution: for a
smooth terminal test function, composition with the backward flow solves the
backward transport equation, and insertion of this test function into the weak
formulation keeps its pairing with the measure constant.  Hence
$\rho_t=(X_t)_\#\rho_0$.  Since $\rho_0$ has a strictly positive density, the
change-of-variables formula gives, for almost every $t$,
\[
    \frac{\dd}{\dd t}\log p_t(X_t(x))
       =\nabla\cdot W_S[h_t](X_t(x)).
\]
Equivalently,
\[
    \partial_t\log p_t=\nabla\cdot W_S[h_t]
       +W_S[h_t]\cdot\nabla\log p_t
\]
in distributions.  Substituting
$\log p_t=h_t+a-V-\log Z$ yields
\[
    \partial_t h_t
    =\nabla\cdot W_S[h_t]
      +(\nabla a-\nabla V)\cdot W_S[h_t]
      +\nabla h_t\cdot W_S[h_t],
\]
which is \eqref{eq:loc-h-flow}.

Taking a spatial gradient and using the endpoint cancellation identity gives
\eqref{eq:loc-abstract-score} in distributions.  Every term on its right-hand
side belongs to $L^\infty([0,T]\times\mathsf X)$ by
Proposition~\ref{prop:loc-endpoint-operators} and the assumed score bound.  If
that right-hand side is denoted by $B$, the distributional identity
$\partial_t\nabla h=B$ implies, after changing the score on a null set in time,
\[
    \nabla h_t=\nabla h_0+\int_0^t B_\tau\,\dd\tau
    \quad\text{in }L^\infty.
\]
Thus the score is Lipschitz in time with values in $L^\infty$; its uniform
spatial Lipschitz bound is part of the regularity required in
Section~\ref{subsec:loc-endpoint-solutions}.

Set
\[
    \xi=\nabla h,
    \qquad
    \widetilde\xi=\nabla\widetilde h,
    \qquad
    W=\mathcal W(\xi),
    \qquad
    \widetilde W=\mathcal W(\widetilde\xi),
    \qquad
    \delta=\xi-\widetilde\xi.
\]
Subtracting the two score equations gives
\begin{equation}
    \partial_t\delta=(W\cdot\nabla)\delta+G,
    \label{eq:loc-endpoint-difference-equation}
\end{equation}
where
\[
\begin{aligned}
    G={}&((W-\widetilde W)\cdot\nabla)\widetilde\xi
       +(\nabla W)^T\delta
       +(\nabla W-\nabla\widetilde W)^T\widetilde\xi \\
       &+\mathcal E(\xi)-\mathcal E(\widetilde\xi).
\end{aligned}
\]
Let $R_T$ bound the two $\mathcal H^1$ score norms on $[0,T]$.
Proposition~\ref{prop:loc-endpoint-operators} and the one-input velocity bound
give
\begin{equation}
    \|G_t\|_{L^\infty}
    \le C_{T}\|\delta_t\|_{L^\infty}
    \quad\text{for almost every }t\in[0,T].
    \label{eq:loc-endpoint-G-bound}
\end{equation}
Let $X(t;x)$ be the global characteristic flow of $-W$.  The chain rule along
$X$, valid for the space-time Lipschitz representative of $\delta$, and
\eqref{eq:loc-endpoint-difference-equation} give
\[
    \frac{\dd}{\dd t}|\delta_t(X(t;x))|
    \le C_T\|\delta_t\|_{L^\infty}
\]
for almost every $t$.  Because the characteristic map $X(t;\cdot)$ is onto, taking the supremum over
$x$, integrating in time, and applying Gronwall's lemma proves
\eqref{eq:loc-endpoint-stability}.

If the initial scores agree, then $\nabla h_t=\nabla\widetilde h_t$ for every
$t$.  Their difference is therefore spatially constant.  Since both solutions are probability measures,
$\int e^{h_t}\,\dd\rho^\star=1$, and that constant is zero.
Thus $h=\widetilde h$ and $\rho=\widetilde\rho$.
\end{proof}

\subsection{Removal of the smoothness assumption and proof of the theorems}

We now finish the proof of Theorems~\ref{thm:local-torus-heat} and
\ref{thm:local-ou}.  Let $c_S:=\lambda_S/2$ and choose the smallness constant
$\varepsilon_S$ in both theorems so that
\[
    \varepsilon_S\le \frac12\varepsilon_*
\]
and so that $\varepsilon_S$ is no larger than all other smallness constants
required above.

If $a$ is smooth, and in the OU case every positive-order derivative of $a$ is
bounded, Proposition~\ref{prop:loc-smooth-global} gives the unique global
smooth normalized solution and
\[
    \|\nabla h_t\|_{\mathcal H^1}
    \le e^{-c_St}\|\nabla a\|_{\mathcal H^1}.
\]
Lemma~\ref{lem:loc-entropy-score} then gives
\[
    \KL(\rho_t\mid\rho^\star)
    \le C_S e^{-2c_St}\|\nabla a\|_{\mathcal H^1}^2.
\]

Now let $a\in\mathcal H^3$ satisfy the hypotheses of either theorem.  Choose a
standard mollifier $\eta_\delta$, periodic in the torus case and Euclidean in
the OU case, and set
\begin{equation}
    \widetilde a^\delta:=\eta_\delta*a,
    \qquad
    c_\delta:=-\log\int e^{\widetilde a^\delta}\,\dd\mathsf m,
    \qquad
    a^\delta:=\widetilde a^\delta+c_\delta.
    \label{eq:loc-mollified-a}
\end{equation}
Then $a^\delta\in C^\infty$, $\int e^{a^\delta}\,\dd\mathsf m=1$, and in the
OU case every positive-order derivative of $a^\delta$ is bounded.  Since $a$
is bounded and uniformly continuous, $\widetilde a^\delta\to a$ uniformly;
hence $c_\delta\to0$.  Moreover,
\begin{equation}
    \limsup_{\delta\downarrow0}\|a^\delta\|_{\mathcal H^3}
    \le \|a\|_{\mathcal H^3},
    \qquad
    \limsup_{\delta\downarrow0}\|\nabla a^\delta\|_{\mathcal H^1}
    \le \|\nabla a\|_{\mathcal H^1},
    \label{eq:loc-mollifier-limsup-a}
\end{equation}
while
\begin{equation}
    \partial^\kappa a^\delta\longrightarrow\partial^\kappa a
    \quad\text{locally uniformly for every }|\kappa|\le2.
    \label{eq:loc-mollifier-local-convergence}
\end{equation}
For all sufficiently small $\delta$, one has
$\|a^\delta\|_{\mathcal H^3}\le\varepsilon_*$.  Let $h_t^\delta$ be the
corresponding global smooth logarithmic density ratio relative to
$\rho_\delta^\star=e^{a^\delta}\mathsf m$.  Proposition~\ref{prop:loc-smooth-global}
gives, uniformly in $\delta$,
\begin{equation}
    \|\nabla h_t^\delta\|_{\mathcal H^1}
    \le e^{-c_St}\|\nabla a^\delta\|_{\mathcal H^1},
    \qquad t\ge0.
    \label{eq:loc-uniform-delta-score}
\end{equation}

Fix $T<\infty$.  Equation \eqref{eq:loc-uniform-delta-score} gives a uniform
spatial $\mathcal H^1$ bound for the scores on $[0,T]$.  The endpoint
one-input estimate \eqref{eq:loc-endpoint-one-input}, applied uniformly to the
bounded family $a^\delta$, and the differentiated equation
\eqref{eq:loc-abstract-score} give
\begin{equation}
    \sup_\delta\sup_{0\le t\le T}
    \|\partial_t\nabla h_t^\delta\|_{L^\infty}<\infty.
    \label{eq:loc-uniform-time-score}
\end{equation}
Normalize each smooth solution by
\[
    \int e^{h_t^\delta}\,\dd\rho_\delta^\star
    =\int e^{h_t^\delta+a^\delta}\,\dd\mathsf m=1.
\]
On the torus, the uniform score bound controls the oscillation of
$h_t^\delta$, and the normalization fixes its additive constant.  In the OU
case, if $\|\nabla h\|_{L^\infty}\le L$, $\|a\|_{L^\infty}\le M$, and
$\int e^{h+a}\,\dd\gamma=1$, then
\begin{equation}
    |h(x)|\le C_{L,M,d}+L|x|,
    \qquad x\in\R^d.
    \label{eq:loc-ou-linear-growth-h}
\end{equation}
Indeed, $h(0)-L|x|\le h(x)\le h(0)+L|x|$, and
\[
    1\ge e^{h(0)-M}\int e^{-L|x|}\,\dd\gamma(x),
    \qquad
    1\le e^{h(0)+M}\int e^{L|x|}\,\dd\gamma(x)
\]
bound $h(0)$ from above and below.  Thus $h_t^\delta$ is locally uniformly
bounded in the OU case as well.  The scalar equation \eqref{eq:loc-h-flow},
together with the local bounds on $W_S^\delta[h_t^\delta]$ and its first
spatial derivatives, gives locally uniform bounds on
$\partial_t h_t^\delta$.

Applying Arzel\`a--Ascoli on $[0,N]\times K_N$, where $(K_N)_{N\ge1}$
exhausts the state space, and then taking a diagonal subsequence yields a
function $h$ defined for all $t\ge0$ such that, for every $T<\infty$ and
every compact $K$ in the state space,
\begin{equation}
    h^\delta\longrightarrow h,
    \qquad
    \nabla h^\delta\longrightarrow\nabla h
    \quad\text{uniformly on }[0,T]\times K
    \label{eq:loc-endpoint-compactness}
\end{equation}
along that subsequence.  We next pass to the limit in the velocity.  Let
$\psi_s^\delta$ denote \eqref{eq:loc-psi-phi-w} with $a^\delta$ in place of
$a$, and put
\[
    z_{s,t}^\delta:=\nabla\psi_s^\delta[h_t^\delta],
    \qquad
    z_{s,t}:=\nabla\psi_s[h_t].
\]
For fixed $s>0$, the tilted-average formula \eqref{eq:loc-score-formula} and
\eqref{eq:loc-endpoint-compactness}--\eqref{eq:loc-mollifier-local-convergence}
imply
\[
    \sup_{0\le t\le T}\sup_{x\in K}
    |z_{s,t}^\delta(x)-z_{s,t}(x)|\longrightarrow0
\]
for every compact $K$.  In the OU case, the linear-growth estimate gives the
uniform integrable majorant $Ce^{C|z|}$ in Mehler's formula; the score factors
are uniformly bounded, and the denominators converge to strictly positive
limits.  At $s=0$ the convergence is precisely the convergence of the scores.
Estimate \eqref{eq:loc-endpoint-z-H1}, uniformly for the bounded family of
targets $a^\delta$, gives a uniform $L^\infty$ bound in $s$.  Dominated
convergence therefore yields
\begin{equation}
    W_S^\delta[h^\delta]\longrightarrow W_S[h]
    \quad\text{locally uniformly on }[0,T]\times\mathsf X.
    \label{eq:loc-endpoint-W-convergence}
\end{equation}

Let $\varphi$ be a smooth compactly supported test function; on the torus it is
simply smooth.  The smooth solutions satisfy
\[
    \int\varphi e^{h_t^\delta+a^\delta}\,\dd\mathsf m
    -\int\varphi\,\dd\mathsf m
    =-\int_0^t\int\nabla\varphi\cdot W_S^\delta[h_\tau^\delta]
      e^{h_\tau^\delta+a^\delta}\,\dd\mathsf m\,\dd\tau.
\]
The local uniform convergences above pass this identity to the limit.  In the
OU case, uniformly for $0\le t\le T$,
\[
    e^{h_t^\delta+a^\delta}\le C_Te^{C_T|x|},
\]
and the right-hand side is integrable against $\gamma$.  Dominated convergence
therefore also gives
\[
    \int e^{h_t+a}\,\dd\gamma=1.
\]
The same majorant shows that $t\mapsto e^{h_t+a}\gamma$ is narrowly
continuous.  On the torus this is immediate from compactness.  Since
$h_0^\delta=-a^\delta$, one has $h_0=-a$.  Thus
$\rho_t=e^{h_t}\rho^\star$ satisfies the weak formulation
\eqref{eq:loc-endpoint-weak-form} and the required normalization.

The uniform score estimate passes to the limit by lower semicontinuity.  More
precisely, local uniform convergence and a uniform $\mathcal H^1$ bound imply
\[
    \|f\|_{L^\infty}\le\liminf_n\|f_n\|_{L^\infty},
    \qquad
    \Lip(f)\le\liminf_n\Lip(f_n).
\]
The first inequality follows on compact balls and then globally by increasing
the ball; the second follows by passing to the limit for pairs of points.
Applying this to $f_n=\nabla h_t^{\delta_n}$ and using
\eqref{eq:loc-mollifier-limsup-a} gives
\begin{equation}
    \|\nabla h_t\|_{\mathcal H^1}
    \le e^{-c_St}\|\nabla a\|_{\mathcal H^1},
    \qquad t\ge0.
    \label{eq:loc-endpoint-score-decay}
\end{equation}
In particular, the limit is a global solution in the sense of
Section~\ref{subsec:loc-endpoint-solutions}.
  
Proposition~\ref{prop:loc-endpoint-uniqueness} shows that this solution is
unique.  It also shows that the limit is independent of the mollifier and of
the extracted subsequence.  Estimate \eqref{eq:loc-endpoint-score-decay} is
\eqref{eq:main-heat-score-decay} in the torus case and 
\eqref{eq:main-ou-score-decay} in the OU case, after allowing the theorem
constant $C_S$.

Finally, Lemma~\ref{lem:loc-entropy-score} applies to the normalized limiting
density and gives
\[
    \KL(\rho_t\mid\rho^\star)
    \le C_S\|\nabla h_t\|_{\mathcal H^1}^2
    \le C_Se^{-2c_St}\|\nabla a\|_{\mathcal H^1}^2.
\]
This proves \eqref{eq:main-heat-entropy-decay} and
\eqref{eq:main-ou-entropy-decay}, and completes the proof of both theorems.

\end{document}.